\newcommand{\eq}[1]{Eq.~(\ref{#1})}
\newcommand{\nreq}[1]{(\ref{#1})}
\newcommand{\beq}{\begin{equation}} \newcommand{\eeq}{\end{equation}}
\newcommand{\beqn}{\begin{eqnarray}} \newcommand{\eeqn}{\end{eqnarray}}
\newcommand{\bmat}{\begin{mathdisplay}} \newcommand{\emat}{\end{mathdisplay}}
\newcommand{\black}[1]{\textcolor{black}{#1}}
\newcommand{\blue}[1]{\textcolor{black}{#1}}
\newcommand{\bs}{\blacksquare}
\newcommand{\bc}{\begin{center}}
\newcommand{\ec}{\end{center}}
\def\to{\rightarrow}  
\newcommand{\gdd}[1]{\mathcal{D}#1}
\newcommand{\lhs}{\text{(l.h.s.)}}
\newcommand{\V}[1]{\bm{#1} } 
\newcommand{\Ave}[1]{\left\langle {#1} \right\rangle} 
\newcommand{\mR}{\mathbb{R}}
\newcommand{\lb}{\left(}
\newcommand{\rb}{\right)}
\newcommand{\lbb}{\left\{}
\newcommand{\rbb}{\right\}}
\newcommand{\lsb}{ \left[ }
\newcommand{\rsb}{ \right] }
\newcommand{\Req}[1]{Eq.\ (\ref{eq:#1})}
\newcommand{\NReq}[1]{(\ref{eq:#1})}
\newcommand{\Leq}[1]{\label{eq:#1}}
\newcommand{\be}{\begin{eqnarray}}
\newcommand{\ee}{\end{eqnarray}}
\newcommand{\ba}{\begin{array}}
\newcommand{\ea}{\end{array}}
\newcommand{\no}{\nonumber}
\newcommand{\subbe}{\begin{subequations}}
\newcommand{\subee}{\end{subequations}}
\newcommand{\D}[1]{\mathcal{D}{#1}\;}
\newcommand{\Vset}{\mathtt{V}}
\newcommand{\Eset}{\mathtt{E}}
\newtheorem{lemma}{Lemma}
\newtheorem{corollary}{Corollary}
\begin{document}

\begin{center}{\Large \textbf{
      Graphical model for factorization  \blue{and completion of relatively high rank tensors} by sparse sampling
}}\end{center}

\begin{center}
  Angelo Giorgio Cavaliere  \textsuperscript{1}
  Riki Nagasawa  \textsuperscript{2}
  Shuta Yokoi  \textsuperscript{2}\\
  Tomoyuki Obuchi  \textsuperscript{3,4}
  Hajime Yoshino  \textsuperscript{1,2,4*}
\end{center}

\begin{center}
  {\bf 1} D3 Center, Osaka University
\\
  {\bf 2} Department of Physics, Graduate School of Science, Osaka University
\\
{\bf 3} Department of Systems Science, Graduate School of Informatics,
Kyoto University
\\
  {\bf 4} RIKEN center for AIP
\\
  * yoshino.hajime.cmc@osaka-u.ac.jp
\end{center}

\begin{center}
\today
\end{center}

\section*{Abstract}
         {\bf
           We consider tensor factorizations based on sparse measurements \blue{of the components of relatively high rank
             tensors}. The measurements are designed in a way that the underlying graph of interactions is a  random graph.
           The setup will be useful in cases where a substantial amount of data is missing, as in
           \blue{completion of relatively high rank matrices for} recommendation systems heavily used in social network services. In order to obtain theoretical insights on the setup, we consider statistical inference of the tensor factorization in a high dimensional limit, which we call as dense limit, where the graphs are large and dense but not fully connected.
We build message-passing algorithms and test them in a Bayes optimal teacher-student setting in some specific cases.           
We also develop a replica theory to examine the performance of statistical inference in the dense limit based on a cumulant expansion.
\blue{ The latter approach allows one to avoid
  blind usage of Gaussian ansatz which fails in some fully connected systems.}}
         
\vspace{10pt}
\noindent\rule{\textwidth}{1pt}
\tableofcontents\thispagestyle{fancy}
\noindent\rule{\textwidth}{1pt}
\vspace{10pt}

\section{Introduction}

We consider a reconstruction problem for $N$ $M$-dimensional vectors $\bm{x}_{i}\in \mR^{M}~(i=1,2,\ldots,N)$ from observations of $p$-plets
$\pi_{i_1,i_2,\ldots,i_p}$ of $\{\bm{x}_{i}\}$, which are defined as
\be
&&
\pi_{i_1,i_2,\ldots,i_p}
=\frac{\lambda }{\sqrt{M}}\sum_{\mu=1}^{M}F_{i_1,i_2,\ldots,i_p,\mu}x_{i_1 \mu}x_{i_2 \mu}\cdots x_{i_p \mu},
\Leq{x to pi}
\label{eq-def-pi}
\ee
where $\lambda$ controls the signal strength and $F_{i_1,i_2,\ldots,i_p,\mu}$ is the linear coefficient of the observation, which is later assumed to be unity or an independently and identically distributed (i.i.d.) random variable from certain distributions, depending on situations.
We will consider the cases of $p=2,3,\ldots$ in the present paper, while the special case $p=1$ with $N=1$ corresponds to linear estimation problems.
To consider general observation processes, the actual observation $y_{i_1,i_2,\ldots,i_p}$ of $\pi_{i_1,i_2,\ldots,i_p}$ is assumed to be generated from the following output distribution:
\be
P_{\rm out}(y_{i_1,i_2,\ldots,i_p} \mid \pi_{i_1,i_2,\ldots,i_p}).
\ee
This defines the likelihood function for $\V{x}$ through the relation \NReq{x to pi}.

Similar problem settings have been investigated in the context of tensor/matrix factorization so far~\cite{rangan2012iterative,matsushita2013low,richard2014statistical,lesieur2017statistical,sakata2013statistical,sakata2013sample,krzakala2013phase,kabashima2016phase,donoho2009message,pourkamali2024matrix}, and the difference of our work from them lies in assuming the limit $N,M\to \infty$ while keeping the relation $N \gg M \gg 1$, and in assuming the observations are made uniformly randomly over all $p$-plets under the constraint that each vector $\bm{x}_{i}$ is observed $c = \alpha M$ times with $\alpha=O(1)$.
\black{Thus the inference is actually based on very sparse measurements of the tensor:
just  $O(NM)$ out of  $N^{p}$ elements of the tensor are measured.}
We call these assumptions {\it dense limit} \cite{yoshino2018,yoshino2023spatially}
hereafter. 
This is an interesting limit in the context of tensor completion:
once we obtain an estimate of the vectors $\bm{x}_{i}$ $i=1,2,\ldots,N$, we can
make an inference of the whole tensor \eq{eq-def-pi} with $O(N^{p})$ elements,
just based on a vanishing fraction of it.
\blue{As we explain in the next section, the measurement is defined on random graphs
with connectivity $c(=\alpha M,~\alpha=O(1))$. Usually sparse graphs mean $c=O(1)$ so that our graphs are
not sparse in the usual sense. Nonetheless
they are much sparser compared with those with global couplings $c=O(N^{p})$
since we consider the dense limit $N \gg M \gg 1$  \cite{yoshino2018,yoshino2023spatially}.
\blue{More precisely $N$ must grow faster than any power of $c$.}
We call such an intermediately sparse graph as {\it dense graph} in the present paper.
}

The aim of this paper is to clarify the performance of the theoretically optimal estimation and also to provide an algorithm achieving optimality under the dense limit.
To this end, we work in the framework of the so-called Bayes optimal inference, assuming that the likelihood $P_{\rm out}$, the linear coefficient $F$, and the prior distribution $P_{\rm pri.}$ of $\bm{x}$ are known. The posterior distribution in the Bayes optimal setting is known to give the minimum mean square error (MMSE), and thus serves as a good benchmark for other frameworks and algorithms.

To analyze the Bayes optimal performance, we use the replica method from statistical mechanics. The result reveals how the MMSE depends on the parameters, as well as the existence of phase transitions connected to the computational difficulty of the posterior or the fundamental estimation limit. In addition, we invent an efficient algorithm computing the posterior average based on the framework of generalized approximate message passing (G-AMP)~\cite{rangan2011generalized}; the associated state evolution (SE) equations are derived and checked to be consistent with the replica result. Numerical experiments on finite-size systems based on G-AMP are also conducted, showing the consistency with the SE/replica predictions.

The consistency of the numerical experiments with the theoretical predictions suggest that our findings are exact in the limit we are currently considering, and, to the best of our knowledge, this is the first result of such precise asymptotics for tensor/matrix factorization and completion where the rank is not $O(1)$ and based on significantly sparse measurements of the tensor. This becomes possible by assuming the dense limit,  which allows intricate correlations between variables to be ignored. \blue{Actually, the replica theory we develop employs a cumulant expansion to study the effects of interaction between microscopic variables. The systematic treatment of the interactions allows us to avoid blind usage of the Gaussian ansatz, which we find to fail in some fully connected systems. The latter includes the matrix factorization problem ($p=2$) of the full-rank $M=N$ case.} This is an important technical consequence of this paper. 

The remainder of this paper is organized as follows. In the next section, we provide a detailed formulation of our problem setting.  Sec. \ref{sec-replica} explains the analysis using the replica method: we here elaborate how higher-order correlations among variables can be neglected under the dense limit.
In sec.~\ref{sec-mp}, the algorithm based on G-AMP and the associated SE equations are derived. In sec.~\ref{sec-result}, we examine some specific parameter settings and observe the behavior of the MSE, phase transitions, and the performance of the G-AMP algorithm. Sec.~\ref{sec-conclusion} concludes the paper with some remarks.





\label{introduction}
\section{Formulation and Related Work}
\label{sec-model}

\subsection{Notations, Graphical Models, and Quantities of Interest}
\label{sec-model-teacher-student}
Let us consider $N$ $M$-dimensional vectors $\bm{x}_{i}=(x_{i1},x_{i2},\ldots,x_{iM})^{\top} \in \mR^{M}~(i=1,2,\ldots,N)$ and denote the variable index set (identified with the variable set itself) by $\Vset=\{1,2,\ldots,N\}$. As mentioned in sec. \ref{introduction}, we make observations over a set of $p$-plets which is uniformly and randomly sampled over all $p$-plets without replacement under the constraint that each vector $\bm{x}_{i}$ is observed $c=\alpha M~(\alpha=O(1))$ times. The set of actually selected  $p$-plets is hereafter denoted as $\Eset$ and called edge set, with the intention of later associating this with a graphical model. For convenience, we introduce the symbol $\bs$ to index the selected $p$-plets. Among the $p$-plets, the subset that includes site $i (\in \Vset)$ is denoted by $\del i=\{\bs \in \Eset\mid i \in \bs \}$: the equality $|\del i|=c$ holds by the above constraint. In parallel, we express the variable set belonging to the $p$-plet $\bs$ as $\del \bs=\{i\in \bs \}$: the equality $|\del \bs|=p$ holds by definition. The noiseless observation associated to a $p$-plet $\bs$, $\pi_{\bs}$, is thus rewritten from \Req{x to pi} to
\be
\pi_{\bs}
=\frac{\lambda }{\sqrt{M}}\sum_{\mu=1}^{M}F_{\bs,\mu}\prod_{i\in \partial \bs}x_{i \mu}.
\label{eq-pi-bs}
\ee
Its observation $y_{\bs}$ is generated from the output distribution $P_{\rm out}(y_{\bs}\mid \pi_{\bs})$. Under these notations, the posterior distribution given all the observations $\{y_\bs\}_{\bs \in \Eset}$ is expressed as
\beqn
P_\text{pos.} \qty(\{\bm{x}_i\}_{i \in \Vset}\mid \{y_\bs\}_{\bs \in \Eset}) &
= 
\frac{1}{Z \qty(\{y_\bs\}_{\bs \in \Eset})}
\qty( \prod_{\bs \in \Eset}P_{\rm out}(y_{\bs}\mid \pi_{\bs}) )
\qty(\prod_{i\in \Vset}\prod_{\mu=1}^{M} P_\text{pri.}(x_{i\mu})),
\label{eq-formula-for-students}
\eeqn
where the prior distribution $P_\text{pri.}$ is assumed to be separable over each component and $Z \qty(\{y_\bs\}_{\bs \in \Eset})$ is a normalization constant expressed as
\beqn
Z \qty(\{y_\bs\}_{\bs \in \Eset})
=
\int \qty( \prod_{i\in \Vset}\prod_{\mu=1}^{M} dx_{i \mu} )
\qty( \prod_{\bs \in \Eset}P_{\rm out}(y_{\bs}\mid \pi_{\bs}) )
\qty(\prod_{i\in \Vset}\prod_{\mu=1}^{M} P_\text{pri.}(x_{i\mu})).
\label{eq-partition-function}
\eeqn
In addition, the edge set size is denoted as $N_{\bs}=|\Eset|$, and it is related to the other parameters as
\beq
N_{\bs}=Nc/p=(\alpha/p)NM
\label{eq-Nbs}
\eeq
Note that this edge set size is much smaller than the number of all $p$-plets $\binom{N}{p}=O(N^{p})$. The assumption of this sparse observation with the limit $M,N\to \infty$ keeping $N \gg M$ is the key to making our analysis exact, which has been thus termed {\it dense limit} as explained in sec. \ref{introduction}. The quantity
\beq
\gamma=\frac{\alpha}{p}
\label{eq-gamma}
\eeq
represents the ratio of the number of observations to the number of variables to be estimated, making it an important parameter that characterizes the estimation accuracy.

The above problem setup can be illustrated by a factor graph as shown in Fig.~\ref{fig:fig_graph}: the variable $\V{x}_i$ corresponds to a variable node represented by a gray circle, and the likelihood $P_{\rm out}(y_{\bs}\mid \pi_{\bs})$ is associated to a function node expressed by a black square.
\begin{figure}
  \begin{center}
    \includegraphics[width=0.95\textwidth]{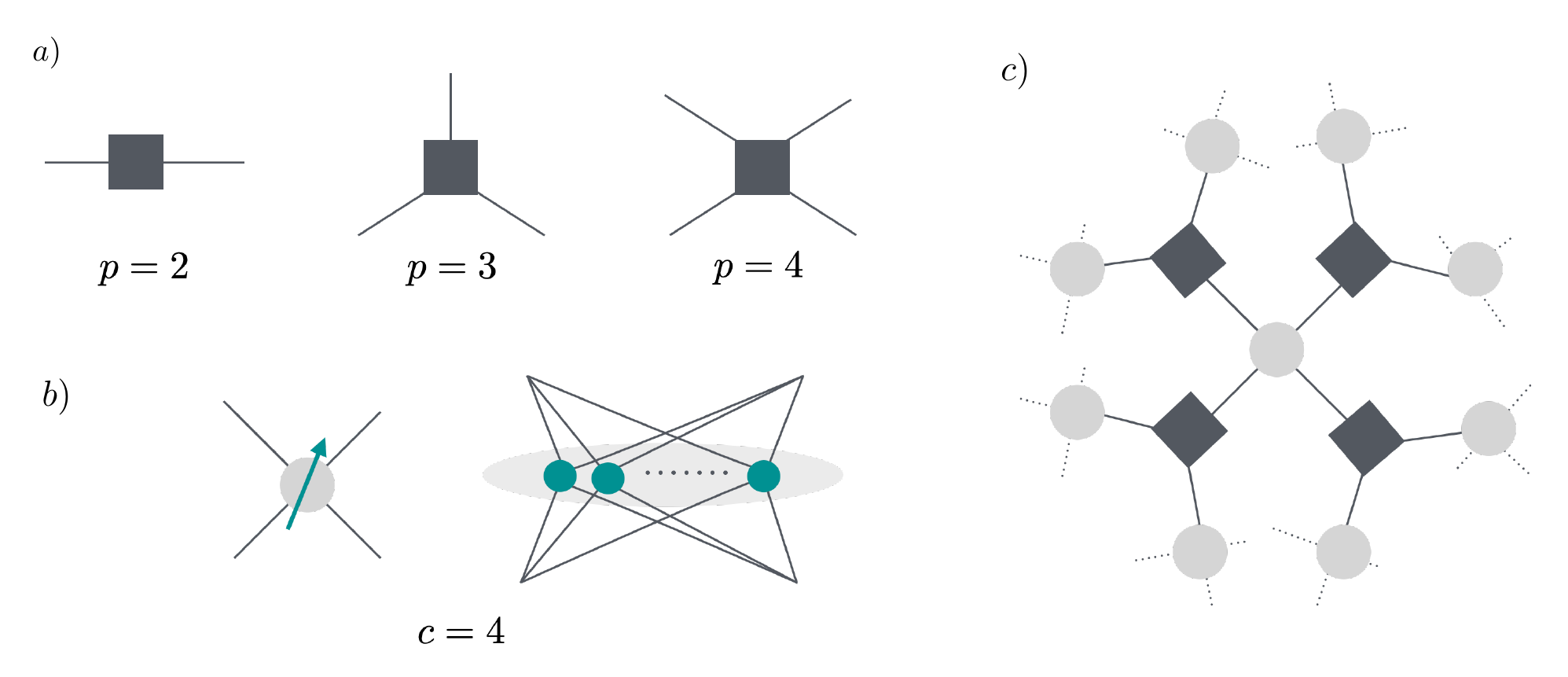}
    \end{center}
 	\caption{
          Graphical representations of our model.
          a) The black-squares $\bs$ represent the function nodes to each of which a $p$-plet $(i_{1},i_{2},\ldots,i_{p})$
          is assigned. Each function node has $p$ arms. 
          b) The gray circles represent the variable nodes $(i=1,2,\ldots,N)$
          to each of which a $M$-component vectorial variable ${\bm x}_{i}$ (arrow) is assigned.
          Each of the variable nodes has $c=\alpha M$ arms.
          Equivalently, we may also represent a variable node as a collection of sub-variable nodes (green circles)
          to each of which a component of the vector $x_{i\mu}$ is assigned.
          c) a factor graph is created by joining the variable and function nodes.}
	  \label{fig:fig_graph}
\end{figure}
The variable node can be decomposed into a collection of sub-variable nodes corresponding to the variable components $x_{i\mu}$, which are represented by green circles. From this graphical representation, the algorithm known as belief propagation (BP) is naturally derived, which will be explained in sec. \ref{sec-mp}.

\subsubsection{Technical Assumptions}
\label{sec-techinical-assumptions}

For each tensor component $\pi_{\bs}$ to be $O(1)$, the following two properties are hereafter assumed.
\begin{enumerate}
\item{ The prior distribution is zero-mean:
  \be
    \int \dd x P_\text{pri.}(x)x=0.
    \Leq{prior_constraint}
  \ee}
\item{The linear coefficient $F_{\bs,\mu}$ satisfy one of the following two conditions.
\begin{enumerate}
\item{(deterministic model)}
  \beq F_{\bs,\mu}=1.
  \label{eq-F-uniform}
  \eeq
\item{(random model) or (random spreading factor model)}

  $F_{\bs,\mu}$  is an i.i.d. random variable  with zero mean and unit variance.
  Writing the distribution function of $F$ as $P(F)$,
  \beq
  \int dF_{\bs,\mu} P(F_{\bs,\mu}) F_{\bs,\mu}=0, \qquad  \int dF_{\bs,\mu} P(F_{\bs,\mu}) F_{\bs,\mu}^{2}=1.
    \label{eq-F-random}
  \eeq
\end{enumerate}
For the deterministic model we can write $P(F_{\bs,\mu})=\delta (F_{\bs,\mu}-1)$.

}
\end{enumerate}

These two properties are crucial for making our analysis exact, which will become clear particularly in sec. \ref{sec-replica} for the replica analysis. Furthermore, we remark that the two conditions for the linear coefficient, deterministic and random, yield the identical macroscopic result in the dense limit. From the viewpoint of the tensor/matrix decomposition, the deterministic case $F=1$ corresponds to the so-called CP (Canonical Polyadic) decomposition and is more natural. These facts might seem to suggest that there is no reason to analyze the random model. We, however, argue that this is not the case. First, the fact that the deterministic and random cases yield the identical result in the limit is itself a nontrivial finding that has been revealed through our analysis. Second, in the message passing algorithms we develop, the random coefficient significantly improves convergence, particularly in the case of  $p=2$;
this as a result enhances the consistency between the microscopic behavior of the algorithm and the macroscopic behavior predicted by SE or the replica method, even for relatively small values of  $N$ and $M$. This means that the random case also has its own significance. 

Although any prior distributions satisfying \Req{prior_constraint} can be treated in the presented formulation, we mainly consider the two cases: Ising and Gaussian distributions. Their explicit formulae are
\beq
P_{\rm pri.}(x)=\frac{1}{2} \left[\delta(x-1)+\delta(x+1) \right]
\label{eq-prior-ising}
\eeq
for the Ising case and 
\beq
P_{\rm pri.}(x)=\frac{e^{-\frac{1}{2}x^{2}}}{\sqrt{2\pi}}
\label{eq-prior-gaussian}
\eeq
for the Gaussian case.

Similarly, although our theoretical treatment again allows us to treat generic output distributions as understood from the derivations shown in secs. \ref{sec-replica},\ref{sec-mp}, we consider two types of specific distributions for quantitative analysis. The first type treats an additive noise $w$. Namely
\beq
y = \pi + w.
\label{eq-additive-noise-output}
\eeq
This means the likelihood function can be written as
  \beqn
  P_\text{out}\qty(y|\pi) &= \int dw W(w)\delta(y-(\pi+w ))
=W(y-\pi),
\label{eq-p-put-additive-noise}
  \eeqn
with $W(w)$ being the noise distribution. A particularly interesting case is the Gaussian noise:
\beq
W(w)=\mathcal{N}(w\mid 0,1).
\eeq
\blue{Here we chose to work with the case of zero mean and unit variance.}
The other type is the sign output function: 
\beq
  y = {\rm sgn}\qty(\pi).
 \label{eq-sign-output}
 \eeq
This yields
  \beq
  P_\text{out}(y|\pi) = 
  \delta(y-1)\theta(\pi)+  \delta(y+1)\theta(-\pi),
  \eeq
  where $\theta(x)$ is the Heaviside step function.

\subsubsection{Averages and Order Parameters}
\label{sec-order-parameters}
In the present problem, there are several different types of random variables. To discriminate the average over these things, we introduce two different average notations.

The first average is that of $x_{i \mu}$s over the posterior distribution \nreq{eq-formula-for-students} and is denoted as
\beq
\langle (\cdots) \rangle  =
\int \qty( \prod_{i\in \Vset}\prod_{\mu=1}^{M} dx_{i\mu} )
 P_{\text{pos.} }  
 \qty(  \{\bm{x}_i \}_{i \in \Vset} \mid \{y_{\bs} \}_{\bs \in \Eset}) (\cdots).
 \eeq
 This corresponds to the thermal average in statistical mechanics.

%
 
The posterior average depends on the observations $\{y_\bs\}_{\bs \in \Eset}$ which are random variables themselves, and we denote the average over the $y_{\bs}$s by
\beqn
&&
    \mathbb{E}_{y}\lb \cdots \rb
    =
    \int \lb \prod_{\bs\in \Eset}dy_{\bs} \rb P(\{y_{\bs} \}_{\bs \in \Eset}) (\cdots),
    \label{eq-posteriror-average}
\eeqn
where $P(\{y_{\bs} \}_{\bs \in \Eset})$ is the distribution of $\{y_\bs\}_{\bs \in \Eset}$, which is computed from the likelihood and the prior distribution as follows:
\be
P(\{y_{\bs} \}_{\bs \in \Eset})
=
\int 
	\left(\prod_{\bs \in \Eset} 
	\prod_{\mu=1}^M dF_{\bs,\mu} P_{F}(F_{\bs,\mu})
	\right)
	\left(\prod_{i\in \Vset} 
	\prod_{\mu=1}^M dx_{i\mu} P_{\rm pri.}(x_{i\mu})
	\right)
\left( \prod_{\bs\in \Eset} P_{\rm out}(y_\bs \mid \pi_\bs) \right). \qquad
\ee
This average corresponds to the quenched average in statistical mechanics of disordered systems. Although the graph structure associated to $\Eset$ is also a random variable in the current setup, \blue{as we will see both in the replica approach
 (see sec~\ref{sec-interaction-part-of-free-energy}) and message passing approach (see sec~\ref{sec-SE-equations})} this randomness has no direct consequence on the analysis
of macroscopic observable in the limit $N, M \to \infty$, and hence we refrain from introducing a symbol to represent it.
\blue{(In numerical analysis of message passing algorithms on finite sized systems we perform averages over the realizations of $\Eset$.)}

Our estimation target is $\{ \V{x}_i \}_{i \in \Vset}$. To express this explicitly, we denote the true vectors  generated from the prior distribution $P_{\rm pri.}$ as  $\{ \V{x}_{*,i} \}_{i \in \Vset}$. Here, the symbol $*$ is interpreted to represent parameters to be estimated. The $p$-plets $\{ \pi_{\bs} \}$ are also considered to be the estimation targets and the same subscript rule is applied. Namely, we express the true value of the  $\pi_{\bs}$ as
\be
\pi_{*,\bs}=\frac{\lambda}{\sqrt{M}}\sum_{\mu=1}^{M}F_{\bs,\mu}\prod_{i\in\del \bs}x_{*,i\mu}.
\ee
The linear coefficient $F_{\bs,\mu}$ is assumed to be known and thus has no $*$-subscript. Hereafter, we call the system with these true values {\it teacher}, while the inference model is termed {\it student}. This terminology follows the so-called teacher-student setup in generic inference problems~\cite{zdeborova2016statistical}, which originally comes from neural networks.

As it will be evident in secs. \ref{sec-replica} and  \ref{sec-mp}, the following order parameters play the key roles in characterizing the macroscopic behavior:
\beqn
m &=
\mathbb{E}_{y}\left[ \frac{1}{NM} \sum_{i\in \Vset} \sum_{\mu=1}^M x_{*,i\mu} \langle x_{i\mu} \rangle \right], \label{eq-def-m}
\\
q &=
\mathbb{E}_{y} \left [ \frac{1}{NM} \sum_{i\in \Vset} \sum_{\mu=1}^M \langle x_{i\mu} \rangle^2 \right].
\label{eq-def-q}
\eeqn
The parameter $m$ is the overlap which measures the similarity between teacher and student.
We may call it as 'magnetization' due to the analogy to spin systems in physics.
When the meaningful inference is possible, the inequality $m > 0$ holds. Using the physics terminology, we
may call the phase with $m=0$ as paramagnetic phase (phase where inference is impossible)
and $m>0$ as ferromagnetic or magnetized phase.
On the other hand, the parameter $q$ is analogous to the Edwards-Anderson
order parameter used in spin-glass physics\cite{edwards1975theory}.


These order parameters are directly connected to two standard quantitative indicators of inference accuracy. One is the MSE concerning to the input $\V{x}_i$:
\be
\frac{1}{NM} \sum_{i\in \Vset} \sum_{\mu=1}^M \mathbb{E}_{y} \left[ \lb \left\langle x_{i\mu}\right\rangle -x_{*,i\mu} \rb^2\right] 
=1-2m+q,
\label{eq-def-MSE1}
\ee
where we assumed $\mathbb{E}_{y}x^2_{*,i\mu}=1$, which is the case according to our priors (see  \eq{eq-prior-ising} and \eq{eq-prior-gaussian}). The other is the MSE about the output $\pi_{i_1,i_2,\ldots, i_p}$:
\beqn
\left(
\begin{array}{c}
  N      \\
    p   
\end{array}
\right)^{-1}
\sum_{1 \leq i_{1} <  i_{2} < \cdots i_{p} \leq N}
    \mathbb{E}_{y}\lsb 
    \lb  \left\langle \pi_{i_1,i_2,\ldots,i_p}\right\rangle -\pi_{*,i_1,i_2,\ldots,i_p} \rb^{2}
 \rsb
 =\lambda^2 (1-2m^{p}+q^p).
 \label{eq-def-MSE2}
\eeqn
Here we assumed that the posterior distribution is separable and $\frac{1}{N}\sum_{i\in \Vset}\Ave{x_{i\mu}}=0$, $\frac{1}{N}\sum_{i\in \Vset}\Ave{x_{i\mu}}\Ave{x_{i\nu}}=0~(\mu \neq \nu)$, both of which asymptotically hold in the current setup, and $\mathbb{E}_{y}\lb\pi^{2}_{*,i_1,i_2,\ldots,i_p}\rb=\lambda^2$ which is the case in our priors.
The latter MSE can also be regarded as a 'generalization error' in the context of tensor completion
\blue{in the following sense. Given an inference of the vectors $\bm{x}_{i}$ $(i=1,2,\ldots,N)$ we can
  construct estimates of any components of the tensor $\pi_{i_{1},i_{2},\ldots,i_{p}}$
  through \eq{eq-def-pi} including those that have not been observed.
  The error of the inference of such tensor components not used in inference (training) may be
  regarded as 'generalization error'.
  Since the number of components observed $N_{\bs}=Nc=\alpha NM$ is a vanishing fraction of the
  total number pf the components $N^{p}$ in the dense limit $N \gg M \gg 1$, the MSE \eq{eq-def-MSE2}
  is dominated by non-observed ones.
  Thus, it can be regarded as a generalization error for tensor completion.
}

In the Bayes-optimal case, one can prove easily that  $m=q$ holds \cite{zdeborova2016statistical}, which is analogous to what happens along the Nishimori line in spin-glass systems \cite{nishimori2001statistical,iba1999nishimori}.
This idealized situation is the problem setup of this study.

%
%

\subsection{Symmetries}
\label{sec-symmetries}

There are several global symmetries in our inference problems:
the statistical weight of a microscopic state $\{x_{i \mu}\}$ is invariant
  under the following operations depending on situations.
\begin{itemize}
\item {\bf reflection}: 
  In the cases of even $p$, the system is invariant 
  under $x_{i \mu} \to -x_{i \mu}$ for all $i \in V$.
  This is because we are considering prior distributions
  $P_{\rm pri.}(x)$ which are even functions of $x$
  (see  \eq{eq-prior-ising} and \eq{eq-prior-gaussian}) 
  .
\item {\bf permutation}: In the case $F_{\bs,\mu}=1$ (deterministic model),  the system is invariant under permutations of the index $\mu$ for the components.   In the random model (\eq{eq-F-random}), this symmetry is suppressed.
\item {\bf rotation}: In the special case of $p=2$ with the Gaussian prior
  (\eq{eq-prior-gaussian}), the system with 
$F_{\bs,\mu}=1$ (deterministic model, \eq{eq-F-uniform}) 
  is invariant under global rotations
  $x_{i \mu}=\sum_{\nu=1}^{M}R_{\mu \nu}x_{i \nu}$ with any rotation matrix
  $R_{\mu \nu}$.
  In the random model (\eq{eq-F-random}), this symmetry is suppressed.
\end{itemize}
Note that even in the random model (\eq{eq-F-random}) the reflection symmetry still remains for even $p$.

The order parameters defined in \eq{eq-def-m} and \eq{eq-def-q} become $0$ under the above symmetry transformations.
By the replica theory developed in sec.~\ref{sec-replica} and applied to specific models in sec~\ref{sec-result}, we investigate 
possibilities of spontaneous breakings of the symmetries in the dense limit $\lim_{c \to \infty}\lim_{N \to \infty}$.
By the message passing algorithm developed in sec.~\ref{sec-mp} and applied to specific models in sec~\ref{sec-result}, 
the remaining symmetry may be broken dynamically by the algorithm even in systems with finite $N$,$c$ and $M$.

\subsection{Related Work}
Let us sketch below how our problem can be compared with other problems.

\subsubsection{Matrix/Tensor Factorization}
Matrix factorization ($p=2$) has been extensively studied across various disciplines due to their broad applicability. For instance, dictionary learning seeks to represent signals or data samples as sparse linear combinations of learned basis elements, and it has proven effective in image denoising and compression tasks \cite{10.1145/1553374.1553463,4011956}. Another prominent line of research focuses on low-rank matrix completion, where the goal is to recover a low-rank structure from incomplete observations; a landmark result in this area showed that under certain conditions exact recovery can be achieved through convex optimization \cite{candes2012exact}. In applications such as array signal processing and compressed sensing, blind calibration has emerged to jointly estimate unknown sensor (or system) parameters and the underlying low-rank or sparse representation of data \cite{7919265}. Finally, robust PCA addresses the decomposition of data matrices into a low-rank component plus sparse outliers, ensuring stable principal component recovery even in the presence of gross corruptions \cite{10.1145/1970392.1970395}. The situation is similar for tensor factorization: many applications are found in fields such as signal processing and machine learning \cite{anandkumar2014tensor,cichocki2015tensor,sidiropoulos2017tensor}. These approaches collectively highlight the power of exploiting low-dimensional structures, whether in matrices or higher-order tensors, to solve a range of real-world problems.

It is important to note that the matrix/tensor factorization is performed often in the presence of missing entries \cite{acar2011scalable} which can be substantial in practice. This aligns with our current problem setup with sparse observation. When solving such problems, the estimated matrix/tensor's rank should be effectively low enough compared to its apparent dimensionality. \blue{Indeed matrix completion based on sparse measurements have been considered
also by statistical mechanis approaches
for low-rank cases \cite{keshavan2012efficient,gamarnik2016note,okajima2021matrix,stephan2024non}.}
However the magnitude of such effective rank in real-world data varies between different cases, and it is not always very low. For example, estimated tensor-rank can be very large (order $10^2$) for facial images \cite{zhao2015bayesian} and recommendation systems ($50$--$1000$) \cite{bell2007bellkor,koren2009matrix}, while $N$ is even larger by orders of magnitude. The assumption  $N \gg M$  in the dense limit that we consider in this paper is well aligned with this situation, suggesting that the analysis in this paper can provide meaningful insights in such a setting.

\subsubsection{Statistical Mechanical Approach to Tensor/Matrix Factorization}
Low-rank ($M=O(1)$) tensor/matrix factorization have been studied in a number of earlier works from statistical mechanical viewpoints. For example, \cite{rangan2012iterative} proposed an iterative estimation algorithm for rank-one matrices under various constraints (e.g., sparsity or non-negativity) and analyzed its performance using SE. \cite{matsushita2013low} combined low-rank matrix estimation with clustering and invented an AMP algorithm, showing that the AMP algorithm can outperform classical algorithms such as Lloyd’s K-means in certain regimes. \cite{richard2014statistical} focused on tensor PCA in the rank-one spiked tensor model and conducted a thorough analysis of both statistical and computational thresholds. In particular, this work derived an information-theoretic threshold (below which no estimator can reliably recover the spike) and compared it to the thresholds required by polynomial-time algorithms proposed also in the paper (naive power iteration, unfolding, and an AMP algorithm). The paper also discussed how the maximum likelihood estimation achieves the statistically optimal threshold, whereas practical, polynomial-time methods require a larger signal-to-noise ratio to succeed. This highlights the presence of easy and hard regions for spike recovery. \cite{lesieur2017statistical} also studied the spiked tensor model with rank $M=O(1)$ in a Bayes-optimal framework, and computed the mutual information, MMSE, and other statistical measures via the replica method. A key finding is that there is an information-theoretic threshold on the signal-to-noise ratio below which accurate recovery is impossible, regardless of computational resources. The paper also proposes an AMP algorithm and examines its performance, showing that while AMP can achieve the MMSE in certain parameter regimes, there remains a hard region where AMP (and presumably any polynomial-time algorithm
\cite{celentano2020estimation}
) cannot match the Bayes-optimal performance. 

On the other hand, the high-rank (extensive-rank) case $(M=O(N))$ has been analyzed in some papers but only in the context of matrix factorization or dictionary learning. \cite{sakata2013statistical,sakata2013sample} analyzed the reconstruction performance of the dictionaries (matrices) based on the frameworks of the reconstruction error minimization with the $\ell_0$ regularization and the Bayes-optimal one, respectively, using the replica method. The results show that the reconstruction is possible with $O(N)$ training samples both for the error minimization and the Bayes-optimal frameworks, underscoring that feasible sample complexities scale linearly with the dictionary dimension. In the Bayesian framework, both \cite{krzakala2013phase,kabashima2016phase} used the replica method to provide asymptotically exact descriptions of the MMSE and free-energy, and invented AMP algorithms. \cite{krzakala2013phase} mainly treated the problems of dictionary learning and blind calibration while \cite{kabashima2016phase} generalized a range of matrix factorization problems (dictionary learning, low-rank matrix completion, blind calibration, robust PCA) into a single Bayesian inference framework. Both established phase diagrams characterizing whether the matrix reconstruction is tractable or not, and clarified how they depend on the problem setting details.  

Actually, the high-rank case poses significant challenges. It has become recognized that the above analyses are not, in fact, precise after a while. To overcome the situation, \cite{maillard2022perturbative} proposed a systematic high-temperature or Plefka--Georges--Yedidia (PGY) expansion to derive the improved (but still approximate) mean-field description, showing that third-order corrections in the expansion lead to better predictions of the reconstruction error, highlighting the importance of including normally neglected terms. 
Progress has been made in sub-linearly high-rank cases $M=O(N^{a})$  with $0 < a < 1$
with full measurements $O(N^{2})$ \cite{reeves2020information,pourkamali2024matrix,barbier2024multiscale}.
Moreover in \cite{barbier2022statistical}, Barbier and Macris studied dictionary learning by combining random matrix theory with the replica method, introducing a new ansatz called spectral replica method. They derived exact or near-exact characterizations of the mutual information and mean-square errors for the matrix factorization task. In \cite{barbier2024phase},  Barbier et al. considered symmetric matrix denoising of a product-form matrix $XX^\top$ under additive Wigner noise, but with a non-rotationally-invariant prior on $X$. Their extensive numerical simulations show a distinct first-order denoising-factorization transition. Below this critical noise level, the rotationally invariant estimator (RIE) remains nearly optimal regardless of the prior. Above the threshold, however, the discrete and factorized structure of the prior strictly outperforms the RIE, though the authors observe that achieving this factorization-based improvement is often algorithmically difficult (due to a glassy posterior landscape).

As shown above, accurately analyzing the high-rank situation is challenging.
In the present paper we show it becomes relatively tractable under the dense limit ($N \gg M \gg 1$ with $O(NM)$ measurements) assumed in this paper for $p \geq 2$.
By leveraging the well established replica method and AMP within the dense limit, precise asymptotics are obtained. Interestingly, in contrast to \cite{pourkamali2024matrix} where the sub-linearly high-rank case $M=O(N^{\alpha})$ with $0 < \alpha < 1$ is treated under full measurements, our problems do not become equivalent to (a bunch of) the rank-one $M=1$ cases.

\subsubsection{Vectorial Constraint Satisfaction Problems}
\label{sec-vecotorialCSP}
Vectorial constraint satisfaction problems (CSP) was studied in \cite{yoshino2018}, and the problem studied in this paper is exactly an inverse problem or {\it planted version} of it. 

Let us briefly summarize the vectorial CSP problem. One considers the $M( \gg 1)$-dimensional vectorial 'spin' variables $\bm{x}_{i}$ ($i=1,2,\ldots,N$) interacting
with each other through $p$-body interactions specified by the following Hamiltonian:
\beq
H=\sum_{\bs=1}^{N_{\bs}} V(\delta-\pi_{\bs}),
\qquad
\pi_{\bs} =\frac{1}{\sqrt{M}} \sum_{\mu=1}^M F_{\bs,\mu}\prod_{j\in\del\bs} x_{j\mu},
\eeq
with a generic potential $V(x)$. As the 'spin' variables, Ising and spherical spins were considered.
For the factor $F_{\bs,\mu}$ (denoted as $X_{\bs \mu}$ in \cite{yoshino2018}), the deterministic, fully disordered, and intermediately disordered cases were considered. Glass transitions (clustering transitions) and jamming (SAT/UNSTAT transition) were studied in detail.
Here the number of components $M$ is assumed to be large $M \gg 1$.
This approach is complementary to the CSPs of
scalar continuous variables \cite{cavaliere2021optimization,cavaliere2025biased}.

In the case of linear potential
\beq
V(x)=Jx,
\label{eq-linear-potential}
\eeq
the problem was found to become essentially the same as the conventional (scalar $M=1$) $p$-spin (Ising/spherical) spin glass models with global $c \propto N^{p-1}$ coupling. In the case of $p=2$, the disordered
$F_{\bs,\mu}$ was needed to avoid crystallization. Otherwise, disorder-free glass transitions were found
within super-cooled paramagnetic phases.

More interesting cases are of non-linear potentials. In the quadratic potential
\beq
V(x)=\frac{\epsilon}{2}x^{2},
\label{eq-quadratic-potential}
\eeq
continuous replica symmetry breaking (RSB) was found even for the $p=2$ spherical model. 
In the case of the hard-core potential
\beq
e^{-\beta V(x)}=\theta(x),
\label{eq-hardcore-potential}
\eeq
various types of RSB and  
jamming transitions were found with spherical spins. The universality class of the jamming
turned out to be the same as that of hard-spheres \cite{charbonneau2014exact} for any $p \geq 1$,
including $p=1$ which corresponds to simple perceptrons \cite{franz2016simplest}.

\subsubsection{Error-correcting codes}
\label{sec-error-correcting-codes}

Our problem is also related to error correcting codes, which is a traditional topic in statistical mechanical informatics. In the Sourlas code \cite{sourlas1989spin} (which assumes global coupling $c \propto N^{p-1}$) and in the Kabashima-Saad code \cite{kabashima1998belief} (sparse coupling $c=O(1)$), $p$-body products of scalar ($M=1$) binary (Ising) variables are used to build the codes $\pi_{i_{1},i_{2},\ldots,i_{p}}$. Both codes attains the Shannon's information theoretical bound~\cite{shannon1948mathematical} in the limit $p \to \infty$.
An important difference is that the transmission rate $R=NM/N_{\bs}=M p/c$ in the thermodynamic limit $N \to \infty$ vanishes
in the Sourlas code ($M=1$) because of the global coupling $c \propto N^{p-1}$, while it can remain finite in the Kabashima-Saad code ($M=1$) by taking the limit
$c\to \infty$, after $N \to \infty$ limit, with the ratio $p/c$ fixed.
On the other hand, we consider vectorial variables with large number of components $M=c/\alpha \gg 1$ and 
intermediately dense coupling  $N \gg M \gg 1$. Our model also attains the Shannon's bound
with {\it finite} transmission rate $R=p/\alpha$ by taking the limit $p \to \infty$ with the ratio $p/\alpha$ fixed.

\subsubsection{Other Related Issues}
Linear observation ($p=1$) problems were widely studied in a number of contexts such as code division multiple access (CDMA)~\cite{tanaka2002statistical,kabashima2003cdma}, perceptron~\cite{gardner1989three,gyorgyi1990inference}, and compressed sensing~\cite{donoho2006compressed,kabashima2009typical,donoho2009message}. Our model can be regarded as a non-linear ($p> 1$) extension of these problems for a large number of vectors ($N \gg 1$).
In the linear estimations ($p=1$), a 'random spreading code' $F$ corresponding to \eq{eq-F-random} is indispensable.

The dense limit is used in this paper to make it possible to neglect higher-order loop effects. The essentially same technique was recently employed in the analysis of deep neural networks (DNNs)~\cite{yoshino2023spatially}. The dense limit serves as a useful asymptotic regime that enables analytical calculations. Beyond DNNs and tensor factorization, there may be other problems where this approach proves beneficial. 

\section{Replica Theory}
\label{sec-replica}

\blue{In this section we develop a replica theory which we use in sec.~\ref{sec-result}
to analyze specific models. 
In sec~\ref{sec-replica-basic-formalism} we explain the basic strategy.
The crucial ingredicent in our scheme, absent in standard literatures, is
a cumulant exapnsion to study the effects of interaction between microscopic variables
discussed in sec.~\ref{sec-interaction-part-of-free-energy} and appendix. This allows one to avoid blind usage of Gaussian ansatz which fails in some fully connected systems. Readers may skip secs.~\ref{sec-replica-entropic-part-of-free-energy}-\ref{sec-replica-total-free-energy} if they are not interested with technical but standard details. In secs.~\ref{sec-error-correcting-code} and \ref{sec-remarks-on-non-linearity}, we discuss the significance of our model from the viewpoint of error correcting code and of matrix factorization ($p=2$) in the presence of additive Gaussian noise, respectively. In the latter, we explain why our problem does not become equivalent to a low-rank problem in contrast to sub-linearly rank cases $M=O(N^{a})$ with $0 < a < 1$ based on full measurements $O(N^{2})$. In sec.~\ref{sec-independence-on-choices-of-F} we point out independence of macroscopic quantities on choices of the (spreading) factor $F_{\bs,\mu}$.
}


\subsection{Basic formalism}
\label{sec-replica-basic-formalism}

\subsubsection{Introduction of replicated system}

The logarithm of the normalization constant  $Z$ (see \eq{eq-partition-function}), or the free energy in statistical mechanics, contains all the information about the system and serves as the central quantity in the discussion here. This quantity is expected to show the self-averaging property and thus converges to its expectation value in our dense limit. To analytically deal with this expectation, the replica method utilizes the following identity:
\be
\mathbb{E}_{y} \ln Z=\lim_{n\to 0} \frac{\partial}{\partial n}
\ln
\mathbb{E}_{y} Z^n .
\ee
The free energy $f$ is thus defined and computed as
\beq
 - f = \lim_{M,N\to\infty}\frac{1}{MN}\mathbb{E}_{y} \ln Z \lb \{y_\bs\} \rb 
    = \lim_{M,N\to\infty}\lim_{n\to 0} \frac{1}{MN}\frac{\partial}{\partial n}
    \ln 
    \mathbb{E}_{y} Z^n
    \lb \{y_\bs\} \rb.
 \label{eq-def-f}
 \eeq
 with the partition function $Z$ defined in \eq{eq-partition-function}.
In practice, we first set $n$ as a positive integer in order to perform calculations, obtain an analytically continuable expression under suitable ansatz, and then take the limit  $n \to 0$ using this expression.	 For any positive integer $n$, we may write the replicated partition function $Z^n$ as follows: 
\beq
Z^{n}(\{y_\bs\})= \int 
\prod_{a=1}^n
\lbb 
\left(\prod_{i\in \Vset}\prod_{\mu=1}^M dx_{i\mu}^a\right)
\left(\prod_{i\in \Vset} \prod_{\mu=1}^{M}P^{_{a}}_{\rm pri.}
(x_{i\mu}^a)\right)\left( \prod_{\bs\in \Eset} 
P_{\rm out} \lb y_\bs\lvert\pi^a_\bs \rb \right)
\rbb
,
\label{def:Zn-replica}
\eeq
where $\pi_{\bs}^{a}$ is defined as (see \eq{eq-pi-bs})
\beq
\pi^{a}_{\bs} =\frac{1}{\sqrt{M}} \sum_{\mu=1}^M F_{\bs,\mu}\prod_{j\in\del\bs} x^{a}_{j\mu} .
\label{eq-pi-bs-replica}    
\eeq
Recalling how the teacher generates the measurement $y_{\bs}$
(see \eq{eq-posteriror-average}), we find
\be
&&
    \mathbb{E}_{y}\lb \cdots  \rb
    =
    \int \lb \prod_{\bs\in \Eset} dy_{\bs}
    \prod_{\mu=1}^M dF_{\bs,\mu} P_{F}(F_{\bs,\mu})
     \rb
\no  \\  &&
\times    \int \left(\prod_{i\in \Vset}\prod_{\mu=1}^M dx_{*,i\mu}\right)
\left(\prod_{i\in \Vset}\prod_{\mu=1}^{M}P_{\rm pri.}(x_{*,i\mu})\right)
\left( \prod_{\bs\in \Eset} P_{\rm out}(y_\bs\lvert\pi_{*,\bs})\right) 
\lb \cdots \rb.
\ee
Assigning the replica index $0$ to the teacher or the ground truth
as $x_{*,i\mu}=x_{i\mu}^0,~\pi_{*}=\pi^0$, we have
\beqn
&&\mathbb{E}_{y} Z^n\lb \{y_\bs\}  \rb
=\int  \lb \prod_{\bs\in \Eset} dy_{\bs} 
\prod_{\mu=1}^M dF_{\bs,\mu} P_{F}(F_{\bs,\mu})
\rb
\no \\ && 
\times 
\int
\prod_{a=0}^n
\lbb
\left(\prod_{i\in \Vset}\prod_{\mu=1}^M dx_{i\mu}^a\right)
\left(\prod_{i\in \Vset}\prod_{\mu=1}^{M}P^{_{a}}_{\rm pri.}(x_{i\mu}^a)\right)
\left( \prod_{\bs\in \Eset} P_{\rm out}(y_\bs\lvert\pi^a_\bs)\right) 
\rbb
\nonumber \\
&&
=
\left.
Z_{1+n}[\partial/\partial h]
\prod_{\bs\in \Eset}
\lb 
\int  dy_{\bs}
\prod_{a=0}^nP_{\rm out}(y_\bs\lvert h^a_\bs)
\rb
\right|_{h=0},
\label{eq-1+n-system}
\eeqn
where we introduced
\be
&&
Z_{1+n}[\partial/\partial h]=
    \int \lb \prod_{\bs\in \Eset} \prod_{\mu=1}^M dF_{\bs,\mu} P_{F}(F_{\bs,\mu})
     \rb
     \no \\ &&
\times 
\int\prod_{a=0}^n
\lbb
\left(\prod_{i\in \Vset}\prod_{\mu=1}^M dx_{i\mu}^a\right)
\left(\prod_{i\in \Vset}\prod_{\mu=1}^{M}P^{_{a}}_{\rm pri.}(x_{i\mu}^a)\right)
\rbb
e^{\sum_{\bs\in \Eset} \sum_{a=0}^{n}\pi_{\bs}^{a} \frac{\partial}{\partial h_{\bs}^{a}}}.
\label{eq-z-1+n}
\ee
The last equation of \eq{eq-1+n-system} can be understood from the following relation:
\beq
P_{\rm out}^{}(y\lvert\pi)=
e^{\pi\frac{\partial}{\partial h}}
\left.
P_{\rm out}^{}(y\lvert h)
\right|_{h=0}
\eeq
\blue{which can be proven formally doing power series expansion of the exponential.}

\subsubsection{Strategy to extract free-energy expressions}

In the limit $M \to \infty$ we wish to analyze the order parameters (\eq{eq-def-m} and \eq{eq-def-q}) which can detect the spontaneous breaking of the global symmetries
discussed in sec.~\ref{sec-symmetries}.
To this end we introduce overlaps between
replicas including the teacher $a=0$ and students $a=1,2,\ldots,n$:
\beqn
Q_i^{ab}=\frac{1}{M}\sum_{\mu=1}^Mx^a_{i\mu}x^b_{i\mu}.
\label{eq-Qab}
\eeqn
In order to introduce the order parameters
in our analysis we use an identity
\begin{eqnarray}
  1 &=& \int\prod_{i\in \Vset}
  \prod_{a < b}\left(M{\rm d}Q_i^{ab}\delta\Bigl(MQ_i^{ab}-\sum_{\mu=1}^Mx^a_{i\mu}x^b_{i\mu}\Bigr)\right) \nonumber \\
  &=& \int\prod_{i\in \Vset}
\prod_{a < b}  \left(M{\rm d}Q_i^{ab}
\frac{d \epsilon^{ab}_{i}}{2\pi i}
\exp\left[\hat{\epsilon}_{i}\left(
\sum_{\mu=1}^Mx^a_{i\mu}x^b_{i\mu}-MQ_i^{ab}\right)
\right]\right)
    \label{eq:inserting_1}
\end{eqnarray}
in the following computation.
{
It is also convenient to introduce the diagonal elements of the overlap matrix $Q_{i}^{aa}$ ($a=0,1,2,\ldots,n$, $i=1,2,\ldots,N$) which reflect normalization of the variables $x^{a}_{i \mu}$.
In the case of the Ising prior (\eq{eq-prior-ising}), $Q_{i}^{aa}=1$ holds 
for any $M$. However, in the case of the Gaussian prior (\eq{eq-prior-gaussian}), $Q_{i}^{aa}=1$ becomes satisfied strictly only in the $M \to \infty$ limit due to the law of large numbers.
Thus we introduce $Q_{i}^{aa}$ as additional order parameter and consider, in addition to  \eq{eq:inserting_1},
\beqn
1=
\int \frac{d \epsilon^{aa}_{i}}{4\pi i} \exp\left[\frac{\epsilon^{aa}_{i}}{2}\left( \sum_{\mu=1}^Mx^a_{i\mu}x^a_{i\mu}-MQ_i^{aa}\right) \right].
    \label{eq:inserting_2}
\eeqn
The integral will be evaluated by the saddle point method in the $M \to \infty$ limit.
We will find that $\epsilon^{aa}_{i}$ remains arbitrary in the case of the Ising prior.
}

Using the identities  \eq{eq:inserting_1} and \eq{eq:inserting_2}
we can rewrite \eq{eq-z-1+n} as,
\beq
Z_{1+n}[ \partial/\partial h ]=\int \prod_{i\in \Vset} \prod_{a <  b}(M{\rm d}Q_i^{ab})
(M \prod_{a}dQ^{aa}_{i})
e^{- \hat{F}[\hat{Q}][ \partial/\partial h ]},
\label{eq-z-1+n-and-F}
\eeq
where
\beq
e^{- \hat{F}[\hat{Q}][ \partial/\partial h ]}=\int_{-i \infty}^{i \infty}\prod_{i\in \Vset} \prod_{a <  b} \frac{d \epsilon^{ab}_{i}}{2\pi i}
\prod_{a} \frac{d \epsilon^{aa}_{i}}{4\pi i}
e^{- \frac{M}{2} \sum_{i\in \Vset}\sum_{a,b}\epsilon^{ab}_{i}Q_{i}^{ab}}
e^{- \hat{G}[\hat{\epsilon}][ \partial/\partial h ]}.
\label{eq-hat-F}
\eeq
Here we introduced
\beq
- \hat{G}[\hat{\epsilon}][ \partial/\partial h ]=- G_{0}[\hat{\epsilon}]+
\ln  \left \langle
e^{\sum_{\bs\in \Eset} \sum_{a=0}^{n}\pi_{\bs}^{a} \frac{\partial}{\partial h_{\bs}^{a}}}
\right\rangle_{\epsilon},
\label{eq-def-G}
\eeq
with
\beq
- G_{0}[\hat{\epsilon}]= - \sum_{i\in \Vset}\sum_{\mu =1}^{M} 
g_{0}[\epsilon_{i}]
,
\quad
 - g_{0}[\epsilon_i]=
\ln
\int \prod_{a=0}^{n} dx^{a} P^{_{a}}_{\rm pri.}(x^a) e^{\frac{1}{2}\sum_{a,b}\epsilon^{ab}_i x^{a}x^{b}},
\label{eq-def-G0}
\eeq
and
\be
&&
\hspace{-1cm}
\langle (\cdots) \rangle_{\epsilon}
=
Z^{-1}_{\epsilon}
 \int \lb \prod_{\bs\in \Eset} \prod_{\mu=1}^M dF_{\bs,\mu} P_{F}(F_{\bs,\mu}) \rb
\no \\ && 
\hspace{-1cm}
\times
\lb
\prod_{i\in \Vset}\prod_{\mu=1}^{M}
\lb 
\prod_{a=0}^{n} dx^{a}_{i\mu}\rb P_{\rm pri.}(x_{i\mu}^a) 
e^{\frac{1}{2}\sum_{a,b}\epsilon_{i}^{ab} x_{i\mu}^{a}x_{i\mu}^{b}} 
\rb
(\cdots),
\label{eq-averaging-with-epsilon}
\\ &&
\hspace{-1cm}
Z_{\epsilon}=
 \int \lb \prod_{\bs\in \Eset} \prod_{\mu=1}^M dF_{\bs,\mu} P_{F}(F_{\bs,\mu}) \rb
\lb
\prod_{i\in \Vset}\prod_{\mu=1}^{M}
\lb 
\prod_{a=0}^{n} dx^{a}_{i\mu}\rb P_{\rm pri.}(x_{i\mu}^a) 
e^{\frac{1}{2}\sum_{a,b}\epsilon_{i}^{ab} x_{i\mu}^{a}x_{i\mu}^{b}} 
\rb.
\ee
Here the subscript $0$ is meant to emphasize that the average is took with a 'non-interacting' system.
\blue{Note that $\pi_{\bs}$ which appears in the 2nd term of \eq{eq-def-G}
depends on $F_{\bs,\mu}$ and $x_{i\mu}^{a}$ (see \eq{eq-pi-bs-replica}) 
so that the average is not trivial.}

In the above expressions, $\hat{F}[\hat{Q}]$ and $\hat{G}[\hat{\epsilon}]$ are functionals of
the order parameters $Q_{i}^{ab}$ and the conjugated fields $\epsilon_{i}^{ab}(=\epsilon_{i}^{ba})$ for $i=1,2,\ldots,N$.
\blue{We will perform the integrals over the extensive number of variables by a saddle point method
exploiting the fact that $M \gg 1$. Such somewhat unusual saddle point integrals have been considered in
\cite{yoshino2018,yoshino2020complex,yoshino2023spatially}.
Although we do not have a fully rigorous justification of the method we believe the results we obtain is valid.
For clarity we briefly discuss the method in appendix \ref{appendix-sp-indegration}. 
}

Then we find,
\beq
- \hat{F}[\hat{Q}]= - \sum_{i\in \Vset} \frac{M}{2}\sum_{a,b}(\epsilon_{i}^{ab})^{*}[\hat{Q}] \;Q_{i}^{ab}- \hat{G}[\hat{\epsilon}], \qquad
Q_{i}^{ab}=\left. \frac{\partial (- \hat{G}[\hat{\epsilon}])}{\partial \epsilon_{i}^{ab}} \right |_{\epsilon=\epsilon^{*}[\hat{Q}]}
\label{eq-F-G},
\eeq
which can be regarded as a Legendre transformation.

We will evaluate the 2nd term of \eq{eq-def-G} perturbatively in a way analogous to the Plefka expansion \cite{plefka1982convergence,GY91,georges1990low}.
Introducing a parameter $A$ as a book keeping device to organize an expansion, we will obtain 
\beq
\hat{G}=G_{0}+A \hat{G}_{1}+\frac{A^{2}}{2} \hat{G}_{2}+\cdots .
\label{eq-G-plefka-expansion}
\eeq
Later we will find that $A=\lambda^{2}$ is the proper choice as terms proportional to
odd powers of $\lambda$ vanish for a symmetry reason.
Then the functional $F[\hat{Q}]$ and the corresponding $\epsilon^{*}$ can be obtained through \eq{eq-F-G} as
\beq
\hat{F}=F_{0}+A \hat{F}_{1}+\frac{A^{2}}{2} \hat{F}_{2}+\cdots, \qquad
\epsilon=\epsilon_{0}+A \epsilon_{1}+\frac{A^{2}}{2} \epsilon_{2}+\cdots,
\label{eq-F-and-epsilon-plefka-expansion}
\eeq
which yields
\beqn
- F_{0}[\hat{Q}]&=&- G_{0}[\epsilon_{0}]-\frac{M}{2}\sum_{i}\sum_{a,b}(\epsilon_{0})_{i}^{ab}Q_{i}^{ab}, \nonumber \\
- \hat{F}_{1}[\hat{Q}]&=&- \hat{G}_{1}[\epsilon_{0}], \nonumber \\
- \hat{F}_{2}[\hat{Q}]&=& - \hat{G}_{2}[\epsilon_{0}]+\frac{(G_{0}^{'}[\epsilon_{0}])^{2}}{G_{0}^{''}[\epsilon_{0}]}  \qquad \cdots,
\label{eq-F-G-relations}
\eeqn
where $\epsilon_{0}$ is determined by
\beq
Q^{ab}_i=\left. \frac{\partial (- g_{0}[\epsilon_i])}{\partial \epsilon^{ab}_i} \right |_{\epsilon=\epsilon_{0}}.
\label{eq-epsilon0}
\eeq
Fortunately we will find that higher order terms $O(A^{2})$ vanish in the dense limit
$N \gg c \gg 1$ in our models.

Wrapping up the above formal results we can write using
\eq{eq-1+n-system}, \eq{eq-z-1+n}, \eq{eq-z-1+n-and-F}
\beqn
&&\mathbb{E}_{y}Z^n\lb \{y_\bs\} \rb
=\int \prod_{i\in \Vset} \prod_{a\leq b}M{\rm d}Q_i^{ab}
e^{- F[\hat{Q}]},
\eeqn
with $- F[\hat{Q}]$ defined as
\beq
- F[\hat{Q}]=- F_{0}[\hat{Q}]- F_{{\rm ex}}[\hat{Q}],
\label{eq-F-total-summary}
\eeq
which may be regarded as replicated free-energy functional. 
Here $- F_{0}[\hat{Q}]$ can be regarded as entropic part of the free-energy
while $- F_{{\rm ex}}[\hat{Q}]$
is the interaction part of the free-energy defined as
\beq
- F_{{\rm ex}}[\hat{Q}]
= \ln \left \{ \left. 
e^{- \hat{F}_{{\rm ex}}[\hat{Q}][\partial/\partial h]}
\prod_{\bs\in \Eset}\int  dy_{\bs}\prod_{a=0}^nP_{\rm out}(y_\bs\lvert h^a_\bs)
\right|_{\{h_{a}=0\}}
\right \},
\label{eq-F-ex}
\eeq
with
\beq
- \hat{F}_{{\rm ex}}[\hat{Q}][\partial/\partial h]=
- F_{1}[\hat{Q}][\partial/\partial h]-(A^{2}/2) \hat{F}_{2}[\hat{Q}][\partial/\partial h]+\cdots.
\eeq

Correspondingly, for the matrix $\epsilon^{ab}$ we assume the similar form as \eq{eq-simple-ansatz}. We will denote the field conjugated to $m$ as $\psi$ in the Ising prior case:
\beqn
\epsilon^{0a} &=& \epsilon^{a0} =\psi,  \qquad a \in \{ 1,2,\ldots,n \}.
\label{eq-simple-ansatz-phi}
\eeqn

\subsubsection{Cumulant expansion for the interaction part of the free-energy}
\label{sec-interaction-part-of-free-energy}

Now we consider more closely the interaction part of the free-energy $- F_{\rm ex}[\hat{Q}]$.
To this end we start from the cumulant expansion of the 2nd term of \eq{eq-def-G}, which
allows us to obtain the Plefka expansion of the functional $\hat{G}[\epsilon_{0}]$
(see \eq{eq-G-plefka-expansion}),
\beqn
A(-G_{1})[\epsilon_{0}]+\frac{A^{2}}{2}(-G_{2}) [\epsilon_{0}] \ldots &=&
\ln \langle  e^{-\lambda \sum_{\bs}\sum_{a}\pi^{a}_{\bs}\frac{\partial}{\partial h_{\bs}^{a}}}\rangle_{\epsilon} .
\label{eq-cumulant-expansion-interaction-0}
\eeqn
Now let us evaluate the last expression
by a cumulant expansion,
\beqn
&& \ln \langle  e^{-\lambda\sum_{\bs}\sum_{a}\pi^{a}_{\bs}\frac{\partial}{\partial h_{\bs}^{a}}}\rangle_{\epsilon}
 = -\lambda\sum_{\bs}\sum_{a}
\langle
\pi^{a}_{\bs}
\rangle_{\epsilon}
\frac{\partial}{\partial h_{\bs}^{a}} \nonumber \\
&+&\frac{\lambda^{2}}{2}
\sum_{\bs_{1},\bs_{2}}\sum_{a,b}
\left[
\langle
\pi^{a}_{\bs_{1}}
\pi^{b}_{\bs_{2}}
\rangle_{\epsilon}
-
\langle
\pi^{a}_{\bs_{1}}
\rangle_{\epsilon}
\langle
\pi^{b}_{\bs_{2}}
\rangle_{\epsilon}
\right]
\frac{\partial^{2}}{\partial h_{\bs_{1}}^{a}\partial h_{\bs_{2}}^{b}} 
+  O(\lambda^{3}) \ldots
\label{eq-cumulant-expansion-interaction}
\eeqn
where we introduced a parameter $\lambda$ to organize a perturbation series.
We find below that $A=\lambda^{2}$ is the proper choice.
After extracting $G_{1},G_{2},\ldots$ through \eq{eq-cumulant-expansion-interaction-0}
we will consider the Legendre transformation \eq{eq-F-G} order by order in the Plefka expansion.

It is convenient to represent various terms generated by the cumulant expansion \eq{eq-cumulant-expansion-interaction} by diagrams as those depicted
in Fig.~\ref{fig:diagrams}. The details of the analysis is presented
in appendix~\ref{appendix-cumulant-expansion}. 
The key observations in the analysis is the following.
\begin{itemize}
\item Disconnected diagrams do not appear in the cumulant expansion meaning that correlation functions
which appear in $\hat{G}$ are {\it connected correlation functions}.
On each variable node we should put a connected correlation function like
$\langle x^{a}_{i\mu}x^{b}_{i\mu}\rangle^{c}_{\epsilon}=\langle x^{a}_{i\mu}x^{b}_{i\mu}\rangle_{\epsilon}-\langle x^{a}_{i\mu}\rangle_{\epsilon}\langle x^{b}_{i\mu}\rangle_{\epsilon}=Q_{i}^{ab}$,
$\langle x^{a}_{i\mu}x^{b}_{i\mu}x^{c}_{i\mu}x^{d}_{i\mu}\rangle^{c}_{\epsilon}$, ... Note also that odd cumulants
like $\langle x^{a}_{i\mu}\rangle^{c}_{\epsilon}$,
$\langle x^{a}_{i\mu}x^{b}_{i\mu}x^{c}_{i\mu} \rangle^{c}_{\epsilon}$,...
are zero due to the reflection symmetry mentioned in sec.~\ref{sec-symmetries}:
$P^{_{a}}_{\rm pri.}(x)=P^{_{a}}_{\rm pri.}(-x)$ which holds both in the Ising and Gaussian priors.
This is the reason why $A=\lambda^{2}$ becomes the proper choice.

\item One particle reducible diagrams, namely
diagrams which become disconnected into two parts by cutting
connections between two adjacent factor nodes
(as those shown in Fig.~\ref{fig:diagrams} b))
disappear by the Legendre transform $\hat{G} \to \hat{F}$ \eq{eq-F-G}
so that only {\it one particle irreducible (1PI) diagrams} contribute to $\hat{F}$ 
\cite{hansen1990theory,zinn2021quantum}.
They can be classified into two types.
\begin{itemize}
\item[(A)] Diagrams which consist of one factor node
as represented by those shown in Fig.~\ref{fig:diagrams} a)
replicated into $2,4,6,8...$ replicas. Among these, we will find that only $2$ replica case is relevant and all others vanish in the dense limit $N \gg c \gg 1$.

\item[(B)]  Diagrams which consist of closed loops
  such as those depicted in Fig.~\ref{fig:diagrams} c).
  We will find that they vanish in the dense limit $N \gg c \gg 1$.
However it is not necessarily the case  in the case of global coupling $c \propto N^{p-1}$.
\end{itemize}
\end{itemize}

Remarkably, as we noted above, we find that the contributions
from higher order cumulants $O(A^{2})$,$O(A^{4})$... vanish
in the dense limit $N \gg c \gg 1$.
\blue{More precisely, as we explain the details in the appendix sec \ref{appendix-cumulant-expansion},
$N$ must grow faster than any polynomial of $c$.}
As the result we find (see \eq{eq-F1-appendix}),
\beq
 \hat{F}_{\rm ex}[\hat{Q}][\partial/\partial h_{\bs}^{a}]=- F_{1}[\hat{Q}][\partial/\partial h_{\bs}^{a}]
=
\frac{1}{2} \sum_{\bs} \sum_{a,b=0}^{n}
\prod_{j \in \partial\bs}Q_{j}^{ab} \frac{\partial^{2}}{\partial h_{\bs}^{a}\partial h_{\bs}^{b}}.
\label{eq-Fex}
\eeq
Using this in \eq{eq-F-ex} we obtain
\beq
- F_{{\rm ex}}[\hat{Q}]
= \ln \left \{
e^{\frac{1}{2} \sum_{\bs} \sum_{a,b=0}^{n}
\prod_{j \in \partial\bs}Q_{j}^{ab} \frac{\partial^{2}}{\partial h_{\bs}^{a}\partial h_{\bs}^{b}}}
\prod_{\bs\in \Eset}\int  dy_{\bs}\prod_{a=0}^nP_{\rm out}(y_\bs\lvert h^a_\bs)
\right \}.
\label{eq-f-ex}
\eeq

{The vanishment of the higher order contributions is reminiscent of
  what one encounters in
  the standard procedure to prove the central limit theorem.
Indeed, the above result can be reproduced assuming a Gaussian ansatz
as commonly employed in the literature \cite{sakata2013statistical,kabashima2016phase},
in which one explicitly
assumes that fluctuation of $\pi$ within the canonical ensemble
follows a Gaussian distribution with zero mean and the variance parametrized by $Q_{i}^{ab}$.
Similar situation happens in the structural glass transition of
supercooled liquids in the large dimensional limit \cite{kurchan2012exact},
as well as in the inverse problem of the current problem \cite{yoshino2018}.
However, let us emphasize that the vanishment of
  the higher order contributions does {\it not} hold generally,
  including in the case $c \propto N^{p-1}$. For instance,
  the matrix factorization problem ($p=2$) suffers this issue
  when one considers full rank case $M=O(N)$.
  This point has been noticed
  also in the analysis of the TAP equation for the matrix factorization problem
  \cite{maillard2022perturbative}
  and the problem of  machine learning by multi-layer perceptrons\cite{yoshino2023spatially}.}

\blue{In the present paper we are considering regular random graphs in which the connectivity $c$ is
fixed. Then the term contributing the interaction part of the free-energy functional $F_{\rm ex}[\hat{Q}]$
\eq{eq-f-ex} is completely regular across the system. Thus it does not depend on specific realizations
of the random graph. We expect introduction of some  mild fluctuations of the connectivity would not change
the free-energy density $F_{\rm ex}[\hat{Q}]/(NM)$ in the dense limit $N \gg M \gg 1$.
}

\begin{figure}[h]
\centering
\includegraphics[width=0.9\textwidth]{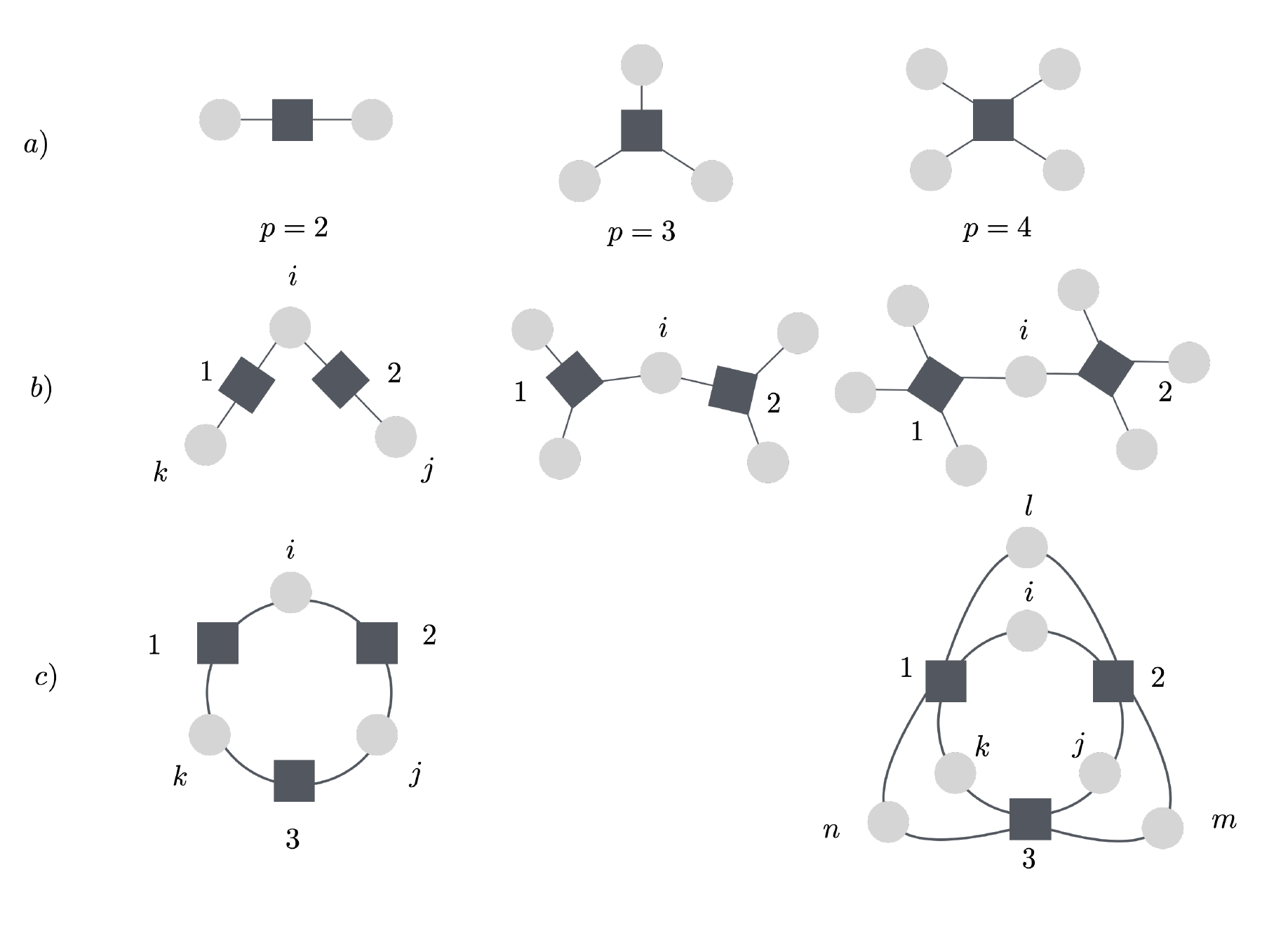}
\caption{Some representative diagrams which appear in the cumulant expansion.}
\label{fig:diagrams}
\end{figure}



\subsubsection{Ansats for the replica overlaps}

In the present paper we will consider spatially homogeneous solution:
\beqn
Q^{ab}_{i}=Q^{ab},\qquad \epsilon_i^{ab}=\epsilon^{ab}.
\label{eq-homogeneous-ansatz}
\eeqn
This is justified since there are no inhomogeneities in the system (e.g. boundary), and thus the dependence on the variable index $i$ of these quantities is hereafter removed. The simplest ansatz for the matrix $Q$ is the replica symmetric one:
\beqn
Q^{ab}_{i}=(Q-q)\delta_{ab}+q,  \qquad a,b \in \{0,1,2,\ldots,n\},
\label{eq-RS-ansatz}
\eeqn
which is justified since we will focus on Bayes optimal settings in the present paper.
{Here $Q$ represents the normalization of the variables $x_{i \mu}^{a}$ for all replicas $a=0,1,2\ldots,n$.}
\blue{However, to facilitate consistency checks and 
also to help future works to explore cases beyond the idealized Bayes optimal case, }
we will consider a slightly more general ansatz:
\beqn
Q^{ab} &=& (Q-q)\delta_{ab}+q,  \qquad a,b \in \{1,2,\ldots,n\}, \nonumber \\
Q^{0a} &=& Q^{a0} =m,  \qquad a \in \{1,2,\ldots,n \}, \nonumber \\
Q^{00} &=& Q_{0}.
\label{eq-simple-ansatz}
\eeqn
{Here $Q$ and $Q_{0}$  parameterize normalization of the
variables $x_{i \mu}^{a}$ for the students $a=1,2\ldots,n$ and the teacher $a=0$ respectively.}

The parameters $m$ and $q$ are precisely the order parameters
introduced in \eq{eq-def-m} and \eq{eq-def-q}.
Note that  \eq{eq-RS-ansatz} is recovered assuming $Q=Q_{0}$ and $m=q$ with
$m$ being the overlaps between the teacher ($0$-th replica) and students and $q$ being the overlaps between the students. In Bayes optimal inferences $m=q$  generally holds much as on the Nishimori line in spinglass models \cite{iba1999nishimori,zdeborova2016statistical,nishimori2001statistical}.

\subsection{Entropic part of the free-energy}
\label{sec-replica-entropic-part-of-free-energy}

Now let us evaluate explicitly the entropic part of the free-energy $F_{0}[\hat{Q}]$ for the two specific priors, Ising and Gaussian prior.
To this end we have to evaluate $g_{0}[\hat{\epsilon}]$ (see \eq{eq-def-G0}).

\subsubsection{Ising prior}
For the Ising model we have the prior distribution \eq{eq-prior-ising} which yields
\beqn
- g_{0}\lsb \hat{\epsilon} \rsb
&=& \ln \prod_{a=0}^{n} \sum_{x^{a}=\pm 1} e^{\frac{1}{2}\sum_{a, b=0}^{n} \epsilon^{ab}x^{a}x^{b}}  \nonumber \\
&=& \ln \prod_{a=0}^{n} \sum_{x^{a}=\pm 1}
\left.
e^{\frac{1}{2}\sum_{a,b=1}^{n} \epsilon^{ab}\frac{\partial^{2}}{\partial h^{a}\partial h^{b}}
+\frac{1}{2}\epsilon^{00}+x^{0}\sum_{a=1}^{n} \epsilon^{0a}\frac{\partial}{\partial h^{a}}}
e^{\sum_{a=1}^{n}h^{a}x^{a}} \right |_{h=0} \nonumber \\
&=& \frac{\epsilon^{00}}{2}+\ln 
\left. e^{\frac{1}{2}\sum_{a,b=1}^{n} \epsilon^{ab}\frac{\partial^{2}}{\partial h^{a}\partial h^{b}}} \prod_{a=1}^{n}
(2\cosh(h^{a}+\epsilon^{0a})) \right |_{h=0}.
\eeqn
To derive the last equation we used 
lemma~\ref{lemma:uno} and $\cosh(h)=\cosh(-h)$.

Then from \eq{eq-epsilon0} we find
\beqn
Q^{ab}&=&\frac{
  \left. e^{\frac{1}{2}\sum_{a,b=1}^{n} (\epsilon^{*}_{0})^{ab}\frac{\partial^{2}}{\partial h^{a}\partial h^{b}}} \prod_{c=1}^{n}
(2\cosh(h^{c}+(\epsilon^{*}_{0})^{0c})) \tanh(h^{a}+(\epsilon^{*}_{0})^{0a})\tanh(h^{b}+(\epsilon^{*}_{0})^{0b})\right |_{h=0}
}{
  \left. e^{\frac{1}{2}\sum_{a,b=1}^{n} (\epsilon^{*}_{0})^{ab}\frac{\partial^{2}}{\partial h^{a}\partial h^{b}}} \prod_{c=1}^{n}
(2\cosh(h^{c}+(\epsilon_{0}^{*})^{0c})) \right |_{h=0}
} \nonumber \\
&& \hfill (a \neq b, a \geq 1, b \geq 1), \nonumber \\
Q^{0a}&=&Q^{a0}=\frac{
  \left. e^{\frac{1}{2}\sum_{a,b=1}^{n} (\epsilon^{*}_{0})^{ab}\frac{\partial^{2}}{\partial h^{a}\partial h^{b}}} \prod_{c=1}^{n}
(2\cosh(h^{c}+(\epsilon^{*}_{0})^{0c})) \tanh(h^{a}+(\epsilon^{*}_{0})^{0a})\right |_{h=0}
}{
  \left. e^{\frac{1}{2}\sum_{a,b=1}^{n} (\epsilon^{*}_{0})^{ab}\frac{\partial^{2}}{\partial h^{a}\partial h^{b}}} \prod_{c=1}^{n}
(2\cosh(h^{c}+(\epsilon^{*}_{0})^{0c})) \right |_{h=0} 
}  \qquad (a \geq 1), \nonumber \\
Q^{aa}&=&1 \qquad (a=0,1,2,\ldots,n),
\label{eq-Qab-ising-2}
\eeqn
which determines $(\epsilon^{*}_{0})^{ab}$
(see \eq{eq-epsilon0}).
To derive the last equation we used $e^{\frac{d^{2}}{dh^{2}}}\cosh(h)=\cosh(h)$.

Note that $Q^{aa}=1$ is satisfied with $\epsilon^{aa}$ remaining arbitrary.
Then the 1st equation of \eq{eq-F-G-relations} yields
\beq
\frac{- F_{0}[\hat{Q}]}{M}=
-\frac{1}{2}\sum_{i}\sum_{a,b=0}^{n}(\epsilon^{*}_{0})_{i}^{ab}Q_{i}^{ab}+\sum_{i}
  \ln e^{\frac{1}{2}\sum_{a,b=1}^{n}(\epsilon^{*}_{0})_{i}^{ab}\frac{\partial^{2}}{\partial h_{a}\partial h_{b}}}
  \prod_{c=1}^{n}  \left.  2\cosh(h_{c}+(\epsilon^{*}_{0})^{0a}) \right|_{h=0}.
  \eeq

  We now assume the replica symmetric ansatz \eq{eq-simple-ansatz}. For the matrix $\epsilon^{ab}$ we assume the similar form
  with \eq{eq-simple-ansatz-phi},
  \beqn
\frac{- F_{0}(q,m)}{NM}&=&
-\frac{1}{2}\left[  (1+n)\epsilon^{*}_{0} + (n^{2}-n)\epsilon_{0}^{*} q +2n \psi^{*}_{0} m \right]
+\frac{\epsilon^{*}_{0}}{2}+
  \ln e^{\frac{\epsilon^{*}_{0}}{2}\frac{\partial^{2}}{\partial h^{2}}}
  \left.  (2\cosh(h+\psi_{0}^{*})^{n} \right|_{h=0} \nonumber \\
  &=&
  -\frac{1}{2}\left[  n\epsilon^{*}_{0} + (n^{2}-n)\epsilon_{0}^{*} q +2n \psi^{*}_{0} m \right]
+
\ln \int \mathcal{D}z
(2\cosh(\psi_{0}^{*}+\sqrt{\epsilon^{*}_{0}}z)^{n}.
  \label{eq-f0-ising-RS}
    \eeqn
    Note that we assumed the replica symmetric form $(\epsilon^*_{0})^{ab}=\epsilon^*_0~(a,b>1)$ and for convenience the diagonal components were also set to be equal to the non-diagonal ones $((\epsilon^*_{0})^{aa}=\epsilon^*_{0})$ using the arbitrariness.
    $\epsilon_{0}^{*}$ and $\psi_{0}^{*}$ are to be determined
    by \eq{eq-epsilon0}.
    Here we
    introduced a short-hand notation,
    \beq
    \int\mathcal{D}z\ldots=\int_{-\infty}^{\infty}\frac{dz}{\sqrt{2\pi}}
    e^{-\frac{z^{2}}{2}}\ldots,
    \label{eq-def-Dz}
    \eeq
    and used lemma~\ref{lemma:due}.
    We find in $n \to 0$ limit, within the ansatz \eq{eq-simple-ansatz-phi},
    \beqn
    q=\int \mathcal{D}z  \tanh^{2}(\sqrt{\epsilon_{0}^{*}}z+\psi^{*}_{0}),
    \nonumber \\
    m= \int \mathcal{D}z  \tanh(\sqrt{\epsilon_{0}^{*}}z+\psi^{*}_{0}).
  \label{eq-f0-ising-RS-derivatives}    
    \eeqn

\subsubsection{Gaussian prior}

For the Gaussian model we have
the prior distribution \eq{eq-prior-gaussian}
which yields
\beqn
- g_{0}[\hat{\epsilon}]&=&\ln \int \prod_{a=0}^{n} dx^{a} \frac{e^{-\frac{1}{2}x_{a}^{2}}}{\sqrt{2\pi}}
e^{\frac{1}{2}\sum_{a, b=0}^{n} \epsilon^{ab}x^{a}x^{b}} \nonumber \\
&=&\ln \int \prod_{a=0}^{n} \frac{dx^{a}}{\sqrt{2\pi}} 
e^{ \frac{1}{2}\sum_{a,b=0}^{n}(\epsilon^{ab}-\delta_{ab})x^{a}x^{b}}=-\frac{1}{2}\ln {\rm det} (-(\hat{\epsilon}-\hat{I})).
\eeqn
Then from \eq{eq-epsilon0} we find
\beq
\hat{Q}=(-(\hat{\epsilon}^{*}_{0}-\hat{I}))^{-1},
\eeq
where $\hat{I}$ is the identity matrix.
Finally the  1st equation of \eq{eq-F-G-relations} yields,
\beq
\frac{- F_{0}[\hat{Q}]}{M}=
-\frac{1}{2}\sum_{i}\sum_{a=0}^{n}Q_{i}^{aa}
+\frac{n N}{2}\ln (2\pi)
+\frac{1}{2}\sum_{i} \ln {\rm det} \hat{Q}_{i}.
\eeq

In the homogeneous and replica symmetric ansatz \eq{eq-RS-ansatz}, we find
  using lemma~\ref{lemma:determinant-RS},
  \beq
  \frac{  - F_{0}(Q,q)}{NM}  
  =\frac{1}{2}\left[
    \ln (Q+nq)+ n \ln (Q-q)  
    \right]-\frac{1+n}{2}Q.
  \eeq
  In the slightly generalized ansatz \eq{eq-simple-ansatz} we find
  using lemma~\ref{lemma:determinant-RS}
and  lemma~\ref{lemma:determinant},
  \beqn
  \frac{  - F_{0}(Q_{0},Q,q,m)}{NM}  
  &=&\frac{1}{2}\ln Q_{0}+
  \frac{1}{2}\ln {\rm det}((q-m^{2}/Q_{0})+(Q-q)\delta_{ab}) \nonumber \\
 & =&\frac{1}{2}\ln Q_{0}+\frac{1}{2}\left[
    \ln (Q-q+n(q-m^{2}/Q_{0}))+ (n-1) \ln (Q-q)  
    \right]\nonumber \\
&&  -\frac{1}{2}Q_{0}-\frac{n}{2}Q.
  \label{eq-f0-gaussian-RS}
  \eeqn

      We will need derivatives of $\partial_{n} F_{0}(q)|_{n=0}$ with respect to $Q$,$q$ and $m$
      which we find as,
      \beqn
      \partial_{n} \frac{\partial }{\partial Q}  \frac{ - F_{0}(Q,q,m)}{NM} |_{n=0} &=&
      \frac{1}{2}\frac{Q-2q+m^{2}/Q_{0}}{(Q-q)^{2}}-\frac{1}{2},\nonumber \\      
      \partial_{n} \frac{\partial }{\partial q}  \frac{ - F_{0}(Q,q,m)}{NM} |_{n=0} &=&
      \frac{1}{2}\frac{q-m^{2}/Q_{0}}{(Q-q)^{2}},\nonumber \\      
      \partial_{n} \frac{\partial }{\partial m}  \frac{ - F_{0}(Q,q,m)}{NM} |_{n=0} &=&
            -\frac{m/Q_{0}}{Q-q}.
  \label{eq-f0-gaussian-RS-derivatives}      
  \eeqn
It can be noticed that in the Bayes optimal case for which $q=m$ holds, 
  $\partial_{n} \frac{\partial }{\partial Q}  \frac{ - F_{0}(Q,q,m)}{NM} |_{n=0}=0$ provided also that $Q=Q_{0}=1$ holds.
  

\subsection{Interaction part of the free-energy}
\label{sec-replica-interactionc-part-of-free-energy}

Let now turn to explicitly evaluate the interaction part of the free-energy $- F_{\rm ex}[\hat{Q}]$
for the output functions of the two specific cases, additive noise and sign output.

\subsubsection{Additive noise}
\label{sec-additive-noise-replica}

In the case of the additive noise \eq{eq-p-put-additive-noise} we have
\beq
P_{\rm out}^{}(y\lvert\pi)=W^{}(y-\pi).
\eeq
Using this in \eq{eq-F-ex} we find
\beqn
&&- F_{\rm ex}[\hat{Q}]=\sum_{\bs} \ln \int  dy_{\bs}
\exp \left[\frac{\lambda^{2}}{2} \sum_{a,b=0}^{n}
\prod_{j \in \partial\bs}Q_{j}^{ab} \frac{\partial^{2}}{\partial h_{\bs}^{a}\partial h_{\bs}^{b}}
  \right]
\prod_{a=0}^{n}
\left.
W^{}(y_{\bs}+h_{\bs}^{a})
\right|_{\{h_{\bs}^{a}=0\}} \label{eq-F1-additive} \\
&&=\sum_{\bs} \ln \int  dy_{\bs}
    W(y_{\bs})
\exp \left[\frac{\lambda^{2}}{2} \sum_{a,b=1}^{n} \left(
  \prod_{j \in \partial\bs}Q_{j}^{00}
  -\prod_{j \in \partial\bs}Q_{j}^{0b}
  -\prod_{j \in \partial\bs}Q_{j}^{a0}
+ \prod_{j \in \partial\bs}Q_{j}^{ab}
  \right)
  \frac{\partial^{2}}{\partial h_{\bs}^{a}\partial h_{\bs}^{b}}   \right]  \nonumber \\
&& \hspace*{2cm} \prod_{a=1}^{n}
\left.
W^{}(y_{\bs}+h_{\bs}^{a})
\right|_{\{h_{\bs}^{a}=0\}}.
\label{eq-F1-additive-noise}
\eeqn
To derive the 2nd equation we performed integrations by parts.

In the homogeneous and replica symmetric ansatz \eq{eq-RS-ansatz} we find from the 1st equation
of \eq{eq-F1-additive-noise},
  \beq
\frac{  - F_{\rm ex}(Q,q)}{NM}= \frac{\alpha}{p}
  \ln \int dy
  e^{\frac{\lambda^{2}}{2}q^{p}\frac{\partial^{2}}{\partial h^{2}}}
  \left.  
\left(
  e^{\frac{\lambda^{2}}{2}(Q^{p}-q^{p})\frac{\partial^{2}}{\partial h^{2}}}
  W^{}(y+h)
  \right)^{1+n} \right|_{h=0},
  \eeq
  where we used $N_{\bs}=NM(\alpha/p)$ (see \eq{eq-Nbs} and \eq{eq-gamma}).
  In the slightly generalized ansatz \eq{eq-simple-ansatz} we find from the 2nd
  equation of \eq{eq-F1-additive-noise},
  \beqn
  &&
\frac{  - F_{\rm ex}(Q_{0},Q,q,m)}{NM} = \frac{\alpha}{p}
  \ln \int dy
    W(y)  
    e^{\frac{\lambda^{2}}{2}(Q^{p}_{0}-2m^{p}+q^{p})\frac{\partial^{2}}{\partial h^{2}}}
  \left.  
\left(
  e^{\frac{\lambda^{2}}{2}(Q^{q}-q^{p})\frac{\partial^{2}}{\partial h^{2}}}
    W^{}(y+h)
    \right)^{n} \right|_{h=0} \nonumber \\
    &=& \frac{\alpha}{p}
  \ln \int dy
  W(y)
 \int \mathcal{D}z_{0} 
  \left.  
  \left(
 \int \mathcal{D}z_{1}   
    W^{}(y+\sqrt{Q_{0}^{p}-2m^{p}+q^{p}}z_{0}+\sqrt{Q^{p}-q^{p}}z_{1})
    \right)^{n} \right|_{h=0},
    \label{eq-f1-additive-noise-RS}
    \eeqn
    where we used the short-hand notation \eq{eq-def-Dz} and lemma~\ref{lemma:due}.

  We will need derivatives of $\partial_{n} F_{\rm ex}(Q_{0},Q,q,m)|_{n=0}$ with respect to $Q$,$q$ and $m$
  which we find as
  \beqn
    \partial_{n} \frac{\partial }{\partial Q}  \frac{ - F_{\rm ex}(Q_{0},Q,q,m)}{NM} |_{n=0}
&  =&\frac{\lambda^{2}}{2}\alpha Q^{p-1}\theta_{1}(\lambda,Q_{0},Q,m,q), \nonumber \\
  \partial_{n} \frac{\partial }{\partial q}  \frac{ - F_{\rm ex}(Q_{0},Q,q,m)}{NM} |_{n=0}
&  =&-\frac{\lambda^{2}}{2}\alpha q^{p-1}\theta_{0}(\lambda,Q_{0},Q,m,q), \nonumber \\
  \partial_{n}   \frac{\partial }{\partial m}  \frac{ - F_{\rm ex}(Q_{0},Q,q,m)}{NM} |_{n=0}&=&
  \lambda^{2}\alpha m^{p-1}
  \left(
  \theta_{0}(\lambda,m,q)-\theta_{1}(\lambda,Q_{0},Q,m,q)
  \right),
      \label{eq-f1-additive-noise-RS-derivatives}
  \eeqn
  with
  \beqn
  \theta_{0}(\lambda,Q_{0},Q,m,q)&=&\int dy
    W(y)  
    e^{\frac{\lambda^{2}}{2}(Q_{0}^{p}-2m^{p}+q^{p})\frac{\partial^{2}}{\partial h^{2}}}
    \left.
  \left(    
  \frac{    
  e^{\frac{\lambda^{2}}{2}(Q^{p}-q^{p})\frac{\partial^{2}}{\partial h^{2}}}
    \frac{d}{dh}W^{}(y+h)
    }{
  e^{\frac{\lambda^{2}}{2}(Q^{p}-q^{p})\frac{\partial^{2}}{\partial h^{2}}}
    W^{}(y+h)
  }
  \right)^{2}
  \right|_{h=0}
  \nonumber \\
  &=& \int dy W(y) \int \mathcal{D}z_{0}
  \left(
  \frac{ \int \mathcal{D}z_{1} (d/d\Xi)W^{}(\Xi)}{\int \mathcal{D}z_{1}
    W^{}(\Xi)}
  \right)^{2}, \nonumber \\
  \theta_{1}(\lambda,Q_{0},Q,m,q)&=&\int dy
    W(y)  
    e^{\frac{\lambda^{2}}{2}(Q_{0}^{p}-2m^{p}+q^{p})\frac{\partial^{2}}{\partial h^{2}}}
    \left.
  \frac{    
  e^{\frac{\lambda^{2}}{2}(Q^{p}-q^{p})\frac{\partial^{2}}{\partial h^{2}}}
    \frac{d^{2}}{dh^{2}}W^{}(y+h)
    }{
  e^{\frac{\lambda^{2}}{2}(Q^{p}-q^{p})\frac{\partial^{2}}{\partial h^{2}}}
    W^{}(y+h)
  }
  \right|_{h=0}
  \nonumber \\
  &=& \int dy W(y) \int \mathcal{D}z_{0}
  \frac{ \int \mathcal{D}z_{1} (d^{2}/d\Xi^{2})W^{}(\Xi)}{\int \mathcal{D}z_{1}
    W^{}(\Xi)},
  \label{eq-theta0-theta1}
  \eeqn
and
  \beq
  \Xi=y-\lambda\sqrt{Q_{0}^{p}-2m^{p}+q^{p}}z_{0}-\lambda\sqrt{Q^{p}-q^{p}}z_{1}.
  \label{eq-Xi}
  \eeq
In the Bayes optimal case, for which $m=q$ holds,
one finds $\theta_1=0$ as shown in Lemma \ref{lemma:quattro}
provided also that $Q_{0}=Q$. The latter certainly holds for the Ising prior.

More specifically in the case of Gaussian noise
  \beq
  W(w)=\frac{e^{-\frac{w^{2}}{2}}}{\sqrt{2\pi}},
  \label{eq-gaussian-noise-replica}
  \eeq
  we find
  \beqn
  \theta_{0}(\lambda,Q_{0},Q,m,q)&=&
  \frac{\lambda^{2}(Q_{0}^{p}-2m^{p}+q^{p})+1}{
    (\lambda^{2}(Q^{p}-q^{p})+1)^{2}}, \nonumber \\
    \theta_{1}(\lambda,Q_{0},Q,m,q) & = & \theta_{0}(\lambda,Q_{0},Q,m,q)
    -\frac{1}{\lambda^{2}(Q^{p}-q^{p})+1}.
    \label{eq-theta0-theta1-gaussian-noise-replica}
    \eeqn
In the Bayes-optimal case, for which $m=q$ must hold, we find again that $\theta_{1}=0$ holds
provided $Q_{0}=Q$ also holds. 

\subsubsection{Sign output}

In the case of the sign output we have
\beq
P_{\rm out}^{}(y\lvert\pi)=\theta(y\pi)
=\delta(y-1)\theta(\pi)+\delta(y+1)\theta(-\pi).
\eeq
Thus
\beqn
\prod_{a}P_{\rm out}^{}(y\lvert\pi^{a})=\prod_{a}\theta(y\pi^{a})
=\delta(y-1)\prod_{a}\theta(\pi^{a})+\delta(y+1)\prod_{a}\theta(-\pi^{a}),
\eeqn
since $\theta(x)\theta(-x)=0$.

From the above result we find
\beqn
- F_{\rm ex}[\hat{Q}]
&=&\sum_{\bs} \ln \int  dy_{\bs}
\exp \left[\frac{\lambda^{2}}{2} \sum_{a,b=0}^{n}
\prod_{j \in \partial\bs}Q_{j}^{ab} \frac{\partial^{2}}{\partial h_{\bs}^{a}\partial h_{\bs}^{b}}
  \right] \nonumber \\
&& \hspace*{2cm} \prod_{a=0}^{n}
\left. 
\left[\delta(y-1)\prod_{a}\theta(h^{a})+\delta(y+1)\prod_{a}\theta(-h^{a})\right]
\right|_{\{h_{\bs}^{a}=0\}}  \nonumber \\
&=&
\sum_{\bs} \ln
\left[
\exp \left[\frac{\lambda^{2}}{2}\sum_{a,b=0}^{n}
\prod_{j \in \partial\bs}Q_{j}^{ab} \frac{\partial^{2}}{\partial h_{\bs}^{a}\partial h_{\bs}^{b}}
  \right]
\prod_{a=0}^{n}
\left. \theta(h_{\bs}^{a}) \right|_{\{h_{\bs}^{a}=0\}}
\right] \nonumber \\
&& +N_{\bs} \ln 2.
\label{eq-F1-sign}
\eeqn
In the following we will omit the last constant term.

In the homogeneous and replica symmetric ansatz \eq{eq-RS-ansatz} we find,
  \beq
 \frac{ - F_{\rm ex}(Q,q)}{NM}= \frac{\alpha}{p}
  \ln
  e^{\frac{\lambda^{2}}{2}q^{p}\frac{\partial^{2}}{\partial h^{2}}}
  \left.  
\left(
  e^{\frac{\lambda^{2}}{2}(Q^{p}-q^{p})\frac{\partial^{2}}{\partial h^{2}}}
  \theta(h)
  \right)^{1+n}
  \right|_{h=0}.
  \eeq
  In the slightly generalized ansatz \eq{eq-simple-ansatz} we can write
  \beq
  Q_{ab}^{p}=m^{p}+(Q_{0}^{p}-m^{p})\delta_{a,0}\delta_{b,0}
  +I(a \geq 1)I(b \geq 1)[(q^{p}-m^{p})+(Q^{p}-q^{p})\delta_{a,b}],
  \eeq
  where $I(..)$ is the indicator function. Then we find,
  \beqn
  &&
  \hspace{-1cm}
  \frac{- F_{\rm ex}(Q_{0},Q,q,m)}{NM}= \frac{\alpha}{p}
  \ln
  e^{\frac{\lambda^{2}}{2}m^{p}\frac{\partial^{2}}{\partial h^{2}}}\left\{
\left( e^{\frac{\lambda^{2}}{2}(Q_{0}^{p}-m^{p})\frac{\partial^{2}}{\partial h^{2}}}  \theta(h)\right)
\left.
e^{\frac{\lambda^{2}}{2}(m^{p}-q^{p})\frac{\partial^{2}}{\partial h^{2}}}
\left(
  e^{\frac{\lambda^{2}}{2}(Q^{p}-q^{p})\frac{\partial^{2}}{\partial h^{2}}}
  \theta(h)
  \right)^{n}
  \right \}
  \right|_{h=0}\nonumber \\
&&
  \hspace{-1cm}
=   \frac{\alpha}{p}
  \ln
  \int \mathcal{D}z_{0}
  \left\{
  \int \mathcal{D}z_{3}  
  \theta(\lambda\sqrt{m^{p}}z_{0}+\lambda\sqrt{Q_{0}^{p}-m^{p}}z_{3})
  \right. \nonumber \\
&& \left.   
    \int \mathcal{D}z_{1}
\left(
    \int \mathcal{D}z_{2}
  \theta(\lambda\sqrt{m^{p}}z_{0}+\lambda\sqrt{q^{p}-m^{p}}z_{1}+\lambda\sqrt{Q^{p}-q^{p}}z_{2})
  \right)^{n}
  \right \} \nonumber \\
&&  \hspace{-1cm}
=  \frac{\alpha}{p} 
  \ln \int \mathcal{D}z_{0} H(X_{\rm teacher})  \int \mathcal{D}z_{1} H^{n}(X_{\rm student}),
  \label{eq-f1-sign-RS}
  \eeqn
where we used the short-hand notation \eq{eq-def-Dz} and lemma~\ref{lemma:due}. We also introduced a function $H(x)$,
  \beq
  H(x)=\frac{1}{2}{\rm erfc}\left(\frac{x}{\sqrt{2}}\right)=\int_{x}^{\infty}\frac{dt}{\sqrt{2\pi}}e^{-\frac{t^{2}}{2}},
  \label{eq-H}
  \eeq  
and also
  \beq
  X_{\rm teacher}=-\frac{\sqrt{m^{p}}}{\sqrt{Q_{0}^{p}-m^{p}}}z_{0}, \qquad
  X_{\rm student}=-\frac{\sqrt{m^{p}}z_{0}+\sqrt{q^{p}-m^{p}}z_{1}}{\sqrt{Q^{p}-q^{p}}}.
  \label{eq-X-teacher-student}
  \eeq

 We will need derivatives of $\partial_{n} F_{\rm ex}(Q_{0},Q,q,m)|_{n=0}$ with respect to  $Q$,$q$ and $m$  which we find as,  
  \beqn
    \partial_{n} \frac{\partial }{\partial Q}  \frac{ - F_{\rm ex}(Q_{0},Q,q,m)}{NM} |_{n=0}
  &  =& \lambda^{2} \alpha Q^{p-1}
  e^{\frac{\lambda^{2}}{2}m^{p}\frac{\partial^{2}}{\partial h^{2}}}\left\{
\left( e^{\frac{\lambda^{2}}{2}(Q_{0}^{p}-m^{p})\frac{\partial^{2}}{\partial h^{2}}}  \theta(h)\right) \right. \nonumber \\
&& \hspace{2cm} \left. \left.
e^{\frac{\lambda^{2}}{2}(m^{p}-q^{p})\frac{\partial^{2}}{\partial h^{2}}}
\left(
\frac{
 \frac{\partial^{2}}{\partial h^{2}}   e^{\frac{\lambda^{2}}{2}(Q^{p}-q^{p})\frac{\partial^{2}}{\partial h^{2}}}
  \theta(h)
}{
  e^{\frac{\lambda^{2}}{2}(Q^{p}-q^{p})\frac{\partial^{2}}{\partial h^{2}}}
  \theta(h)}
  \right)
  \right \}
  \right|_{h=0}
  \nonumber \\
  &=&
  \frac{\lambda^{2} \alpha Q^{p-1}}{Q^{p}-q^{p}}
  \int \mathcal{D}z_{\rm 0}H(X_{\rm teacher})
  \int \mathcal{D}z_{\rm 1}
\frac{H^{''}(X_{\rm student})}{H(X_{\rm student})},
    \label{eq-f1-sign-RS-derivative-wrt-Q}  
  \eeqn
\beqn
  \nonumber \\
  \partial_{n} \frac{\partial }{\partial q}  \frac{ - F_{\rm ex}(Q_{0},Q,q,m)}{NM} |_{n=0}
  &  =&-\lambda^{2} \alpha q^{p-1}
  e^{\frac{\lambda^{2}}{2}m^{p}\frac{\partial^{2}}{\partial h^{2}}}\left\{
\left( e^{\frac{\lambda^{2}}{2}(Q_{0}^{p}-m^{p})\frac{\partial^{2}}{\partial h^{2}}}  \theta(h)\right) \right. \nonumber \\
&& \hspace{2cm} \left. \left.
e^{\frac{\lambda^{2}}{2}(m^{p}-q^{p})\frac{\partial^{2}}{\partial h^{2}}}
\left(
\frac{
 \frac{\partial}{\partial h}   e^{\frac{\lambda^{2}}{2}(Q^{p}-q^{p})\frac{\partial^{2}}{\partial h^{2}}}
  \theta(h)
}{
  e^{\frac{\lambda^{2}}{2}(Q^{p}-q^{p})\frac{\partial^{2}}{\partial h^{2}}}
  \theta(h)}
  \right)^{2}
  \right \}
  \right|_{h=0}   \nonumber \\
  &=& - \frac{\lambda^{2}\alpha q^{p-1}}{Q^{p}-q^{p}}\int \mathcal{D}z_{\rm 0} H(X_{\rm teacher})
  \int \mathcal{D}z_{1}
  \left(\frac{H'(X_{\rm student})}{H(X_{\rm student})} \right)^{2},
    \label{eq-f1-sign-RS-derivatives-1}  
\eeqn
\beqn
  \partial_{n}   \frac{\partial }{\partial m}  \frac{ - F_{\rm ex}(Q_{0},Q,q,m)}{NM} |_{n=0}&=&
  \lambda^{2}\alpha m^{p-1}
    e^{\frac{\lambda^{2}}{2}m^{p}\frac{\partial^{2}}{\partial h^{2}}}\left\{
    \left    (  \frac{\partial}{\partial h} e^{\frac{\lambda^{2}}{2}(Q_{0}^{p}-m^{p})\frac{\partial^{2}}{\partial h^{2}}}  \theta(h)
    \right    )
    \nonumber \right. \\
&& \hspace*{2cm} \left. \left.
e^{\frac{\lambda^{2}}{2}(m^{p}-q^{p})\frac{\partial^{2}}{\partial h^{2}}}
\frac{
 \frac{\partial}{\partial h}   e^{\frac{\lambda^{2}}{2}(Q^{p}-q^{p})\frac{\partial^{2}}{\partial h^{2}}}
  \theta(h)
}{
  e^{\frac{\lambda^{2}}{2}(Q^{p}-q^{p})\frac{\partial^{2}}{\partial h^{2}}}
  \theta(h)}
  \right \}
  \right|_{h=0}   \nonumber \\  
  &=& \frac{\lambda^{2}\alpha m^{p-1}}{Q_{0}^{p}-m^{p}} \int \mathcal{D}z_{\rm 0} H^{'}(X_{\rm teacher})
  \int \mathcal{D}z_{1} \frac{H'(X_{\rm student})}{H(X_{\rm student})}.
  \label{eq-f1-sign-RS-derivatives-2}  
  \eeqn
Here we used
$e^{\frac{\lambda^{2}}{2}m^{p}\frac{\partial^{2}}{\partial h^{2}}} e^{\frac{\lambda^{2}}{2}(Q_{0}^{p}-m^{p})\frac{\partial^{2}}{\partial h^{2}}}\theta(h)=e^{\frac{\lambda^{2}}{2}\frac{\partial^{2}}{\partial h^{2}}}\theta(h)=\int_{0}^{\infty} \frac{dy}{\sqrt{2\pi \lambda^{2}}}e^{-\frac{y^{2}}{2\lambda^{2}}}=1/2$.

\subsection{Total free-energy and equation of states}
\label{sec-replica-total-free-energy}


\subsubsection{In the absence of students}
\label{sec-without-students}

In the absence of students $n \to 0$ we find
\eq{eq-f-ex} becomes
\beq
\lim_{n \to 0}- F_{{\rm ex}}[\hat{Q}]=0
\eeq
because of the normalization of the output function $\int dy P_{\rm out}(y|h)=1$.
Then in the case of Gaussian prior we find the total free-energy, using
\eq{eq-f0-gaussian-RS}, in the $n \to 0$ limit as
\beq
\lim_{n \to 0}
\frac{  - F(Q_{0},Q,q,m)}{NM} =\frac{1}{2}\ln Q_{0}-\frac{1}{2}Q_{0}.
  \eeq
Taking derivative with respect to $Q_{0}$ we find
$0=\lim_{n \to 0}\frac{\partial}{\partial Q_{0}}  \frac{ - F_{0}(Q_{0},Q,q,m)}{NM}$  
is satisfied with $Q_{0}=1$, as expected because of the law of large numbers.
Thus we assume $Q_{0}=1$ in the following.

\subsubsection{Ising prior and additive noise}
\label{sec-Ising-Gaussian-replica}

Using  \eq{eq-f0-ising-RS} and  \eq{eq-f1-additive-noise-RS}
together with $Q_{0}=Q=1$ which holds in the Ising case,
we obtain the total replicated free-energy as
\beqn
\frac{- F(q,m)}{NM} &= &
-\frac{1}{2}\left[  n\epsilon_{0}^{*} + (n^{2}-n)\epsilon_{0}^{*} q +2n \psi^{*}_0 m \right] 
+ \ln
\int \mathcal{D}z
 (2\cosh(\psi_{0}^{*}+\sqrt{\epsilon^{*}_{0}}z))^{n} \nonumber \\
&+&
\frac{\alpha}{p}
  \ln \int dy
  W(y)  \int \mathcal{D}z_{0} 
  \left(
 \int \mathcal{D}z_{1}   
    W^{}(y+\lambda\sqrt{1-2m^{p}+q^{p}}z_{0}+\lambda\sqrt{1-q^{p}}z_{1})
    \right)^{n} \qquad
    \eeqn
from which we find the free-energy \eq{eq-def-f} as
\beqn
- f &= & 
-\frac{1}{2}\left[ \epsilon_{0}^{*} - \epsilon_{0}^{*} q +2 \psi^{*}_{0} m \right] 
+  \int \mathcal{D}z \ln (2\cosh(\psi_{0}^{*}+\sqrt{\epsilon_{0}^{*}}z)) \nonumber \\
&+&
\frac{\alpha}{p}
\int dy
  W(y)  \int \mathcal{D}z_{0} 
\ln  \left(
 \int \mathcal{D}z_{1}   
    W^{}(y+\lambda\sqrt{1-2m^{p}+q^{p}}z_{0}+\lambda\sqrt{1-q^{p}}z_{1})
    \right).\qquad
    \label{eq-f-ising-RS}
    \eeqn
    Extremizing the free-energy with respect to $q$ and $m$ we find, using \eq{eq-f1-additive-noise-RS-derivatives} 
    \beqn
    \epsilon_{0}^{*} &=& \lambda^{2}\alpha q^{p-1}\theta_{0}(\lambda,m,q)\nonumber \\
   \psi_{0}^{*} &=& \lambda^{2}\alpha m^{p-1}(\theta_{0}(\lambda,m,q)-\theta_{1}(\lambda,m,q))
   \label{eq-eps-phi-Ising-RS}
    \eeqn
    Then the equations of state (saddle point equations) for the order parameters $q$ and $m$ are found
    using \eq{eq-f0-ising-RS-derivatives} as,
    \beqn
    q&=& \int \mathcal{D}z  \tanh^{2}(\sqrt{\lambda^{2}\alpha q^{p-1}\theta_{0}(\lambda,m,q)}z+\lambda^{2}\alpha m^{p-1}(\theta_{0}(\lambda,m,q)-\theta_{1}(\lambda,m,q))),\nonumber \\
    m&=& \int \mathcal{D}z  \tanh(\sqrt{\lambda^{2}\alpha q^{p-1}\theta_{0}(\lambda,m,q)}z+\lambda^{2}\alpha m^{p-1}(\theta_{0}(\lambda,m,q)-\theta_{1}(\lambda,m,q))),\qquad
    \label{eq-SP-ising-additive-noise-RS}
    \eeqn
    where $\theta_{0}$ and $\theta_{1}$ are given by \eq{eq-theta0-theta1} with \eq{eq-Xi}.

    In the Bayes optimal case $m=q$, we have already shown using Lemma \ref{lemma:quattro} that $\theta_1=0$. Then with $A\equiv\alpha\lambda^2m^{p-1}\theta_0$ we find the two equations
    \eq{eq-SP-ising-additive-noise-RS} become the same,
\begin{align}
    m-q &= \int\mathcal{D}z\tanh(A+\sqrt{A}z)-\int\mathcal{D}z\tanh^2(A+\sqrt{A}z) = \nonumber \\
    &=\frac{e^{-\frac{A}{2}}}{\sqrt{2\pi A}}\int {\rm d}u\,e^{-\frac{u^2}{2A}}\tanh{u}\,\frac{1}{\cosh{u}}=0, \quad\quad \forall A
\end{align}
where we have used a change of variable $u=A+\sqrt{A}z$ to recognize in the integrand an odd function of $u$.

    Specifically we will analyze the Bayes-optimal case $m=q$ with Gaussian noise  \eq{eq-gaussian-noise-replica}. In this case one finds,
    using \eq{eq-theta0-theta1-gaussian-noise-replica},
    \begin{equation}
      m=\int\mathcal{D}z\tanh(A+\sqrt{A}z),
\qquad A=\frac{\alpha\lambda^2m^{p-1}}{1+\lambda^2(1-m^p)},
    \label{eq:m_Ising_Bayesopt}
    \end{equation}
 For this case, the free-energy \eq{eq-f-ising-RS}
-\eq{eq-SP-ising-additive-noise-RS} becomes, 
\begin{align}
     f(m,\alpha,\lambda) = \,&\frac{1}{2}A(m,\alpha,\lambda)(1+m) - \int\mathcal{D}z\ln\cosh\left(A(m,\alpha,\lambda)+z\sqrt{A(m,\alpha,\lambda)}\right) \nonumber \\
    &+\frac{\alpha}{2p}\ln\left(1+\lambda^2(1-m^p)\right).
    \label{eq:free_ene_Ising_Bayesopt}
\end{align}

\subsubsection{Gaussian prior and additive noise}
\label{sec-Gaussian-Gaussian-replica}

Using  \eq{eq-f0-gaussian-RS} and
\eq{eq-f1-additive-noise-RS} together with $Q_{0}=1$
(which we found in sec.~\ref{sec-without-students}),
we obtain the total replicated free-energy as
\beqn
\frac{- F(Q,q,m)}{NM} &= & \frac{1}{2}\left[
    \ln (Q-q+n(q-m^{2}))+ (n-1) \ln (Q-q)
    \right]
-\frac{1}{2}-\frac{n}{2}Q
\nonumber \\
&+&\frac{\alpha}{p}
\int dy
  W(y)  \int \mathcal{D}z_{0} 
  \left(
 \int \mathcal{D}z_{1}   
    W^{}(y+\sqrt{1-2m^{p}+q^{p}}z_{0}+\sqrt{Q^{p}-q^{p}}z_{1})
    \right)^{n}, \qquad
    \eeqn
        from which we find the free-energy \eq{eq-def-f} as
\beqn
- f &= &
\frac{1}{2}\left[\frac{q-m^{2}}{Q-q}+\ln(Q-q)\right]-\frac{Q}{2}
\nonumber \\
&+&
\frac{\alpha}{p}
  \ln \int dy
  W(y)  \int \mathcal{D}z_{0} 
  \ln
  \left(
 \int \mathcal{D}z_{1}   
    W^{}(y+\sqrt{1-2m^{p}+q^{p}}z_{0}+\sqrt{Q^{p}-q^{p}}z_{1})
    \right). \qquad
    \label{eq-f-gaussian-additive-noise-RS-2}
    \eeqn
    Extremizing the free-energy with respect to $Q$, $q$ and $m$ we find, using \eq{eq-f0-gaussian-RS-derivatives} and \eq{eq-f1-additive-noise-RS-derivatives},
    \beqn
    0 &=& \frac{Q-2q-m^{2}}{(Q-q)^{2}}-1-\lambda^{2}\alpha Q^{p-1}\theta_{1}(\lambda,1,Q,m,q)),
    \nonumber \\
    0 &=& \frac{q-m^{2}}{(Q-q)^{2}}-\lambda^{2}\alpha q^{p-1}\theta_{0}(\lambda,1,Q,m,q),\nonumber \\
    0 &=& -\frac{m}{Q-q}+\lambda^{2}\alpha m^{p-1}(\theta_{0}(\lambda,m,q)-\theta_{1}(\lambda,1,Q,m,q)).
    \label{eq-SP-gaussian-additive-noise-RS}    
    \eeqn

In the Bayes-optimal case, $m=q$ must hold. 
As noted before in sec \ref{sec-additive-noise-replica}
we find $\theta_{1}=0$ if $m=q$ and $Q_{0}=Q$.
In this case the first equation of \eq{eq-SP-gaussian-additive-noise-RS}  become satisfied
and the last two equations of \eq{eq-SP-gaussian-additive-noise-RS} become the same.
Thus by solving the latter we find a self-consistent solution with $q=m$ and $Q=Q_{0}=1$.

We will analyze specifically the Bayes-optimal case with Gaussian noise. In this case one finds,
using \eq{eq-theta0-theta1-gaussian-noise-replica},
\begin{equation}
    m = \frac{\alpha\lambda^2m^{p-1}}{1+\lambda^2(1-m^p)+\alpha\lambda^2m^{p-1}}.
    \label{eq:m_Bayesopt_gauss}
\end{equation}
For this case, the free-energy 
\eq{eq-f-gaussian-additive-noise-RS-2}
becomes, 
\begin{align}
  - f(m,\alpha,\lambda) &= 
\frac{1}{2}\left[m+\ln(1-m)\right]
+\frac{\alpha}{p}
  \ln \int dy
  W(y)
  \ln
  \left(
 \int \mathcal{D}z
    W (y+\sqrt{1-m^{p}}z)
    \right).
      \label{eq:free_ene_gaussian_Bayesopt}
\end{align}


\subsubsection{Gaussian prior and sign output}
\label{sec-Gaussian-sign-replica}

Using  \eq{eq-f0-gaussian-RS} and  \eq{eq-f1-sign-RS}
together with $Q_{0}=1$
(which we found in sec.~\ref{sec-without-students})
we obtain the total replicated free-energy as
\beqn
\frac{- F(Q,q,m)}{NM} &= &
\frac{1}{2}\left[
    \ln (Q-q+n(q-m^{2}))+ (n-1) \ln (-q)
    \right]
 -\frac{1}{2}-\frac{n}{2}Q
\nonumber \\
&& +
\frac{\alpha}{p} 
  \ln \int \mathcal{D}z_{0} H(X_{\rm teacher})  \int \mathcal{D}z_{1} H^{n}(X_{\rm student}) ,
\nonumber
\eeqn
with $H(x)$ defined in \eq{eq-H} and $X_{\rm teacher}$ and $X_{\rm student}$ defined in  \eq{eq-X-teacher-student}.
Then we find the free-energy \eq{eq-def-f} as
\beqn
- f &= &
\frac{1}{2}\left[\frac{q-m^{2}}{Q-q}+\ln(Q-q)\right]-\frac{Q}{2}
+ \frac{\alpha}{p} 
\ln \int \mathcal{D}z_{0} H(X_{\rm teacher})  \int \mathcal{D}z_{1} \ln H(X_{\rm student}). \qquad
\label{eq-free-ene-signoutput}
  \eeqn
  Extremizing the free-energy with respect to $Q$, $q$ and $m$ we find, using \eq{eq-f0-gaussian-RS-derivatives}
  \eq{eq-f1-sign-RS-derivative-wrt-Q}, \eq{eq-f1-sign-RS-derivatives-1} and  \eq{eq-f1-sign-RS-derivatives-2},
  \beqn
  0 &=&
  \frac{Q-2q-m^{2}}{(Q-q)^{2}}-1
+   \frac{\lambda^{2} \alpha Q^{p-1}}{Q^{p}-q^{p}}
  \int \mathcal{D}z_{\rm 0}H(X_{\rm teacher})
  \int \mathcal{D}z_{\rm 1}
\frac{H^{''}(X_{\rm student})}{H(X_{\rm student})},
  \nonumber \\
    0 &=& \frac{q-m^{2}}{(Q-q)^{2}}-\frac{2\lambda^{2}\alpha q^{p-1}}{Q^{p}-q^{p}} \int \mathcal{D}z_{\rm 0} H(X_{\rm teacher})
  \int \mathcal{D}z_{1}
\left(\frac{H'(X_{\rm student})}{H(X_{\rm student})} \right)^{2}, \nonumber \\
    0 &=& -\frac{m}{1-q}+\frac{\lambda^{2}\alpha m^{p-1}}{1-m^{p}} \int \mathcal{D}z_{\rm 0} H'(X_{\rm teacher})
  \int \mathcal{D}z_{1} \frac{H'(X_{\rm student})}{H(X_{\rm student})}.
    \label{eq-SP-gaussian-sign-RS}    
    \eeqn
We find the 1st equation becomes satisfied in the Bayes optimal case $q=m$ with $Q_{0}=Q$.

In the Bayes optimal case $q=m$ must hold. 
Assuming also $Q=1$ we find the last two equations shown above become the same,
\beq
0 = -\frac{m}{1-m}+\frac{\lambda^{2}\alpha m^{p-1}}{1-m^{p}} \int \mathcal{D}z_{\rm 0}
\frac{(H'(X))^{2}}{H(X)},
\qquad    X=-\frac{\sqrt{m^{p}}}{\sqrt{1-m^{p}}}z.
    \label{eq-SP-gaussian-sign-RS-BayesOptimal}    
    \eeq
By solving this equation we find a self-consistent solution with $q=m$ and $Q=Q_{0}=1$.

In the Bayes optimal case $q=m$ and assuming $Q=1$, the free-energy \eq{eq-free-ene-signoutput} becomes,
\beq
- f = 
\frac{1}{2}\left[\frac{q-m^{2}}{1-m}+\ln(1-m)\right]
+ \frac{\alpha}{p} 
\ln \int \mathcal{D}z H(X) \ln H(X), \qquad    X=-\frac{\sqrt{m^{p}}}{\sqrt{1-m^{p}}}z.
\label{eq-free-ene-signoutput-BayesOptimal}
\eeq

\subsection{Error correcting code}
\label{sec-error-correcting-code}

\blue{As mentioned in sec.~\ref{sec-error-correcting-codes}, our problem is
also related to error correcting codes}.
Similarly to the Sourlas code\cite{sourlas1989spin},
our inference problem with the Ising prior and additive Gaussian noise
can be viewed as an error correcting code.
The teacher (encoder) is trying to send 
$NM$ bits (ground truth) encoded into $N_{\bs}$ data and
student (decoder) is trying to infer the ground truth given the
data. The transmission rate is given by
\beq
R=\frac{NM}{N_{\bs}}=\frac{p}{\alpha}.
\eeq

Now let us consider to take $p \to \infty$, $\alpha \to \infty$
with the ratio $R$ fixed.
In this limit \eq{eq:m_Ising_Bayesopt} admits only two solutions
\beq
m=0,1.
\eeq
Thus in this limit the inference completely fails $m=0$ or succeed perfectly $m=1$.
From \eq{eq:free_ene_Ising_Bayesopt} we find the free-energy associated with
the two solutions as,
\beqn
- f(m=0) &=& \ln 2 -\frac{1}{2R}\left[ 1+\ln (2\pi)+\ln (1+\lambda^{2})\right], \\
- f(m=1) &=&  -\frac{1}{2R}\left[ 1+\ln (2\pi)\right] .
\eeqn
From this we find there is a 1st order transition
at $\lambda_{c}$ given by
\beq
R=\frac{1}{2} \ln_{2}(1+\lambda_{\rm c}^{2}),
\eeq
or
\beq
\lambda_{\rm c}=\sqrt{2R\log 2-1}.
\eeq
For $\lambda < \lambda_{\rm c}$ the paramagnetic solution $m=0$ is the thermodynamically relevant one which is replaced by the ferromagnetic solution $m=1$ for $\lambda > \lambda_{\rm c}$.

On the other hand the capacity of Gaussian channel is given by
\beq
C=\frac{1}{2}\ln_{2} (1+\lambda^{2}),
\eeq
thus we find 
\beq
C(\lambda_{\rm c})=R.
\eeq
This means that the Shannon's bound~\cite{shannon1948mathematical},
\beq
R \leq C(\lambda)
\eeq
is satisfied for $\lambda \geq \lambda_{\rm c}$ and the upper bound is
attained at the transition point  $\lambda \to \lambda_{\rm c}^{+}$.

\blue{ The Shannonn's bound says that the code must have sufficient redundancy such that the transmission rate $R$ is smaller than the channel capacity $C$ for fully successful error correction $m=1$. The above result means that our code can do fully successful inferences $m=1$ at the largest possible transmission rate allowed by the bound at the transition point $\lambda_{c}$ in the $p \to \infty$ limit. While this feature is the same as the Sourlas code \cite{sourlas1989spin}, the notable differences lies in the transmission rate $R$. The transmission rate $R$ (as well as the capacity $C$) of the Sourlas code 
vanishes in the thermodynamic limit $N \to \infty$ since $R \propto 1/N^{p-1}$ because of the global coupling. This is a negative aspect of the Sourlas code. However, thanks to the dense coupling which is spacer than the global coupling, the transmission rate $R$ of our model can remain finite
by considering $p,\alpha \to \infty$ with the ratio $R=p/\alpha$ being fixed. }

\subsection{{Some remarks on non-linearity of effective potentials}}
\label{sec-remarks-on-non-linearity}


Each of our inference problems can be viewed as a system
in which  $x_{i\mu}$s are interacting with each other 
as described by an effective Hamiltonian,
\beq
H=\sum_{\bs} V(\pi_{\bs}) \qquad
V(\pi_{\bs})=-\ln P_{\rm out}(y_{\bs}|\pi_{\bs})
\eeq
Here we omitted local potential terms due to $P_{\rm pri}(x_{i\mu})$.
In the cases that we considered in the present paper, the effective potentials $V(\pi_{\bs})$ are non-linear functions of $\pi_{\bs}$. Importantly our problems cannot be linealized and this point brings about non-trivial consequences as we discuss below.

In sec.~\ref{sec-vecotorialCSP} we mentioned that our present problem is exactly the inverse problem or
{\it planted version} of the vectorial constraint satisfaction problems (CSP)
\blue{on dense (but not fully connected) graphs} studied in \cite{yoshino2018}.
In the case of 'linear potential' $V(x)=Jx$\ (see sec. \eq{eq-linear-potential}),
it was found that the vectorial CSP problems in the dense limit become essentially equivalent
to \blue{(just a bunch of)} the $M=1$ problem
which are the usual mean-field $p$-spin spinglass models.
\blue{The situation is analogous to the sub-linearly high-rank models of matrix factorization (see below).}
However such correspondences disappear for non-linear potentials.
For instance, in the case of quadratic potential $V(x)=\frac{\epsilon}{2}x^{2}$ \eq{eq-quadratic-potential},
full RSB was found even for $p=2$ {\it spherical} model
(continuous $x_{j\mu}$ with the constraint $\sum_{\mu=1}^{M} x_{j\mu}^{2}=M$)
while such RSB is absent in the case of linear potential for the $p=2$ spherical model (spherical SK model \cite{kosterlitz1976spherical}). \blue{We sketch below why non-linear models do not reduce to
linear (and thus effectively $M=1$) models in the case $c=O(M)$.}

Similarly to the vectorial CSP problems, we believe that our vectorial inference problem do {\it not} reduce to rank-one $M=1$ problem in general except for the special case of linear potential $V(x)=Jx$,
i.e. $\ln P_{\rm out}(y_{\bs}|\pi_{\bs}) \propto -\pi_{\bs}$.
Indeed in the case of the Ising model with additive Gaussian noise
$\ln P_{\rm out}(y_{\bs}|\pi_{\bs}) \propto -(y_{\bs}-\pi_{\bs})^{2}$,
which can be viewed as a system with quadratic potential  $V(x)=\frac{\epsilon}{2}x^{2}$,
we obtained the equation of state \eq{eq:m_Ising_Bayesopt} which has a term $\lambda^2(1-m^p)$
in the denominator. Such a term is absent in the case of the Sourlas code which is a rank-one $M=1$ problem
with additive Gaussian noise. As the result some properties of our model are {\it qualitatively}
different from those of the Sourlas code.
Most importantly the phase behavior in the $p=2$ case: mixture of 2nd and 1st order transitions as discussed in 
sec.~\ref{sec:result-p=2-ising-gaussian-noise}. Such a feature is absent in the case of the Sourlas code.

Let us discuss more closely the case of the quadratic effective potential,
\beqn
H&=&-\frac{1}{2}\sum_{\bs} \left(
(\pi_{\bs}^{*}+w_{\bs})- \pi_{\bs}\right)^{2} \\
&=&\sum_{\bs} (\pi_{\bs}^{*}+w_{\bs})\pi_{\bs}
-\frac{1}{2}\sum_{\bs} (\pi_{\bs})^{2}
 +{\rm const},
\label{eq-quadractic-or-linear-potential}
\eeqn
where $w_{\bs}$ is a quenched Gaussian random number with zero mean and unit variance and 
\beqn
\pi_{\bs} = \frac{\Lambda}{\sqrt{c}} \sum_{\mu=1}^M
F_{\bs \mu}\prod_{j\in\del\bs} x_{j\mu}.
\eeqn
In the last equation of \eq{eq-quadractic-or-linear-potential}, "const" represents a irrelevant constant term
which does not involve $\pi_{\bs}$.
\begin{enumerate}
\item  Globally coupled systems $c \propto N^{p-1}$

  In this case $\pi_{\bs}$ scales as $\pi_{\bs}=O((M N^{1-p})^{1/2})$ with $p \geq 2$.
  Here we assumed that the sum $\sum_{\mu=1}^{M}\cdots$ scales as $O(M^{1/2})$. This is valid as long as correlation between different components
  $\mu$ can be neglected.
Thus for $M=N^{a}$ we find $\pi_{\bs} \to 0$ in $N \to \infty$ limit if the exponent
$a$ is small enough.
This applies to the case of the Sourlas code ($M=1$) and sub-linearly high-rank cases
if $0 < a <p-1$.
In such a situation the 2nd term $\pi_{\bs}^{2}$ in the last equation \eq{eq-quadractic-or-linear-potential}
can be safely dropped in $N \to \infty$ and the problem reduces to models linear in $\pi_{\bs}$.

Indeed the Sourlas code can be mapped exactly to the ferromagnetically biased  $p$-spin Ising spin-glass model
\cite{derrida1980random,gross1984simplest,nishimori1999statistical}.
More recent findings that sub-linearly high-rank models
with global coupling \cite{reeves2020information,pourkamali2024matrix,barbier2024multiscale}
become essentially equivalent to \blue{(just a bunch of)} rank-one $M=1$ cases is consistent with the above observation.

The situation is different if the exponent $a$ is not small enough, including the case of full rank $M=O(N)$ case ($a=1$) in
matrix factorization ($p=2$)
\cite{sakata2013statistical,sakata2013sample}.
In such cases the problems {\it cannot} be reduced to models linear in $\pi_{\bs}$.

\item  Our model $c \propto M$ ($\Lambda=\lambda/\sqrt{\alpha}$)

The situation is very different in this case:
$\pi=O(1)$ so that the  2nd term $\pi_{\bs}^{2}$ in the last equation \eq{eq-quadractic-or-linear-potential} cannot be dropped. Thus our problem cannot be reduced to models linear in $\pi_{\bs}$.

\end{enumerate}

\subsection{Independence on choices of factor $F$}
\label{sec-independence-on-choices-of-F}

In the results presented in this section one can notice
that expressions for the macroscopic quantities like the order-parameters, equation of states, free-energy... are the same
for the deterministic model with $F=1$ \eq{eq-F-uniform} and the disordered model with random $F$  \eq{eq-F-random}.
Note however that the remaining global symmetries (see sec~\ref{sec-symmetries}) are different between the two cases.

\clearpage

\section{Message Passing Algorithms}
\label{sec-mp}

\blue{In this section we develop a message passing scheme for the present problem following a standard procedure  \cite{rangan2010estimation,kabashima2016phase}, which will be used in sec.~\ref{sec-result} to analyze specific models.
If the readers are not interested in the technical but standard details they may skip
secs.~\ref{sec-BP}-\ref{sec-SE-equations}.
In sec.~\ref{sec-comparison-BP-replica} we find the results of SE (state evolution) equations agree with
the corresponding equations of states obtained by the replica approach presented in sec.~\ref{sec-replica-total-free-energy}.
}

\subsection{Belief Propagation (BP)}
\label{sec-BP}

Based on the graphical model in Fig. \ref{fig:fig_graph}, we here introduce the well-known belief propagation (BP) algorithm. The graph composed of gray circles and squares corresponds to the random and sparse observations, and it forms a sparse random graph. Therefore, the BP algorithm that uses the messages for $\bm{x}_i = (x_{i1}, x_{i2}, \ldots, x_{iM})^{\top}$ corresponding to the gray circles is expected to yield asymptotically exact results. Fortunately, in our dense limit, a further simplification is possible. Specifically, even if we consider messages and BP for each variable component $x_{i\mu}$ corresponding to the green circles, the results are still expected to be asymptotically exact. This is precisely the same situation in which BP becomes asymptotically exact in fully connected models. In summary, for two reasons--the randomness and sparsity of observations and the divergence of $M$--the use of BP for each $x_{i\mu}$ is justified. 

The BP equations for the messages of $x_{i\mu}$ can be written formally as
\beqn
\tilde{\phi}_{\bs\to i\mu}^t(x_{i\mu}) &= \frac{1}{\tilde{z}_{\bs\to i\mu}^t} \expval{P_{\rm out} \qty(y_\bs|\pi_\bs)}^{\setminus i\mu}_t,
  \label{eq:p-ary_problem_BP1}\\
\phi_{i\mu\to\bs}^{t+1}(x_{i\mu}) &= \frac{1}{z^{t+1}_{i\mu\to\bs}} P_\text{pri.}(x_{i\mu}) \prod_{\bs'\in\del i\setminus\bs} \tilde{\phi}^t_{\bs'\to i\mu}(x_{i\mu}),
\label{eq:p-ary_problem_BP2}
\eeqn
where we introduced an average defined as
\beqn
\expval{\cdots}^{\setminus i\mu}_t &= \int \qty( \prod_{j\in\del\bs\setminus i} \prod_{\nu=1}^M \dd{x_{j\nu}} \phi_{j\nu\to\bs}^t(x_{j\nu}) ) \qty( \prod_{\nu(\neq\mu)} \dd{x_{i\nu}} \phi_{i\nu\to\bs}^t(x_{i\nu}) )\; \qty(\cdots).
 \label{eq:HRTD-BP-expectation}
\eeqn
Note that the statistical weight used in the average is completely factorized
into the product of the messages $\phi_{j\nu\to\bs}^t(x_{j\nu})$ from different $(j,\nu)$s.
Thus we have for any observables 
$\langle A(x_{j_{1}\nu_{1}})B(x_{j_{2}\nu_{2}})\rangle_{t}^{\setminus i\mu}=\langle A(x_{j_{1}\nu_{1}})\rangle^{\setminus i\mu}_{t}\langle B(x_{j_{2}\nu_{2}})\rangle^{\setminus i\mu}_{t}$ 
unless $(j_{1},\nu_{1})=(j_{2},\nu_{2})$.

\subsection{From BP to r-BP}
\label{sec-r-BP}
In the regime $M \gg 1$  which we are interested in, the BP equations become intractable. In the following we
derive r-BP (relaxed BP)\cite{rangan2010estimation,kabashima2016phase} which is suitable for $M \gg 1$.

\subsubsection{The 1st BP equation}
\label{subsubsec-1stBP}

We first consider the 1st BP equation \eq{eq:p-ary_problem_BP1}.
Let us introduce a Fourier representation of $P_\text{out}(y_\bs|\pi_\bs)$
\beqn
P_{\rm out}(y_\bs|\pi_\bs) &= \int \frac{\dd{k}}{2\pi} \dd{z} e^{ik\qty(\pi_\bs-z)} P_{\rm out}(y_\bs|z).
\eeqn
Then we can rewrite \eq{eq:p-ary_problem_BP1} as,
\beqn
\tilde{\phi}_{\bs\to i\mu}^t (x_{i\mu}) &= \frac{1}{\tilde{z}_{\bs\to i\mu}^t} \int \frac{\dd{k}}{2\pi} \dd{z} \expval{e^{ik\pi_\bs}}^{\setminus i\mu}_t e^{-ikz} P_{\rm out}(y_\bs|z).
\eeqn
Now we evaluate $\expval{e^{ik\pi_\bs}}^{\setminus i\mu}_t$ by a cumulant expansion,
\beqn
\ln\expval{e^{ik\pi_\bs}}^{\setminus i\mu}_t &= ik O^{(1),t}_{\bs\to i\mu} - \frac12 k^2 O^{(2),t}_{\bs\to i\mu} - \frac16 i k^3 O^{(3),t}_{\bs\to i\mu} +\dots.
\eeqn

Using \eq{eq:HRTD-BP-expectation} and the comment mentioned there, we find the first few moments as,
\beqn
O^{(1),t}_{\bs\to i\mu} &=& \expval{\pi_\bs}^{\setminus i\mu}_t \notag\\
	&=& \frac{\lambda}{\sqrt{M}}F_{\bs,\mu} x_{i\mu} \prod_{j\in\del\bs\setminus i} m_{j\mu\to\bs}^t + \sum_{\nu(\neq\mu)} \frac{\lambda}{\sqrt{M}}F_{\bs\nu} \prod_{j\in\del\bs} m_{j\nu\to\bs}^t \notag\\
&=& \frac{\lambda}{\sqrt{M}} F_{\bs\mu} \rho^t_{\bs\setminus i,\mu} x_{i\mu} + \omega^t_{\bs\to\mu}, \\
O^{(2),t}_{\bs\to i\mu} &=& \expval{\qty(\pi_\bs)^2}^{\setminus i\mu}_t - \qty(\expval{\pi_\bs}^{\setminus i\mu}_t)^2 \notag\\
&=& \qty(\frac{\lambda}{\sqrt{M}}F_{\bs\mu})^{2} x_{i\mu}^2 \qty(\prod_{j\in\del\bs\setminus i} v_{j\mu\to\bs}^t - \prod_{j\in\del\bs\setminus i} \qty(m^t_{j\mu\to\bs})^2) \notag \nonumber \\
&& + \sum_{\nu(\neq\mu)} \qty(\frac{\lambda}{\sqrt{M}}F_{\bs\nu})^{2} \qty(\prod_{j\in\del\bs} v_{j\nu\to\bs}^t - \prod_{j\in\del\bs} \qty(m_{j\nu\to\bs}^t)^2)\\
&=& V_{\bs\to\mu}^t + \order{M^{-1}}, \\
O^{(3),t}_{\bs\to i\mu} &=& \expval{\qty(\pi_\bs)^3}^{\setminus i\mu}_t - 3 \expval{\pi_\bs}^{\setminus i\mu}_t \expval{\qty(\pi_\bs)^2}^{\setminus i\mu}_t + 2 \qty(\expval{\pi_\bs}^{\setminus i\mu}_t)^3 \notag\\
	&=& \sum_{\nu(\neq\mu)} \qty(\frac{\lambda}{\sqrt{M}}F_{\bs\nu})^{3} \qty( \prod_{j\in\del\bs} t_{j\nu\to\bs}^t - 3 \prod_{j\in\del\bs} m_{j\nu\to\bs}^t v_{j\nu\to\bs}^t + 2 \prod_{j\in\del\bs} \qty(m_{j\nu\to\bs}^t)^3 ) + \order{M^{-1}} \notag\\
	&&= \order{M^{-1/2}}.
\eeqn

Here we introduced moments of $x_{i\mu}$ associated with the distribution function $\phi_{i\mu\to\bs}^t(x_{i\mu})$ \eqref{eq:p-ary_problem_BP2},
\beqn
m_{i\mu\to\bs}^t &=& \int \dd{x_{i\mu}} \phi_{i\mu\to\bs}^t(x_{i\mu}) \; x_{i\mu} \label{eq:HRTD-rBP_def_m},\\
	v_{i\mu\to\bs}^t &=& \int \dd{x_{i\mu}} \phi_{i\mu\to\bs}^t(x_{i\mu}) \; x_{i\mu}^2, \label{eq:HRTD-rBP_def_v}\\
	t_{i\mu\to\bs}^t &=& \int \dd{x_{i\mu}} \phi_{i\mu\to\bs}^t(x_{i\mu}) \; x_{i\mu}^3.
        \eeqn
        We call these also as messages. Furthermore we also introduced
        \beqn
        \rho_{\bs\setminus i, \mu}^t &= \prod_{j\in\del\bs\setminus i} m_{j\mu\to\bs}^t, \label{eq:HRTD-rBP_def_rho}\\
	\omega_{\bs\to\mu}^t &= \sum_{\nu(\neq\mu)} \frac{\lambda}{\sqrt{M}}F_{\bs\nu} \prod_{j\in\del\bs} m_{j\nu\to\bs}^t,
	 \label{eq:HRTD-rBP_def_omega} \\
	V_{\bs\to\mu}^t &= \sum_{\nu(\neq\mu)} \qty(\frac{\lambda}{\sqrt{M}}F_{\bs\nu})^{2} \qty(\prod_{j\in\del\bs} v_{j\nu\to\bs}^t - \prod_{j\in\del\bs} \qty(m_{j\nu\to\bs}^t)^2). \label{eq:HRTD-rBP_def_V}
        \eeqn

        For $M\gg1$, we can drop terms of $\order{M^{-1/2}}$ so that the BP equation \eqref{eq:p-ary_problem_BP1} becomes,
        \beqn
&&  \tilde{\phi}_{\bs\to i\mu}^t(x_{i\mu}) \propto \int \dd{k} \dd{z} \exp\qty[ik\qty(\frac{\lambda}{\sqrt{M}}F_{\bs\mu} \rho_{\bs\setminus i,\mu}^t x_{i\mu} + \omega_{\bs\to\mu}^t -z)-\frac12 k^2 V_{\bs\to \mu}^{t}] \; P_\text{out}(y_\bs|z) \notag\\
	&\propto& \int\dd{z} \exp\qty[-\frac{1}{2V_{\bs\to\mu}^t} \qty(\frac{\lambda}{\sqrt{M}}F_{\bs\mu} \rho_{\bs\setminus i,\mu}^t x_{i\mu} + \omega_{\bs\to\mu}^t - z)^2]\; P_\text{out} \qty(y_\bs|z) \notag\\
 &\propto& \exp\qty[-\frac{1}{2V_{\bs\to\mu}^t} \qty(\frac{\lambda}{\sqrt{M}}F_{\bs\mu} \rho_{\bs\setminus i,\mu}^t)^2 x_{i\mu}^2 ]
 \expval{ e^{ \frac{1}{V_{\bs\to\mu}^t} \qty(z-\omega_{\bs\to\mu}^t) \frac{\lambda}{\sqrt{M}}F_{\bs\mu}
       \rho_{\bs\setminus i,\mu}^t x_{i\mu}}  ; \omega_{\bs\to\mu}^t ,y_\bs, V_{\bs\to\mu}^t}_\text{out}. \qquad \label{eq:p-ary_rBP_Culculate1}
 \eeqn
 Here we defined $\expval{\cdots;\omega,y,V}_\text{out}$,
 \beqn
 \expval{\cdots; \omega,y,V}_\text{out} &= \frac{ \int\dd{z} P_\text{out}\qty(y|z) \;\qty(\cdots)\; \exp(-\frac{1}{2V} \qty(z-\omega)^2 ) } { \int\dd{z} P_\text{out}\qty(y|z) \exp(-\frac{1}{2V} \qty(z-\omega)^2 )}.
 \eeqn
Again  we can evaluate $\expval{\cdots;\omega,y,V}_\text{out}$ in \eq{eq:p-ary_rBP_Culculate1} by a cumulant expansion,
\beqn
 &&\ln\expval{ e^{ \frac{1}{V} \qty(z-\omega) \frac{\lambda}{\sqrt{M}}X \rho x } ; \omega, y, V }_\text{out} \notag\\
&&\qquad = g_\text{out}\qty(\omega,y,V) \frac{\lambda}{\sqrt{M}}X  \rho x + \frac12 \qty( g_\text{out,II}(\omega,y,V) - g^2_\text{out}\qty(\omega,y,V) )
\qty(\frac{\lambda}{\sqrt{M}}X)^{2} \rho^2 x^2 \nonumber \\
&& + \order{M^{-3/2}}, \label{eq:p-ary_rBP_Culculate2}
\eeqn
where we defined  'output functions'
\beqn
g_\text{out}(\omega,y,V) &=& \frac{ \int\dd{z} P_\text{out}\qty(y|z) (z-\omega) e^{-(z-\omega)^2/2V} }{ V\int\dd{z} P_\text{out}\qty(y|z) e^{-(z-\omega)^2/2V} }, \label{eq:HRTD-rBP_def_outputFunction}\\
	 g_\text{out,II}\qty(\omega,y,V) &=& \frac{\int\dd{z} P_\text{out}\qty(y|z) \qty(z-\omega)^2 e^{-(z-\omega)^2/2V}}{V^2\int\dd{z} P_\text{out}(y|z) e^{-(z-\omega)^2/2V}},
\eeqn
which verify an identity
\beqn
g_\text{out,II}\qty(\omega,y,V) &= \frac1V + \pdv{\omega} g_\text{out}\qty(\omega,y,V) + g_\text{out}^2\qty(\omega,y,V). \label{eq:p-ary_rBP_Culculate3}
  \eeqn
  Using \eq{eq:p-ary_rBP_Culculate2} and \eq{eq:p-ary_rBP_Culculate3} we find
  \eq{eq:p-ary_rBP_Culculate1} becomes,
  \beqn
  \tilde{\phi}_{\bs\to i\mu}^t(x_{i\mu}) &\propto& \exp( -\frac12 A_{\bs\to i\mu}^t x_{i\mu}^2 + B^t_{\bs\to i\mu} x_{i\mu} ),
  \eeqn
  where we defined,
  \beqn
 A_{\bs\to i\mu}^t &=& - \qty(\frac{\lambda}{\sqrt{M}}F_{\bs\mu} \rho_{\bs\setminus i,\mu}^t)^2 \pdv{\omega} g_\text{out}\qty(\omega_{\bs\to\mu}^t,y_\bs,V_{\bs\to\mu}^t), \\
 B_{\bs\to i\mu}^t &=& \frac{\lambda}{\sqrt{M}}F_{\bs\mu} \rho_{\bs\setminus i,\mu}^t \, g_\text{out}\qty(\omega_{\bs\to\mu}^t, y_\bs, V_{\bs\to\mu}^t).
 \label{eq:HRTD-AB}
 \eeqn
 As shown in \eq{eq:HRTD-rBP_def_rho}, \eq{eq:HRTD-rBP_def_omega} and  \eq{eq:HRTD-rBP_def_V},
the parameters $\rho, \omega$ and $V$ 
depend on the messages $m_{i\mu\to\bs}^t$ and $v_{i\mu\to\bs}^t$ defined in \eq{eq:HRTD-rBP_def_m} and \eq{eq:HRTD-rBP_def_v}.

 \subsubsection{The 2nd BP equation}
 \label{subsubsec-2ndBP}
 
Using the above result, we can write  the 2nd BP equation \eqref{eq:p-ary_problem_BP2} as,
 \beqn
\phi_{i\mu\to\bs}^{t+1}(x_{i\mu}) &\propto P_\text{pri.}(x_{i\mu}) \prod_{\bs'\in\del i\setminus\bs} \exp( -\frac12 A_{\bs'\to i\mu}^t x_{i\mu}^2 + B_{\bs'\to i\mu}^t x_{i\mu} ) \notag\\
&\propto P_\text{pri.}(x_{i\mu}) \exp( -\frac{1}{2\,\Sigma_{i\mu\to\bs}^{t+1}} \qty(x_{i\mu}-T_{i\mu\to\bs}^{t+1})^2 ).
\eeqn
Here we introduced
\beqn
\Sigma_{i\mu\to\bs}^{t+1} &= \frac1{\sum_{\bs'\in\del i\setminus\bs} A_{\bs'\to i\mu}^t}, \qquad T_{i\mu\to\bs}^{t+1} = \Sigma_{i\mu\to\bs}^{t+1} \times \sum_{\bs'\in\del i\setminus\bs} B_{\bs'\to i\mu}^t.  \label{eq:HRTD-del_Sigma_T}
\eeqn

 \subsubsection{Back to the 1st BP equation}

 When we analyzed the 1st BP equation in sec. \ref{subsubsec-1stBP}, we found
 messages $m_{i\mu\to\bs}^t$ and $v_{i\mu\to\bs}^t$ defined in \eq{eq:HRTD-rBP_def_m} and \eq{eq:HRTD-rBP_def_v},
 are needed to be evaluated using $\phi_{i\mu\to\bs}^t$.
 Using the results in sec~\ref{subsubsec-2ndBP} we can write these using
$\Sigma_{i\mu\to\bs}^{t}$ and $T_{i\mu\to\bs}^{t}$ as,
\beqn
m_{i\mu\to\bs}^t &=& \int\dd{x_{i\mu}} \phi_{i\mu\to\bs}^t(x_{i\mu})\;x_{i\mu} = f_\text{input}\qty(\Sigma_{i\mu\to\bs}^{t},T_{i\mu\to\bs}^{t}) \label{eq:HRTD-rBP_input_m},
\\
       v_{i\mu\to\bs}^t &=& \int\dd{x_{i\mu}} \phi_{i\mu\to\bs}^t(x_{i\mu})\;x_{i\mu}^2 = f_\text{input,II}\qty(\Sigma_{i\mu\to\bs}^{t},T_{i\mu\to\bs}^{t}) \notag\\
       &=& \Sigma_{i\mu\to\bs}^{t} \times \pdv{T} f_\text{input}\qty(\Sigma_{i\mu\to\bs}^{t},T_{i\mu\to\bs}^{t}) + f_\text{input}^2\qty(\Sigma_{i\mu\to\bs}^{t},T_{i\mu\to\bs}^{t}), \label{eq:HRTD-rBP_input_v}
       \eeqn
       where we defined
       \beqn
 f_\text{input}\qty(\Sigma,T) &=& \frac{ \int\dd{x} P_\text{pri.}(x) \,x\, e^{-(x-T)^2/2\Sigma} }{ \int\dd{x} P_\text{pri.}(x) e^{-(x-T)^2/2\Sigma} }, \label{eq:HRTD-rBP_def_inputFunction}\\
 f_\text{input,II}\qty(\Sigma,T) &=& \frac{ \int\dd{x} P_\text{pri.}(x) \,x^2\, e^{-(x-T)^2/2\Sigma} }{ \int\dd{x} P_\text{pri.}(x) e^{-(x-T)^2/2\Sigma}}.
 \eeqn
 Note that an identity
 \beqn
 f_\text{input,II}\qty(\Sigma,T) &= \Sigma \times \pdv{T} f_\text{input}\qty(\Sigma,T) + f_\text{input}^2\qty(\Sigma,T)
 \eeqn
 holds.

 \subsubsection{Marginal distribution and the moments}

 The marginal distribution is obtained as,
 \beqn
  \phi_{i\mu}^{t+1}(x_{i\mu}) &=& \frac{1}{z^{t+1}_{i\mu}} P_\text{pri.}(x_{i\mu}) \prod_{\bs'\in\del i} \tilde{\phi}^t_{\bs'\to i\mu}(x_{i\mu}) \nonumber \\
 && \propto P_\text{pri.}(x_{i\mu}) \exp( -\frac{1}{2\,\Sigma_{i\mu}^{t+1}} \qty(x_{i\mu}-T_{i\mu}^{t+1})^2 ), \label{eq:HRTD-rBP_marginals}
 \eeqn
with
%
  \beqn
  (\Sigma_{i\mu}^{t+1})^{-1} &= - \sum_{\bs\in\del i}
  \qty(\frac{\lambda}{\sqrt{M}}F_{\bs\mu} \rho_{\bs\setminus i,\mu}^t)^2 \pdv{\omega} g_\text{out}\qty(\omega_{\bs\to\mu}^t,y_\bs,V_{\bs\to\mu}^t),
  \label{eq:HRTD-def_Sigma_full_summary}  
  \\
  \frac{T_{i\mu}^{t+1}}{\Sigma_{i\mu}^{t+1}} & =   \sum_{\bs\in\del i} 
\frac{\lambda}{\sqrt{M}}F_{\bs\mu} \rho_{\bs\setminus i,\mu}^t \, g_\text{out}\qty(\omega_{\bs\to\mu}^t, y_\bs, V_{\bs\to\mu}^t).
  \label{eq:HRTD-def_Sigma_T_full_summary}
 \eeqn

The 1st and 2nd moments of the 
marginal distribution function \eqref{eq:HRTD-rBP_marginals} are obtained as,
 \beqn
 m_{i\mu}^{t+1} &= \int \dd{x_{i\mu}} \phi_{i\mu}^{t+1}(x_{i\mu}) \; x_{i\mu}= f_\text{input}\qty(\Sigma_{i\mu}^{t+1},T_{i\mu}^{t+1}), \label{eq:HRTD-rBP_def_m_full}\\
	v_{i\mu}^{t+1} &= \int \dd{x_{i\mu}} \phi_{i\mu}^{t+1}(x_{i\mu}) \; x_{i\mu}^2= f_\text{input,II}\qty(\Sigma_{i\mu}^{t+1},T_{i\mu}^{t+1}). \label{eq:HRTD-rBP_def_v_full} 
\eeqn

Collecting the results we obtain the r-BP algorithm~\ref{alg:p-ary_r-BP}.
  \begin{algorithm}[H]
    \caption{r-BP algorithm}
    \label{alg:p-ary_r-BP}
    \begin{algorithmic}[1]
      \Require $\mathbb{G}=(\Vset,\Eset)$, matrix $F$, input function $f_\text{input}$, output function $g_\text{out}$
      \Initialize ~Initialize messages (see sec.~\ref{appendix-mp-detail})
      $m_{i\mu\to\bs}^{t}$ and $v_{i\mu\to\bs}^{t}$.
      \Repeat
      \State Update $\omega_{\bs\to\mu}^t$ and $V_{\bs\to\mu}^t$.
      \beq
      \begin{aligned}
      \omega_{\bs\to\mu}^t &\leftarrow \sum_{\nu(\neq\mu)} \frac{\lambda}{\sqrt{M}}F_{\bs\nu} \prod_{j\in\del\bs} m_{j\nu\to\bs}^t,  \\
      V_{\bs\to\mu}^t &\leftarrow \sum_{\nu(\neq\mu)} \qty(\frac{\lambda}{\sqrt{M}}F_{\bs\nu})^{2} \qty(\prod_{j\in\del\bs} v_{j\nu\to\bs}^t - \prod_{j\in\del\bs} \qty(m_{j\nu\to\bs}^t)^2).
      \end{aligned}      
      \tag{Alg1.1}
      \eeq
      \State Update the values of the output functions.
        \beq
        g_{\bs\to\mu}^t \leftarrow g_\text{out}\qty(\omega_{\bs\to\mu}^t,y_\bs,V_{\bs\to\mu}^t), \qquad \del_\omega g^t_{\bs\to\mu} \leftarrow \pdv{\omega} g_\text{out}\qty(\omega_{\bs\to\mu}^t,y_\bs,V_{\bs\to\mu}^t).
        \tag{Alg1.2}
        \eeq
        \State Update $\Sigma_{i\mu\to\bs}^{t+1}$ and $T_{i\mu\to\bs}^{t+1}$.
\beq
\begin{aligned} \frac{1}{\Sigma_{i\mu\to\bs}^{t+1}} &\leftarrow \sum_{\bs'\in\del i\setminus\bs} \qty(\frac{\lambda}{\sqrt{M}}F_{\bs'\mu})^{2} \qty( - \del_\omega g^t_{\bs'\to\mu} ) \prod_{j\in\del\bs'\setminus i} \qty(m_{j\mu\to\bs'}^t)^2 , \\
  \frac{T_{i\mu\to\bs}^{t+1} }{\Sigma_{i\mu\to\bs}^{t+1}} &\leftarrow \sum_{\bs'\in\del i\setminus\bs} \frac{\lambda}{\sqrt{M}}F_{\bs'\mu} g_{\bs'\to\mu}^t \prod_{j\in\del\bs'\setminus i} m_{j\mu\to\bs'}^t. \end{aligned}
\tag{Alg1.3}
\label{eq:Alg1_Sigma_T}
\eeq
			\State Update messages $m_{i\mu\to\bs}^{t+1}$ and $v_{i\mu\to\bs}^{t+1}$.
                        \beq
                        m_{i\mu\to\bs}^{t+1} \leftarrow f_\text{input}\qty(\Sigma_{i\mu\to\bs}^{t+1},T_{i\mu\to\bs}^{t+1}), \qquad v_{i\mu\to\bs}^{t+1} \leftarrow f_\text{input,II}\qty(\Sigma_{i\mu\to\bs}^{t+1},T_{i\mu\to\bs}^{t+1}).
                        \tag{Alg1.4}
                        \eeq
                        \Until $m_{i\mu\to\bs}^{t}$ and $v_{i\mu\to\bs}^{t}$ converge.
                        		\Ensure { Messages $m_{i\mu}$, $v_{i\mu}$ which are the average and variance of the variables.
                                         \beq
                                          \begin{aligned} \frac{1}{\Sigma_{i\mu}} & \leftarrow \sum_{\bs\in\del i} \qty(\frac{\lambda}{\sqrt{M}}F_{\bs\mu})^{2} \qty( - \del_\omega g_{\bs\to\mu} ) \prod_{j\in\del\bs\setminus i} \qty(m_{j\mu\to\bs})^2, \nonumber \\
                      \frac{T_{i\mu} }{\Sigma_{i\mu}} & \leftarrow \sum_{\bs\in\del i}
                                            \frac{\lambda}{\sqrt{M}}F_{\bs\mu} g_{\bs\to\mu} \prod_{j\in\del\bs\setminus i} m_{j\mu\to\bs}, \\
                &  m_{i\mu} \leftarrow f_\text{input}\qty(\Sigma_{i\mu},T_{i\mu}), \qquad v_{i\mu} \leftarrow f_\text{input,II}\qty(\Sigma_{i\mu},T_{i\mu}). \end{aligned}
                                          \tag{Alg1.5}
                                          \eeq
			}
    \end{algorithmic}
  \end{algorithm}

  \subsection{From r-BP to G-AMP}
\label{sec-G-AMP}
  
In the limit $M \gg 1$, we can further simplify the r-BP obtained in the previous section to derive AMP (approximate message passing) algorithm.
While BP defines recursion formulae for 'messages' of the form $A_{\bs \to i\mu}$, $B_{i\mu \to \bs}$ between the factor and variable nodes,
AMP defines a set of recursion formulae for completely local quantities like $A_{\bs}$, $B_{i\mu}$. The AMP equations are derived from the r-BP ones through a perturbative manner which is essentially the same technique to derive the so called TAP (Thouless-Anderson-Palmer) equations in mean-field spin-glass models with global couplings.

From \eq{eq:HRTD-rBP_def_omega} and \eqref{eq:HRTD-rBP_def_V} we find,
    \beqn
      \omega_{\bs\to\mu}^t &=& \omega_\bs^t - \frac{\lambda}{\sqrt{M}}F_{\bs\mu} \prod_{j\in\del\bs} m_{j\mu\to\bs}^t, \\
      V_{\bs\to\mu}^t &=& V_\bs^t + \order{M^{-1}},
      \label{eq:HRTD-AMP-expansion-omega-and-V}
      \eeqn
where we introduced,
  \beqn
  \omega_\bs^t = \sum_{\nu=1}^M \frac{\lambda}{\sqrt{M}}F_{\bs\nu} \prod_{j\in\del\bs} m_{j\nu\to\bs}^t, \qquad V_\bs^t = \sum_{\nu=1}^M
  \qty(\frac{\lambda}{\sqrt{M}}F_{\bs\nu})^2
  \qty(\prod_{j\in\del\bs} v_{j\nu\to\bs}^t - \prod_{j\in\del\bs} \qty(m_{j\nu\to\bs}^t)^2).
\qquad
  \label{eq:HRTD-def_omega_V_full}
    \eeqn

    From \eq{eq:HRTD-del_Sigma_T} and \eq{eq:HRTD-AB}, we find,
    \beqn 
    \Sigma_{i\mu\to\bs}^{t+1} &= \Sigma_{i\mu}^{t+1} + \order{M^{-1}}, \\
    T_{i\mu\to\bs}^{t+1} &= T_{i\mu}^{t+1} - \Sigma_{i\mu}^{t+1} \frac{\lambda}{\sqrt{M}}F_{\bs\mu} g_\text{out}\qty(\omega_{\bs}^t, y_\bs, V_{\bs}^t) \prod_{j\in\del\bs\setminus i}m_{j\mu\to\bs}^t + \order{M^{-1}},
    \label{eq:HRTD-AMP-expansion-Sigma-and-T}
    \eeqn
    where we used an expansion of the output function \eqref{eq:HRTD-rBP_def_outputFunction},
    \beqn
    g_\text{out}\qty(\omega_{\bs\to\mu}^t,y_\bs,V_{\bs\to\mu}^t) &= g_\text{out}\qty(\omega_\bs^t,y_\bs,V_\bs^t)
    - \frac{\lambda}{\sqrt{M}}F_{\bs\mu} \qty(\prod_{j\in\del\bs} m_{j\mu\to\bs}^t) \pdv{\omega}g_\text{out}\qty(\omega_\bs^t,y_\bs,V_\bs^t) \nonumber \\
    + \order{M^{-1}},
    \eeqn
    which follows from \eq{eq:HRTD-AMP-expansion-omega-and-V}.

    From \eq{eq:HRTD-rBP_input_m} and \eqref{eq:HRTD-rBP_input_v} we find,
    \beqn
    m_{i\mu\to\bs}^{t+1}
    &=& m_{i\mu}^{t+1} \nonumber \\
    && - \pdv{T} f_\text{input}\qty(\Sigma_{i\mu}^{t+1},T_{i\mu}^{t+1}) \; \Sigma_{i\mu}^{t+1}
    \frac{\lambda}{\sqrt{M}}F_{\bs\mu} g_\text{out}\qty(\omega_\bs^{t},y_\bs,V_\bs^{t}) \prod_{j\in\del\bs\setminus i}m_{j\mu\to\bs}^t + \order{M^{-1}} \notag\\
    &=& m_{i\mu}^{t+1} \nonumber \\
    &&- \qty( f_\text{input,II}\qty(\Sigma_{i\mu}^{t+1},T_{i\mu}^{t+1}) - f_\text{input}^2\qty(\Sigma_{i\mu}^{t+1},T_{i\mu}^{t+1}) ) \frac{\lambda}{\sqrt{M}}F_{\bs\mu} g_\text{out}\qty(\omega_\bs^{t},y_\bs,V_\bs^{t}) \prod_{j\in\del\bs\setminus i}m_{j\mu\to\bs}^t \nonumber \\
    &&+ \order{M^{-1}} \notag\\
		&=& m_{i\mu}^{t+1} - \qty( v_{i\mu}^{t+1} - \qty(m_{i\mu}^{t+1})^2 ) \frac{\lambda}{\sqrt{M}}F_{\bs\mu} g_\text{out}\qty(\omega_\bs^{t}, y_\bs, V_\bs^{t}) \prod_{j\in\del\bs\setminus i}m_{j\mu}^t + \order{M^{-1}}, \label{eq:HRTD-GAMP_m_approximation}\\
    v_{i\mu\to\bs}^{t+1} &=& v_{i\mu}^{t+1} - \pdv{T} f_\text{input,II}\qty(\Sigma_{i\mu}^{t+1},T_{i\mu}^{t+1}) \; \Sigma_{i\mu}^{t+1} \frac{\lambda}{\sqrt{M}}F_{\bs\mu} g_\text{out}\qty(\omega_\bs^{t},y_\bs,V_\bs^{t}) \prod_{j\in\del\bs\setminus i}m_{j\mu}^t \nonumber \\
    &&+ \order{M^{-1}},
    \eeqn
    where we used an expansion of the input function \eqref{eq:HRTD-rBP_def_inputFunction},
    \beqn
f_\text{input}\qty(\Sigma_{i\mu\to\bs}^t,T_{i\mu\to\bs}^t) &=& f_\text{input}\qty(\Sigma_{i\mu}^t,T_{i\mu}^t) \notag\\
&&- \pdv{T}f_\text{input}\qty(\Sigma_{i\mu}^t,T_{i\mu}^t)
\; \Sigma_{i\mu}^t \frac{\lambda}{\sqrt{M}}F_{\bs\mu}  g_\text{out}\qty(\omega_\bs^{t-1},y_\bs,V_\bs^{t-1}) \prod_{j\in\del\bs\setminus i}m_{j\mu\to\bs}^{t-1} \nonumber \\
&& + \order{M^{-1}},
\eeqn
which follows from  \eq{eq:HRTD-AMP-expansion-Sigma-and-T}.

Now collecting the above results we obtain the following, dropping $O(M^{-1})$ corrections,
\beqn
\omega_\bs^t &=& \sum_{\nu=1}^M \frac{\lambda}{\sqrt{M}}F_{\bs\nu} \left (
\prod_{j\in\del\bs} m_{j\nu}^t \right. \nonumber \\
&& \left. - \frac{\lambda}{\sqrt{M}}F_{\bs\nu} g_\text{out}\qty(\omega_\bs^{t-1},y_\bs,V_\bs^{t-1}) \sum_{j\in\del\bs} \qty(v_{j\nu}^t - \qty(m_{j\nu}^t)^2) \prod_{k\in\del\bs\setminus j} m_{k\nu}^t m_{k\nu}^{t-1}
\right), \\
	V_\bs^t &=& \sum_{\nu=1}^M \qty(\frac{\lambda}{\sqrt{M}}F_{\bs\nu})^2 \qty(\prod_{j\in\del\bs} v_{j\nu}^t - \prod_{j\in\del\bs} \qty(m_{j\nu}^t)^2), \\
\Sigma_{i\mu}^{t+1} &=& -\frac{1}{\sum_{\bs\in\del i} \qty(\frac{\lambda}{\sqrt{M}}F_{\bs\mu})^2 \pdv{\omega} g_\text{out}\qty(\omega_{\bs}^t,y_\bs,V_\bs^t) \prod_{j\in\del\bs\setminus i} \qty(m_{j\mu}^t)^2 }, \\
\frac{T_{i\mu}^{t+1}}{\Sigma_{i\mu}^{t+1}} &=& \frac{m_{i\mu}^t}{\Sigma_{i\mu}^{t+1}} + \sum_{\bs\del i} \frac{\lambda}{\sqrt{M}}F_{\bs\mu} g_\text{out}\qty(\omega_\bs^t, y_\bs, V_\bs^t) \left( \prod_{j\in\del\bs} m_{j\mu}^t \right. \notag\\
&&\hspace{2cm} \left. - \frac{\lambda}{\sqrt{M}}F_{\bs\mu} g_\text{out}\qty(\omega_\bs^{t-1},y_\bs,V_\bs^{t-1}) m_{i\mu}^{t-1} \sum_{j\in\del\bs} \qty(v_{j\mu}^t - \qty(m_{j\mu}^t)^2) \prod_{k\in\del\bs\setminus i,j} m_{k\mu}^t m_{k\mu}^{t-1} \right). \qquad
\eeqn
This set of equations gives the G-AMP algorithm \ref{alg:p-ary_G-AMP}.
  \begin{algorithm}[H]
    \caption{G-AMP algorithm}
    \label{alg:p-ary_G-AMP}
    \begin{algorithmic}[1]
      \Require graph $\mathbb{G}=(\Vset,\Eset)$, matrix $F$, input function $f_\text{input}$, output function $g_\text{out}$.
      \Initialize~Initialize messages
 (see sec.~\ref{appendix-mp-detail})
      $m_{i\mu}^t$, $m_{i\mu}^{t-1}$, $v_{i\mu}^t$ and $g_\bs^{t-1}$.
      \Repeat
      \State Update $\omega_{\bs\to\mu}^t$ and $V_{\bs\to\mu}^t$.
      \beq
      	\begin{aligned} \omega_\bs^t &\leftarrow \sum_{\nu=1}^M \frac{\lambda}{\sqrt{M}}F_{\bs\nu} \qty( \prod_{j\in\del\bs} m_{j\nu}^t - \frac{\lambda}{\sqrt{M}}F_{\bs\nu} g_\bs^{t-1} \sum_{j\in\del\bs} \qty(v_{j\nu}^t-\qty(m_{j\nu}^t)^2) \prod_{k\in\del\bs\setminus j} m_{k\nu}^t m_{k\nu}^{t-1} ), \\
	  V_\bs^t &\leftarrow \sum_{\nu=1}^M \qty(\frac{\lambda}{\sqrt{M}}F_{\bs\nu})^2 \qty(\prod_{j\in\del\bs} v_{j\nu}^t - \prod_{j\in\del\bs} \qty(m_{j\nu}^t)^2). \end{aligned} \tag{Alg2.1} \label{eq:Alg2.1}
        \eeq
      \State Update the values of the output function
      \beq
      g_{\bs}^t \leftarrow g_\text{out}\qty(\omega_{\bs}^t,y_\bs,V_{\bs}^t), \qquad \del_\omega g^t_{\bs} \leftarrow \pdv{\omega} g_\text{out}\qty(\omega_{\bs}^t,y_\bs,V_{\bs}^t). \tag{Alg2.2}
      \eeq
      \State Update $\Sigma_{i\mu\to\bs}^t$ and $T_{i\mu\to\bs}^t$.
      \beq
      \begin{aligned} \frac{1}{\Sigma_{i\mu}^{t+1}} &\leftarrow \sum_{\bs\in\del i}
        \qty(\frac{\lambda}{\sqrt{M}}F_{\bs\mu})^2 \qty(- \del_\omega g_\bs^t ) \prod_{j\in\del\bs\setminus i} \qty(m_{j\mu}^t)^2, \\
						\frac{T_{i\mu}^{t+1}}{\Sigma_{i\mu}^{t+1}} &\leftarrow \frac{m_{i\mu}^t}{\Sigma_{i\mu}^{t+1}} + \sum_{\bs\in\del i} \frac{\lambda}{\sqrt{M}}F_{\bs\mu} g_\bs^t \left( \prod_{k\in\del\bs\setminus i} m_{k\mu}^t \right. \notag\\
						&\hspace{3cm} \left. - \frac{\lambda}{\sqrt{M}}F_{\bs\mu} g_\bs^{t-1} m_{i\mu}^{t-1} \sum_{j\in\del\bs\setminus i} \qty(v_{j\mu}^t-\qty(m_{j\mu}^t)^2) \prod_{k\in\del\bs\setminus i,j} m_{k\mu}^t m_{k\mu}^{t-1} \right). \end{aligned}\tag{Alg2.3}
        \eeq
	\State Update messages $m_{i\mu}^t$ and  $v_{i\mu}^t$.
        \beq
	m_{i\mu}^{t+1} \leftarrow f_\text{input}\qty(\Sigma_{i\mu}^{t+1},T_{i\mu}^{t+1}), \qquad v_{i\mu}^{t+1} \leftarrow f_\text{input,II}\qty(\Sigma_{i\mu}^{t+1},T_{i\mu}^{t+1}). \tag{Alg2.4}
        \eeq
      \Until $m_{i\mu}^t$ and $v_{i\mu}^t$ converge.
      \Ensure { Messages $m_{i\mu}$, $v_{i\mu}$ which are the average and variance of the variables.}
    \end{algorithmic}
  \end{algorithm}

  \subsection{Computational costs}
  \label{sec-computational-cost-BP}
  
In r-BP we have $NM \times c=\alpha NM^{2}$ messages and in
\eq{eq:Alg1_Sigma_T} we do summation over $c=\alpha M$ terms so that
the computational cost of r-BP is order $O(NMc^{2})=O(NM^{3})$.
On the other hand, in  G-AMP, we have $NM$ messages
and the computational cost is reduced to $O(NMc)=O(NM^{2})$. In both cases, the computational cost does not depend on $p$, which is in contrast to the fully connected case.

\subsection{Some prior distributions}
  \label{sec-prior-BP}

 \subsubsection{Ising model}
    \label{subsubsec-ising-model}

    Let us consider the case that the ground truth data takes Ising values
\eq{eq-prior-ising}.
In this case we find the input functions \eq{eq:HRTD-rBP_def_inputFunction} as,
\beq
f_\text{input}\qty(\Sigma,T) = \tanh(\frac{T}{\Sigma}), \qquad f_\text{input,II}\qty(\Sigma,T) = 1. \label{eq:HRTD-input_Ising} 
\eeq
  
\subsubsection{Gaussian model}
  \label{subsubsec-gaussian-model}

  We also consider the case that data takes real values which obey Gaussian distribution
  \eq{eq-prior-gaussian}.
In this case we find the input functions as,
\beqn
f_\text{input}\qty(\Sigma,T) &= \frac{T}{\Sigma+1}, \qquad f_\text{input,II}\qty(\Sigma,T) = \frac{\Sigma}{\Sigma+1} + \frac{T^2}{\qty(\Sigma+1)^2}. \label{eq:HRTD-input_Spherical}
\eeqn

\subsection{Some likelihood functions}
  \label{sec-likelihood-BP}
 \subsubsection{Additive  Noise}
  \label{subsubsec-additive-noise}

  Let us consider the case that the observed value $y$ contains
  additive noise $w$ to $\pi_*$  (see \eq{eq-additive-noise-output}),
  \beq
  y = \pi_* + w.
  \eeq
  This  means the likelihood function is
  \beqn
  P_\text{out}\qty(y|\pi) &= \int dw W(w)\delta(y-(\pi+w ))
=W(y-\pi), 
  \eeqn
  with $W(w)$ being the distribution function of the noise.
This yields the output functions \eq{eq:HRTD-rBP_def_outputFunction}
\beqn
g_\text{out}\qty(\omega,y,V) &= \frac{
\int \frac{\dd z}{\sqrt{2\pi}}e^{-z^{2}/2}
 (-\frac{\partial}{\partial y})W(y-\omega-\sqrt{V}z)}{\int \frac{\dd z}{\sqrt{2\pi}}e^{-z^{2}/2} W(\omega-y-\sqrt{V}z)}.
\label{eq:HRTD-output_AWGN}
\eeqn

In particular, in the case of Gaussian noise
\beq
W(w)=\mathcal{N}(0,\Delta_*^2),
\eeq
we find
\beqn
g_\text{out}\qty(\omega,y,V) &= \frac{y-\omega}{V+\Delta_*^2}, \qquad -\pdv{\omega} g_\text{out}\qty(\omega,y,V) = \frac{1}{V+\Delta_*^2}. \label{eq:HRTD-output_AWGN_Gaussian}
\eeqn

  \subsubsection{Sign output}

  We also consider that the observed value $y$ is just the sign of $\pi_*$,
   (see \eq{eq-sign-output})
  \beq
  y = {\rm sgn}\qty(\pi_*).
  \eeq
  This means
  \beq
  P_\text{out}(y|\pi) = 
  \delta(y-1)\theta(\pi)+  \delta(y+1)\theta(-\pi).
  \eeq
  This yields the output function \eq{eq:HRTD-rBP_def_outputFunction}
  \beq
  g_\text{out}\qty(\omega,y,V)=\sum_{\sigma=-1,1}\delta(y-\sigma)
\tilde{g}_\text{out}\qty(\omega,\sigma,V),
\qquad \tilde{g}_\text{out}\qty(\omega,\sigma,V)= 
  \left. \frac{-\sigma}{\sqrt{V}}\frac{H'(x)}{H(x)}\right |_{x=-\sigma\omega/\sqrt{V}}.
\label{eq:HRTD-outputFunction_sign}
\eeq
Here we introduced
\beq
H(x) = \int_x^\infty \frac{\dd{t}}{\sqrt{2\pi}} \; e^{-\frac12 t^2} = \frac12 \text{erfc}\qty(\frac{x}{\sqrt{2}}),
\label{eq:def-H-function}
\eeq
with $\text{erfc}(x)=(2/\sqrt{\pi})\int_{x}^{\infty}dy e^{-y^{2}}$ being the complementary error function.

\subsection{State Evolution}
\label{sec-SE-equations}
In order to analyze the performance of the algorithms explained above at macroscopic scales $N, M \to \infty$ we now turn to analyze the so called state evolution (SE). The equations governing SE will be compared with the equations of states obtained by the replica approach.

Let us introduce the order parameters which characterize the macroscopic behavior of the algorithms,
\beqn
m^t &=
\frac{1}{NM} \sum_{i=1}^N \sum_{\mu=1}^M x_{*,i\mu} m_{i\mu}^t, \label{eq:HRTD-SE_definition_m} \\
q^t &=
\frac{1}{NM} \sum_{i=1}^N \sum_{\mu=1}^M \qty(m_{i\mu}^t)^2,\label{eq:HRTD-SE_definition_q} \\
Q^t &=
\frac{1}{NM} \sum_{i=1}^N \sum_{\mu=1}^M v_{i\mu}^t, \label{eq:HRTD-SE_definition_Q}
\eeqn
where $m_{i\mu}^{t}$ and $v_{i\mu}^{t}$ are given by
(see \eq{eq:HRTD-rBP_def_v_full} ),
 \beqn
 m_{i\mu}^{t} &=  f_\text{input}\qty(\Sigma_{i\mu}^{t},T_{i\mu}^{t}),\\
	v_{i\mu}^{t} &=  f_\text{input,II}\qty(\Sigma_{i\mu}^{t},T_{i\mu}^{t}).
\eeqn

The individual microscopic variables $m_{i\mu}^t$ and $v_{i\mu}^t$
fluctuate depending on realizations of the quenched random variables
$F_{\bs\mu}$, $x_{*,i\mu}$ and $w_\bs$
and the realization of the graph $G$. However we expect the macroscopic
observables $m^t$, $q^{t}$ and $Q^{t}$ are self-averaging such that they become independent of the
realizations of the quenched random variables in the thermodynamic limit $N, M \to \infty$.
Then we can evaluate them as,
\beqn
m^{t}=\mathbb{E}_{y} [m_{i\mu}^{t}]=\mathbb{E}_{y} [ x_{*,i\mu} f_\text{input}\qty(\Sigma_{i\mu}^{t},T_{i\mu}^{t})],  \label{eq:HRTD-SE_eval_m} \\
q^{t}=\mathbb{E}_{y} [(m_{i\mu}^{t})^{2}]=\mathbb{E}_{y} [ f^{2}_\text{input}\qty(\Sigma_{i\mu}^{t},T_{i\mu}^{t})], \label{eq:HRTD-SE_eval_q} \\
Q^{t}=\mathbb{E}_{y} [v_{i\mu}^{t}]=\mathbb{E}_{y} [ f_\text{input,II}\qty(\Sigma_{i\mu}^{t},T_{i\mu}^{t})]. \label{eq:HRTD-SE_eval_Q}
\eeqn

To evaluate the order parameters we need $\Sigma_{i\mu}^{t}$
given by \eq{eq:HRTD-def_Sigma_full_summary}
and $T_{i\mu}^{t}$ given by  \eq{eq:HRTD-def_Sigma_T_full_summary}.
Both involve $g_\text{out} \qty(\omega_{\bs\to\mu}^t,y_{\bs},V_{\bs\to\mu}^t)$.
So let us first consider $V_{\bs\to\mu}^t$ defined in \eq{eq:HRTD-rBP_def_V}.
Since it is given by a summation over a large
number of terms in $M \to \infty$ limit we can evaluate it as,
\beqn
V_{\bs\to\mu}^t &= \sum_{\nu(\neq\mu)} 
\left(\frac{\lambda}{\sqrt{M}}F_{\bs\nu}\right)^2
 \qty( \prod_{j\in\del\bs} v_{j\nu\to\bs}^t - \prod_{j\in\del\bs} \qty(m_{j\nu\to\bs}^t)^2 ) \notag\\
	&= M \mathbb{E}_{F}\qty[\left(\frac{\lambda}{\sqrt{M}}F_{\bs\nu}\right)^2] \biggl( \Bigl(\mathbb{E}_{y}\qty[v_{i\nu}^t]\Bigr)^p - \Bigl(\mathbb{E}_{y}\qty[\qty(m_{i\nu}^t)^2]\Bigr)^p \biggr) + \order{M^{-1}} \notag\\
 &= \lambda^2 \qty( \qty(Q^t)^p - \qty(q^t)^p ) + \order{M^{-1}} \simeq V^t,
 \label{eq-eval-Vt}
 \eeqn
 with
\beq
V^t = \lambda^2 \qty( \qty(Q^t)^p - \qty(q^t)^p ).
\label{eq-def-Vt}
\eeq
Here we assumed that, in the BP algorithms, the messages $\qty{v_{j\nu \to \bs}^t}$
and  $\qty{m_{j\nu \to \bs}^t}$
are independent from each other and from $F$ and that messages are independent.
We also used $\mathbb{E}_{F}\qty[F_{\bs \nu}^{2}]=1$ (see \eq{eq-F-random}) where 
$\mathbb{E}_{F}$ denotes the average over the linear coefficients $F$.
In the following we write the observation that the student receives as
\beq
y_\bs = h_{\rm out} \qty(\pi_\bs,w_\bs), \label{eq:HRTD-definition_h_out}
\eeq
where
\beq
h_{\rm out} \qty(\pi,w)=\pi+w,
\label{eq-hout-additive-noise}
\eeq
in the case of additive noise (see \eq{eq-additive-noise-output})
and
\beq
h_{\rm out} \qty(\pi,w)={\rm sgn}(\pi),
\label{eq-hout-sign-output}
\eeq
in the case of sign output (see \eq{eq-sign-output}). Using $V^{t}$ defined in \eq{eq-def-Vt}
let us introduce some disorder-averaged quantities,
\beqn
\hat{\chi}^t &= \mathbb{E}_{y}\qty[-\pdv{\omega} g_\text{out} \qty(\omega_{\bs\to\mu}^t,h_\text{out}\qty(\pi_{*,\bs\to\mu},w_\bs),V^{t}
  )
], \label{eq:HRTD-SE_definition_hat_chi} \\
\hat{m}^t &= \mathbb{E}_{y}\qty[\pdv{\pi_*} g_\text{out} \qty(\omega_{\bs\to\mu}^t,h_\text{out}\qty(\pi_{*,\bs\to\mu},w_\bs),
  V^{t}
  ) ], \label{eq:HRTD-SE_definition_hat_m} \\
\hat{q}^t &= \mathbb{E}_{y}\qty[ g^2_\text{out} \qty(\omega_{\bs\to\mu}^t,h_\text{out}\qty(\pi_{*,\bs\to\mu},w_\bs),
V^{t}
) ], \label{eq:HRTD-SE_definition_hat_q}
\eeqn
which become useful in the following.

Now let us turn to evaluate $\Sigma_{i\mu}^{t+1}$
given by \eq{eq:HRTD-def_Sigma_full_summary}.
Similarly to $V_{\bs\to\mu}^t$ discussed above, it is given by a summation over large number of variables
in $c(=\alpha M) \to \infty$ limit so that it can be evaluated as,
\begin{align}
\qty(\Sigma_{i\mu}^{t+1})^{-1} &= - \sum_{\bs\in\del i} \left(\frac{\lambda}{\sqrt{M}}F_{\bs\mu}\right)^2 \qty(\prod_{j\in\del\bs\setminus i} \qty(m_{j\mu\to\bs})^2) \pdv{\omega} g_\text{out}\qty(\omega_{\bs\to\mu}^t,h_\text{out}\qty(\pi_{*,\bs},w_\bs),V_{\bs\to\mu}^t) \notag\\
&= \alpha M \mathbb{E}_{F}\qty[\left(\frac{\lambda}{\sqrt{M}}F_{\bs\mu}\right)^2] \qty(\mathbb{E}_{y}\qty[\qty(m_{i\mu}^t)^2])^{p-1} \nonumber \\
& \hspace*{1cm} \mathbb{E}_{y} \qty[-\pdv{\omega} g_\text{out}\qty(\omega_{\bs\to\mu}^t,h_\text{out}\qty(\pi_{*,\bs\to\mu},w_\bs),V^t)] + \order{M^{-1}} \notag\\
&\simeq \alpha \lambda^2 \qty(q^t)^{p-1} \hat{\chi}^t.
\label{eq-eval-inv-Sigma}
\end{align}
We also used the fact that $F_{\bs\mu}$, which is not independent from $\pi_{*,\bs}$,  is independent from
\beq
\pi_{*,\bs\to\mu} = \pi_{*,\bs} - \frac{\lambda}{\sqrt{M}} F_{\bs\mu}\prod_{j\in\del\bs} x_{*,j\mu}.
\label{eq-pi-mu-pi}
\eeq

Next let us examine
$T_{i\mu}^{t+1}$ defined by \eq{eq:HRTD-def_Sigma_T_full_summary}. It  can be evaluated
in $M \to \infty$ limit as,
\begin{align}
 \frac{T_{i\mu}^{t+1}}{\Sigma_{i\mu}^{t+1}} &= \sum_{\bs\in\del i} \left(\frac{\lambda}{\sqrt{M}}F_{\bs\mu}\right) \qty(\prod_{j\in\del\bs\setminus i} m_{j\mu\to\bs}^t) g_\text{out}\qty(\omega_{\bs\to\mu}^t,h_\text{out}\qty(\pi_{*,\bs},w_\bs),V_{\bs\to\mu}^t) \notag\\
	&= \sum_{\bs\in\del i} \left(\frac{\lambda}{\sqrt{M}}F_{\bs\mu}\right) \qty(\prod_{j\in\del\bs\setminus i} m_{j\mu\to\bs}^t) \Biggl\{ g_\text{out}\qty(\omega_{\bs\to\mu}^t,h_\text{out}\qty(\pi_{*,\bs\to\mu},w_\bs),V^t) \notag\\
 &	\hspace{4cm} + \left(\frac{\lambda}{\sqrt{M}}F_{\bs\mu}\right) \qty(\prod_{j\in\del\bs} x_{*,j\mu}) \pdv{\pi_*} g_\text{out}\qty(\omega_{\bs\to\mu}^t,h_\text{out}\qty(\pi_{*,\bs\to\mu},w_\bs),V^t) \Biggr\}
 \nonumber \\
 & + \order{M^{-1}} \notag\\
 &= \sum_{\bs\in\del i} \left(\frac{\lambda}{\sqrt{M}}F_{\bs\mu}\right)
 \qty(\prod_{j\in\del\bs\setminus i} m_{j\mu\to\bs}^t)
 g_\text{out}\qty(\omega_{\bs\to\mu}^t,h_\text{out}\qty(\pi_{*,\bs\to\mu},w_\bs),V^t) \notag\\
 &
+\sum_{\bs\in\del i}
\left(\frac{\lambda}{\sqrt{M}}F_{\bs\mu}\right)^{2}
\qty(\prod_{j\in\del\bs\setminus i} m_{j\mu\to\bs}^t x_{*,j\mu})  x_{*,i\mu} \pdv{\pi_*} g_\text{out}\qty(\omega_{\bs\to\mu}^t,h_\text{out}\qty(\pi_{*,\bs\to\mu},w_\bs),V^t) + \order{M^{-1}} \notag\\
& \simeq \sqrt{\alpha\lambda^2 \qty(q^{t})^{p-1} \hat{q}^t} \mathcal{N}(0,1) + \alpha \lambda^2 \qty(m^t)^{p-1} \hat{m}^t x_{*,i\mu}.
\label{eq-eval-T-over-Sigma}
\end{align}
The 1st term in the last equation on righthand side (rhs) is due
to the 1st term in the 3rd equation on rhs, which 
is a summation over large number $c(=\alpha M) \gg 1$ of statistically independent variables.
Here we are using again the fact that $F_{\bs\mu}$ is independent from $\pi_{*,\bs\to\mu}$.
Then it can be considered as a Gaussian variable due to the central limit theorem. 
Its mean is evaluated as,
\beq
\mathbb{E}_{F}\qty[\sum_{\bs\in\del i} \left(\frac{\lambda}{\sqrt{M}}F_{\bs\mu}\right)]
\mathbb{E}_{y}\qty[ \qty(\prod_{j\in\del\bs\setminus i} m_{j\mu\to\bs}^t)  ]
\mathbb{E}_{y}\qty[ g_\text{out}\qty(\omega_{\bs\to\mu}^t,h_\text{out}\qty(\pi_{*,\bs\to\mu},w_\bs),V^t)  ] 
=0
\eeq
This holds because of the reflection symmetry
  of the prior distribution
  $P_\text{pri.}(x)=P_\text{pri.}(-x)$
  (see sec.~\ref{sec-symmetries}) which ensures
  $\mathbb{E}_{y}\qty[ \qty(\prod_{j\in\del\bs\setminus i} m_{j\mu\to\bs}^t)  ]=0$ for $p>1$. For the $p=1$ case (linear estimation),
  which we do not consider in the present
  paper,  the similar mean vanishment happens thanks to the random spreading code yielding $\mathbb{E}_{F} [F_{\bs, \mu}]=0$ in some earlier contexts such as CDMA and compressed sensing.
On the other hand, similarly to $V_{\bs\to\mu}^t$
(see \eq{eq-eval-Vt})
and $\Sigma_{i\mu}^{t+1}$ (see \eq{eq-eval-inv-Sigma}), 
the variance is evaluated as
\be
&&
\mathbb{E}_{y}\qty[ \qty(\sum_{\bs\in\del i} \left(\frac{\lambda}{\sqrt{M}}F_{\bs\mu}\right) \qty(\prod_{j\in\del\bs\setminus i} m_{j\mu\to\bs}^t) g_\text{out}\qty(\omega_{\bs\to\mu}^t,h_\text{out}\qty(\pi_{*,\bs\to\mu},w_\bs),V^t) )^2 ] 
\no \\ &&
\simeq \alpha\lambda^2 \qty(q^t)^{p-1} \hat{q}^t.
\ee
The 2nd term in the last equation on rhs of \eq{eq-eval-T-over-Sigma}
is derived from the 2nd term in the 3rd equation on rhs similarly.


%
%





To sum up, we now have expressions of the two quantities
$\qty(\Sigma_{i\mu}^{t})^{-1}$
\eq{eq-eval-inv-Sigma}
and $\frac{T_{i\mu}^{t}}{\Sigma_{i\mu}^{t}}$ \eq{eq-eval-T-over-Sigma}
needed to evaluate the order parameters $m^{t}$,  $q^{t}$,  $Q^{t}$
as given by \eq{eq:HRTD-SE_eval_m}-\eq{eq:HRTD-SE_eval_Q}
expressed in terms of $m^{t}$,  $q^{t}$,  $Q^{t}$
and  $\hat{\chi}^t$, $\hat{m}^t$ and $\hat{q}^{t}$ given by
\eq{eq:HRTD-SE_definition_hat_chi}-\eq{eq:HRTD-SE_definition_hat_q}.
Now we have to examine  the average over the quenched randomness
$\mathbb{E}_{y}$ which appear in these equations.
The average $\mathbb{E}_{y}$ over
$F_{\bs\mu}$, $x_{*,i\mu}$ and $w_\bs$ can be
regarded as the average over $\omega_{\bs\to\mu}^t$, $\pi_{*,\bs\to\mu}$ and  $w_{\bs}$.
They are obtained as
\beqn
 \mathbb{E}_{y}\qty[\omega_{\bs\to\mu}^t] &= \mathbb{E}_{x_*}\qty[\pi_{*,\bs}] = 0, \\
\mathbb{E}_{y} \bigl[ \qty(\omega_{\bs\to\mu}^t )^2 \bigr] &= \lambda^2 \qty(q^t)^p + \order{M^{-1}}, \\
\mathbb{E}_{y} \qty[ \omega_{\bs\to\mu}^t \pi_{*,\bs} ] &= \lambda^2 \qty(m^t)^p + \order{M^{-1}}, \\
\mathbb{E}_{y} \qty[ \pi_{*\bs}^2 ] &= \lambda^2 \qty(\mathbb{E}_{x_*}\qty[x_{*,i\mu}^2])^p=1.
\eeqn

Introducing the bivariate normal distribution function
$\mathcal{N}\qty[x_{1},x_{2};C]$
of variables $x_{1}$ and $x_{2}$ with $0$ mean and covariance matrix $C=(C^{t})$,
\beq
\mathcal{N}\qty[x_{1},x_{2};C]=
\frac{1}{2\pi\sqrt{{\rm det}C}}
\exp \left[-\frac{1}{2}((C^{-1})_{11}x_{1}^{2}+2(C^{-1})_{12}x_{1}x_{2}+(C^{-1})_{22}x_{2}^{2}) \right],
\label{eq-bivariate-normal-distribution}
\eeq
with
\beq
C=\left (\begin{array}{cc}
C_{11} & C_{12} \\
C_{21} & C_{22} 
\end{array}
\right)
=\lambda^2
\left (
\begin{array}{cc}
 \qty(q^t)^p & \qty(m^t)^p \\
 \qty(m^t)^p &  1
\end{array}
\right),
\qquad {\rm det C}=\lambda^{4} \left((q^{t})^{p}-(m^{t})^{2p}\right).
\label{eq-def-covariance-matrix}
\eeq
\eq{eq:HRTD-SE_definition_hat_chi}-\eqref{eq:HRTD-SE_definition_hat_q} can be expressed as
\beqn
\hat{\chi}^t &= \int\dd{w} W(w) \int \dd{\xi} \dd{z} \mathcal{N}\qty[ \xi , z;C] \; \qty( - \pdv{\xi} g_\text{out} \qty(\xi,h_{\rm out}(z,w),V^t ) ), \label{eq:HRTD-SE_hat_chi_integral} \\
\hat{m}^t &= \int\dd{w} W(w) \int \dd{\xi} \dd{z} \mathcal{N}\qty[ \xi , z; C] \;\; \pdv{z} g_\text{out} \qty(\xi,h_{\rm out}(z,w),V^t), \label{eq:HRTD-SE_hat_m_integral} \\
\hat{q}^t &= \int\dd{w} W(w) \int \dd{\xi} \dd{z} \mathcal{N}\qty[ \xi , z; C] \;\; g^2_\text{out} \qty(\xi,h_{\rm out}(z,w),V^t ). \label{eq:HRTD-SE_hat_q_integral}
\eeqn

Finally we find the order parameters \eqref{eq:HRTD-SE_definition_m}-\eqref{eq:HRTD-SE_definition_Q}
satisfy the following self-consistent equations,
\beqn
m^{t+1} 
&= \int \dd{x_*} P_\text{pri.}\qty(x_*) \int\D{z} x_* f_\text{input} \qty( \frac{1}{\alpha\lambda^2 \qty(q^t)^{p-1} \hat{\chi}^t} , \frac{\alpha \lambda^2 \qty(m^t)^{p-1} \hat{m}^t x_* + z \sqrt{\alpha\lambda^2\qty(q^t)^{p-1} \hat{q}^t} }{\alpha\lambda^2 \qty(q^t)^{p-1} \hat{\chi}^t} ),\label{eq:HRTD-SE_EqState_m}
\\
q^{t+1} 
	&= \int \dd{x_*} P_\text{pri.}\qty(x_*) \int\D{z} f_\text{input}^2 \qty( \frac{1}{\alpha\lambda^2 \qty(q^t)^{p-1} \hat{\chi}^t} , \frac{\alpha \lambda^2 \qty(m^t)^{p-1} \hat{m}^t x_* + z \sqrt{\alpha\lambda^2\qty(q^t)^{p-1} \hat{q}^t} }{\alpha\lambda^2 \qty(q^t)^{p-1} \hat{\chi}^t} ), \label{eq:HRTD-SE_EqState_q}\\
Q^{t+1}  
&= \int \dd{x_*} P_\text{pri.}\qty(x_*) \int\D{z} f_\text{input,II} \qty( \frac{1}{\alpha\lambda^2 \qty(q^t)^{p-1} \hat{\chi}^t} , \frac{\alpha \lambda^2 \qty(m^t)^{p-1} \hat{m}^t x_* + z \sqrt{\alpha\lambda^2\qty(q^t)^{p-1} \hat{q}^t} }{\alpha\lambda^2 \qty(q^t)^{p-1} \hat{\chi}^t} ). \label{eq:HRTD-SE_EqState_Q}
\eeqn
These are the SE equations for the current system. The corresponding algorithm is given in algorithm \ref{alg:p-ary_SE}.

\blue{Note that the macroscopic observables such as $m^t$,$q^t$ etc. do not depend on specific realizations
of the random graph. This is consistent with the observation in the replica theory (see sec~\ref{sec-interaction-part-of-free-energy}).
}

  \begin{algorithm}[H]
    \caption{SE algorithm}
    \label{alg:p-ary_SE}
    \begin{algorithmic}[1]
      \Require  input function $f_\text{input}$, output function $g_\text{out}$, maximum time step $t_{\rm max}$.
      \Initialize~ Set initial values of $m^{t}$,$q^{t}$ and $Q^{t}$.
      \Repeat
      \State Update $V^{t}$ and the matrix $C$
\beq
V^t \leftarrow \lambda^2 \qty( \qty(Q^t)^p - \qty(q^t)^p )
\qquad
C \leftarrow
\lambda^2
\left (
\begin{array}{cc}
 \qty(q^t)^p & \qty(m^t)^p \\
 \qty(m^t)^p &  1
\end{array}
\right)
\tag{Alg3.1}
\label{eq:Alg3.1}
\eeq
\State Update $\hat{\chi}^t$,$\hat{m}^t$ and $\hat{q}^t$
\beq
\begin{aligned}
\hat{\chi}^t & \leftarrow \int\dd{w} W(w) \int \dd{\xi} \dd{z} \mathcal{N}\qty[ \xi , z;C] \; \qty( - \pdv{\xi} g_\text{out} \qty(\xi,h_{\rm out}(z,w),V^t ) ),
\notag\\
\hat{m}^t & \leftarrow \int\dd{w} W(w) \int \dd{\xi} \dd{z} \mathcal{N}\qty[ \xi , z; C] \;\; \pdv{z} g_\text{out} \qty(\xi,h_{\rm out}(z,w),V^t),
\notag\\
\hat{q}^t & \leftarrow \int\dd{w} W(w) \int \dd{\xi} \dd{z} \mathcal{N}\qty[ \xi , z; C] \;\; g^2_\text{out} \qty(\xi,h_{\rm out}(z,w),V^t )
\end{aligned}
\label{eq:Alg3.2}
\tag{Alg3.2}
\eeq
\State Update $m^t$,$q^t$ and $Q^t$
\beq
\begin{aligned}
  m^{t+1} 
  &\leftarrow \int \dd{x_*} P_\text{pri.}\qty(x_*) \int\D{z} x_* f_\text{input} \qty( \frac{1}{\alpha\lambda^2 \qty(q^t)^{p-1} \hat{\chi}^t} , \frac{\alpha \lambda^2 \qty(m^t)^{p-1} \hat{m}^t x_* + z \sqrt{\alpha\lambda^2\qty(q^t)^{p-1} \hat{q}^t} }{\alpha\lambda^2 \qty(q^t)^{p-1} \hat{\chi}^t} ),
\notag\\
q^{t+1} 
& \leftarrow \int \dd{x_*} P_\text{pri.}\qty(x_*) \int\D{z} f_\text{input}^2 \qty( \frac{1}{\alpha\lambda^2 \qty(q^t)^{p-1} \hat{\chi}^t} , \frac{\alpha \lambda^2 \qty(m^t)^{p-1} \hat{m}^t x_* + z \sqrt{\alpha\lambda^2\qty(q^t)^{p-1} \hat{q}^t} }{\alpha\lambda^2 \qty(q^t)^{p-1} \hat{\chi}^t} ),
\notag\\
Q^{t+1}  
& \leftarrow \int \dd{x_*} P_\text{pri.}\qty(x_*) \int\D{z} f_\text{input,II} \qty( \frac{1}{\alpha\lambda^2 \qty(q^t)^{p-1} \hat{\chi}^t} , \frac{\alpha \lambda^2 \qty(m^t)^{p-1} \hat{m}^t x_* + z \sqrt{\alpha\lambda^2\qty(q^t)^{p-1} \hat{q}^t} }{\alpha\lambda^2 \qty(q^t)^{p-1} \hat{\chi}^t} ). 
 \end{aligned}
\tag{Alg3.3}
\label{eq:Alg3.3}
\eeq
      \Until $m^{t}$,$q^{t}$ and $Q^{t}$ converge or $t$ reaches the maximum step $t_{\rm max}$.
      \Ensure {Time sequences of the order parameters $\{(m^{t},q^{t},Q^{t})\}_t$.}
    \end{algorithmic}
  \end{algorithm}

\subsubsection{Ising model}

In the case of the Ising input  (see sec.~\ref{subsubsec-ising-model}),
we find using \eq{eq:HRTD-input_Ising}
in \eq{eq:HRTD-SE_EqState_m}-\eq{eq:HRTD-SE_EqState_Q},
\beqn
m^{t+1}
&=& \int\D{z} \tanh\left(
\alpha \lambda^2 \qty(m^t)^{p-1} \hat{m}^t + z \sqrt{\alpha\lambda^2\qty(q^t)^{p-1} \hat{q}^t}
\right), \label{eq:HRTD-SE_EqState_m_ising}\\
q^{t+1} &=& 
\int\D{z} \tanh^{2}\left(
\alpha \lambda^2 \qty(m^t)^{p-1} \hat{m}^t + z \sqrt{\alpha\lambda^2\qty(q^t)^{p-1} \hat{q}^t} \right),  \label{eq:HRTD-SE_EqState_q_ising} \\
Q^{t+1} &=& 1.
\eeqn

In Bayes optimal case we expect $m^{t}=q^{t}$ holds at the fixed point. 
Later we will find that $\hat{m}^{t}=\hat{q}^{t}$ holds if $m^{t}=q^{t}$
and $Q^{t}=1$. Then one can show that $m^{t+1}=q^{t+1}$ holds
using \eq{eq:HRTD-SE_EqState_m_ising}, \eq{eq:HRTD-SE_EqState_q_ising}
and Lemma A.

\subsubsection{Gaussian model}

In the case of the Gaussian input  (see sec.~\ref{subsubsec-gaussian-model}),
we find using
\eq{eq:HRTD-input_Spherical}
in \eq{eq:HRTD-SE_EqState_m}-\eq{eq:HRTD-SE_EqState_Q},
\beqn
m^{t+1} &=& \int \dd{x_*} P_\text{pri.}\qty(x_*) \int\D{z} x_*
\frac{\alpha \lambda^2 \qty(m^t)^{p-1} \hat{m}^t x_* + z \sqrt{\alpha\lambda^2\qty(q^t)^{p-1} \hat{q}^t}}{
  1+\alpha\lambda^2 \qty(q^t)^{p-1} \hat{\chi}^t} 
=\frac{\alpha \lambda^2 \qty(m^t)^{p-1} \hat{m}^t}{1+\alpha\lambda^2 \qty(q^t)^{p-1} \hat{\chi}^t}, \label{eq:HRTD-SE_EqState_m_Gaussian} \\
q^{t+1}  &=& \int \dd{x_*} P_\text{pri.}\qty(x_*) \int\D{z}
\left[
  \left(\frac{\alpha \lambda^2 \qty(m^t)^{p-1} \hat{m}^t}{1+\alpha\lambda^2 \qty(q^t)^{p-1} \hat{\chi}^t}\right)^{2} (x_*)^{2}
  +z^{2} \frac{\alpha \lambda^2 \qty(q^t)^{p-1} \hat{q}^t}{(1+\alpha\lambda^2 \qty(q^t)^{p-1} \hat{\chi}^t)^{2}}
  +2zx_{*} \cdots
  \right]  \nonumber \\
&=& (m^{t+1})^{2}+\frac{
  \alpha \lambda^2 \qty(q^t)^{p-1} \hat{q}^t
}{
  (1+\alpha\lambda^2 \qty(q^t)^{p-1} \hat{\chi}^t)^{2}
}, \label{eq:HRTD-SE_EqState_q_Gaussian} \\
Q^{t+1}  &=& q^{t+1}+\frac{1}{1+\alpha \lambda^{2}(q^{t})^{p-1}\hat{\chi}^{t}}.
\label{eq:HRTD-SE_EqState_Q_Gaussian}
\eeqn
Assuming the spin normalization $Q^{t+1}=1$, the last equation implies,
\beq
\frac{1}{1-q^{t+1}}=1+\alpha \lambda^{2}(q^{t})^{p-1}\hat{\chi}^{t}.
\eeq
Using this we find
\beqn
\frac{m^{t+1}}{1-q^{t+1}} &=& \alpha \lambda^{2} (m^{t})^{p-1}\hat{m}^{t},
\label{eq:HRTD-SE_EqState_m_gaussian}
\\
\frac{q^{t+1}-(m^{t+1})^{2}}{(1-q^{t+1})^{2}}&=&\alpha \lambda^{2} (q^{t})^{p-1}\hat{q}^{t}.
\label{eq:HRTD-SE_EqState_q_gaussian}
\eeqn

In Bayes optimal case we expect $m^{t}=q^{t}$ holds at the fixed point.
Later we will find that $\hat{\chi}^{t}=\hat{m}^{t}=\hat{q}^{t}$ 
holds if $m^{t}=q^{t}$ and $Q^{t}=1$ hold.
Then given these we find that $q^{t+1}=m^{t+1}$ holds
using \eq{eq:HRTD-SE_EqState_m_gaussian}
and \eq{eq:HRTD-SE_EqState_q_gaussian}. We also find that $Q^{t+1}=1$
holds if $\hat{\chi}^{t}=\hat{m}^{t}=\hat{q}^{t}$ using \eq{eq:HRTD-SE_EqState_m_Gaussian}-\eq{eq:HRTD-SE_EqState_Q_Gaussian}.

\subsubsection{Additive noise}

In the case of additive noise (see \eq{eq-additive-noise-output}) we have
$h_{\rm out} \qty(\pi,w)=\pi+w$ as given by \eq{eq-hout-additive-noise}.
In sec \ref{subsubsec-additive-noise} we obtained
\eq{eq:HRTD-output_AWGN}) which implies,
\beq
g_\text{out}\qty(\xi,h_{\rm out} \qty(z,w),V)=g_\text{out}\qty(w+(z-\xi),V).
\eeq
The fact that the output function depends on $z$ and $\xi$ just through
the difference $z-\xi$ leads to a simplification.
For a generic function $f(z)$,
we find the average weighted by the bivariate normal distribution
\eq{eq-bivariate-normal-distribution}
becomes
\beq
\int \dd\xi \dd z \mathcal{N}\qty[\xi,z;C] f(z-\xi)
=\int {\cal D}z_{0} f(\sqrt{C_{11}+C_{22}-2C_{12}}z_{0}).
\eeq
Using this and the explicit form of the output function given by \eq{eq:HRTD-output_AWGN}) we find
\eq{eq:HRTD-SE_hat_chi_integral}-\eq{eq:HRTD-SE_hat_q_integral}
become
\beqn
\hat{\chi}^t &=&\hat{m}^t = \int\dd{w} W(w) \int {\cal D}z_{0} \frac{\partial}{\partial w} g_\text{out} \qty(w-\lambda\sqrt{\qty(q^t)^p+
  1
  -2 \qty(m^t)^p]}z_{0},V^t) \nonumber \\
  & = &\theta_{0}-\theta_{1},
    \label{eq-SE-q-additive-noise}
\\
\hat{q}^t &=& \int\dd{w} W(w) \int {\cal D}z_{0}
g^{2}_\text{out} \qty(w-\lambda\sqrt{\qty(q^t)^p
  +1
  -2 \qty(m^t)^p]}z_{0},V^t) \nonumber \\
  &=& \theta_{0},
  \label{eq-SE-chi-additive-noise}
  \eeqn
 where we introduced
  \beqn
  \theta_{0} &=
  \int\dd{w} W(w) \int {\cal D}z_{0} 
  \left(
  \frac{\int {\cal D}z_{1}\frac{\partial}{\partial \omega}W(\Xi)}{\int {\cal D}z_{1}W(\Xi)}
  \right)^{2},
  \label{eq-SE-theta0-additive-noise}  
  \\
  \theta_{1} & =
   \int\dd{w} W(w) \int {\cal D}z_{0} 
  \frac{\int {\cal D}z_{1}
    \frac{\partial^{2}}{\partial \omega^{2}}
    \frac{\partial}{\partial \omega}W(\Xi)}{\int {\cal D}z_{1}W(\Xi)},
  \label{eq-SE-theta1-additive-noise}    
  \eeqn
  with
  \beq
  \Xi=w-
  \lambda  \sqrt{ \qty(q^t)^p+
    1
    -2 \qty(m^t)^p]}z_{0}
    -\sqrt{V^{t}}z_{1}.
  \label{eq-SE-Xi-additive-noise}        
   \eeq

In the Bayes optimal case we expect $m^{t}=q^{t}$ holds at the fixed point.
Let us also assume $Q^{t}=1$ holds. Then we find
   (see \eq{eq-def-Vt})
   $\Xi=w- \lambda  \sqrt{1- \qty(q^t)^p]}(z_{0}+z_{1})$.
   Then it is possible to show
   that $\theta_{1}=0$ (Lemma B).
   This in turn implies $\hat{\chi}^{t}=\hat{q}^{t}=\hat{m}^{t}$.

\subsubsection{Sign output}
In the case of sign output (see \eq{eq-sign-output}) we have
$h_{\rm out} \qty(\pi,w)={\rm sgn}(\pi)$ as given by \eq{eq-hout-sign-output}.
The explicit form of the output function is given by \eq{eq:HRTD-outputFunction_sign} which reads,
\beq
g_\text{out}\qty(\omega,y,V)=\sum_{\sigma=-1,1}\delta(y-\sigma)
\tilde{g}_{\rm out}(\xi,\sigma,V),
  \qquad
  \tilde{g}_{\rm out}(\xi,\sigma,V)=
   \left. \frac{-\sigma}{\sqrt{V}}\frac{H'(x)}{H(x)}\right |_{x=-\sigma\xi/\sqrt{V}},
   \eeq
   where we find a useful relation $\tilde{g}_{\rm out}(\xi,-1,V)=-\tilde{g}_{\rm out}(-\xi,+1,V)$. Using these we find $\hat{\chi}^{t}$, $\hat{m}^{t}$ and $\hat{q}^{t}$ become the following.
\beqn
\hat{\chi}^{t} &=&2\int d\xi
\frac{e^{-\frac{\xi^{2}}{2C_{11}}}}{\sqrt{2\pi C_{11}}}
  H(-\gamma \xi)
  \left[
    \frac{\xi}{V}\tilde{g}_{\rm out}(\xi,+1,V^{t})+\tilde{g}^{2}_{\rm out}(\xi,+1,V^{t})
    \right], \\
  \hat{m}^{t} &=& 2\int d\xi
  \frac{e^{-\frac{(C^{-1})_{11}\xi^{2}}{2}}}{2\pi\sqrt{{\rm det} C}}
\tilde{g}_{\rm out}(\xi,+1,\sigma,V),   \\
  \hat{q}^{t} &=& 2\int d\xi
\frac{e^{-\frac{\xi^{2}}{2C_{11}}}}{\sqrt{2\pi C_{11}}}
H(-\gamma \xi)
\tilde{g}^{2}_{\rm out}(\xi,+1,\sigma,V),
\eeqn
with
\beq
\gamma=\frac{C_{12}}{\sqrt{{\rm det} C}}\frac{1}{\sqrt{C_{11}}}=
\frac{1}{\left\{\lambda^{2}\left[\left(\frac{q^{t}}{m^{t}}\right)^{2p}-(q^{t})^{p}\right]
  \right \}^{1/2}
},
\eeq
for the covariance matrix $C$ (see \eq{eq-def-covariance-matrix}).
Since we assume the Bayes optimal setting we expect $q^{t}=m^{t}$ holds
at the fixed point.
Assuming also $Q^{t}=1$ which must hold due to the normalization of spins we find
(see \eq{eq-def-Vt}),
\beq
\gamma=\frac{1}{\sqrt{\lambda^{2}(1-q^{t})^{p}}}=\frac{1}{\sqrt{V^{t}}},
\eeq
which implies
\beq
\hat{\chi}^{t} =\hat{q}^{t}=\hat{m}^{t}
=2\int d\xi
\frac{e^{-\frac{\xi^{2}}{2\lambda^{2}(q^{t})^{p}}}}{\sqrt{2\pi \lambda^{2}(q^{t})^{p}}}
\frac{1}{2\pi V^{t}}\frac{e^{-\frac{\xi^{2}}{V^{t}}}}{H\left(-\frac{\xi}{\sqrt{V^{t}}}\right)}.
\label{eq-SE-chi-q-m-sign-output}
\eeq



\subsection{Comparison with the replica theory}
\label{sec-comparison-BP-replica}

Here let us compare the SE equations with the equation of states obtained by the replica approach. Schematically the SE equations
look like $A_{t+1}=F_{A}(A_{t},B_{t},...)$,
$B_{t+1}=F_{B}(A_{t},B_{t},...),...$
while the equations of states by the replica approach
look like $A=\tilde{F}_{A}(A,B,...),B=\tilde{F}_{B}(A,B,...),...$.
  The two approaches agree if
  $F_{A}=\tilde{F}_{A},F_{B}=\tilde{F}_{B},\ldots$.

For clarity let us focus on the explicit expressions for the three specific cases.
In the case of Ising prior and additive noise, we find that the SE equations
using \eq{eq:HRTD-SE_EqState_m_ising} and \eq{eq:HRTD-SE_EqState_q_ising}
combined with \eq{eq-SE-q-additive-noise}-\eq{eq-SE-Xi-additive-noise} agree with
the two equations of states obtained by the replica approach
given by \eq{eq-SP-ising-additive-noise-RS}.
Similarly in the case of Gaussian prior and additive noise, the SE equations (\eq{eq:HRTD-SE_EqState_m_gaussian} and \eq{eq:HRTD-SE_EqState_q_gaussian} combined with \eq{eq-SE-chi-additive-noise}-\eq{eq-SE-Xi-additive-noise}) again agree with the replica result given by 
\eq{eq-SP-gaussian-additive-noise-RS}. For the case of Gaussian prior and sign output in the Bayes optimal setting, the SE equation (\eq{eq:HRTD-SE_EqState_m_gaussian} with  \eq{eq-SE-chi-q-m-sign-output}) matches with the replica result \eq{eq-SP-gaussian-sign-RS-BayesOptimal}. In all the cases, the equivalence between the SE and replica results is confirmed.

\subsection{Discussions}
\label{sec-discussion-BP}

In the replica approach we have shown that loop corrections vanish in the dense limit $\lim_{c \to \infty}\lim_{N \to \infty}$.
Unfortunately such considerations are absent in the message passing approach discussed in this section.
We believe that the arguments used to derive the algorithms (r-BP and G-AMP) are valid
only in the dense limit $\lim_{c \to \infty}\lim_{N \to \infty}$.
Indeed the analysis of TAP equation through PGY expansion \cite{maillard2022perturbative}
on the $p=2$ system (matrix factorization) have shown that loop corrections are inevitable in the globally coupled system.

In the results presented above one can notice that expressions for the macroscopic quantities like the order-parameters, equation of states  are the same for the deterministic model with $F=1$ \eq{eq-F-uniform} and the disordered model with random $F$ \eq{eq-F-random}. Let us recall that we found the same in the replica approach discussed in sec~\ref{sec-replica}. However we found that convergence of the algorithms (r-BP and G-AMP) can be very different between the two models in some specific cases as we explain in sec~\ref{sec-result}.

\clearpage

\section{Analysis of specific models}
\label{sec-result}

Now let us examine some representative cases closely.
In the following we limit ourselves to the Bayes optimal setting for which $m=q$
and $Q_{0}=Q=1$ holds.
We discuss  the cases of Ising prior and additive Gaussian noise in sec.~\ref{sec-ising-gaussian-noise},
Gaussian prior and additive Gaussian noise in sec.~\ref{sec-gaussian-gaussian-noise},
Gaussian prior and sign output in sec.~\ref{sec-gaussian-sign-output}.
In each of the cases, we first examine the equation of states for the order parameter $m$
(obtained by the replica and message passing approaches which agree) and establish phase diagrams.
Then we examine the performance of message passing algorithms.
For the details on some numerical prescriptions for the message passing algorithms see appendix~\ref{appendix-mp-detail}.
Some detailed comparisons between different algorithms are presented in appendix~\ref{sec-comparisons-finite-size}.
\blue{Our codes for the numerical analysis can be found in the
GitHub repository~\cite{mymodel2026}}.

\subsection{Ising prior and additive Gaussian noise}
\label{sec-ising-gaussian-noise}
Here we examine the case of the Bayes optimal additive Gaussian noise. The corresponding equation of state and free energy in the replica approach are given by \eq{eq:m_Ising_Bayesopt} and  \eq{eq:free_ene_Ising_Bayesopt}. For the message passing approach, we will examine the r-BP algorithm (Algorithm~\ref{alg:p-ary_r-BP}) and G-AMP algorithm (Algorithm~\ref{alg:p-ary_G-AMP}) with the input function \eq{eq:HRTD-input_Ising}  and output function \eq{eq:HRTD-output_AWGN_Gaussian}.
The SE equation can be found as \eq{eq:HRTD-SE_EqState_m_ising} and \eq{eq:HRTD-SE_EqState_q_ising} combined with \eq{eq-SE-q-additive-noise}-\eq{eq-SE-Xi-additive-noise}, showing a consistency with the replica formula. 


\subsubsection{$p=2$ case}
\label{sec:result-p=2-ising-gaussian-noise}
Let us first examine the case $p=2$. By solving the equation of state \eq{eq:m_Ising_Bayesopt}
numerically we obtained results as shown in Fig.~\ref{fig:ising_gauss_p2}.
\begin{figure}[h]
\centering
\includegraphics[width=0.450\textwidth]{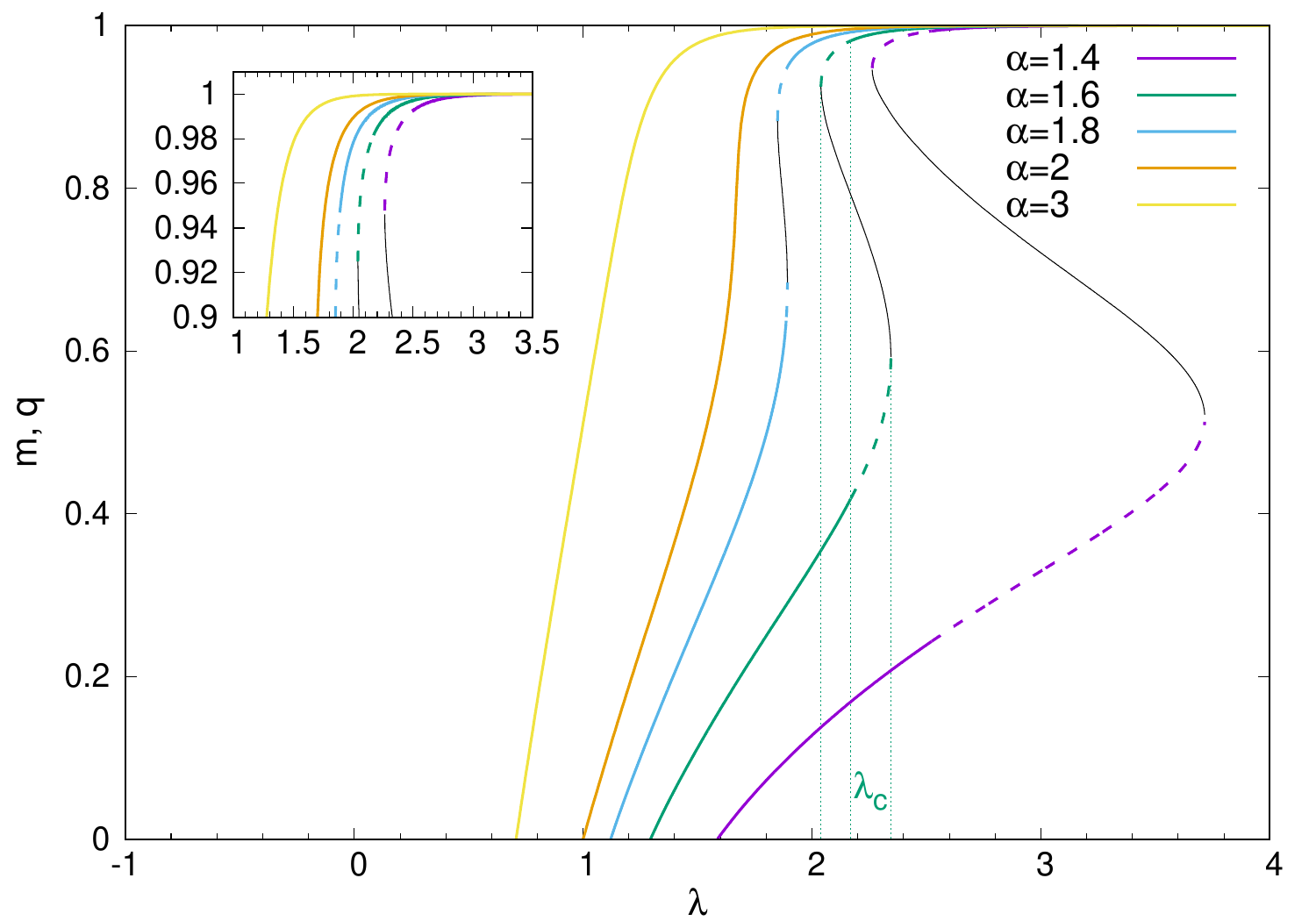}
\includegraphics[width=0.410\textwidth]{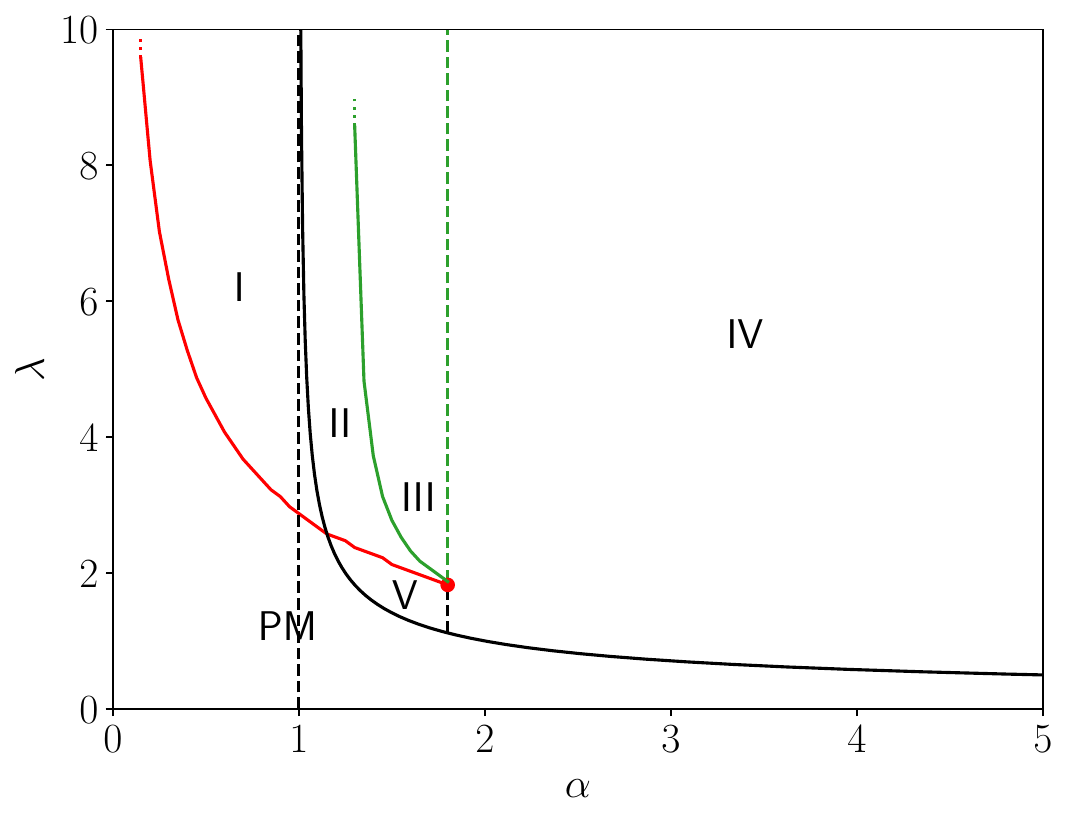}
\caption{Left: Order parameter $m=q$ in the Bayes optimal case for Ising prior, additive Gaussian noise and $p=2$. \blue{The dashed portion of the lines indicates a metastable magnetized state associated to a first-order transition. Black continuous lines indicate branches of unstable solutions.} For $\alpha=1.6$ three vertical lines are shown: the central one $\lambda_c$ is the thermodynamic first-order transition where the difference in free energy between the two branches of solutions changes sign; the leftmost one is the spinodal point $\lambda_{\rm d}(\alpha)$ for the high-magnetization solution (corresponding to the red line in the right panel); the rightmost one is the spinodal point for the low-magnetization solution (green line in the right panel).
 Right: phase diagram in the $\alpha-\lambda$ plane. The black solid line is the spinodal line (stability limit) of the paramagnetic (PM) solution $\lambda^{*}(\alpha)$; for $\alpha<1$ the PM solution is stable $\forall \lambda$. Region I: coexistence of PM and a high-magnetization solution, separated by a first-order transition (not shown for clarity). Region II: coexistence of low and high-magnetization phases, separated by a first-order transition (not shown). Region III: only the high-magnetization phase survives beyond the spinodal point for the low-magnetization solution. Regions IV and V: as in region III, there is a unique magnetized phase, but decreasing $\lambda$ it transitions continuously to the PM phase.
  }
\label{fig:ising_gauss_p2}
\end{figure}
Note that $m=0$ is always a solution.  At large enough $\alpha$ we find another solution $m>0$ whose magnitude grows continuously increasing $\lambda$ passing a critical point $\lambda^*(\alpha)=\frac{1}{\sqrt{\alpha-1}}$ (black line in the right panel of Fig~\ref{fig:ising_gauss_p2}), \black{which coincides with the stability condition of the paramagnetic solution computed in appendix~\ref{appendix-stability-parmagnet-ising}.}
Here one observes a continuous transition by increasing $\lambda$ between a state with $m=0$ and a magnetized state with $m > 0$. At small enough $\alpha$, the transition in $\lambda$ becomes first-order (with the paramagnetic state remaining stable up to $\lambda=\infty$ for  $\alpha<1$). For intermediate values of $\alpha$, the phase behavior is more complex: the system exhibits first a continuous transition from paramagnet to a non-trivial low-overlap state, followed by a first-order transition to a higher-overlap state.

\begin{figure}[h]
	\centering
	\includegraphics[width=0.495\textwidth]{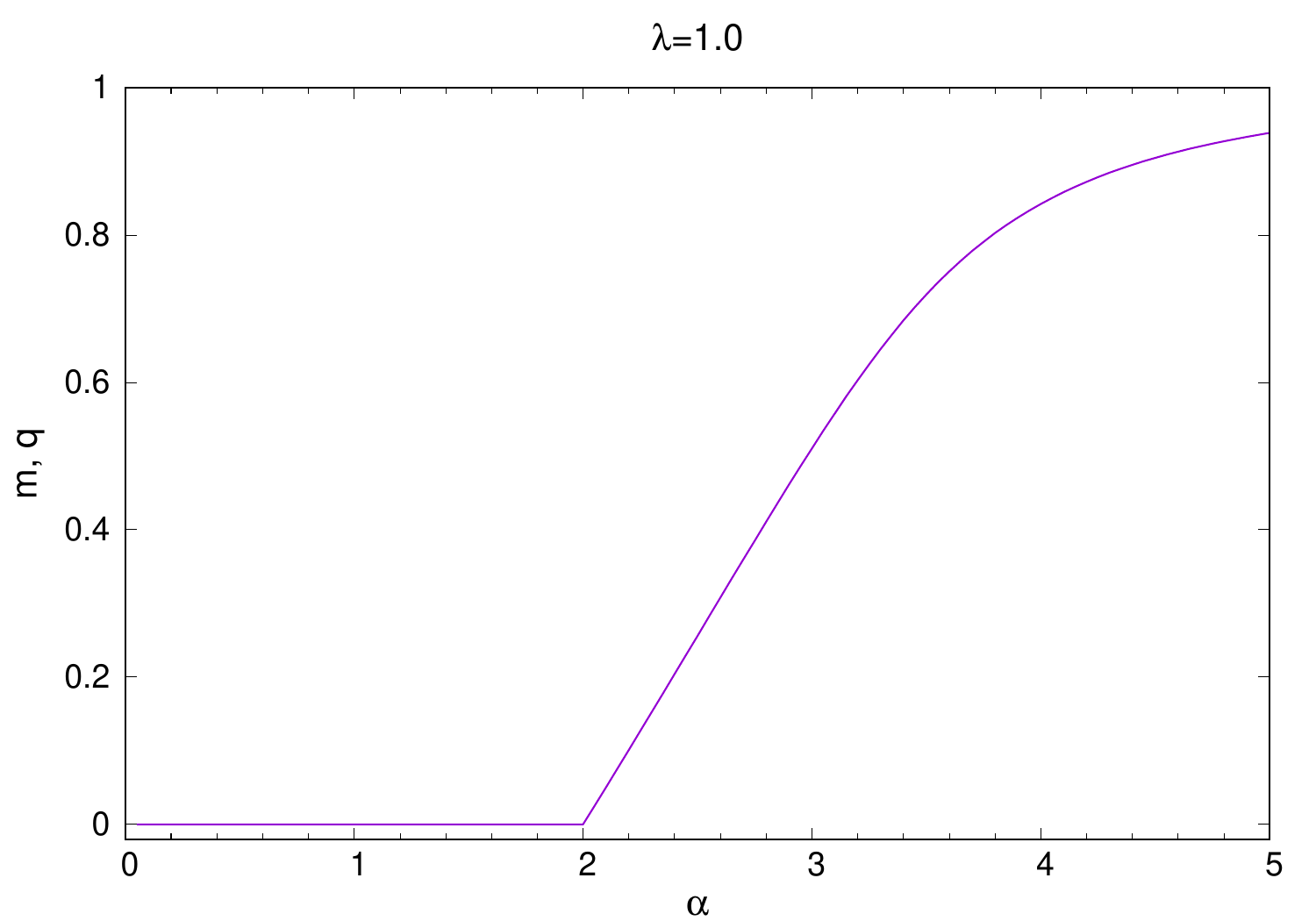}
	\includegraphics[width=0.495\textwidth]{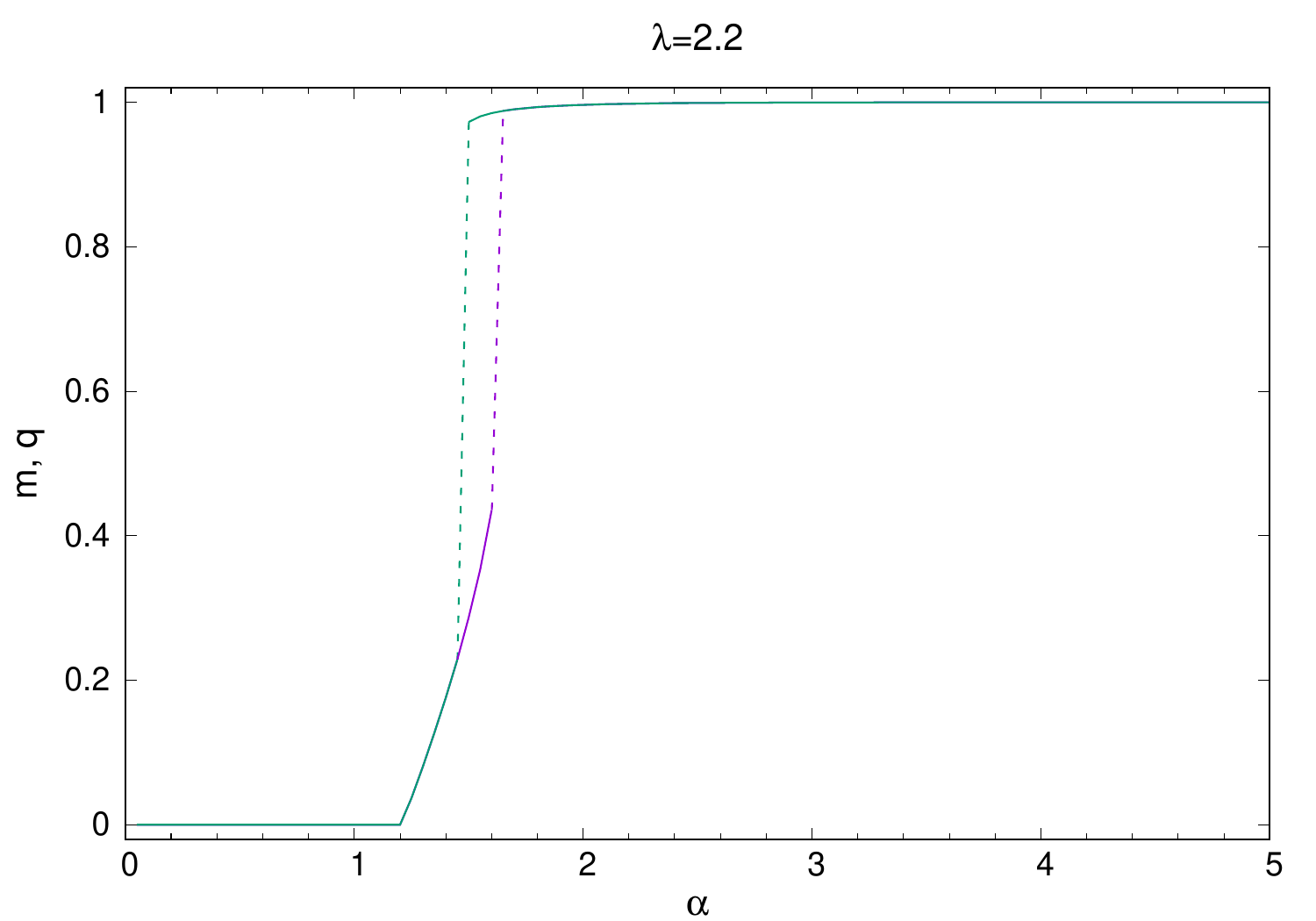}
	\includegraphics[width=0.495\textwidth]{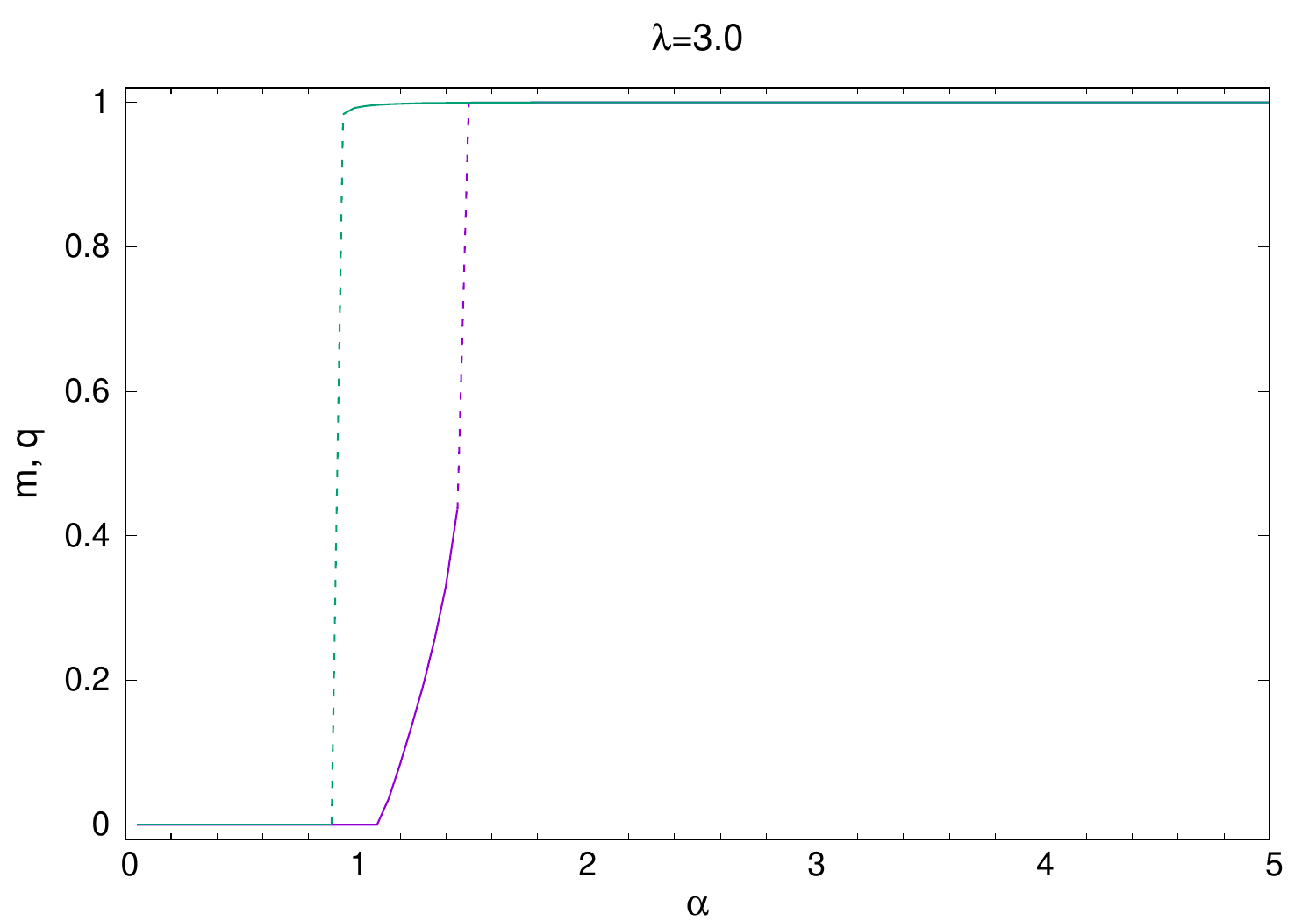}
	\includegraphics[width=0.495\textwidth]{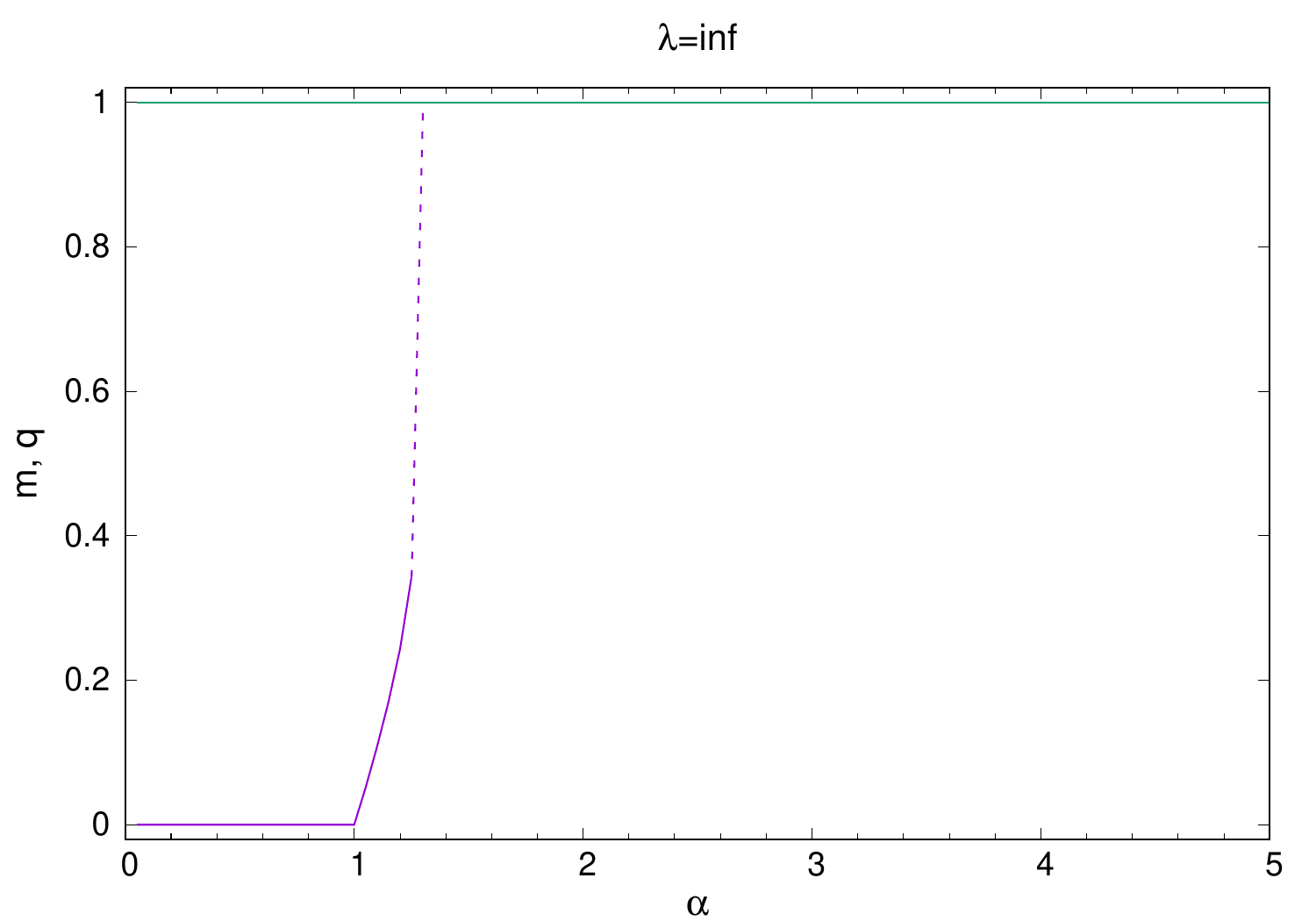}
	\caption{Order parameter $m=q$ in the Bayes optimal case for Ising prior, additive Gaussian noise and $p=2$. The order parameter is presented as a function of $\alpha$ for different values of $\lambda$. These figures present the same information as in Fig.~\ref{fig:ising_gauss_p2}, but shown in a different, complementary way. The first panel shows a unique branch of solutions, transitioning continuously from the paramagnetic one $m=0$. Increasing $\lambda$, in the second and third panels, also a high overlap branch (shown in green) appears discontinuously. Last panel shows the behaviour of the order parameter in the $\lambda\to\infty$ limit. The vertical dashed line coincides with the easy-to-hard threshold $\alpha_P$ for perfect reconstruction in the $\lambda\to\infty$ limit, while the boundary for the existence of the perfect recovery solution is given by $\alpha_s=0$.
	}
	\label{fig:ising_gauss_p2_mVSalpha}
\end{figure}

The consequences of this phase diagram on the performance of inference are as follows (see the right panel Fig~\ref{fig:ising_gauss_p2}).  In Region I inference is hard due to the stability of the paramagnetic solution, in the sense that G-AMP, and presumably any polynomial time algorithms, will not be able to find a solution correlated with the signal when starting from uninformative (low overlap) initialization. Regions II and V can also be considered hard, but in a slightly more complicated way. In fact, it is easy to find a solution correlated with the signal more than a random guess, since the paramagnetic phase loses its stability. However, this solution is not optimal, since it corresponds to the low-overlap branch, while the high-overlap branch remains inaccessible in polynomial time. This means that the perfect reconstruction of the signal is impossible even in the $\lambda\to \infty$ limit when starting from the uninformative initialization. By solving the equation of state (\eq{eq:m_Ising_Bayesopt}) for $\lambda\to\infty$, we obtained that the green line separating Regions II and III asymptotically approaches $\alpha \approx 1.30$. This value defines the easy-to-hard threshold $\alpha_P$: in the whole region $\alpha<\alpha_P$ the perfect reconstruction of the signal is not possible even in the $\lambda\to\infty$ limit (noiseless limit), see also the last panel of Fig.~\ref{fig:ising_gauss_p2_mVSalpha}. Conversely, in regions III and IV inference becomes easy, as there is a unique solution which has a large overlap with the signal, which converges to the perfect reconstruction solution ($m=1$) in the limit $\lambda\to\infty$. 

\blue{Let us consider now a slightly different kind of question, that is whether perfect recovery of the planted solution is possible or not. More specifically, we are interested in understanding for which values of the parameters $\alpha$ and $\lambda$ the $m=1$ state exists and it is locally stable. There are two relevant limits one may think of: the $\alpha\to\infty$ limit (increasing number of observations) and the $\lambda\to\infty$ limit (corresponding effectively to the noiseless limit). Since the point of our setting is considering how sparse measurements can still allow meaningful inference, it is more natural and interesting for us to focus on the noiseless limit $\lambda\to\infty$. This suggests us to introduce a threshold $\alpha_s$ which represents the boundary of existence of the perfect recovery solution ($m=1$). In other words, it corresponds to the smallest value of $\alpha$ for which the $m=1$ solution exists in the limit $\lambda\to\infty$.  
}

\blue{Interestingly, as we can see from Fig.~\ref{fig:ising_gauss_p2_mVSalpha}, such a boundary is given in this case by $\alpha_s=0$, since the $m=1$ solution always exists in the $\lambda\to\infty$ limit. This is very in contrast to the $p=2$ Gaussian prior case as shown further below, where we will find $\alpha_s=1$ (meaning that the perfect recovery solution does not exist even in the noiseless limit if the fraction of observations is too small, due to the continuous nature of the variables).
}

\blue{We will refer in the following to $\alpha_s$ (where the subscript s stands for ``success'') also as the possible-to-impossible threshold. Some readers might find this terminology confusing, so let us explain it more carefully. In the context of statistical physics, a threshold called impossible-to-hard threshold is commonly defined with respect to the specific problem of accessing the planted state when starting from an uninformative initial condition~\cite{zdeborova2016statistical}. According to this setting, the simple existence and stability of the $m=1$ solution does not grant that inference will  be algorithmically feasible starting from an uninformative initialization. Here instead, the term ``possible'' does not imply algorithmic considerations, our goal being just to highlight the difference in the thermodynamic behaviour of the specific models under consideration. Thus we will simply call ``possible'' the region where the $m=1$ solution exists, and``impossible'' otherwise. In the end, $\alpha_s$ corresponds also to the location of the asymptote of the $\lambda_d(\alpha)$ curve (the spinodal point for the high-magnetization solution) of Fig.~\ref{fig:ising_gauss_p2} (right).
}


Next let us examine the performance of message passing algorithms. In Fig.~\ref{fig:G_AMP_Ising_Gaussian_p=2} we show time evolution of the overlaps $m^{t}$ and $q^{t}$ computed using G-AMP.
\begin{figure}[h]
	\centering
	\includegraphics[width=0.495\textwidth]{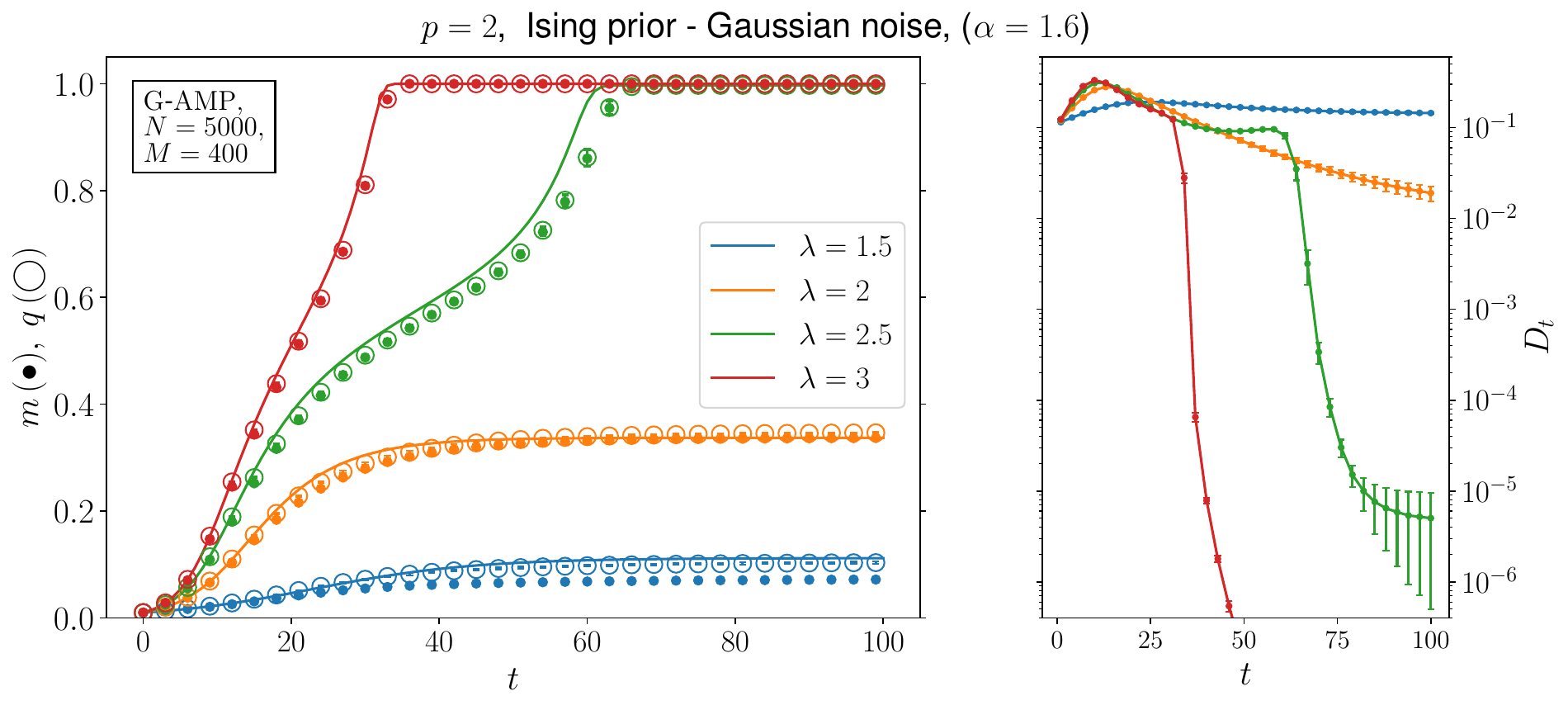}
	\includegraphics[width=0.495\textwidth]{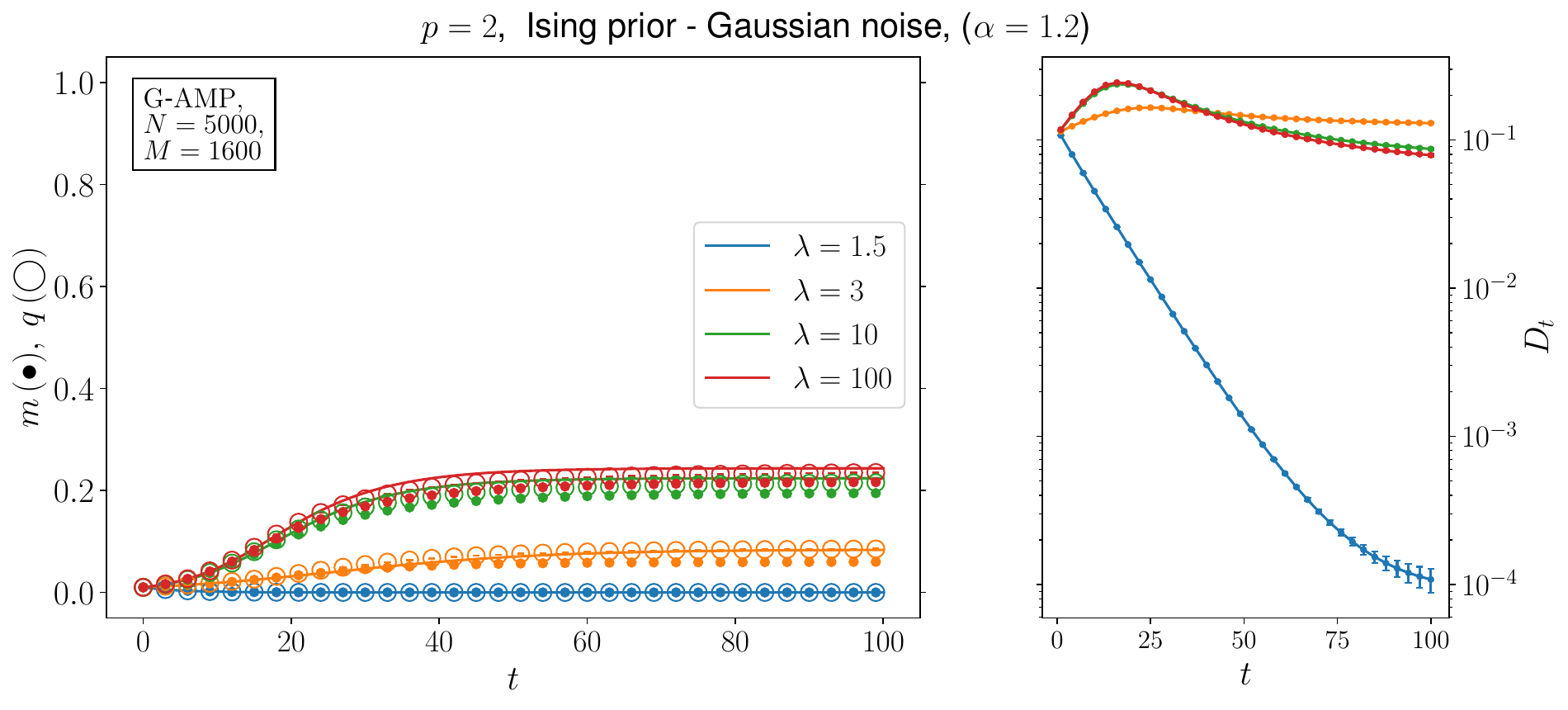}
	\includegraphics[width=0.495\textwidth]{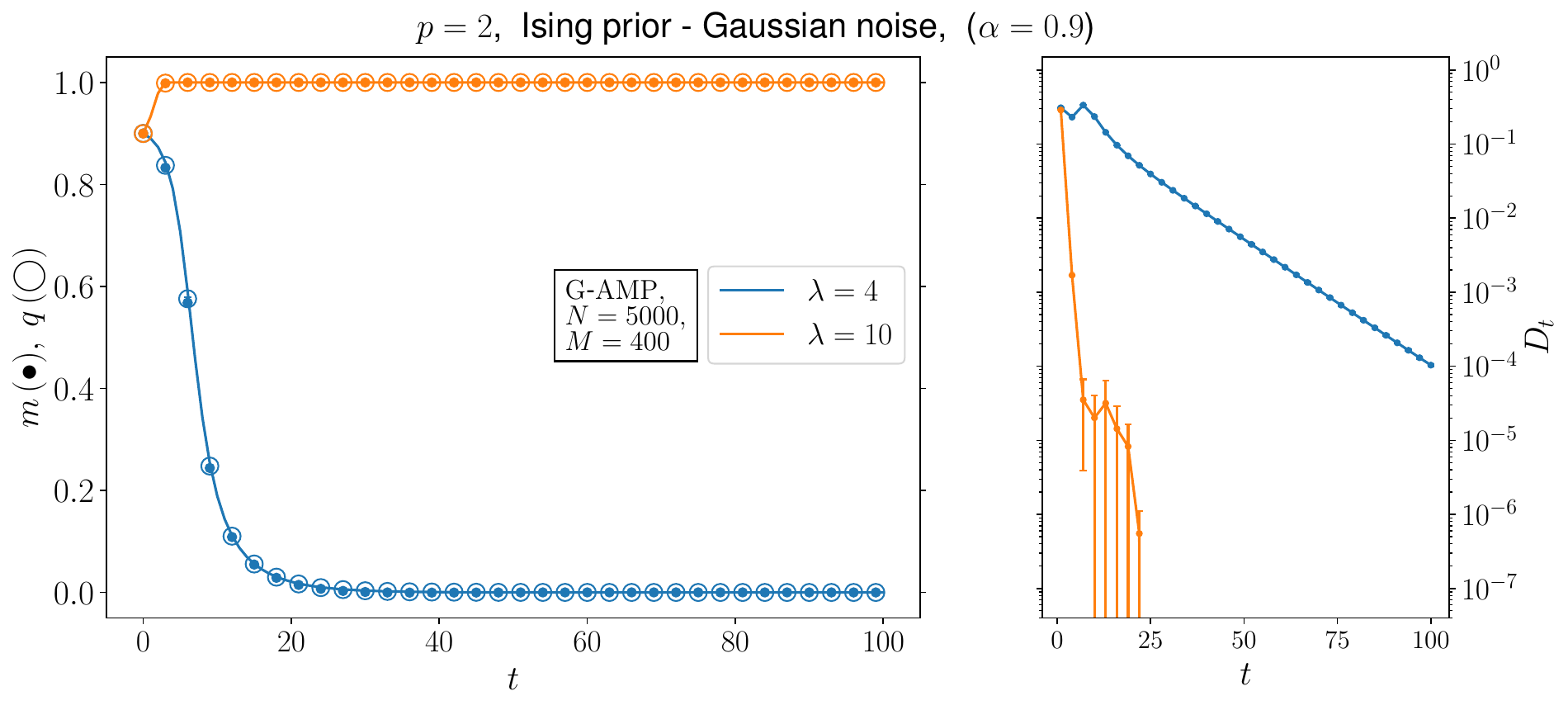}
	\caption{Evolution of the order parameters in the G-AMP algorithm for $\alpha=1.6$, $\alpha=1.2$  and $\alpha=0.9$ on random instances of the problem. The horizontal axis $t$ is for the time step.
          Data is averaged over 5 instances. Solid lines represent the State Evolution predictions. On the smaller panels, the convergence parameter is plotted.
       }
	\label{fig:G_AMP_Ising_Gaussian_p=2}
\end{figure}
For the cases $\alpha=1.6$ and $\alpha=1.2$, uninformative initialization is used (see sec.~\ref{sec-initialization})
so that the initial configuration is chosen to be very far from the teacher's configuration.
The results are compared with the numerical solution of the SE equation. On the left plot, the case $\alpha=1.6$ is considered: for large enough values of $\lambda$ the system falls in Region III in which inference is easy. On the right panel, the case $\alpha=1.2$ is considered. This case falls in Region II, that is a hard phase for which the perfect reconstruction is not possible, when starting from an uninformative initialization. In the third panel, the case $\alpha=0.9<1$ is considered with the informative initial condition (the initialization parameter in sec.~\ref{sec-initialization} is set to be $a=0.9$): this is for examining the predicted possible-to-impossible threshold $\alpha_s=0$ in the limit $\lambda\to\infty$, and actually both the SE and GAMP results converge to the $m\approx 1$ solution when $\lambda$ is large ($\lambda=10$ is examined). For comparison, the case $\lambda=4$ is also examined, exhibiting the convergence to the paramagnetic solution.

The results of G-AMP compare well with the solution of the SE equation.
However, for small values of the overlap, it can be seen that the algorithm slows down. Discrepancies in this case may be attributed to finite $M$ corrections. See also Fig.~\ref{fig:finitesize_Gaussian_Gaussian_p=2} in the appendix for a similar case. The convergence parameter $D_t$ is defined in \eq{eq-def-Dt} in appendix ~\ref{sec-comparisons-finite-size}. Finally, in Fig.~\ref{fig:r_BP_Ising_Gaussian_p=2} in  appendix
~\ref{sec-comparisons-finite-size},
we also show results of
r-BP algorithm compared with  G-AMP. For smaller $N$ r-BP is more accurate,
but increasing $N$ G-AMP drastically improves.

\subsubsection{$p=3$ case}
\label{subsec-ising-gaussian-p=3}
In this case, a discontinuous transition between the paramagnetic $m=0$ and
the magnetized phase $m >0$ is universally found. We expect this is the case for $p > 2$.
In Fig.~\ref{fig:ising_gauss_p3} we show the phase diagram obtained by analyzing the equation of states.
\begin{figure}[h]
\centering
\includegraphics[width=0.450\textwidth]{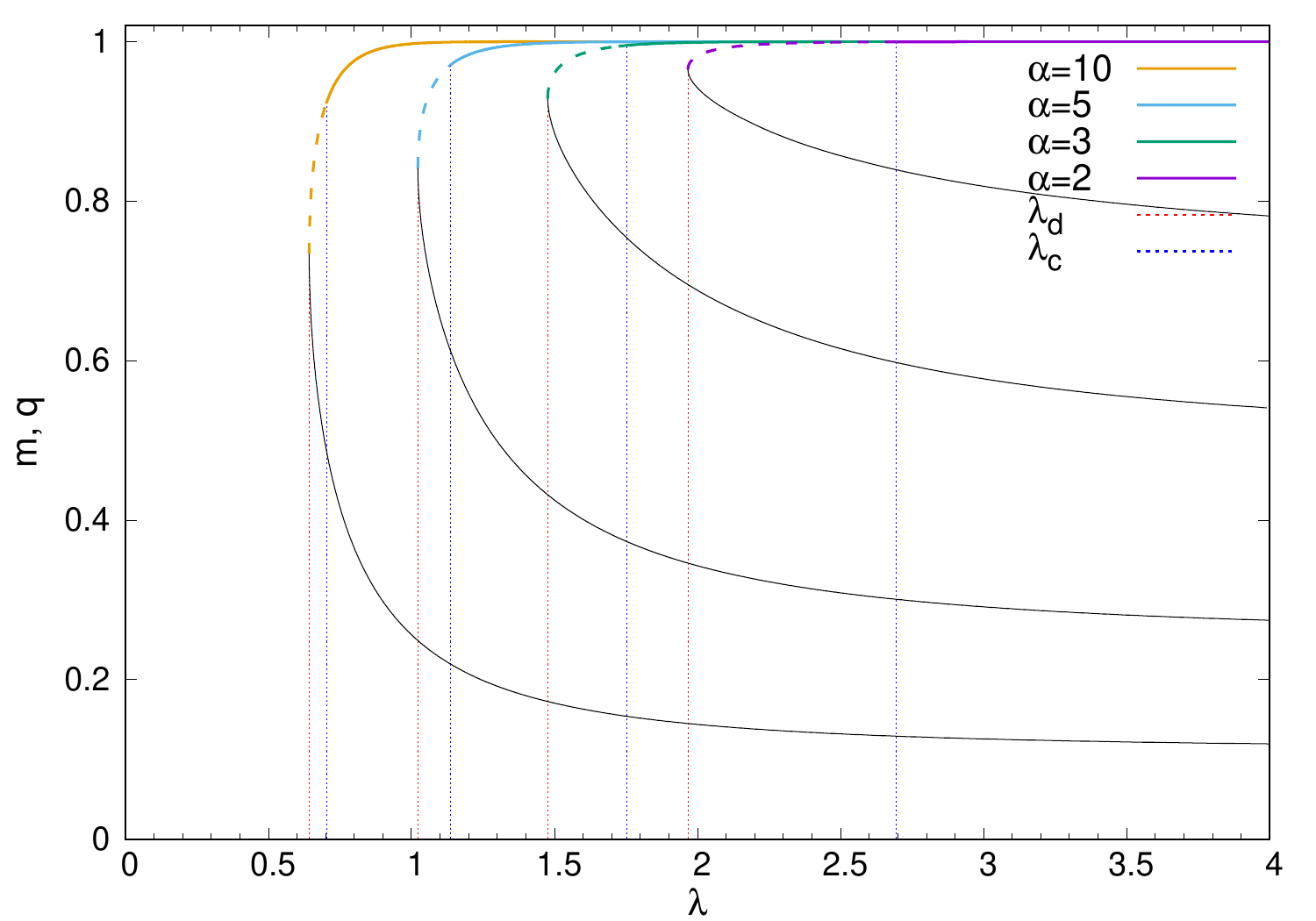}
\includegraphics[width=0.410\textwidth]{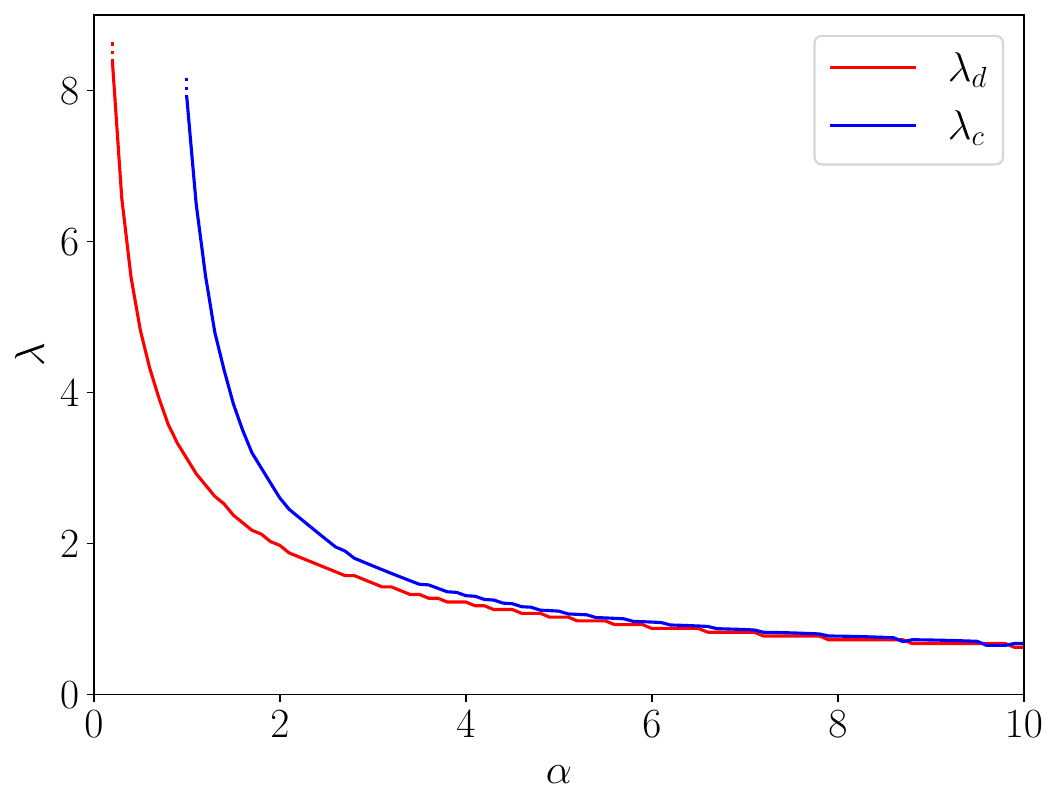}
\caption{Left: order parameter $m=q$ in the Bayes optimal case for Ising prior, Gaussian noise and $p=3$. \blue{The dashed portion of the lines indicate the values of $\lambda$ for which the magnetized phase is metastable with respect to the paramagnetic one.  Black continuous lines indicate branches of unstable solutions.} Right: phase diagram in the $\alpha-\lambda$ plane. The paramagnetic state $m=0$ is always locally stable.
}
\label{fig:ising_gauss_p3}
\end{figure}
We define $\lambda_d(\alpha)$ the spinodal point of the non-trivial solution, that coexists with the trivial paramagnetic one since the transition is of first-order type.
The true thermodynamic transition point $\lambda_c(\alpha)$ is defined as the point where the free energy of the highly magnetized state becomes equal to the one of the paramagnetic state.

The magnetization becomes very large in the magnetized phase meaning that the quality of the inference is good. However, unlike some sparse systems with $c=O(1)$ \cite{zdeborova2016statistical},
the paramagnetic state $m=0$ is always stable \black{(see sec.~\ref{appendix-stability-parmagnet-ising} )} in the dense limit $c \gg 1$ for $p>2$\cite{yoshino2018}. This is problematic from the algorithmic point of view, as it constitutes a hard phase for inference, meaning that there is no easy-to-hard threshold at finite $\alpha$. Meanwhile, the possible-to-impossible threshold is $\alpha_s=0$ since the $m=1$ solution always exists at any finite $\alpha$ \blue{in the noise-less limit $\lambda\to\infty$}. Hence, the computational gap is quite serious in this case. As a way to circumvent this, we propose in sec.~\ref{sec-p=2+3} to mix $p=2$ interactions to destabilize the paramagnetic state.

We report in Fig.~\ref{fig:G_AMP_Ising_Gaussian_p=3} the behavior of the G-AMP using
the informative (large overlap with the signal) initial condition  (see sec.~\ref{sec-initialization}).
\begin{figure}[h]
	\centering
	\includegraphics[width=0.7\textwidth]{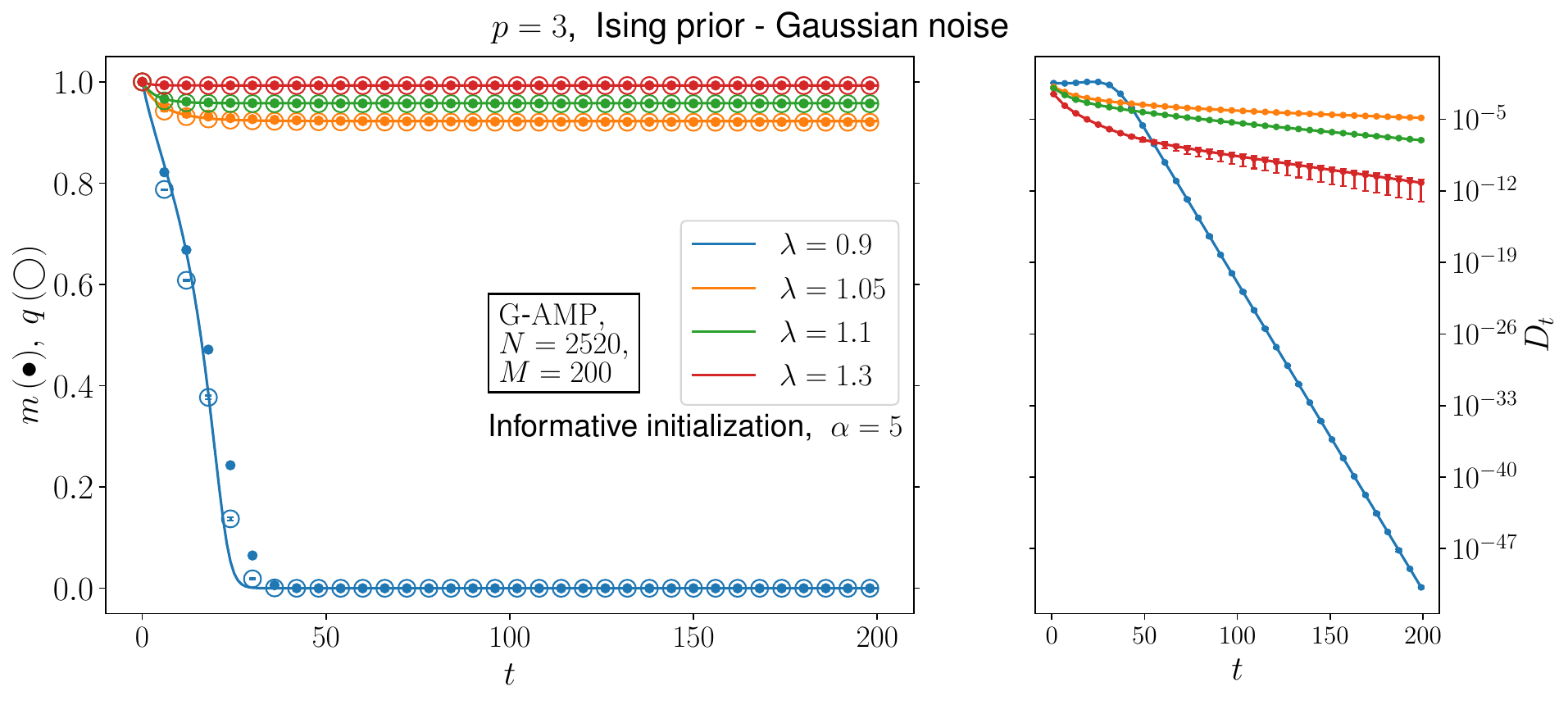}
        o	\caption{Evolution of the order parameters in the G-AMP algorithm for $\alpha=5$ on random instances of the problem with $N=2520$, $M=200$. Data is averaged over 10 instances. Solid lines represent the SE predictions. On the right panel, the convergence parameter is plotted.
        }
	\label{fig:G_AMP_Ising_Gaussian_p=3}
\end{figure}
The results compare well with the solution of the SE equations.
In Fig.~\ref{fig:r_BP_Ising_Gaussian_p=3} in the appendix we show results of
r-BP algorithm which compare well with G-AMP.

\subsection{Gaussian prior and additive Gaussian noise}
\label{sec-gaussian-gaussian-noise}
Let us turn to the case of Gaussian prior. In sec. \ref{sec-Gaussian-Gaussian-replica} we obtained results
for the case of Gaussian prior and additive noise by the replica approach. We focus on the Bayes optimal case, and the corresponding equation of state and the free energy expression are given by \eq{eq:m_Bayesopt_gauss} and \eq{eq:free_ene_gaussian_Bayesopt}, respectively. We also checked in sec.~\ref{appendix-stability-parmagnet-gauss} the stability of the paramagnetic solution by computing the second derivative of the free energy. It turns out that for $p>2$ the paramagnetic state is stable for any $\alpha,\lambda>0$.

For the message passing approach, we examine the r-BP algorithm (Algorithm~\ref{alg:p-ary_r-BP})
and G-AMP algorithm (Algorithm~\ref{alg:p-ary_G-AMP}) with the input function \eq{eq:HRTD-input_Spherical}  and the output function \eq{eq:HRTD-output_AWGN_Gaussian}. The SE equations are \eq{eq:HRTD-SE_EqState_m_Gaussian}-\eq{eq:HRTD-SE_EqState_Q_Gaussian} combined with \eq{eq-SE-q-additive-noise}-\eq{eq-SE-Xi-additive-noise}, showing a consistency with the replica formula. 

\subsubsection{$p=2$ case}
The value $m=0$ is always a solution to~\eqref{eq:m_Bayesopt_gauss}. Two other solutions arise by solving the remaining second order equation in $m$, to find
\begin{equation}
    m_\pm=\frac{\alpha}{2}\pm\gamma(\alpha,\lambda), \quad\quad \gamma(\alpha,\lambda)\equiv\frac{1}{2}\sqrt{(\alpha-2)^2+\frac{4}{\lambda^2}}.
\end{equation}
Using $\gamma(\alpha,\lambda)\geq\frac{1}{2}\sqrt{(\alpha-2)^2}=\frac{\lvert\alpha-2\lvert}{2}$, one can easily show that $m_+\geq 1$ $\forall \alpha$, meaning that it represents an unphysical solution. The correct solution is thus $m=\frac{\alpha}{2}-\gamma(\alpha,\lambda)$. It satisfies, as a function of $\lambda$, $m\leq 1$ for $\alpha\geq 2$ and $m\leq\alpha-1$ for $\alpha<2$, implying that the perfect reconstruction could be possible only when $\alpha \geq 2$ and only the paramagnetic solution exists for $\alpha<1$. In the limit $\lambda\to\infty$, one has $m=0$ for $\alpha<1$, $m=\alpha-1$ for $1\leq\alpha<2$ and $m=1$ for $\alpha\geq2$. The critical value of $\lambda$ above which the non-trivial solution $m>0$ appears can also be obtained by assuming the continuity of the solution: the critical point $\lambda_c(\alpha)$ is thus derived from the equation $m=\frac{\alpha}{2}-\gamma(\alpha,\lambda_c)=0$, leading to
\begin{equation}
    \lambda_c(\alpha)=\frac{1}{\sqrt{\alpha-1}}.
\end{equation}
This also coincides with the stability condition for the paramagnetic solution obtained in sec.~\ref{appendix-stability-parmagnet-gauss}. The result of this analysis is displayed in Fig.~\ref{fig:gauss_gauss_p2}.
\begin{figure}[h]
	\centering
	\includegraphics[angle=0,width=0.45\textwidth]{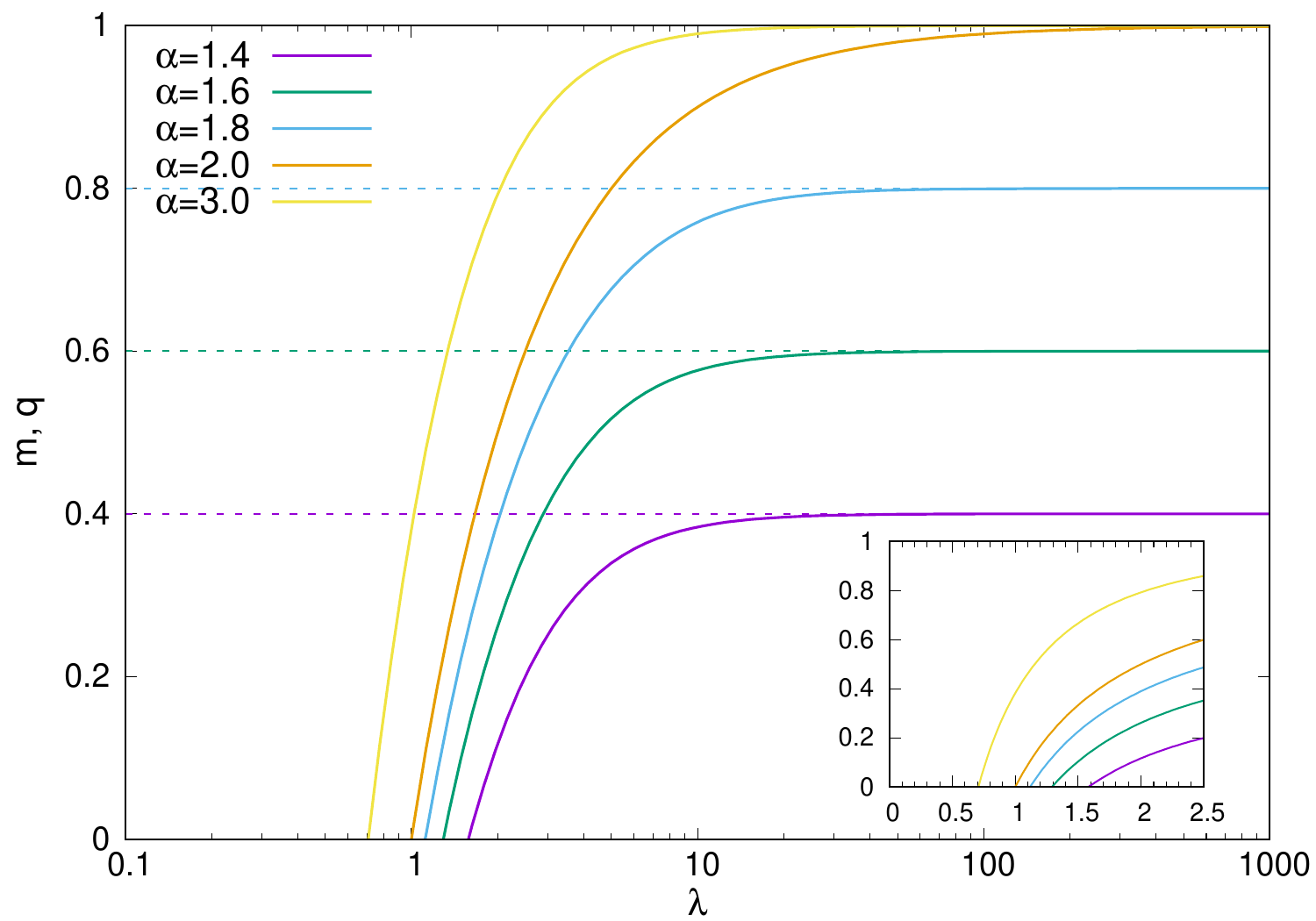}
	\includegraphics[width=0.45\textwidth]{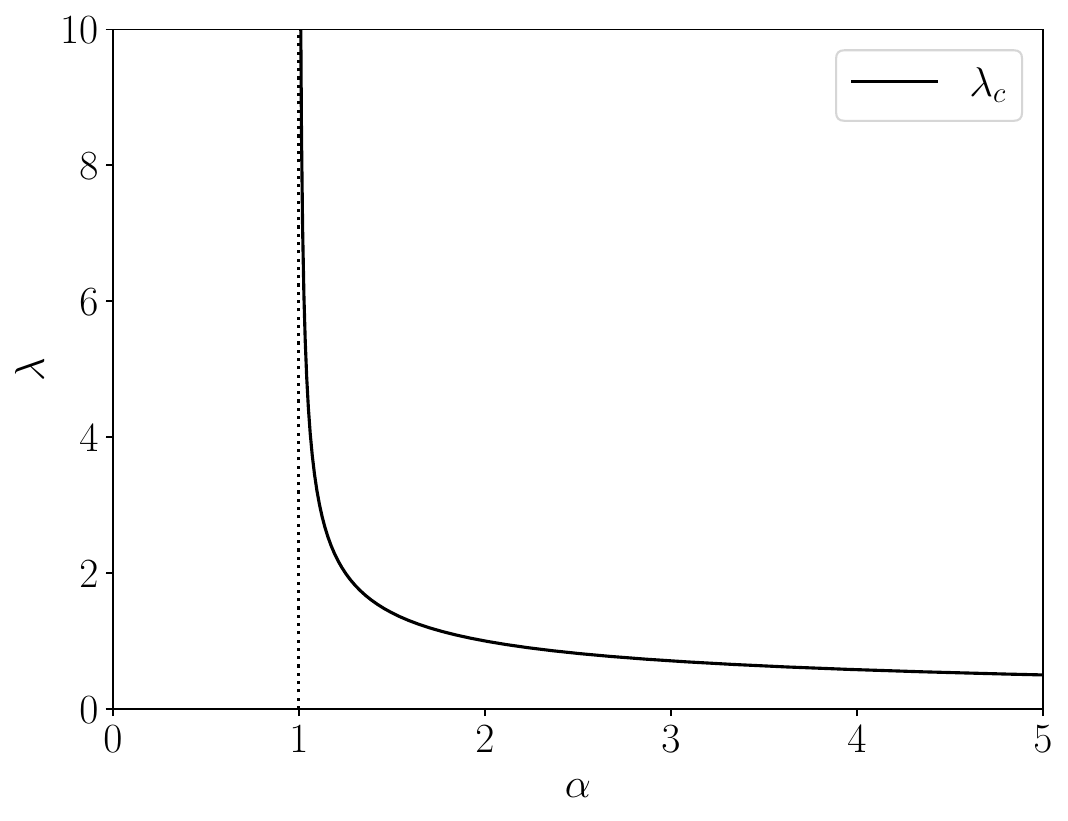}
	\caption{Left: Order parameter $m=q$ in the Bayes optimal case for the Gaussian prior, Gaussian noise and $p=2$. Inset: zoom on the region around $\lambda_c(\alpha)$. Right: phase diagram in the $\alpha-\lambda$ plane.}
	\label{fig:gauss_gauss_p2}
\end{figure}

From the algorithmic point of view, one observes a phase where inference is impossible for $\lambda < \lambda_c(\alpha)$, and a phase where inference is easy for $\lambda > \lambda_c(\alpha)$. This yields the possible-to-impossible threshold $\alpha_s$ in the limit $\lambda \to \infty$ as $\alpha_s=1$, which is in contrast to the Ising prior case where the perfect reconstruction solution exists for any finite $\alpha$. Meanwhile, the easy-to-hard threshold for perfect reconstruction in the limit $\lambda \to \infty$ is given by $\alpha_P=2$ from the above discussion. 

In Fig.~\ref{fig:G_AMP_Gaussian_Gaussian_p=2} we show time evolution of the
overlaps $m^{t}$ and $q^{t}$ (points) computed using G-AMP,
with informative and uninformative initialization, at $N=2500,M=400,\alpha=1.6$. 
\begin{figure}[h]
\centering
\includegraphics[width=0.45\textwidth]{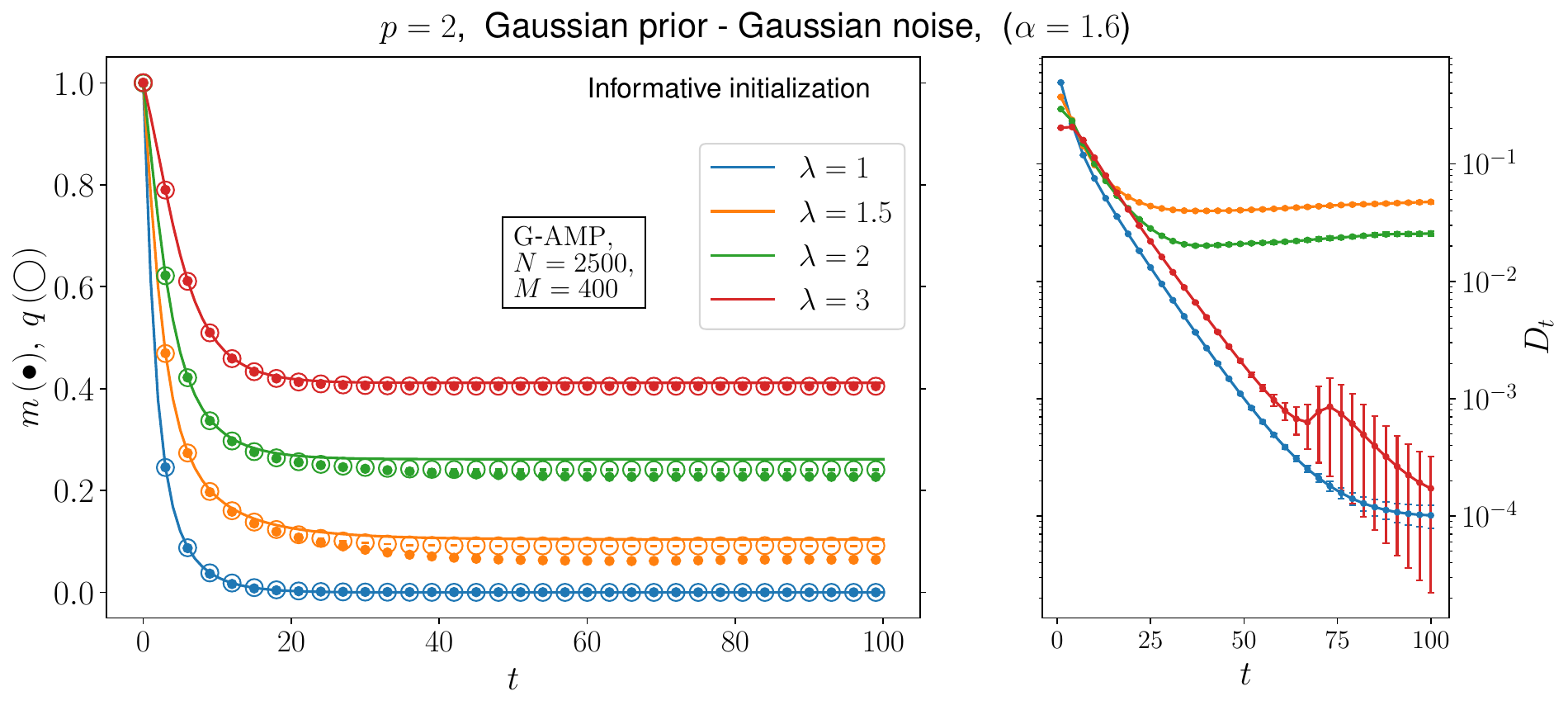}
\includegraphics[width=0.45\textwidth]{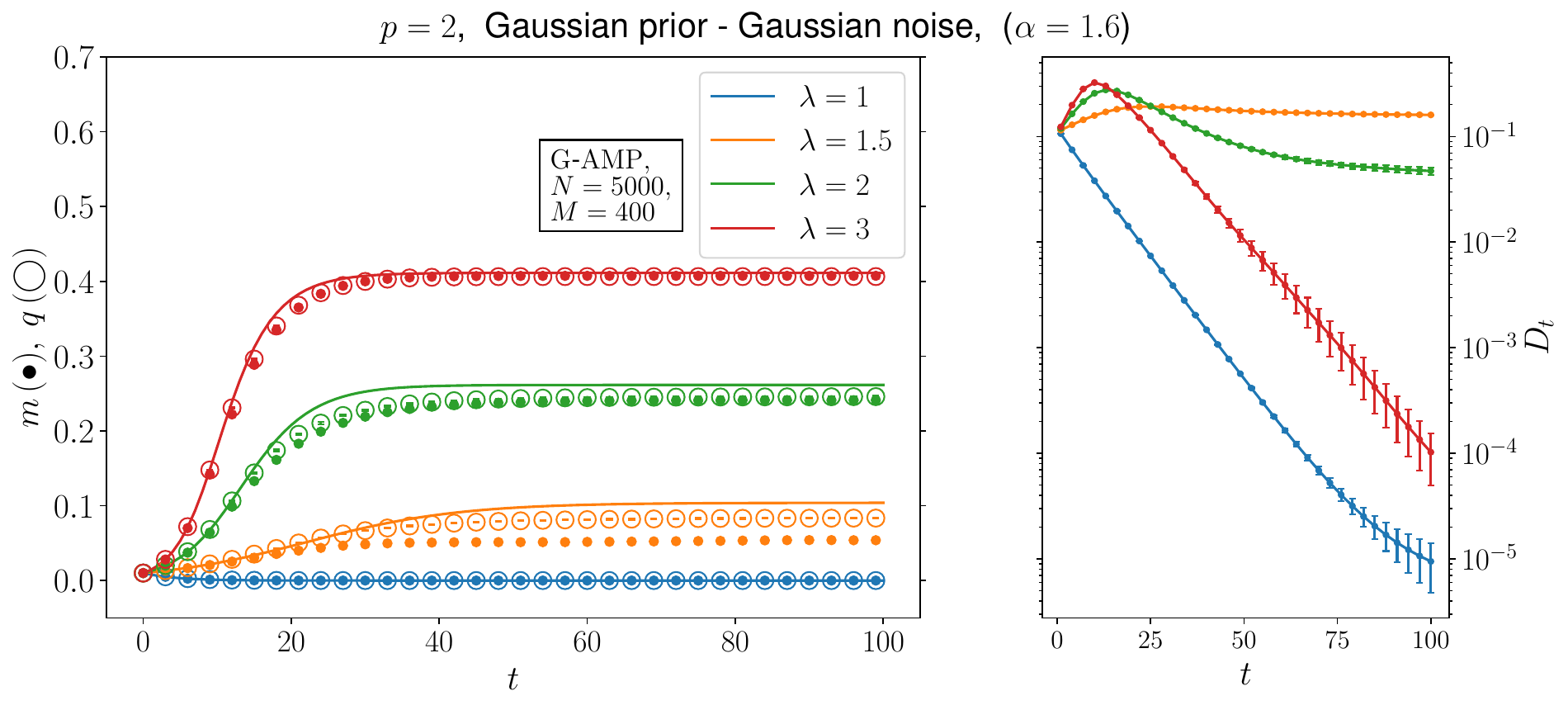}
\caption{Evolution of the order parameters in the G-AMP algorithm for $\alpha=1.6$ on random instances of the problem. Two different initializations (informative, uninformative) are considered. Solid lines represent the State Evolution prediction.}
\label{fig:G_AMP_Gaussian_Gaussian_p=2}
\end{figure}
The results compare well with the solution of the SE equation (solid lines). However for small $\lambda$, it can be seen that the convergence parameter $D_t$ does not vanish. This is a general feature that we observe throughout the numerical simulation. We presume this is related to finite size corrections becoming particularly relevant close to the transition point. See also the discussion in sec.~\ref{sec-comparisons-finite-size}.

In Fig.~\ref{fig:r_BP_Gaussian_Gaussian_p=2} in the appendix we show results of
r-BP algorithm for the informative initialization which can be compared well with  G-AMP. 

\subsubsection{$p=3$ case}
Again, the value $m=0$ is always a solution to~\eqref{eq:m_Bayesopt_gauss}. A non-trivial $m\neq0$ solution has to satisfy $1 + \lambda^2(1-m^3)+\alpha\lambda^2m^2=\alpha\lambda^2m$, from which
\begin{equation}
    \lambda(m,\alpha) = \frac{1}{\sqrt{m^3-\alpha m^2+\alpha m-1}}.
    \label{eq:p=3_implicit_m}
\end{equation}
The plot of $m$ as a function of $\lambda$ can be obtained by inverting~\eqref{eq:p=3_implicit_m}, as shown in Fig.~\ref{fig:Gaussian_gauss_p3}.
\begin{figure}[h]
	\centering
	\includegraphics[width=0.45\textwidth]{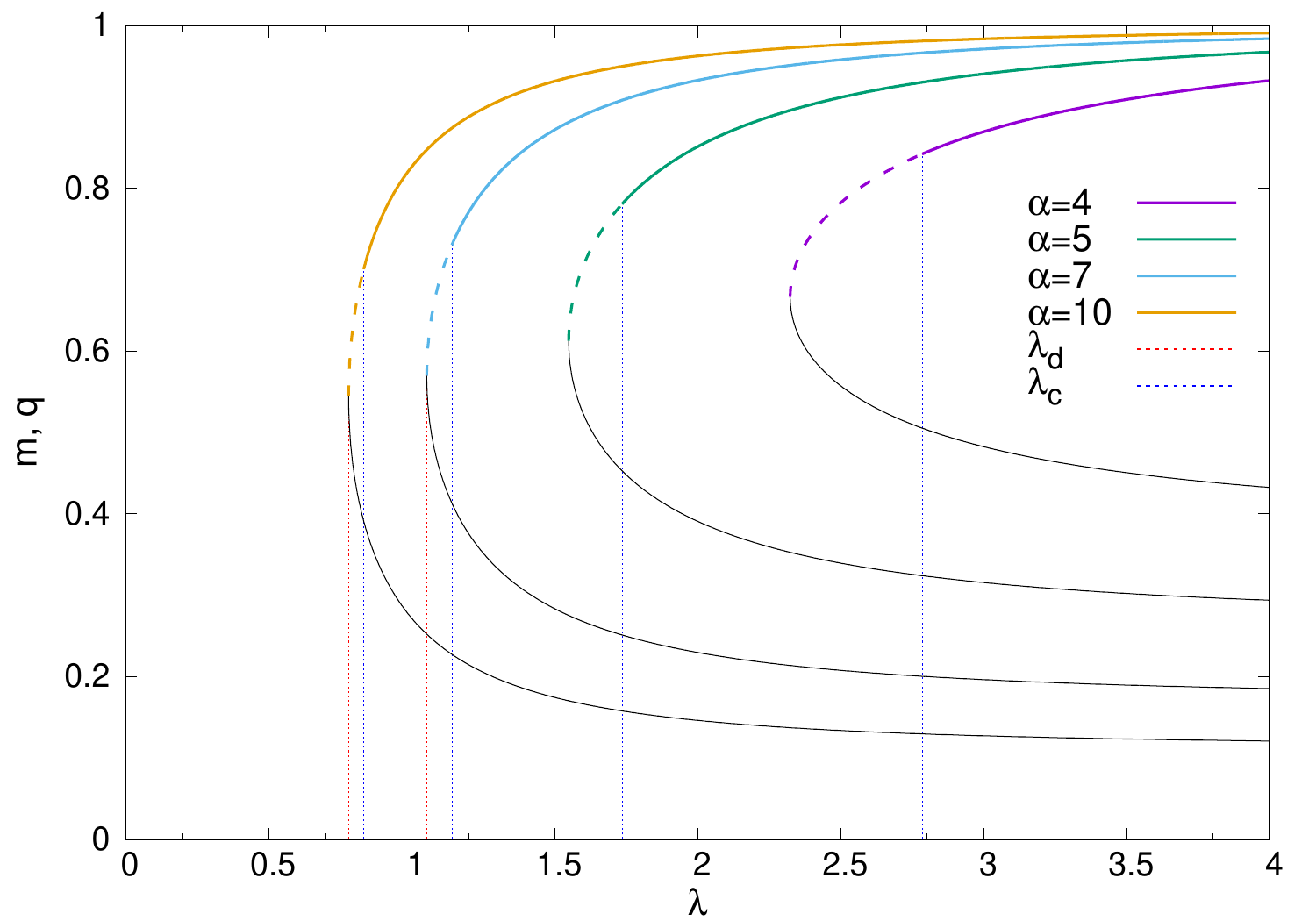}
	\includegraphics[width=0.45\textwidth]{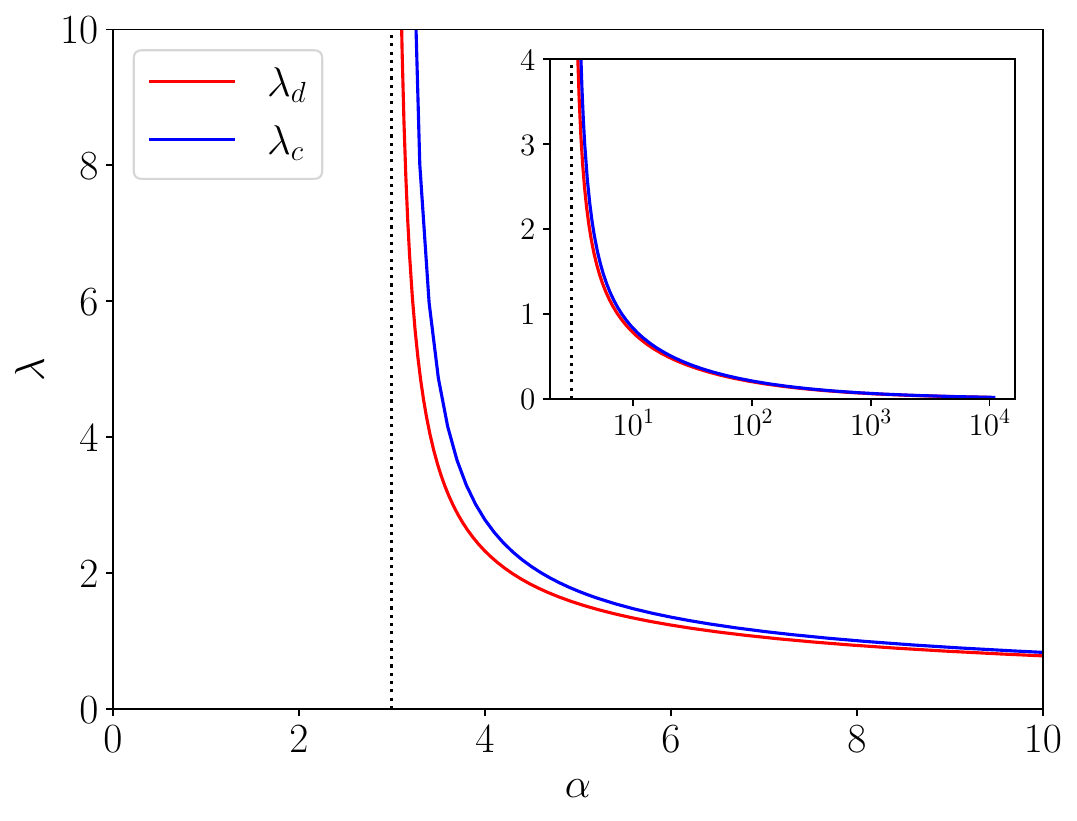}
	\caption{Left: Order parameter $m=q$ in the Bayes optimal case for the Gaussian prior, Gaussian noise and $p=3$. \blue{The dashed portion of the lines indicates a metastable magnetized state associated to a first-order transition. Black continuous lines indicate branches of unstable solutions.}
		Right: phase diagram in the $\alpha-\lambda$ plane.}
	\label{fig:Gaussian_gauss_p3}
\end{figure}
Notice that the radicand at denominator of~\eqref{eq:p=3_implicit_m} can be written as $(m-1)(m^2-(\alpha-1)m+1)$. This implies that when $\Delta=(\alpha-1)^2-4=(\alpha+1)(\alpha-3)<0$, that is for $\alpha<3$, the radicand is negative, and the non-trivial solution cannot exist, implying that the possible-to-impossible threshold is $\alpha_s=3$ in this case. 

For $\alpha>\alpha_s$ we can obtain the spinodal point $\lambda_d(\alpha)$ of the non-trivial solution by imposing $\frac{\partial}{\partial m}\lambda(m,\alpha)=0$, that results in the second order equation $3m^2-2\alpha m + \alpha=0$. Of the two branches of solutions, we keep the one compatible with the physical constraint $0\leq m \leq 1$, which gives for the value of the order parameter at the spinodal point
\begin{equation}
    m_d(\alpha)=\frac{\alpha-\sqrt{\alpha(\alpha-3)}}{3}. 
\end{equation}
The dynamical transition line is then obtained as $\lambda_d(\alpha)=\lambda(m_d(\alpha),\alpha)$. In the large $\alpha$ limit $m_d(\alpha)\to\frac{1}{2}$, and one can observe that $\lambda_d(\alpha)$ is actually well approximated by $\lambda_d(\alpha)\sim\sqrt{\frac{8}{2\alpha-7}}$, being the relative error of the order of $0.5\%$ already for $\alpha=10$.

The point $\lambda_c(\alpha)$ signals a first order thermodynamic transition where the ferromagnetic state becomes dominant with respect to the paramagnetic one. We have the difference in free energy as
\begin{equation}
    f_{\rm RS}(m=0)-f_{\rm RS}(m_*)=\frac{1}{2}\ln(1-m_*) + \frac{m_*}{2} - \frac{\alpha}{p}\ln\left(1-\frac{\lambda^2}{1+\lambda^2} \,m_*^p\right),
\end{equation}
where $m_*$ is given by the non-trivial solution of Eq.~\eqref{eq:p=3_implicit_m}. The critical point $\lambda_c(\alpha)$ is computed from the zero point of this free energy difference. The two lines $\lambda_c(\alpha),\lambda_d(\alpha)$ are plotted in the right panel of Fig.~\ref{fig:Gaussian_gauss_p3}. Since the paramagnetic solution is stable for the whole $\alpha>\alpha_s$ region, the easy-to-hard threshold does not exist at finite $\alpha$ as in sec. \ref{subsec-ising-gaussian-p=3}.

In Fig.~\ref{fig:G_AMP_Gaussian_Gaussian_p=3} we show the time evolution of the
overlaps $m^{t}$ and $q^{t}$ computed using G-AMP at $N=5040,M=400,\alpha=5$ with several values of $\lambda$. Here the informative initialization is used. 
\begin{figure}[h]
	\centering
	\includegraphics[width=0.7\textwidth]{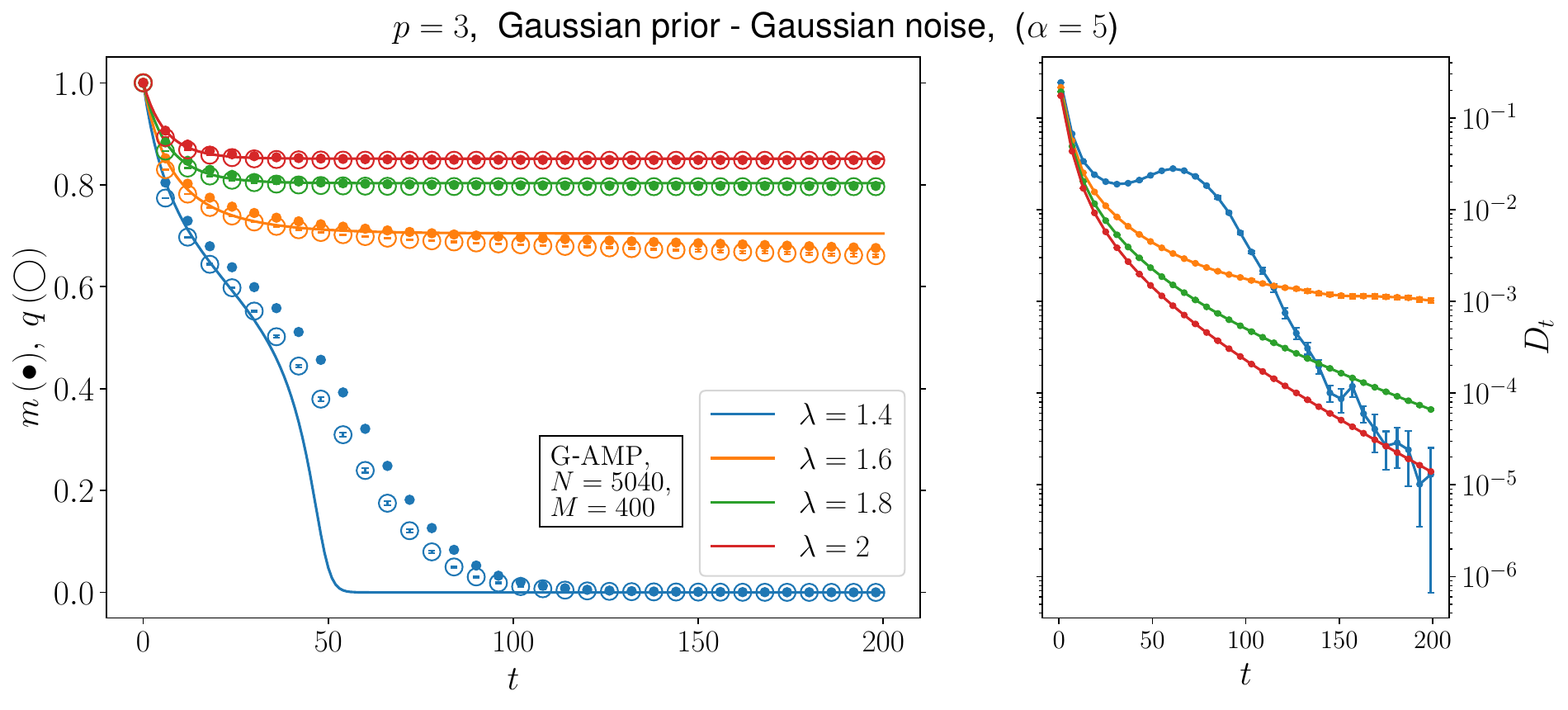}
	\caption{Evolution of the order parameters in the G-AMP algorithm for $\alpha=5$ on random instances of the problem with $N=5040$, $M=400$. Data is averaged over 10 instances. Solid lines represent the State Evolution predictions. On the right panel, the convergence parameter is plotted.
	}
	\label{fig:G_AMP_Gaussian_Gaussian_p=3}
\end{figure}
The results of G-AMP compare well with the solution of the SE equation (\eq{eq:HRTD-SE_EqState_m_Gaussian}-\eq{eq:HRTD-SE_EqState_Q_Gaussian} combined with with \eq{eq-SE-q-additive-noise}-\eq{eq-SE-Xi-additive-noise}). 

In Fig.~\ref{fig:r_BP_Gaussian_Gaussian_p=3} in the appendix we show results of
r-BP algorithm for the informative initialization which can be compared well with  G-AMP. 
\subsubsection{mixed case: $p=2+3$}
\label{sec-p=2+3}
As we already pointed out in the previous sections (see also appendix~\ref{appendix-stability-parmagnet}), the $m=0$ solution is always stable for $p>2$ in the dense limit $c \gg 1$,  which is problematic in the context of inference. As a way to circumvent this, we propose a mixed model in which
two species of interactions ($p_1$ and $p_2$) are present. The total number of interactions, or measurements, is $N_\blacksquare=N_1+N_2$. The connectivity of each species is defined as $c_i=\alpha_i M$, then $N_i=\alpha_i M N / p_i$. The idea behind this is to contaminate a $p_2=3$ system with some fraction of $p_1=2$ measurements in order to have a first order transition in the magnetization and an instability of the paramagnetic solution at finite $\lambda$ at the same time.
\blue{While this trick should be useful also in sparse $c=O(1)$ systems,
it is even more useful in our dense systems because $m=0$ phase in $p>2$ systems
cannot become spontaneously unstable contrary to sparse systems.}

The only modification to the pure $p$ case is that the summation over $\blacksquare$ in the interaction part of the free energy Eq.~\eqref{eq-F1-additive} can be split according to the two species $p_1$ and $p_2$. This produces the following equation of state
\begin{equation}
	\frac{m}{1-m}=\frac{\alpha_1\lambda^2m^{p_1-1}}{1+\lambda^2(1-m^{p_1})}+\frac{\alpha_2\lambda^2m^{p_2-1}}{1+\lambda^2(1-m^{p_2})}\equiv D_{p_1,p_2}(\alpha_1,\alpha_2,\lambda,m),
\end{equation}
under the homogeneous, Bayes optimal and replica symmetric ansatz.

\blue{In the mixed case, there are loop corrections involving both $p=2$ and $p=3$ interactions which appear first at $O(\lambda^{3})$ (triangle) in the cumulant expansion (see sec.~\ref{appendix-cumulant-expansion}). The scaling arguments are essentially the
same and one can safely neglect such corrections in the dense limit. }

We specialize in the following to the case $p_1=2$, $p_2=3$ and we fix $\alpha_1=2$. This choice is due to the fact that this allows us to have an instability of the paramagnetic state for finite $\lambda_c=(\sqrt{\alpha_2-1})^{-1}=1$. The phase diagram in the $\alpha_2-\lambda$ plane is depicted in Fig.~\ref{fig:mixed_p=2+3}. 
\begin{figure}[h]
	\centering
	\includegraphics[width=0.45\textwidth]{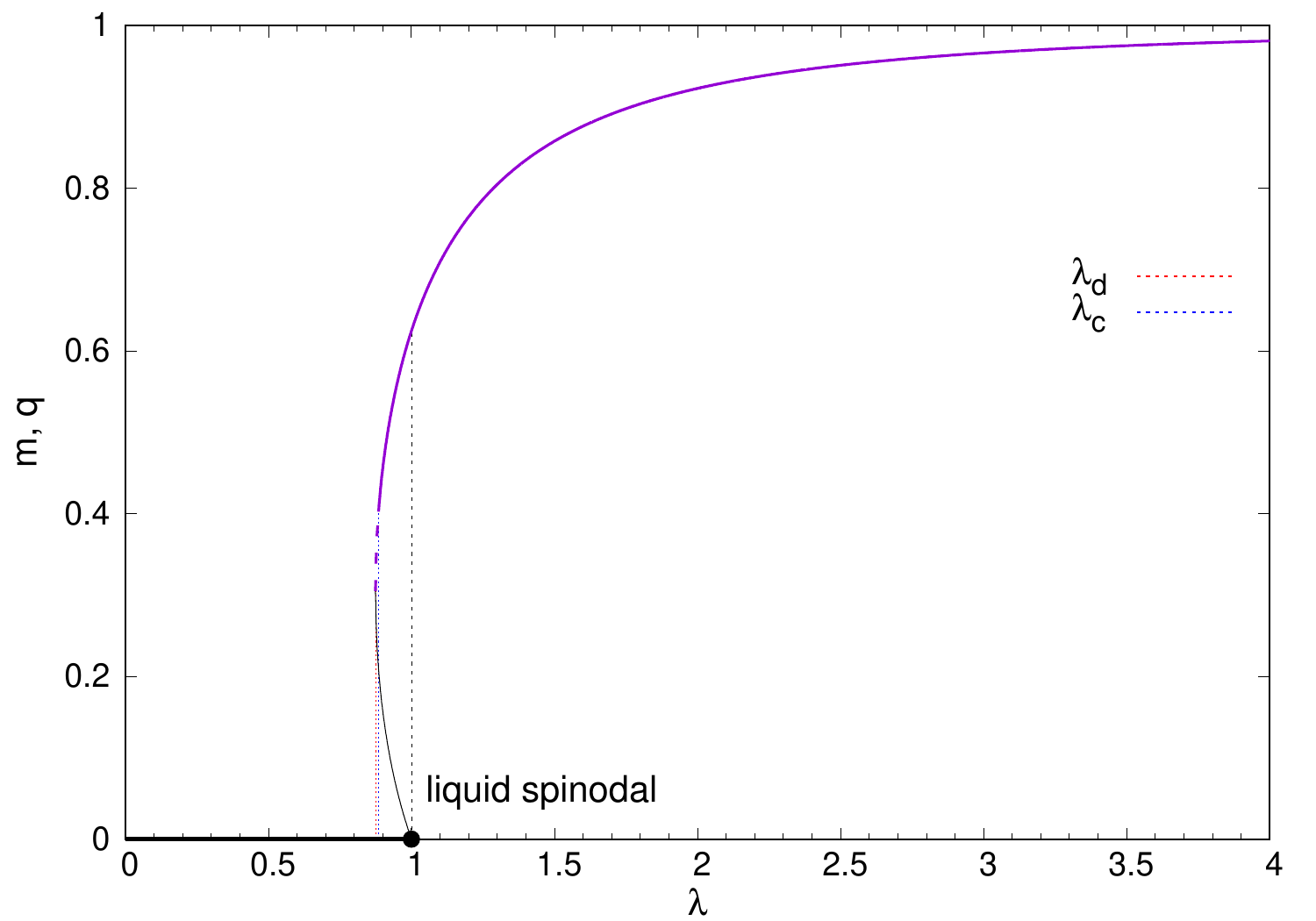}
	\centering
      	\includegraphics[width=0.41\textwidth]{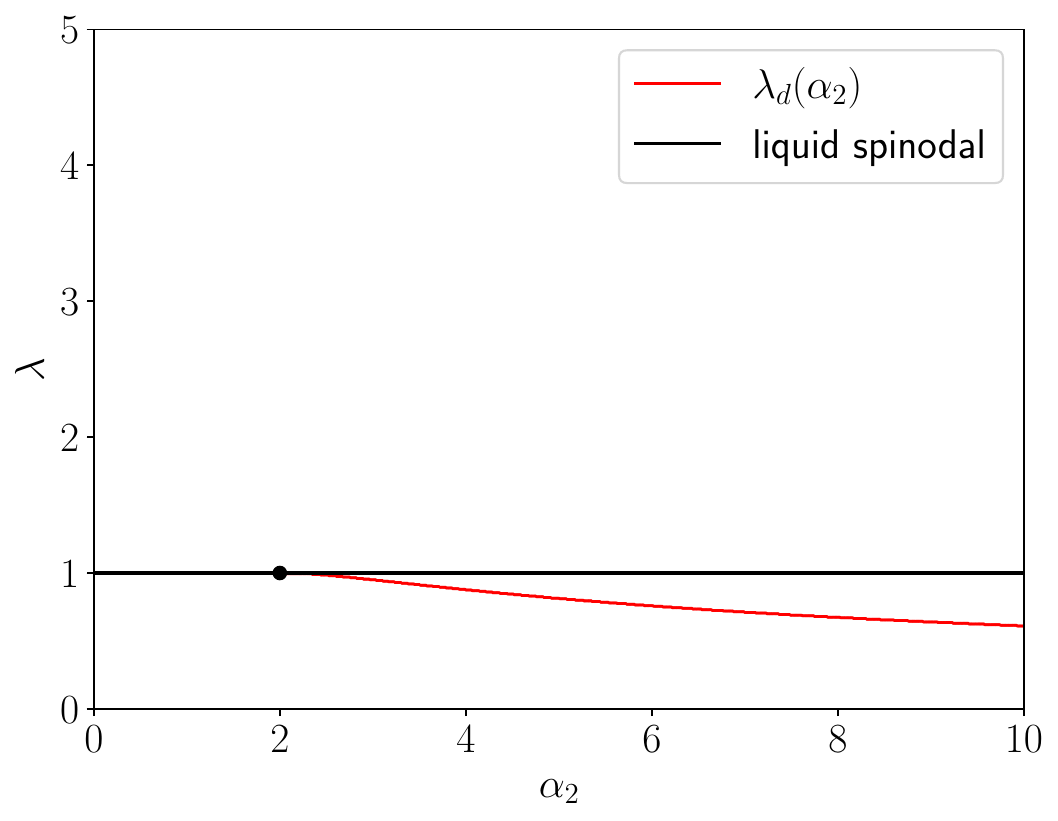}
	\caption{Left: order parameter $m=q$ for the Gaussian prior, Gaussian noise and mixed $p=2+3$. The parameter $\alpha$ of each species is $\alpha_1=2$ and $\alpha_2=4$. \blue{The thick black line for $m=0$ represents the paramagnetic phase which is locally stable up to the spinodal point $\lambda=1$.} Right: phase diagram in the $\alpha_2-\lambda$ plane, for fixed $\alpha_1=2$. \blue{Below the black horizontal line $\lambda=1$ the paramagnetic state is locally stable; above the red line $\lambda_d(\alpha_2)$ the magnetized state is present.}}
	\label{fig:mixed_p=2+3}
\end{figure}
When $\alpha_2\leq2$ (which coincides to the condition $\alpha_2\leq\alpha_1$) the transition in $\lambda$ is second order, and it is first order otherwise. Interestingly, the discontinuity of the order parameter emerges continuously when moving away from $\alpha_2=2$. This means that for values of $\alpha_2$ bigger than but very close to $2$ the first order transition resembles very much a continuous one, see also Fig.~\ref{fig:mixed_p=2+3_type}.
\begin{figure}[h]
	\centering
	\includegraphics[width=0.45\textwidth]{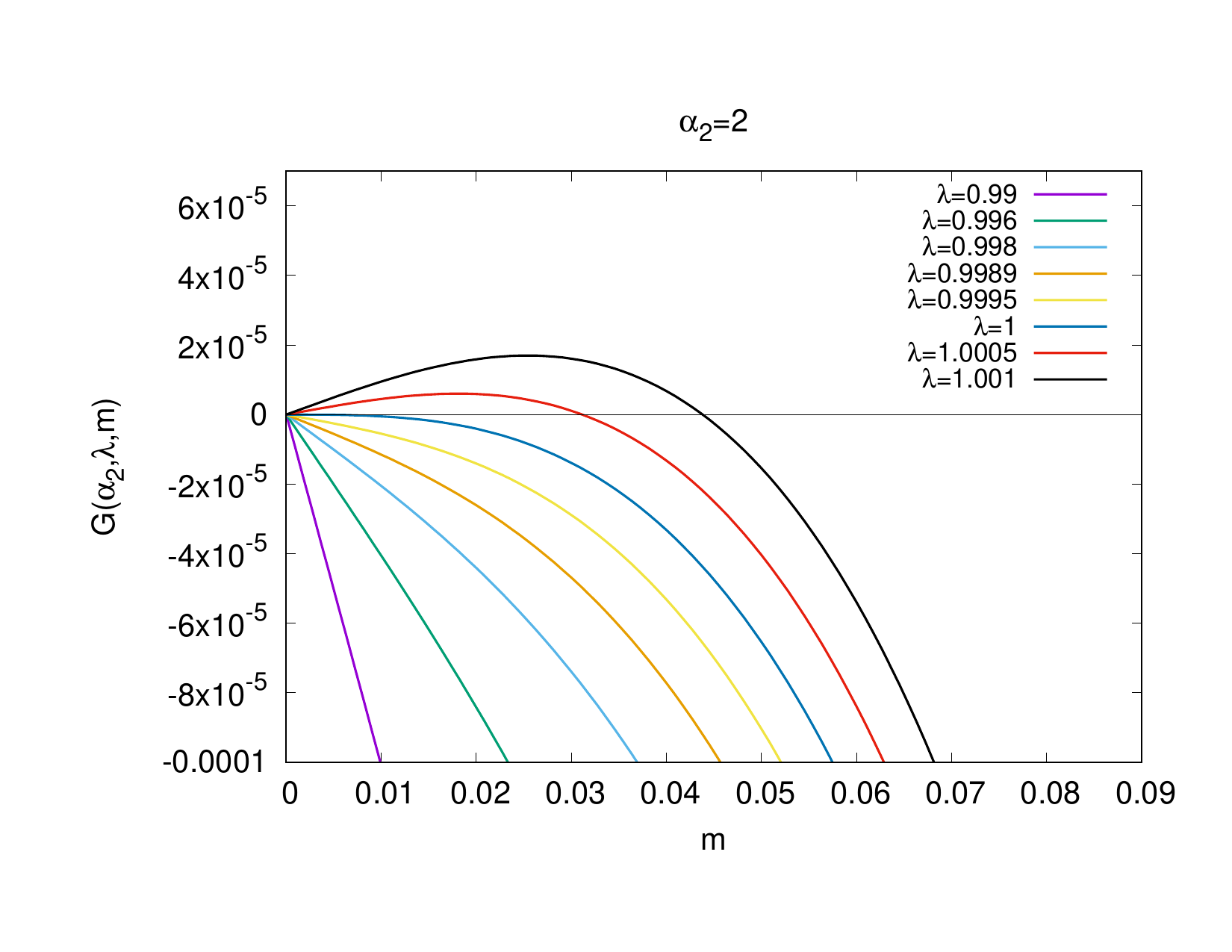}
	\centering
	\includegraphics[width=0.45\textwidth]{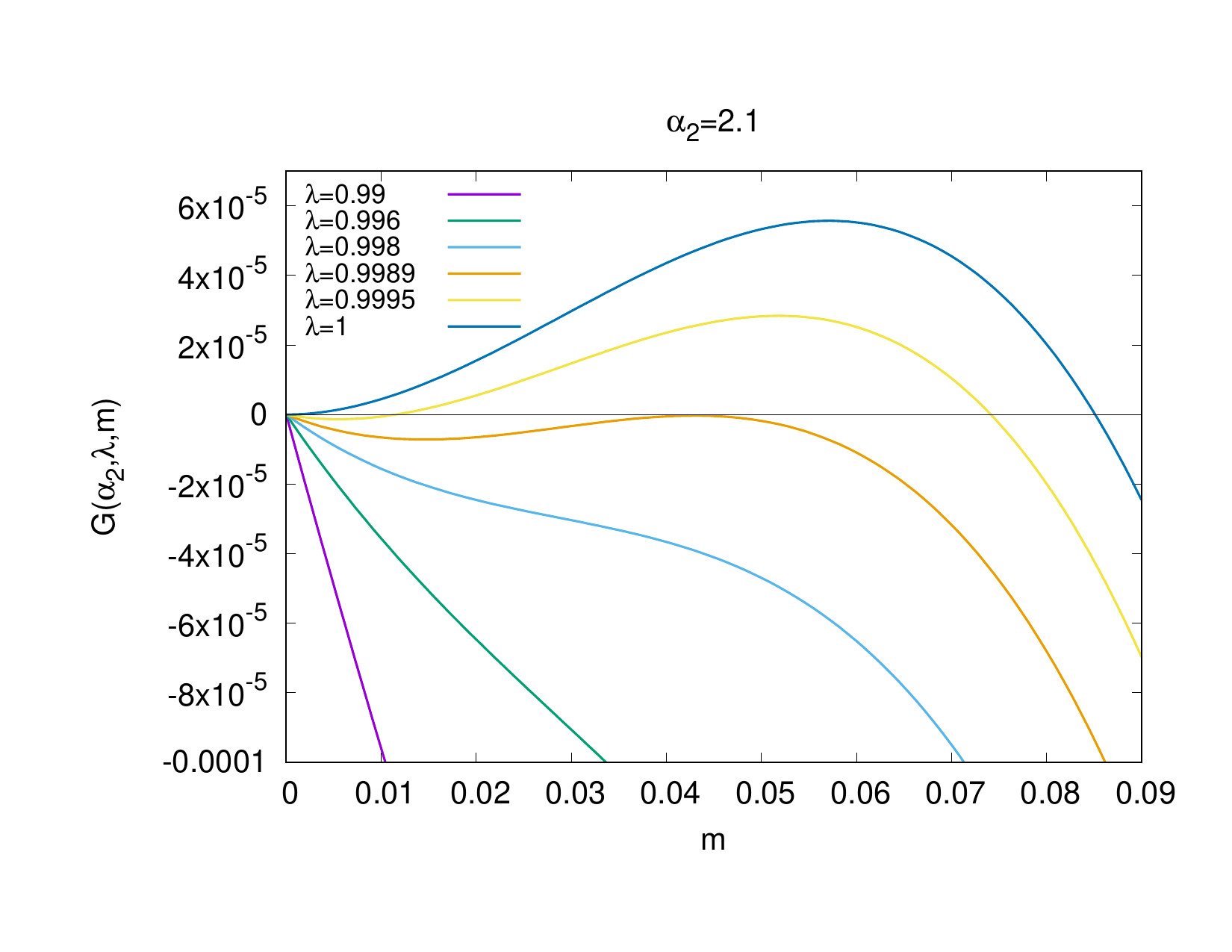}
	\caption{For the case $p_1=2$, $p_2=3$ and $\alpha_1=2$, we plot $G(\alpha_2,\lambda,m)\equiv (1-m) D(2,\alpha_2,\lambda,m)-m$. Left: for $\alpha_2 \leq 2$ the transition in $\lambda$ is second order. The derivative of $G(m)$ at $m=0$ becomes 0 exactly at $\lambda=1$. Right: for $\alpha_2>2$ there is a discontinuous appearance of a non-trivial solution already for $\lambda<1$. For $\alpha_2$ close to 2, however, the range of $\lambda$ and $m$ values interested is very small and the transition can resemble a continuous one.}
	\label{fig:mixed_p=2+3_type}
\end{figure}

In Fig.~\ref{fig:G_AMP_Gaussian_Gaussian_p=2+3_A} we report the result of
G-AMP compared with the solution of the SE equation. Deviations close to the transition can be possibly ascribed to finite $N,M$ corrections.
\begin{figure}[h]
\centering
\includegraphics[width=0.49\textwidth]{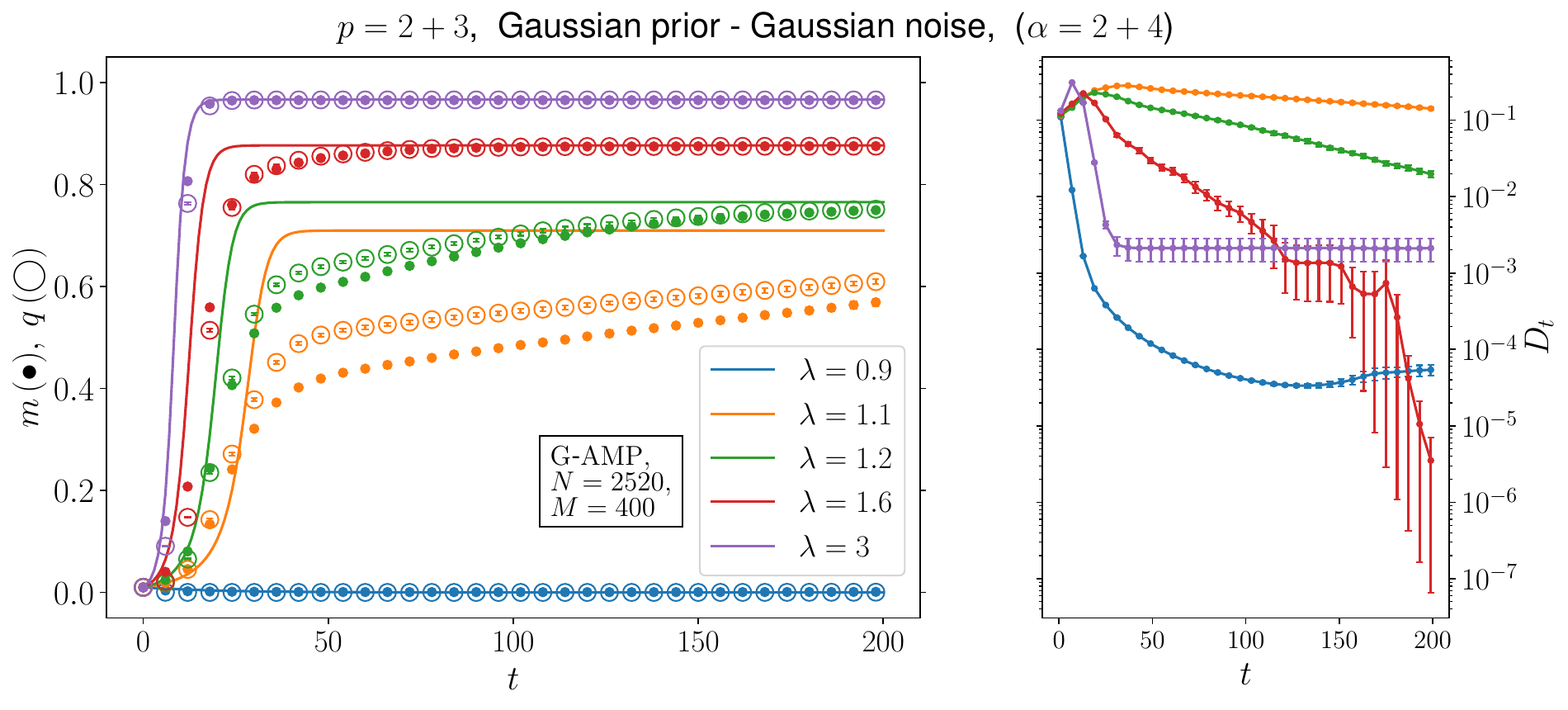}
\includegraphics[width=0.49\textwidth]{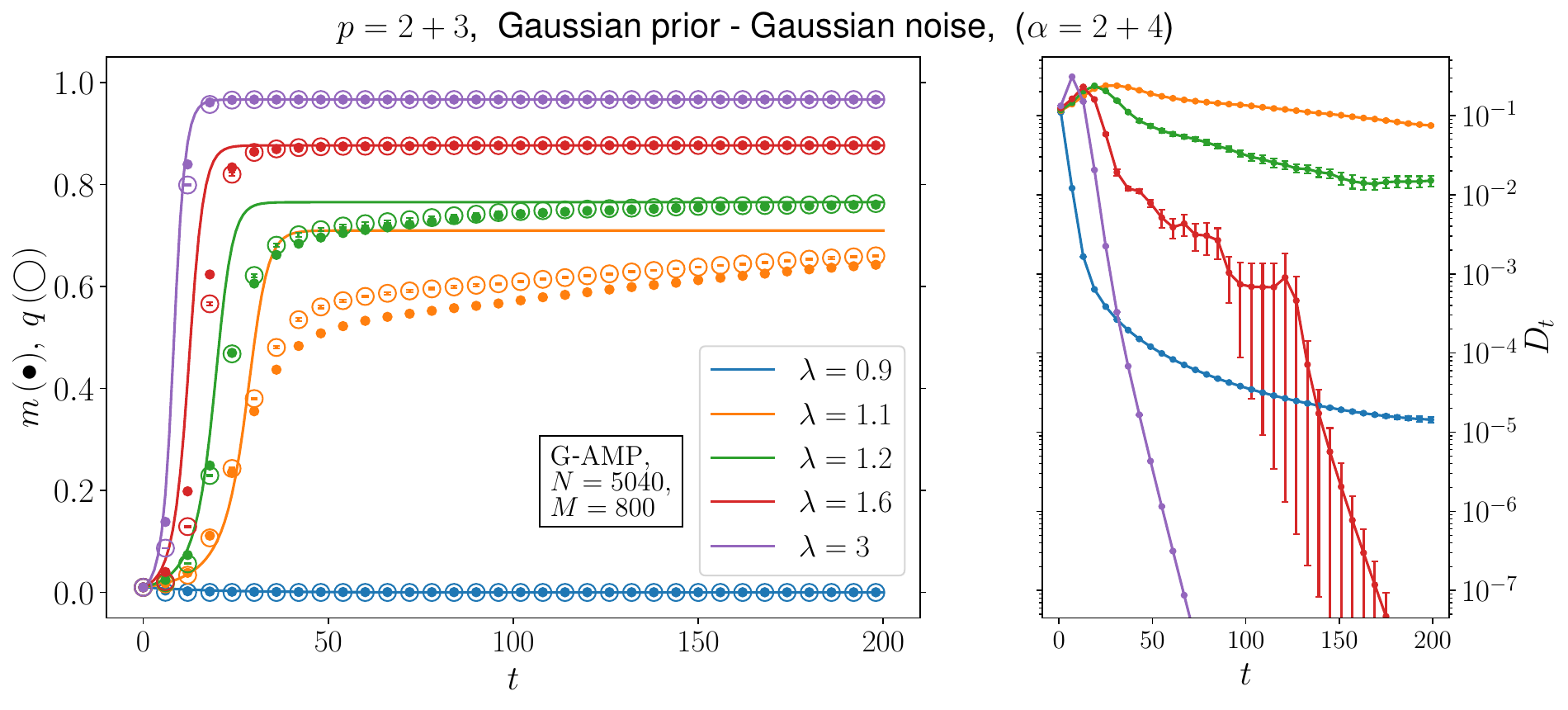}
\caption{Evolution of the order parameters in the G-AMP algorithm for the mixed $p=2+3$ case with $\alpha_1=2$, $\alpha_2=4$. Even starting from an uninformative initialization, partial recovery of the planted solution is possible thanks to the instability of the paramagnetic phase. Solid lines represent the State Evolution predictions. On the right panel, the convergence parameter is plotted.}
\label{fig:G_AMP_Gaussian_Gaussian_p=2+3_A}
\end{figure}

\blue{Contrary to our dense systems, the paramagnetic state in sparse systems with $c=O(1)$ can become spontaneously unstable (for example, see Fig.~6 of \cite{zdeborova2016statistical}).
Still the trick can be useful also for sparse systems to facilitate inferences.}
\subsection{Gaussian prior and sign output}
\label{sec-gaussian-sign-output}

Here the case of Gaussian prior and sign output is treated. The equation of states and the free energy expression in the replica approach are given by \eq{eq-SP-gaussian-sign-RS-BayesOptimal} and \eq{eq-free-ene-signoutput-BayesOptimal}, respectively, where $H(x)$ is defined in \eq{eq-H}. This formula shows the consistency with the SE equation (\eq{eq:HRTD-SE_EqState_m_Gaussian}-\eq{eq:HRTD-SE_EqState_Q_Gaussian} combined with \eq{eq-SE-chi-q-m-sign-output}) in the message passing approach.

For the message passing approach, we examine the r-BP algorithm (Algorithm~\ref{alg:p-ary_r-BP}) and G-AMP algorithm (Algorithm~\ref{alg:p-ary_G-AMP}) with the input function
\eq{eq:HRTD-input_Spherical}  and output function \eq{eq:HRTD-outputFunction_sign}-\eq{eq:def-H-function}. Again, we limit ourselves to the Bayes optimal case. The only control parameter is in this case $\alpha$. The transition is second order for $p=2$ and first order for $p=3$. See Fig.~\ref{fig:sign_output}.
\begin{figure}[h]
\centering
\includegraphics[width=0.45\textwidth]{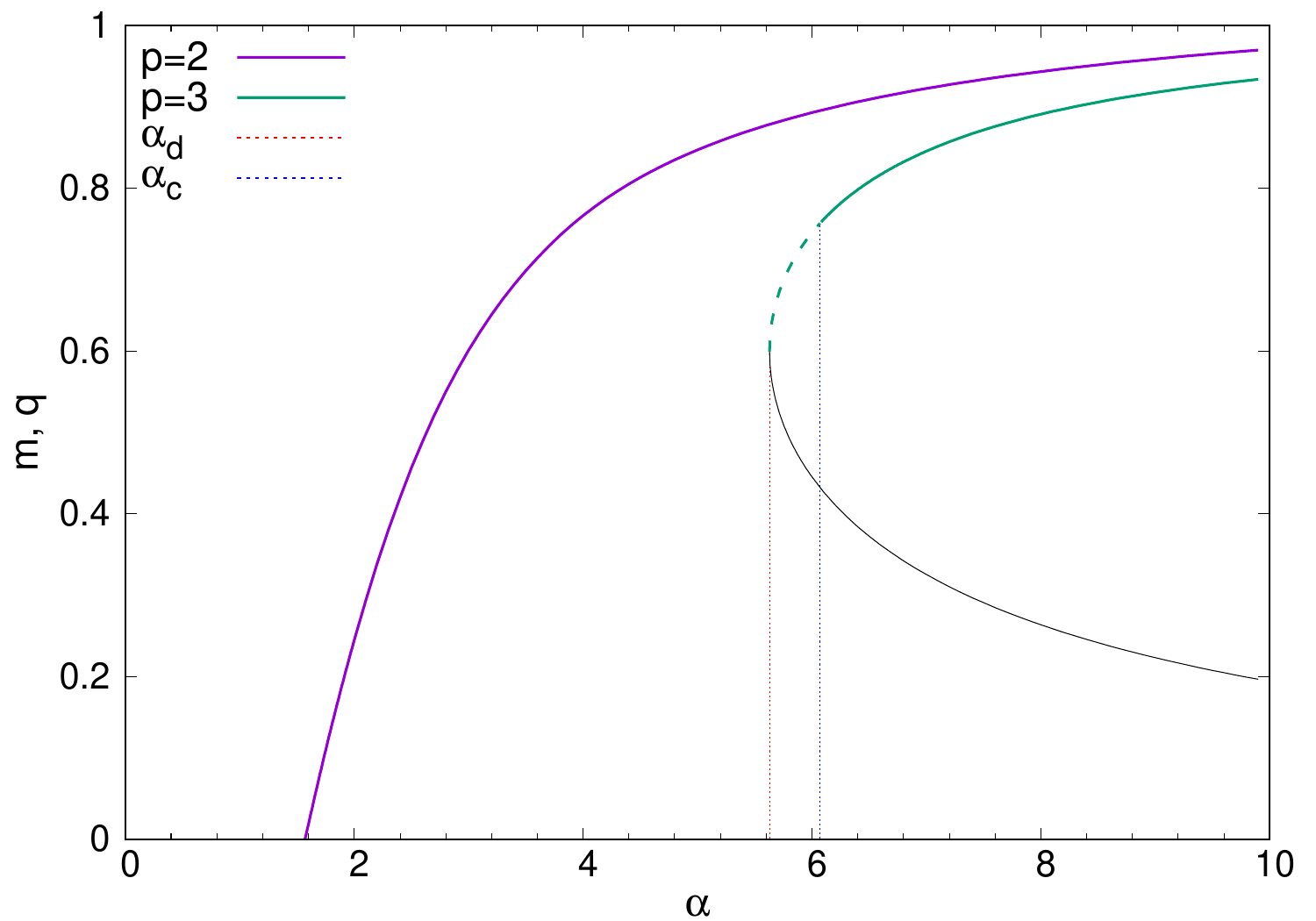}
\caption{Order parameter $m=q$ for the Gaussian prior and sign output. The transition in $\alpha$ is second order for $p=2$ and first order for $p=3$. \blue{The dashed portion of the $p=3$ line indicates a metastable magnetized state associated to the first-order transition. The black continuous line indicate instead a branch of unstable solutions.}}
\label{fig:sign_output}
\end{figure}
In this case, in the context of perfect reconstruction, there are no meaningful threshold at finite $\alpha$ due to the mismatch between the prior and the noise model. 

We report in Fig.~\ref{fig:G_AMP_Gaussian_Sign_p=2} and Fig.~\ref{fig:G_AMP_Gaussian_Sign_p=3} the behavior of the G-AMP using the informative/uninformative initial conditions for $p=2$ and $p=3$ case respectively. The results again compare well with the solution of the SE equations.
\begin{figure}[h]
\centering
\includegraphics[width=0.495\textwidth]{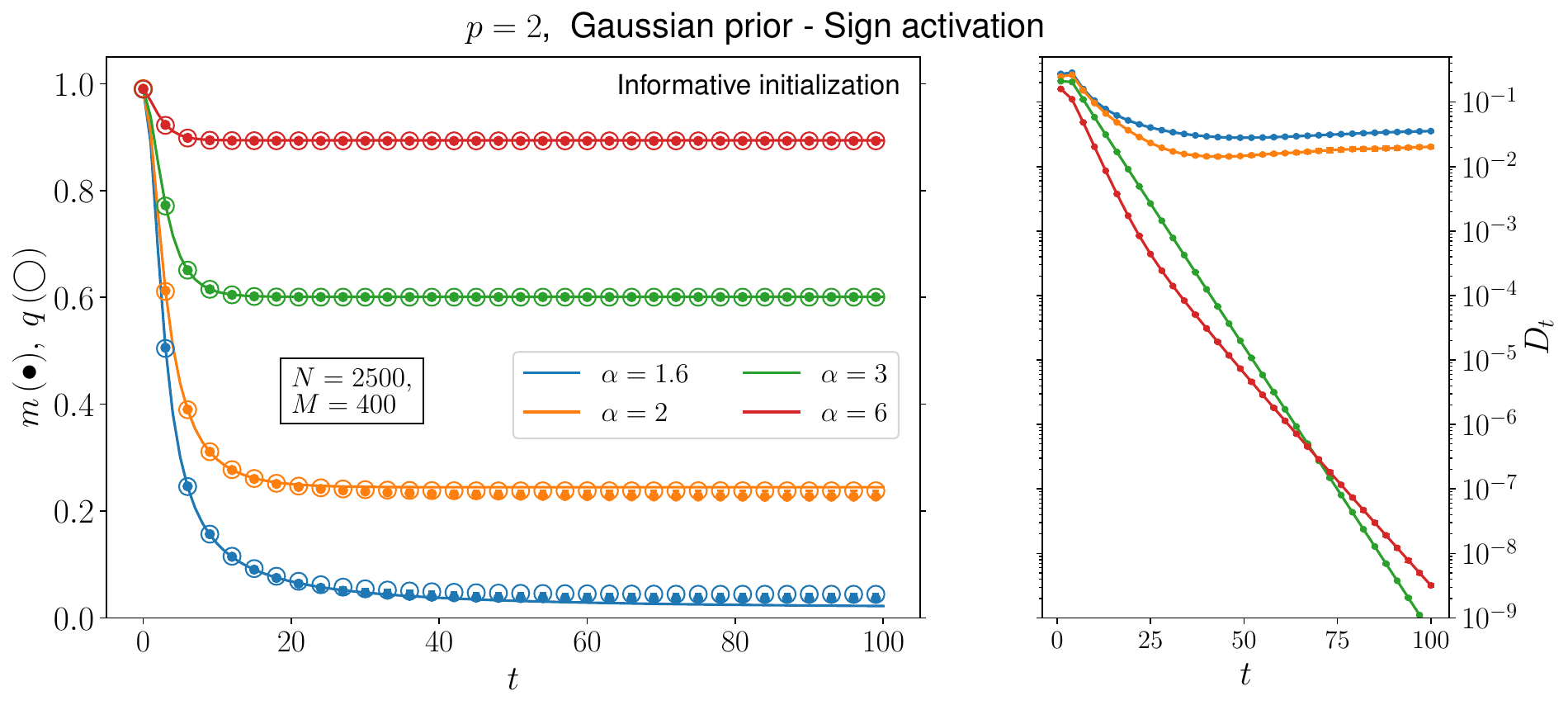}
\includegraphics[width=0.495\textwidth]{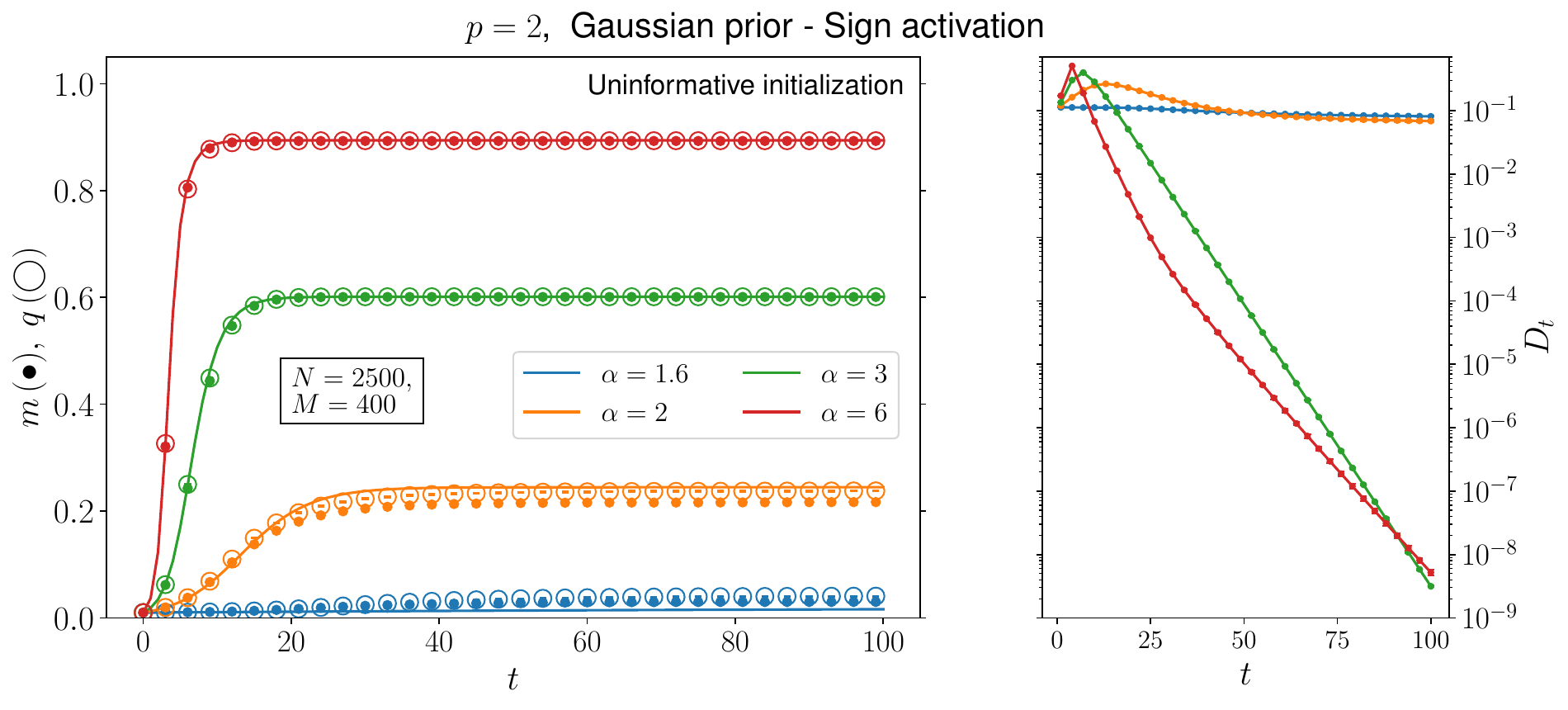}
\caption{Evolution of the order parameters in the G-AMP algorithm averaged over 10 random instances and for two different initializations. Solid lines represent the State Evolution prediction}
\label{fig:G_AMP_Gaussian_Sign_p=2}
\end{figure}
\begin{figure}[h]
\centering
\includegraphics[width=0.6\textwidth]{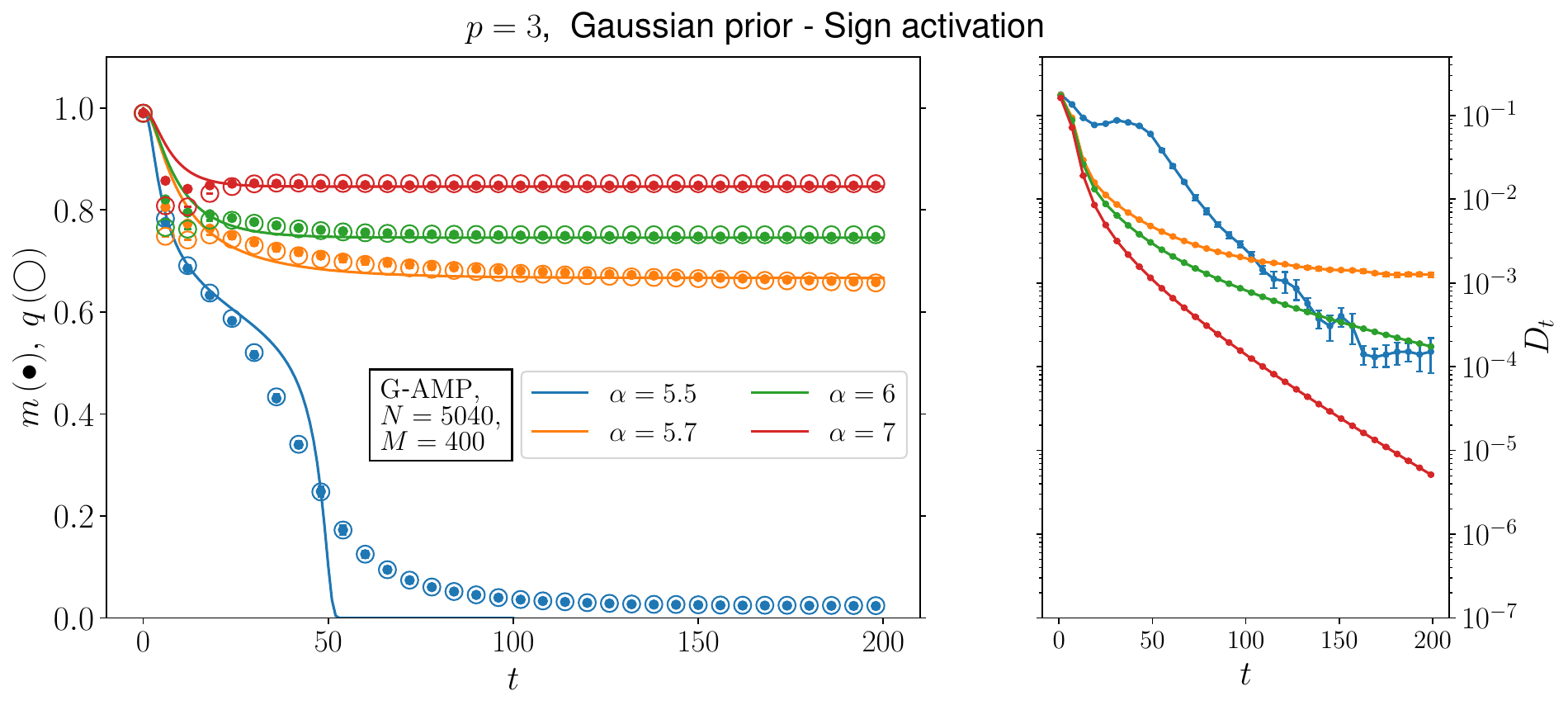}
\caption{Evolution of the order parameters in the G-AMP algorithm on random instances of the problem with $N=5040$, $M=400$. Data is averaged over 5 instances. Solid lines represent the State Evolution predictions. On the right panel, the convergence parameter is plotted.}
\label{fig:G_AMP_Gaussian_Sign_p=3}
\end{figure}

In Fig.~\ref{fig:r_BP_Gaussian_Sign_p=2} in the appendix we show results of
the r-BP algorithm for the $p=2$ case. The comparison with SE equations is good up to $\alpha=3$, though for $\alpha=6$ we observe strong deviations; we also notice that the observable $Q_t=\langle v_{i\mu}^{t}\rangle$ is in this case consistently different from unity. This is presumably due to strong finite $M$ corrections, since some improvement is present when increasing $M$ to $M=300$ as shown in Fig.~\ref{fig:r_BP_Gaussian_Sign_p=2_corrections}.
In Fig.~\ref{fig:G_AMP_Gaussian_Sign_p=2}, we have shown that for $M=400$ the G-AMP algorithm does not suffer from this instability. See also Fig~\ref{fig:random_init} discussed in appendix concerning uninformative initialization and truly random initialization.

\subsection{Dependence of the convergence of algorithms on the choice of $F$}

As discussed in previous sections we found the macroscopic quantities like the order-parameters, equation of states and the free energy are the same for the deterministic and disordered factors $F_{\bs \mu}$ from the theoretical perspective in the dense limit.
However behavior of the message passing algorithms applied on systems with finite $N,c,$ and $M$
can be different between two cases.
For the $p=2$ case (both with Ising/Gaussian priors) we found the message passing algorithms (r-BP and G-AMP) 
do not converge properly with the deterministic model $F_{\bs,\mu}=1$ \eq{eq-F-uniform} and we had to resort to 
the model with random $F_{\bs,\mu}$ \eq{eq-F-random} in numerical analysis. On the other hand we did not encounter such problems in the $p=3$ cases.

One might expect that the residual global symmetries of the problems may affect convergence of the algorithms.
Among the global symmetries listed in sec~\ref{sec-symmetries}, the rotational symmetry exists only in the $p=2$ model
with Gaussian prior and $F_{\bs, \mu}=1$. Such a symmetry does not exist with the Ising prior.
Thus we cannot attribute the reason for the failure of convergence in the $p=2$ model,
which happen with both the Ising/Gaussian priors, entirely to the rotational symmetry.
The absence of the convergence problem
in the $p=3$ case with the deterministic model $F_{\bs,\mu}=1$ suggests that the permutation symmetry can be broken dynamically
by the message passing algorithms. Let us note also that we performed some Monte Carlo simulations (not reported in the present paper)
on the $p=2$ Ising/Gaussian models with $F_{\bs,\mu}=1$ and found that statistical inferences are indeed possible modulo the rotational/permutation symmetries
(these symmetries are not broken in the Monte Carlo simulations of finite size systems in equilibrium).

We speculate that the reason for the difference between the convergence of the message passing algorithms
between the $p=2$ and $p=3$ cases may be attributed to some differences in the
abundance of short loops. In general, message passing algorithms assume that the system is locally tree-like.
Presence of short loops can hamper the convergence of the algorithm.
In the $p=2$ cases, the shortest loops are the triangular loops (See Fig.~\ref{fig:diagrams} a)) while
triangular loops are absent and the shortest loops are rectangular in the $p=3$ model.
In the idealized dense limit $N \gg c \gg 1$ the presence of such short loops become
negligible but may be the values of $N$ and $c$ used in our analysis reported below are not separated enough.

\clearpage
\section{Conclusions}
\label{sec-conclusion}

To summarize, we proposed a scheme of tensor factorization 
based on sparse measurements of tensor components represented by a graphical model which is dense but not globally coupled. We expect that the setup will be useful in cases where substantial amount of data is missing as in recommendation systems heavily used in social network services.
We studied the Bayesian inference of the tensor factorization by statistical mechanics approaches: the replica method and message passing algorithms.

We considered the dense limit $N \gg M \gg 1$ \cite{yoshino2018,yoshino2023spatially} which amounts to consider dense but not fully connected graph. This limit is useful from theoretical point of view in two aspects: 1) it allows exact theoretical analysis
since loop corrections vanish and 2) the r-BP (relaxed belief propagation)
and G-AMP (generalized approximate message passing) become valid in this limit.
\blue{It will be interesting to test our theoretical results
independently by Monte Carlo (MC) sampling.
MC sampling will provide useful insights into the magnitudes of 
the finite $N$ and $M$ corrections.}

From practical point of views, our result on the dense limit will be useful to consider situations where effective rank of tensors is not quite low. As mentioned in sec. \ref{sec-model}, this can happen for instance in the case of facial images \cite{zhao2015bayesian} and in recommendation systems \cite{koren2009matrix}.
Conventional algorithm like alternating least-square, gradient based methods can also
be applied on our problem. It will be very interesting to test our setup for real-world data using various algorithms including the message passing algorithms developed in the present paper.
\blue{
However one would have to take into account the large node-to-node fluctuation of the
  connectivity in real-world data. Our present analysis based on a regular random graph may be
  too naive in this respect.}

There are numerous possibilities to extend our present work. In the present paper we considered
just one specie of vectors $x_{i\mu}$. The theory can be straightforwardly extended to the
case of multi-species $x_{i\mu},y_{i\mu},z_{i\mu},\ldots$. This will be useful for instance in the context of recommendation systems
and dictionary learning $Y=DX$
where one usually consider two species, 'dictionary' $D$ and
  'sparse representation' $X$.

%


\section*{Acknowledgments}

We thank Yoshiyuki Kabashima, Sakata Ayaka, Takashi Takahashi, Koki Okajima
and Lenka Zdeborov{\'a}, for useful discussions.
This work was supported by KANENHI (No. 19H01812), (No. 21K18146), (No. 22H05117),
(No. 18K11463), (No. 17H00764), (No. 22K12179)
 from MEXT, Japan.

\appendix
 \clearpage

\section{Some useful formulas}
In the following we collect some useful formulas.
We will use the short-hand notation \eq{eq-def-Dz}, which reads as,
\beq
    \int\mathcal{D}z\ldots=\int_{-\infty}^{\infty}\frac{dz}{\sqrt{2\pi}}
    e^{-\frac{z^{2}}{2}}\ldots.
\eeq

\begin{lemma}
    For any analytic function $f(h)$ and any constant $C$,
    \begin{equation}
        f(h+C) = e^{C\frac{\partial}{\partial h}}f(h).
    \end{equation}
    \label{lemma:uno}
\end{lemma}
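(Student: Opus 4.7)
The plan is to recognize this identity as nothing more than the operator form of Taylor's theorem, and to derive it by expanding the exponential of the derivative operator as a power series and matching it term-by-term to the Taylor expansion of $f(h+C)$.

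First I would write the formal power series definition of the operator exponential,
\begin{equation}
e^{C\frac{\partial}{\partial h}} = \sum_{n=0}^{\infty} \frac{C^n}{n!}\,\frac{\partial^n}{\partial h^n},
\end{equation}
and apply it termwise to $f(h)$, producing $\sum_{n\geq 0} \frac{C^n}{n!} f^{(n)}(h)$. By the hypothesis that $f$ is analytic, its Taylor series about the point $h$ converges (in a neighborhood of $h$) to $f(h+C)$, namely
\begin{equation}
f(h+C) = \sum_{n=0}^{\infty}\frac{C^n}{n!}\,f^{(n)}(h).
\end{equation}
Comparing the two expressions yields the claimed identity.

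The only subtle step is ensuring that the termwise action of the operator and the termwise Taylor expansion agree, i.e.\ that the two power series may be identified. Under the standing assumption that $f$ is analytic (and that $C$ lies within the radius of convergence of the Taylor series around $h$), this is immediate because both series are absolutely convergent and have identical coefficients; alternatively, if one prefers a purely formal viewpoint, the identity holds at the level of formal power series in $C$, which is the sense in which it is used throughout the paper (where $C$ plays the role of a bookkeeping operator like $\pi_{\bs}^{a}$ acting under derivative symbols). I do not expect any genuine obstacle here; the main care is simply to state the analyticity assumption clearly enough to justify exchanging the sum and the operator action.
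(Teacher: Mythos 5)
Your proof is correct: it is the standard identification of $e^{C\frac{\partial}{\partial h}}$ with the shift operator via the Taylor series of $f$ about $h$, which is exactly the argument the paper relies on (the lemma is stated there without proof, being used only formally as Taylor's theorem in operator form). Your remark about the formal-power-series reading is apt, since in the paper $C$ is indeed often a bookkeeping quantity acting under derivative symbols.
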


\begin{lemma}
    For any analytic function $f(h)$ and any constant $C\geq0$,
    \begin{equation}
        e^{\frac{C}{2}\frac{\partial^2}{\partial h^2}}f(h)=\int\mathcal{D}z f(h\pm\sqrt{C}z).
    \end{equation}
    \label{lemma:due}

\end{lemma}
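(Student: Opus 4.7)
The plan is to prove Lemma \ref{lemma:due} by leveraging Lemma \ref{lemma:uno} together with the Gaussian moment generating identity. First, I would apply Lemma \ref{lemma:uno} with the shift $\pm\sqrt{C}z$ to rewrite the integrand on the right-hand side as
\begin{equation}
f(h\pm\sqrt{C}z) = e^{\pm\sqrt{C}\,z\,\partial_h}\,f(h),
\end{equation}
so that the integration over $z$ and the operator $\partial_h$ decouple, the latter acting as a formal parameter with respect to the $z$-integral.

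Second, I would carry out the Gaussian integration using the elementary identity $\int\mathcal{D}z\,e^{az}=e^{a^{2}/2}$, valid for any $a$ (including operator-valued $a$, treated formally). Substituting $a=\pm\sqrt{C}\,\partial_h$ yields
\begin{equation}
\int\mathcal{D}z\,e^{\pm\sqrt{C}\,z\,\partial_h} \;=\; e^{\tfrac{C}{2}\partial_h^{2}},
\end{equation}
and the sign ambiguity disappears because the operator only depends on $(\pm\sqrt{C})^{2}=C$, consistently with the reflection symmetry $z\to -z$ of the Gaussian measure. Applying this operator to $f(h)$ gives precisely the left-hand side, which closes the argument.

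As a sanity check and to make the manipulation rigorous, I would verify the identity term-by-term using Taylor expansion. Expanding $f(h+\sqrt{C}z)=\sum_{n\ge 0}\frac{(\sqrt{C}z)^{n}}{n!}f^{(n)}(h)$ and using the Gaussian moments $\int\mathcal{D}z\,z^{2k}=(2k)!/(2^{k}k!)$ (odd moments vanish), one finds
\begin{equation}
\int\mathcal{D}z\,f(h+\sqrt{C}z)=\sum_{k\ge 0}\frac{C^{k}}{2^{k}k!}f^{(2k)}(h),
\end{equation}
which coincides with the series expansion of $e^{\tfrac{C}{2}\partial_h^{2}}f(h)$.

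The main obstacle is justifying the interchange of the $z$-integration with the infinite series of derivatives defining the exponential of $\partial_h^{2}$. This is harmless for analytic $f$ of at most Gaussian growth (or, more generally, for $f$ in a class for which the heat semigroup is well defined), and in the statistical-mechanics applications of this paper $f$ always arises as a prior- or likelihood-weighted quantity for which the manipulation is controlled. A fully rigorous alternative would be to observe that both sides solve the heat equation $\partial_C u = \tfrac{1}{2}\partial_h^{2}u$ with initial datum $f(h)$ at $C=0$, and invoke uniqueness within the relevant function class.
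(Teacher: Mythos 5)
Your proof is correct. The paper states Lemma~\ref{lemma:due} without proof (it is listed among the ``useful formulas'' in the appendix), so there is no in-paper argument to compare against; your route --- rewriting $f(h\pm\sqrt{C}z)$ via the shift operator of Lemma~\ref{lemma:uno}, performing the Gaussian integral $\int\mathcal{D}z\,e^{az}=e^{a^{2}/2}$ with operator-valued $a=\pm\sqrt{C}\,\partial_h$, and confirming the result term by term through the Taylor expansion with Gaussian moments $\int\mathcal{D}z\,z^{2k}=(2k)!/(2^{k}k!)$ --- is exactly the standard derivation this identity rests on, and your closing remarks (growth conditions on $f$, or heat-equation uniqueness) correctly identify the only point where formal manipulation needs justification.
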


\begin{lemma}
    For any $m$-times differentiable functions $F(h_1,h_2,\dots,h_n)$ and any positive integer $m$
    \begin{equation}
        \left(\sum_{a=1}^n\frac{\partial}{\partial h_a}\right)^mF(h_1,h_2,\dots,h_n) \Biggl\lvert_{\{h_a=h\}}=\frac{\partial^m}{\partial h^m} F(h,h,\dots,h).
    \end{equation}
    \label{lemma:tre}
\end{lemma}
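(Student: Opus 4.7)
The plan is to prove the identity by induction on $m$, relying only on the chain rule. Define the single-variable function $\phi(h) := F(h, h, \ldots, h)$, so the right-hand side of the statement reads $\phi^{(m)}(h)$. The base case $m=1$ is precisely the multivariable chain rule:
\[
\phi'(h) = \sum_{a=1}^{n} \frac{\partial F}{\partial h_a}(h, \ldots, h) = \left[\sum_{a=1}^{n}\frac{\partial}{\partial h_a} F\right]\bigg|_{\{h_a = h\}},
\]
which is legitimate because $F$ is assumed at least $m$-times differentiable.

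For the inductive step, suppose the identity holds at order $m-1$ for every sufficiently smooth function of $n$ variables, and introduce
\[
\psi(h_1, \ldots, h_n) := \left(\sum_{a=1}^n \frac{\partial}{\partial h_a}\right)^{m-1} F(h_1, \ldots, h_n),
\]
which is itself once continuously differentiable in each argument because $F$ is $m$-times differentiable. By the induction hypothesis applied to $F$ one has $\psi(h,\ldots,h) = \phi^{(m-1)}(h)$. Differentiating this equality in $h$ and applying the base case (chain rule) now to $\psi$ in place of $F$ yields
\[
\phi^{(m)}(h) = \frac{d}{dh}\,\psi(h,\ldots,h) = \sum_{a=1}^n \frac{\partial \psi}{\partial h_a}(h,\ldots,h) = \left[\left(\sum_{a=1}^n \frac{\partial}{\partial h_a}\right)^m F\right]\bigg|_{\{h_a = h\}},
\]
which closes the induction.

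There is honestly no substantive obstacle here: the statement is a pure calculus identity obtained by iterating the chain rule, and the only thing to watch is that one has enough regularity to justify commuting partial derivatives and performing the iterated differentiation, which the hypothesis guarantees. As a consistency check one may alternatively expand both sides combinatorially: the multinomial theorem gives $\bigl(\sum_{a}\partial_{h_a}\bigr)^{m} = \sum_{|\alpha|=m}\binom{m}{\alpha}\partial^{\alpha}$, while iterating the chain rule on $F(h,\ldots,h)$ produces exactly the same multinomial sum after the restriction $\{h_a = h\}$, confirming the identity term by term. The inductive proof is preferable because it is shorter and avoids any explicit appeal to Fa\`a di Bruno's formula.
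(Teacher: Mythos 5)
Your proof is correct. The paper itself states Lemma~\ref{lemma:tre} as a ``useful formula'' without supplying a proof, so there is nothing to compare against: your induction on $m$, with the multivariable chain rule as the base case and the operator $\bigl(\sum_a \partial_{h_a}\bigr)^{m-1}F$ playing the role of the new function in the inductive step, is exactly the standard argument one would write down. The only caveat worth keeping in mind is that the chain rule needs (total) differentiability of $\psi$, not merely existence of its partial derivatives, so strictly one should read ``$m$-times differentiable'' as, say, $C^m$; this is immaterial here since the paper only ever applies the lemma (via Corollary~\ref{coroll:uno}) to analytic functions smoothed by Gaussian convolution.
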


\begin{corollary}
    For any analytic functions $\{f_a(h)\}_{a=1,\dots,n}$ and any constant $C\geq0$,
    \begin{equation}
        e^{\frac{C}{2}\left(\sum_{a=1}^n\frac{\partial}{\partial h_a}\right)^2}\prod_{a=1}^nf_a(h_a)\Biggl\lvert_{\{h_a=0\}}=\int\mathcal{D}z\prod_{a=1}^nf_a(\pm\sqrt{C}z).
    \end{equation}
    \label{coroll:uno}
\end{corollary}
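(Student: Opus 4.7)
The plan is to reduce the corollary to a direct chain of applications of Lemma \ref{lemma:due} and Lemma \ref{lemma:tre}. The first step is to expand the operator $e^{(C/2)(\sum_{a=1}^n \partial/\partial h_a)^2}$ as a power series in the operator $(\sum_{a=1}^n \partial/\partial h_a)^2$, and then to apply Lemma \ref{lemma:tre} term by term on the factorized function $F(h_1,\ldots,h_n) = \prod_{a=1}^n f_a(h_a)$. The condition $\{h_a = 0\}$ in the evaluation of the corollary can be replaced by the more general $\{h_a = h\}$ for an auxiliary variable $h$ which will be set to zero only at the end. This allows Lemma \ref{lemma:tre} to convert each power $(\sum_a \partial/\partial h_a)^{2k}$ acting on $F$ into $\partial^{2k}/\partial h^{2k}$ acting on the single-variable function $G(h) \equiv F(h,h,\ldots,h) = \prod_{a=1}^n f_a(h)$.

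After this reduction, the left-hand side becomes $\left. e^{(C/2) \partial^2/\partial h^2} G(h) \right|_{h=0}$, where I have used the analyticity of $f_a$ to interchange the exponential series with Lemma \ref{lemma:tre} term by term. The second step is to apply Lemma \ref{lemma:due} directly to this expression, which gives
\begin{equation}
\left. e^{(C/2) \partial^2/\partial h^2} G(h) \right|_{h=0} = \left. \int \mathcal{D}z \, G(h \pm \sqrt{C} z) \right|_{h=0} = \int \mathcal{D}z \prod_{a=1}^n f_a(\pm \sqrt{C} z),
\end{equation}
which is precisely the desired right-hand side.

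The only delicate point will be the term-by-term interchange of the infinite series expansion of the exponential with Lemma \ref{lemma:tre} and with the evaluation at $\{h_a = h\}$. Since the $f_a$ are assumed analytic (hence all derivatives exist and the relevant Taylor series converge), and since Lemma \ref{lemma:tre} is stated for arbitrary positive integer $m$, this interchange is justified by the usual argument for commuting a convergent power series of differential operators with analytic functions. No other obstacles are expected, as the whole content of the corollary is a trivial composition of the two previous lemmas once the factorized structure of $F$ is exploited.
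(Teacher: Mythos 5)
Your proposal is correct and follows precisely the route the paper intends: apply Lemma~\ref{lemma:tre} term by term in the power-series expansion of the exponential operator to collapse $\left(\sum_a \partial/\partial h_a\right)^{2k}$ acting on the product onto $\partial^{2k}/\partial h^{2k}$ acting on the diagonal function $\prod_a f_a(h)$, and then invoke Lemma~\ref{lemma:due} to obtain the Gaussian integral representation. Nothing is missing; the interchange of the series with the diagonal evaluation is justified as you say by the analyticity assumption already built into the lemmas.
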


\begin{lemma}
For any positive integer $n$,
    \begin{equation}
        \int{\rm d}wW(w)\mathcal{D}z_0\frac{\int\mathcal{D}z_1W^{(n)}(\Xi)}{\int\mathcal{D}z_1W(\Xi)}=0,
    \end{equation}
where
    \begin{equation}
    \Xi\equiv w - Az_0 - Az_1.
    \end{equation}
    \label{lemma:quattro}
\end{lemma}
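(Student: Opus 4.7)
\textbf{Proof plan for Lemma \ref{lemma:quattro}.}
The crucial feature of the integrand is that the coefficients of $z_0$ and $z_1$ in $\Xi = w - A z_0 - A z_1$ are \emph{equal}: this is precisely the Bayes-optimal condition (the specialization $m=q$, $Q_0=Q$) under which the Gaussian smoothings of $W$ with respect to $z_0$ and $z_1$ coincide, and the plan is to exploit this symmetry via a convolution identity. I would introduce the Gaussian-smoothed density $\tilde{W}(y) := \int \mathcal{D}z\, W(y - A z) = (W \ast \phi_A)(y)$, with $\phi_A(v) = (A\sqrt{2\pi})^{-1} e^{-v^2/(2A^2)}$. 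Because convolution commutes with differentiation, $\int \mathcal{D}z_1\, W^{(n)}(w - A z_0 - A z_1) = \tilde{W}^{(n)}(w - A z_0)$, so the integrand in the lemma collapses to $\tilde{W}^{(n)}(w - A z_0)/\tilde{W}(w - A z_0)$.

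Next I would change variable $y = w - A z_0$ in the $z_0$-integral so that $\int \mathcal{D}z_0\, f(w - A z_0) = \int dy\, \phi_A(w-y)\, f(y)$, giving
\[
I = \int dw\, W(w) \int dy\, \phi_A(w - y)\, \frac{\tilde{W}^{(n)}(y)}{\tilde{W}(y)}.
\]
Then Fubini allows swapping the integration order, and the $w$-integral produces exactly the same convolution as the denominator,
\[
\int dw\, W(w)\, \phi_A(w - y) = (W \ast \phi_A)(y) = \tilde{W}(y),
\]
which cancels against $\tilde{W}(y)$ in the denominator. One is left with
\[
I = \int dy\, \tilde{W}^{(n)}(y) = \bigl[\tilde{W}^{(n-1)}(y)\bigr]_{-\infty}^{+\infty} = 0,
\]
the last equality being the fundamental theorem of calculus applied to the smooth, rapidly decaying function $\tilde{W}$ (smoothness and decay come for free from the Gaussian convolution).

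The main obstacle here is more conceptual than technical: the delicate cancellation rests entirely on the equality of the two coefficients $A$ in $\Xi$, i.e., on the Bayes-optimal condition; without this equality, the convolution produced by integrating against $W(w)$ would differ from the denominator $\tilde{W}(y)$, and $I$ would not vanish. Mild regularity of $W$ (so that Fubini applies and $\tilde{W}^{(k)}$ vanishes at $\pm \infty$) is automatic once $W$ is a probability density, since $\tilde{W} = W \ast \phi_A$ is smooth and Schwartz-like in its tails; hence no essential analytical difficulty remains.
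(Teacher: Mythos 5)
Your argument is correct and is in essence the same as the paper's own proof: the paper performs the changes of variables $\zeta_1=\Xi$ and $\zeta_0=w/A-z_0$, observes that the resulting $w$-integral coincides with the denominator and cancels it, and is left with $\int d\zeta\,W^{(n)}(\zeta)=0$ by the fundamental theorem of calculus — exactly the cancellation you obtain in convolution language, where $\int dw\,W(w)\phi_A(w-y)=\tilde W(y)$ kills the denominator and $\int dy\,\tilde W^{(n)}(y)=0$ finishes. Your phrasing via the Gaussian-smoothed density $\tilde W=W\ast\phi_A$ is a clean repackaging of the same mechanism (the equal coefficients $A$ of $z_0$ and $z_1$ being the crucial Bayes-optimal ingredient in both), so no gap to report.
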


        \begin{lemma}\label{thm:derivations:RS_solutions:spherical:int_zero}
          If $\beta = 1$ and $m = q$, for any positive integer $n$,
          \begin{equation*}
            \int \dd{w_*} W_*(w_*)
              \int \gdd{z_0}
              \frac{\int \gdd{z_1} W_*^{(n)}(\Xi)}{\int \gdd{z_1} W_*(\Xi)}
            = 0.
          \end{equation*}
        \end{lemma}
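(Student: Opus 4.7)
The plan is to reduce the double integral to a single integral of a total derivative via Gaussian convolution identities, exploiting that under the Bayes-optimal conditions $\beta=1$ and $m=q$ the coefficients of $z_0$ and $z_1$ in $\Xi$ are equal.

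First I would introduce the shorthand
\[
\tilde{W}(u) \;\equiv\; \int \mathcal{D}z_1\, W_*(u - Az_1),
\]
so that with $u = w_* - Az_0$ the denominator of the integrand is precisely $\tilde{W}(u)$, while the numerator is
\[
\int \mathcal{D}z_1\, W_*^{(n)}(u - Az_1) \;=\; \frac{d^{n}}{du^{n}}\tilde{W}(u) \;=\; \tilde{W}^{(n)}(u),
\]
by differentiating under the integral sign. This reduces the integrand to $\tilde{W}^{(n)}(u)/\tilde{W}(u)$, a function of $u=w_*-Az_0$ alone.

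Next I would change variables from $z_0$ to $u$ (so $\mathcal{D}z_0 = \frac{du}{A\sqrt{2\pi}}\exp[-(w_*-u)^2/(2A^2)]$) and swap the order of integration, obtaining
\[
\int \! du\; \frac{\tilde{W}^{(n)}(u)}{\tilde{W}(u)} \int \! dw_*\; W_*(w_*)\, \frac{e^{-(w_*-u)^2/(2A^2)}}{A\sqrt{2\pi}}.
\]
The inner $w_*$-integral, after the substitution $w_* = u + As$, becomes $\int \mathcal{D}s\, W_*(u+As)$, which by the symmetry $z_1 \leftrightarrow -z_1$ of the Gaussian measure equals $\tilde{W}(u)$. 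This is exactly the Nishimori-type cancellation enforced by the conditions $\beta=1,\ m=q$: the convolution of the teacher distribution with the Gaussian kernel of width $A$ reproduces the student-side convolution. The ratio collapses and we are left with
\[
I_n \;=\; \int du\, \tilde{W}^{(n)}(u) \;=\; \bigl[\tilde{W}^{(n-1)}(u)\bigr]_{-\infty}^{\infty} \;=\; 0
\]
for $n\geq 1$, by the fundamental theorem of calculus.

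The only genuine obstacle is the justification of the boundary term vanishing, i.e.\ that $\tilde{W}^{(n-1)}(u)\to 0$ as $|u|\to\infty$. This follows from the mild regularity assumption that $W_*$ and its derivatives are integrable and decay at infinity (which holds for every noise model considered in the paper, e.g.\ the Gaussian noise \eqref{eq-gaussian-noise-replica}); the Gaussian convolution defining $\tilde{W}$ then inherits and even improves this decay. Apart from this, the proof is essentially an algebraic manipulation, and the conceptual content lies entirely in identifying the Bayes-optimal cancellation between teacher and student Gaussian widths.
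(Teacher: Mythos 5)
Your proof is correct and follows essentially the same route as the paper's: a change of variables plus Fubini makes the teacher's measure reproduce the student denominator exactly (the Bayes-optimal cancellation), which collapses the ratio and leaves the integral of an $n$-th derivative that vanishes by decay at infinity --- the paper ends with $\int W_*^{(n)}=0$ while you end with $\int \tilde{W}^{(n)}=0$ for the Gaussian-smoothed $\tilde{W}$, which is the same boundary-term argument. Packaging the numerator and denominator as $\tilde{W}^{(n)}$ and $\tilde{W}$ is a slightly cleaner bookkeeping, but the key identity and final step coincide with the paper's proof.
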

        \begin{proof}
          Change variables from $z_1$ to $\zeta_1 = w_* - A z_0 - A z_1$.
          \begin{equation*}
            \lhs = \int \dd{w_*} W_*(w_*)
              \int \gdd{z_0}
              \frac{
                \int \dd{\zeta_1} e^{
                    - \frac{1}{2 A^2} \zeta_1^2 + \frac{1}{A^2} (w_* - A z_0) \zeta_1
                  }
                  W_*^{(n)}(\zeta_1)
              }{
                \int \dd{\zeta_1} e^{
                    - \frac{1}{2 A^2} \zeta_1^2 + \frac{1}{A^2} (w_* - A z_0) \zeta_1
                  }
                  W_*(\zeta_1)
              }.
          \end{equation*}
          Change variables again from $z_0$ to $\zeta_0 = \frac{w_*}{A} - z_0$.
          \begin{equation*}
            \begin{aligned}
              \lhs
              &= \int \frac{\dd{\zeta_0}}{\sqrt{2 \pi}} e^{- \frac{1}{2} \zeta_0^2}
                \qty(
                  \int \dd{w_*} e^{- \frac{1}{2 A^2} w_*^2 + \frac{1}{A} \zeta_0 w_*}
                    W_*(w_*)
                ) \\
                &\qquad \times \frac{
                  \int \dd{\zeta_1} e^{
                      - \frac{1}{2 A^2} \zeta_1^2 + \frac{1}{A} \zeta_0 \zeta_1
                    }
                    W_*^{(n)}(\zeta_1)
                }{
                  \int \dd{\zeta_1} e^{
                      - \frac{1}{2 A^2} \zeta_1^2 + \frac{1}{A} \zeta_0 \zeta_1
                    }
                    W_*(\zeta_1)
                }.
            \end{aligned}
          \end{equation*}
          The denominator can be reduced and it becomes
          \begin{equation*}
            \begin{aligned}
              \lhs
              &= \int \frac{\dd{\zeta_0}}{\sqrt{2 \pi}} e^{- \frac{1}{2} \zeta_0^2}
                \int \dd{\zeta_1} e^{
                    - \frac{1}{2 A^2} \zeta_1^2 + \frac{1}{A} \zeta_0 \zeta_1
                  }
                  W_*^{(n)}(\zeta_1) \\
              &= \int \dd{\zeta_1} W_*^{(n)}(\zeta_1)
                \int \frac{\dd{\zeta_0}}{\sqrt{2 \pi}}
                  e^{- \frac{1}{2} \qty(\zeta_0 - \frac{1}{A} \zeta_1)^2} \\
              &= \int \dd{\zeta_1} W_*^{(n)}(\zeta_1) \\
              &= \qty[W_*^{(n - 1)}(\zeta_1)]_{\zeta_1 = - \infty}^\infty.
            \end{aligned}
          \end{equation*}
          We expect that $W_*(w_*)$ is a well-behaved probability distribution, and its derivatives converge to zero at infinity.
          Therefore
          \begin{equation*}
            \lhs = 0.
          \end{equation*}
        \end{proof}

\begin{lemma}
\label{lemma:determinant-RS}

Given a matrix of size $n\times n$
in the form$ M_{ab}=M_{1}\delta_{ab}+M_{2}$, its determinant is obtained
as,
\beq
{\rm det}M=M_{1}^{n}{\rm det}(\delta_{ab}+M_{2}/M_{1})=M_{1}^{n}(1+nM_{2}/M_{1})
\eeq
    
\end{lemma}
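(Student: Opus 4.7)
The plan is to recognize that $M$ is a rank-one perturbation of a scalar multiple of the identity, namely $M = M_1 I + M_2 J$, where $I$ is the $n \times n$ identity matrix and $J$ is the $n \times n$ all-ones matrix (the outer product $\mathbf{1}\mathbf{1}^\top$ of the all-ones vector with itself). Once written in this form, the determinant is a one-line spectral calculation, and the factoring $\det M = M_1^n \det(I + (M_2/M_1) J)$ in the statement is precisely what one obtains by pulling out $M_1$ from each row (assuming $M_1 \neq 0$; the degenerate case $M_1 = 0$ is handled separately since then $M$ has rank one at most and the identity holds trivially).

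Next, I would diagonalize $J$. Since $J = \mathbf{1}\mathbf{1}^\top$ has rank one, its eigenvalues are $n$ (with eigenvector $\mathbf{1}$) and $0$ (with multiplicity $n-1$, corresponding to any vector orthogonal to $\mathbf{1}$). Consequently, the eigenvalues of $I + (M_2/M_1) J$ are $1 + n M_2/M_1$ (once) and $1$ (with multiplicity $n-1$). Taking the product of eigenvalues yields
\begin{equation*}
\det\!\bigl(I + (M_2/M_1) J\bigr) = 1 + n M_2/M_1,
\end{equation*}
which multiplied by $M_1^n$ gives the claimed formula.

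As an alternative, which avoids the $M_1 = 0$ caveat and is perhaps cleaner, one can invoke the matrix determinant lemma $\det(A + u v^\top) = \det(A) \, (1 + v^\top A^{-1} u)$ directly with $A = M_1 I$, $u = M_2 \mathbf{1}$, and $v = \mathbf{1}$, giving $\det M = M_1^n (1 + n M_2/M_1)$ immediately for $M_1 \neq 0$, and then extend by continuity.

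There is no real obstacle here: the lemma is a textbook identity for rank-one perturbations. The only subtle point is the degenerate case $M_1 = 0$, which is easily dispatched. I would favor the spectral argument for transparency, since it makes explicit why only the single eigenvalue $1 + n M_2/M_1$ is modified from the base $M_1^n$.
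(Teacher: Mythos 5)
Your proof is correct: writing $M = M_1 I + M_2 J$ with $J$ the all-ones (rank-one) matrix and reading off the spectrum $\{M_1 + nM_2,\ M_1,\dots,M_1\}$, or equivalently invoking the matrix determinant lemma, gives exactly $\det M = M_1^{n-1}(M_1 + nM_2) = M_1^n(1+nM_2/M_1)$. The paper states this lemma as a standard identity without providing a proof, and your argument is precisely the textbook justification it implicitly relies on, including the sensible handling of the $M_1=0$ degenerate case by continuity (or by noting the polynomial form $M_1^{n-1}(M_1+nM_2)$).
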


\begin{lemma}
    \label{lemma:determinant}

With $A$ and $C$ being certain square (sub)matrices, we have the following formula for the determinant of a square matrix.
\beq
    {\rm det} \left (
    \begin{array}{cc}
      A & B
      \\
      B^{t} & C
    \end{array}
    \right )
    ={\rm det} A {\rm det}(C-BA^{-1}B^t)
    \eeq

\end{lemma}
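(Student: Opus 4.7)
The plan is a single application of block Gaussian elimination (the Schur complement factorization). Concretely, I would first exhibit the block-$LDU$ decomposition
\[
\begin{pmatrix} A & B \\ B^{t} & C \end{pmatrix}
= \begin{pmatrix} I & 0 \\ B^{t} A^{-1} & I \end{pmatrix}
\begin{pmatrix} A & 0 \\ 0 & C - B^{t} A^{-1} B \end{pmatrix}
\begin{pmatrix} I & A^{-1} B \\ 0 & I \end{pmatrix},
\]
which is well defined because $A$ is assumed invertible (as signalled by the appearance of $A^{-1}$ on the right-hand side of the lemma) and is verified by a direct block multiplication.

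Taking determinants of both sides and using multiplicativity, the two outer unipotent block-triangular factors contribute determinant $1$ each, since their diagonal blocks are identity matrices. For the middle block-diagonal factor, the elementary identity $\det\mathrm{diag}(A,D) = \det A\cdot \det D$---which itself follows from the Leibniz formula, any permutation giving a nonzero term being forced to preserve the two index blocks---yields $\det A\cdot \det(C - B^{t} A^{-1} B)$. Combining the three pieces gives the claim.

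The argument is essentially two lines and presents no real obstacle. The only point worth flagging is a dimensional/notational one: if $A$ is $k\times k$ and $C$ is $\ell\times \ell$, then $B$ must be $k\times \ell$ and the well-defined Schur complement is $C - B^{t} A^{-1} B$, whereas the lemma as stated writes $C - B A^{-1} B^{t}$. The two expressions agree up to relabelling the role of $B$ and $B^{t}$, and in the application to \eqref{eq-f0-gaussian-RS}---where $A = Q_{0}$ is a scalar and $B = (m,\ldots,m)$ is a row vector---the expression $B^{t} A^{-1} B$ is the rank-one $n\times n$ matrix with all entries $m^{2}/Q_{0}$, which is precisely what is used there. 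One could equivalently perform the dual elimination starting from $C$, producing the symmetric identity $\det C \cdot \det(A - B C^{-1} B^{t})$; the form in the lemma corresponds to eliminating $A$ first.
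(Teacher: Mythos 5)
Your block-LDU (Schur-complement) factorization argument is correct and is the canonical derivation; the paper states this lemma without proof, so there is no alternative route to compare against. Your dimensional remark is also accurate: for $A$ of size $k\times k$ and $C$ of size $\ell\times\ell$ the Schur complement is $C-B^{t}A^{-1}B$, which is exactly the form used in the application to \eq{eq-f0-gaussian-RS}, where $A=Q_{0}$ is a scalar, $B=(m,\ldots,m)$ is a row vector, and the correction $m^{2}/Q_{0}$ fills the $n\times n$ block.
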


\section{Saddle point integration in dense limit $N \gg M \gg 1$}
        \label{appendix-sp-indegration}

In the replica theory discussed in sec~\ref{sec-replica} we consider
integrals  of the form (see \eq{eq-z-1+n-and-F} and \eq{eq-hat-F})
\beqn
I=\int \prod_{i=1}^{N} dx_{i} e^{-M f(x_{1},x_{2},\ldots,x_{N})}.
\eeqn
in the limit where both $N$ and $M$ are large. 
In particular we are interested with "free-energy density" (see \eq{eq-def-f})
in the dense limit $N \gg M \gg 1$,
\beq
\lim_{M \to \infty}\lim_{N \to \infty} \frac{1}{MN}\log I
\eeq
Although we do not have fully rigorous justification of the method,
we present below a sketch of the method.

We may evaluate the integral via saddle point method assuming $M \gg 1$ as,
\beqn
I &=& e^{-Mf(x_{1}^{*},x_{2}^{*},\ldots,x_{N}^{*})}
\int \prod_{i=1}^{N} dx_{i} e^{-\frac{M}{2}\sum_{i,j}H_{ij}
(x_{i}-x^{*}_{i})(x_{j}-x^{*}_{j}) 
+\ldots}
\eeqn
where the saddle point $(x^{*}_{1},x^{*}_{2},\ldots,x^{*}_{N})$ is determined by

\beq
\frac{\partial}{\partial x_{i}}\left. f (x_{1},x_{2},\ldots,x_{N}) \right |_{(x_{1},x_{2},\ldots,x_{N})=(x^{*}_{1},x^{*}_{2},\ldots,x^{*}_{N})}
=0
\eeq
and $H$ is the Hessian matrix defined as
\beq
H_{ij}=\left. \frac{\partial^{2}f}{\partial x_{i}\partial x_{j}}\right |_{(x_{1},x_{2},\ldots,x_{N})=(x^{*}_{1},x^{*}_{2},\ldots,x^{*}_{N})}
\eeq
Suppose that the eigen modes of the Hessian matrix verify
\beq
\sum_{j=1}^{N}H_{ij}u_{j\mu}=\lambda_{\mu}u_{i\mu} 
\qquad i=1,2,\ldots,N \qquad \mu=1,2,\ldots,N
\eeq
and that the eigen values $\lambda_{\mu}$ ($\mu=1,2,\ldots,N$)
associated with eigen vectors $\bm{u}_\mu=(u_{1\mu},u_{2\mu},\ldots,u_{N\mu})$
are all non-negative.
Then we find
\beqn
&& \frac{1}{MN}\log I - \frac{1}{N}(-f)(x_{1}^{*},x_{2}^{*},\ldots,x_{N}^{*}) =
\frac{1}{MN}\log \sqrt{\frac{(2\pi)^{N}}{M {\rm det}H}}+\ldots \nonumber \\
&&= \frac{1}{2M}\log (2\pi)-\frac{1}{2N}\frac{\log M}{M}-\frac{1}{2M}\frac{1}{N} \sum_{\mu=1}^{N}\log \lambda_{\mu}+ \ldots
\eeqn
For large but finite $M$ the terms on the r.h.s which represent corrections to the saddle point are finite. We consider dense limit $N \gg M \gg 1$ so we first 
consider $N \to \infty$. 
We find the 2nd term vanishes but other terms remain finite.
Next by considering $M \to \infty$ the 1st, 3rd
as well as higher order terms vanish.
Thus we find
\beq
\lim_{M \to \infty}\lim_{N \to \infty}\frac{1}{MN}\log  I=-
\lim_{N \to \infty} \frac{1}{N}f(x_{1}^{*},x_{2}^{*},\ldots,x_{N}^{*})
\eeq

In the replica theory (sec~\ref{sec-replica}),
we obtain the free-energy functional $F[Q]$
(see \eq{eq-F-total-summary} and \eq{eq-f-ex})
with $Q_{i}$ allowed to vary in 'space' $i=1,2,\ldots,N$.
Although the spatially uniform solution \eq{eq-homogeneous-ansatz} is enough
for the specific problems we considered in the present paper (see sec.~\ref{sec-result}),
the generic free-energy functional can become useful
in situations where spatial inhomogeneity is expected.
For example it may be useful to consider strong fluctuations of
connectivity mentioned in the conclusion.
Indeed in \cite{yoshino2020complex,yoshino2023spatially}
(see also \cite{tomita2025replicated} for an application to the  structural glass problem),
one dimensional variation of the order parameter
were analyzed based on generic free-energy expression constructed in similar manners
as in the present paper.

        \section{Analysis of cumulant expansion}
        \label{appendix-cumulant-expansion}

In the following we explain
the cumulant expansion discussed in 
sec.~\ref{sec-interaction-part-of-free-energy}
which is used to extract the interaction part of the replicated free-energy.

        \subsection{Order $\lambda$}

At the lowest order the contribution can be represented by the diagrams shown in Fig.\ref{fig:diagrams} a). We readily find using \eq{eq-averaging-with-epsilon}.
\beq
\langle\pi^{a}_{\bs}\rangle_{\epsilon,0}
=\sum_{\mu=1}^{M}\frac{\lambda}{\sqrt{M}} F_{\bs,\mu}\langle\prod_{j\in bs_{1}}x_{j\mu}^{a}\rangle_{\epsilon,0}=0
\eeq
because of the reflection symmetry
$P_{\rm pri.}(x)=P_{\rm pri.}(-x)$ which holds in both the Ising and Gaussian priors that we consider in the present paper
(see sec~\ref{sec-symmetries}).

\subsection{Order $\lambda^{2}$}

At the next order in the expansion \eq{eq-cumulant-expansion-interaction} we find
\beqn
&& \langle
\pi^{a}_{\bs_{1}}
\pi^{b}_{\bs_{2}}
\rangle_{\epsilon,0}
-
\langle
\pi^{a}_{\bs_{1}}
\rangle_{\epsilon,0}
\langle
\pi^{b}_{\bs_{2}}
\rangle_{\epsilon,0} \nonumber \\
&=&\frac{\lambda^{2}}{M}\sum_{\mu_{1}=1}^{M}\sum_{\mu_{2}=1}^{M}F_{\bs_{1},\mu_{1}}F_{\bs_{2},\mu_{2}}
\underbrace{\left[
\langle
\prod_{j_{1} \in \bs_{1}}x_{j_{1}\mu_{1}}^{a}
\prod_{j_{2} \in \bs_{1}}x_{j_{2}\mu_{2}}^{b}
\rangle_{\epsilon,0}
-
\langle
\prod_{j_{1} \in \bs_{1}}x_{j_{1}\mu_{1}}^{a}
\rangle_{\epsilon,0}
\langle
\prod_{j_{2} \in \bs_{1}}x_{j_{2}\mu_{2}}^{b}
\rangle_{\epsilon,0}
\right]}_{\delta_{\bs_{1},\bs_{2}}\delta_{\mu_{1},\mu_{2}}\prod_{i \in \bs_{1}}Q_{i}^{ab}} \nonumber \\
&=& \delta_{\bs_{1},\bs_{2}}\frac{\lambda^{2}}{M}\sum_{\mu=1}^{M}
F_{\bs_{1},\mu}^{2}
\prod_{i \in \partial\bs_{1}}Q_{i}^{ab}
=\delta_{\bs_{1},\bs_{2}}\lambda^{2}\prod_{i \in \partial\bs_{1}}Q_{i}^{ab}
\eeqn

To derive the above equation we used
\beq
\langle x^{a}_{i\mu} x^{b}_{j\nu} \rangle_{\epsilon,0}=\delta_{ij}\delta_{\mu\nu}Q_{i}^{ab}
\eeq
which follows from  \eq{eq-def-G0}, \eq{eq-averaging-with-epsilon} and \eq{eq-epsilon0}. We also assumed
\beq
\frac{1}{M}\sum_{\mu=1}^{M}F^{2}_{\bs,\mu}=1
\eeq
for the factor $F_{\bs,\mu}$ introduced in sec~\ref{sec-techinical-assumptions}.
This trivially holds for
the 'deterministic model' $F_{\bs,\mu}=1$  but also holds
for the 'random model'
where $F_{\bs,\mu}$ is an iid random variable with unit variance.

This is a type (A) contribution
 discussed in sec.~\ref{sec-interaction-part-of-free-energy}
which can be represented by doubling the diagrams shown in
Fig.\ref{fig:diagrams} a) : two diagrams associated with function nodes $\bs_{1}$ and $\bs_{2}$ which are identical but carrying two replicas $a$ and $b$.
Equivalently this also corresponds to diagrams Fig.\ref{fig:diagrams} b) with two function nodes carrying replicas $a$ and $b$ under the condition 
that the variable nodes 'j' and 'k' are identical (otherwise
the contribution becomes zero by the same symmetry reason as for the vanishing of $O(\lambda)$ terms).

So far we found $O(\lambda)$ term in the expansion
\eq{eq-cumulant-expansion-interaction} is zero while $O(\lambda^{2})$ is non-zero.
Up to this order, it is natural to choose $A=\lambda^{2}$ in the Plefka expansion 
\eq{eq-G-plefka-expansion} and \eq{eq-F-and-epsilon-plefka-expansion}.
Using the above results in
\eq{eq-cumulant-expansion-interaction-0}, \eq{eq-cumulant-expansion-interaction}
and \eq{eq-F-G-relations}) we readily find,
\beq
-\beta \hat{F}_{1}[\hat{Q}][\partial/\partial h_{\bs}^{a}]=
\frac{1}{2} \sum_{\bs} \sum_{a,b=0}^{n}
\prod_{j \in \partial\bs}Q_{j}^{ab} \frac{\partial^{2}}{\partial h_{\bs}^{a}\partial h_{\bs}^{b}}
\label{eq-F1-appendix}
\eeq

\subsection{Order $\lambda^{3}$}
\label{section-order-lambda-3}

At $O(\lambda^{3})$ we find terms which can be represented as the 'triangular' diagrams shown in
Fig.~\ref{fig:diagrams} c) with the three function nodes '1', '2' and '3' carrying three
replicas $a$, $b$ and $c$. Notice that the function nodes adjacent to each other
(joined by the variable nodes in between) exhibit a 'triangle' which is the shortest loop.
Each function node is associated with a $\pi$ and carries a factor $1/\sqrt{M}$.
There is one common running index $\mu$ for the vector component which produces a factor $M$.
For example the triangular diagram for $p=2$ and $p=4$ contains three and six variable nodes
so that they yield
\beq
\left(\frac{1}{\sqrt{M}}\right)^{3}MQ_{i}^{ab}Q_{j}^{bc}Q_{k}^{ca}
\qquad
\left(\frac{1}{\sqrt{M}}\right)^{3}MQ_{i}^{ab}Q_{j}^{bc}Q_{k}^{ca}Q_{l}^{ab}Q_{m}^{bc}Q_{n}^{ca} 
\eeq
with $M=c/\alpha$ with $\alpha=O(1)$.

An important feature is that all function nodes carry a replica while all variable nodes carry two replicas so that a variable node participating the triangle
carries a replica overlap like $Q^{ab}$. For this to happen
with 3 function nodes,  $p$ has to be an even number. Note that for odd $p$ models, including $p=3$, similar single loop diagrams in which each variable node carries a replica overlap like $Q^{ab}$
appear first at $O(\lambda^{4})$ (rectangle).
Generalization of the following discussion to such cases is straightforward.


Now let us consider whether the triangular diagrams (assuming $p$ is even)
give relevant contributions to the free-energy.
To this end let us consider the set of such diagrams containing a function node, say '1'.
It is adjacent to $p$ variable nodes. Take one of them and call it as 'i'.
The variable node 'i' has $c-1$ adjacent function nodes other than '1'. Choose one of them and call it as '2'.
The function node '2'  has $p-1$ variable nodes adjacent to it other than 'i'. Choose one of them and call it as 'j'.
The variable node 'j' has $c-1$ adjacent function nodes other than '2'. Choose one of them and call it as '3'.
The function node '3'  has $p-1$ variable nodes adjacent to it other than 'j'. Choose one of them and call it as 'k'.
Now what is the probability that 'k' is adjacent to the function node '1' to close a loop?
Let us denote it as $p_{\rm connect}$.
To sum up the total number of such diagrams will scale
as $p(p-1)^{2} (c-1)^{2}p_{\rm connect}$
so that their contribution to the free-energy per function node scale as
\beq
M\left(\frac{1}{\sqrt{M}}\right)^{3}p(p-1)^{2}(c-1)^{2} p_{\rm connect}
\sim c^{3/2}p_{\rm connect}
\eeq
for $c \gg 1$ assuming $\alpha$ and $p$ are finite.
In the globally coupled case ($c \propto N^{p-1}$) surely we have  $p_{\rm connect}=1$
so that the contribution {\it cannot} be neglected.
This is in agreement with the recent observation in the matrix factorization problem \cite{maillard2022perturbative} which corresponds to $p=2$ case with global coupling.
On the other hand in our dense limit $N \gg c \gg 1$, we expect $p_{\rm connect}=O(c/N)$
so that it can be neglected by taking $N \to \infty$ limit first.

Note that in the presence of the random spreading code $F_{\bs}^{\mu}$
the contribution vanishes whatever the connectivity
by taking average over the realization of $F_{\bs}^{\mu}$
(see sec~\ref{sec-techinical-assumptions}).

\subsection{Order $\lambda^{4}$}

At order $O(\lambda^{4})$ we find terms which contribute to $\hat{G}_{2}$.
They consist of $4$ function nodes carrying replicas $a$,$b$,$c$ and $d$.

\begin{itemize}
\item The simplest possibility is to use again diagrams shown in Fig.~\ref{fig:diagrams} a)
with $\bs$ carrying four replicas. In each of the $p$ variable nodes we attach a four point connected correlation function like
$\langle x^{a}_{i\mu}x^{a}_{i\mu}x^{c}_{i\mu}x^{d}_{i\mu}\rangle^{c}_{\epsilon,0}$.
Each function node carries $1/\sqrt{M}$ factor and there is one running global index $\mu$ 
so that the contribution of such a term to the free-energy per function node
scales as $M(1/\sqrt{M})^{4} \sim M^{-1}$ which vanish in the limit $M \to \infty$.

\item (One particle reducible diagram) Another possibility is to use the
diagrams shown in Fig.~\ref{fig:diagrams} b) where $\bs_{1}$ and $\bs_{2}$ carry
replica pairs $(a,b)$ and $(c,d)$ respectively.  At the variable node
$i$ we attach a four point connected correlation function $\langle x^{a}_{i\mu}x^{a}_{i\mu}x^{c}_{i\mu}x^{d}_{i\mu}\rangle^{c}_{\epsilon,0}$
while on other variable nodes we attach
$Q^{ab}$ or $Q^{cd}$. This case contributes to $\hat{G}_{2}$ but disappears in $\hat{F}_{2}$ by the Legendre
transformation \eq{eq-F-G}.
Indeed one can show that they become canceled by $\beta\frac{(\hat{G}_{0}^{'}[\epsilon_{0}])^{2}}{\hat{G}_{0}^{''}[\epsilon_{0}]}$ which appear in \eq{eq-F-G-relations}.
Note that the diagram can be disconnected into two parts by cutting the diagram at $i$.
Such diagrams are examples of  the so called one particle reducible diagrams 
and disappear in  $\hat{F}$ which consists only of one particle irreducible (1PI) diagrams
\cite{hansen1990theory,zinn2021quantum}.

\end{itemize}

\subsection{Higher orders}

Let us consider a generic closed one-loop diagram of length $l(\geq 3)$ which goes through $l$ function nodes
and $l$ variable nodes and carries $k$ replicas on each function node. 
Repeating the argument in sec~\ref{section-order-lambda-3}, we find that
the contribution to the free-energy per function node can be estimated as the following: 
\begin{itemize}
\item Running $\mu$ index gives a factor $M$
\item A function node yields a factor $(\lambda/\sqrt{M})^{k}$
\item The number of closed loop starting from a function node is  $(c-1)^{l-1}p(p-1)^{l-1} p_{\rm connect}$
\item A variable node caries a $2k$ point connected correlation function
of $x_{i \mu}^{a}$s.
\item A function node caries also a factor $F^{k}$.
\end{itemize}
Combing these we find a contribution of order
\beq
M\left(\frac{\lambda}{\sqrt{M}}\right)^{lk}(c-1)^{l-1}p(p-1)^{l-1} p_{\rm connect}
\propto c^{l(1-\frac{k}{2})}p_{\rm connect}
\eeq

For $k=1$ we find it scales as $c^{l/2}p_{\rm connect}$ so that
such a contribution is relevant in globally coupled system for which $p_{\rm connect}=1$.
However, in the dense limit $p_{\rm connect}\sim O(c/N)$ so that it vanishes
by taking $N \to \infty$ limit first.
The above analysis implies
$N$ must grow faster than any power law of $c$ in order to
get rid of contribution from loop diagrams.


For $k>2$ we find the contribution vanishes in $c\to \infty$
limit either in the dense limit or in globally coupled systems. 

The contributions of diagrams with larger number of closed loops
(2-loop, 3-loops,...) will vanish
faster in $c \to \infty$ limit. 

To summarize the relevant
loop diagrams which can contribute to the free-energy
in the $c \to \infty$ limit are those with $k=1$ in the case of global
coupling. Using the dense coupling $N \gg c \gg 1$ we can avoid this.
Such contributions of loops (with $k=1$) disappear also in the presence of the random spreading code $F_{\bs}^{\mu}$ after averaging over $F_{\bs}^{\mu}$.
Indeed this is the situation in the standard fully connected $p$-spin spin-glass models
($p=2$ case is the Sherrington-Kirkpatrick model 
\cite{kirkpatrick1978infinite}, more general $p$-spin model introduced first in  \cite{derrida1980random}).

\section{Stability of the paramagnetic solution}
\label{appendix-stability-parmagnet}
The instability of the $m=0$ solution can be detected by studying the change of sign of the second derivative of the RS free-energy (we consider here the Bayes Optimal case $\beta=1$ and $q=m$) at $m=0$.
\subsection{Ising prior and additive Gaussian noise}
\label{appendix-stability-parmagnet-ising}
Let us begin with the entropic part of the free-energy. Our goal is to compute $\frac{d^2}{d m^2}\frac{\partial}{\partial n}\frac{-\beta F_0(m,m)}{MN}\bigg\lvert_{n=0}$. We start from the definition of the entropic part of the free-energy as given in the first line of Eq.~\eqref{eq-f0-ising-RS}, equipped with Eq.~\eqref{eq-eps-phi-Ising-RS}. One finds
\begin{equation}
	\frac{\partial}{\partial n}\frac{-\beta F_0(q,m)}{MN}\bigg\lvert_{n=0,q=m}=-\frac{1}{2}(1+m) A(m,\alpha,\lambda) + e^{\frac{A(m,\alpha,\lambda)}{2}\frac{\partial^2}{\partial h^2}} \log 2\cosh(h+A(m,\alpha,\lambda))\bigg\lvert_{h=0},
\end{equation}
where $A(m,\alpha,\lambda)$ is defined, for the additive Gaussian noise case, in Eq.~\eqref{eq:m_Ising_Bayesopt}. Taking two derivatives with respect to $m$ we obtain
\begin{equation}
	-\frac{\partial A}{\partial m} +  B \left(\frac{\partial A}{\partial m}\right)^2 + \left(C-\frac{1}{2}(1+m)\right)\frac{\partial^2 A}{\partial m^2} ,
\end{equation}
where 

\begin{align}
	C &= e^{\frac{A}{2} \frac{\partial^2}{\partial h^2} } \frac{1}{2} \frac{\partial}{\partial h} \tanh(h+A)\Big\lvert_{h=0}  + e^{\frac{A}{2}\frac{\partial^2}{\partial h^2}} \tanh(h+A)\Big\lvert_{h=0} , \nonumber \\
	B &= e^{\frac{A}{2} \frac{\partial^2}{\partial h^2} } \left[\frac{1}{4} \frac{\partial^3}{\partial h^3} + \frac{\partial^2}{\partial h^2} + \frac{\partial}{\partial h} \right]\tanh(h+A)\Bigg\lvert_{h=0} ,
\end{align}
which evaluated at $m=0$ gives $C_{m=0}=B_{m=0}=\frac{1}{2}$.

Let us turn now to the interaction part of the free-energy and compute $\frac{d^2}{d m^2}\frac{\partial}{\partial n}\frac{-\beta F_{\rm ex}(m,m)}{MN}\bigg\lvert_{n=0}$. From Eq.~\eqref{eq-f1-additive-noise-RS-derivatives}, equipped with Eq.~\eqref{eq-theta0-theta1-gaussian-noise-replica} in the case of Gaussian noise, we obtain
\begin{align}
	\frac{d}{d m}\frac{\partial}{\partial n}\frac{-\beta F_{\rm ex}(m,m)}{MN}\bigg\lvert_{n=0} &= \partial_{n} \frac{\partial }{\partial q}  \frac{ -\beta F_{\rm ex}(q,m)}{MN} \Big\lvert_{n=0,q=m}+\partial_{n} \frac{\partial }{\partial m}  \frac{ -\beta F_{\rm ex}(q,m)}{MN} \Big\lvert_{n=0,q=m} = \nonumber \\
	&=\frac{A}{2}.
\end{align}

Putting the pieces together
\begin{align}
	\frac{d^2}{d m^2}(-f(m)) &= \left(B\frac{\partial A}{\partial m} - \frac{1}{2}\right)\frac{\partial A}{\partial m} + \left(C-\frac{1}{2}(1+m)\right)\frac{\partial^2 A}{\partial m^2}, \nonumber \\
	\frac{d^2}{d m^2}(-f(m))\Big\lvert_{m=0} &= \frac{1}{2}\left(\frac{\partial A}{\partial m}\Big\lvert_{m=0} - 1\right)\frac{\partial A}{\partial m} \Big\lvert_{m=0}.
\end{align}
For $p=2$, we have that $\frac{\partial}{\partial m}A\lvert_{m=0}=\frac{\alpha \lambda^2}{1+\lambda^2}>0$. The condition of stability then implies $\frac{\partial A}{\partial m}\Big\lvert_{m=0} < 1 $ or
\begin{equation}
\begin{cases}
	0<\alpha < 1, \quad&\forall \lambda >0, \\
	\alpha>1, \quad&\lambda < \frac{1}{\sqrt{\alpha-1}}.
\end{cases}
\end{equation}
For $p>2$, we have instead $\frac{\partial}{\partial m}A\lvert_{m=0} = 0$ and the second derivative is not enough to assess the stability of the paramagnetic phase. We can go on and compute the third derivative
\begin{equation}
	\frac{d^3}{d m^3}(-f(m)) = \left(\frac{\partial B}{\partial m}\frac{\partial A}{\partial m}+B\frac{\partial^2 A}{\partial m^2}\right)\frac{\partial A}{\partial m} +  \left(2B\frac{\partial A}{\partial m} - 1\right)\frac{\partial^2 A}{\partial m^2} +  \left(C-\frac{1}{2}(1+m)\right)\frac{\partial^3 A}{\partial m^3},
\end{equation}
where we used the fact that $\frac{\partial C}{\partial m} = B\frac{\partial A}{\partial m}$. For $p=3$, the only term that survives at $m=0$ is the one proportional to $\frac{\partial^2 A}{\partial m^2}$,  i.e. $\frac{d^3}{d m^3}(-f(m))\lvert_{m=0}=-\frac{\partial^2 A}{\partial m^2}\lvert_{m=0}=\frac{2\alpha\lambda^2}{1+\lambda^2}<0$ $\forall \alpha, \lambda >0$. For $p=4$, the third derivative is zero since $C_{m=0}=\frac{1}{2}$, and we expect a non-trivial contribution appearing in $\frac{d^4}{d m^4}(-f(m))\lvert_{m=0}$ associated to the $\frac{\partial^3 A}{\partial m^3}$ term.

\subsection{Gaussian prior and additive Gaussian noise}
\label{appendix-stability-parmagnet-gauss}
In the case of Gaussian prior, from the expression of the entropic term Eq.~\ref{eq-f0-gaussian-RS-derivatives} one obtains
\begin{align}
	\frac{d}{d m}\frac{\partial}{\partial n}\frac{-\beta F_0(m,m)}{MN}\bigg\lvert_{n=0} &= \partial_{n} \frac{\partial }{\partial q}  \frac{ -\beta F_0(q,m)}{MN} \Big\lvert_{n=0,q=m}+\partial_{n} \frac{\partial }{\partial m}  \frac{ -\beta F_0(q,m)}{MN} \Big\lvert_{n=0,q=m} = \nonumber \\
	&=-\frac{1}{2}\frac{m}{1-m}.
\end{align}

Putting the pieces together (the interaction term is the same as in the Ising case)
\begin{align}
	\frac{d^2}{d m^2}(-f(m)) &= -\frac{1}{2}\frac{1-2m}{(1-m)^2} +\frac{1}{2}\frac{\partial A}{\partial m}, \nonumber \\
	\frac{d^2}{d m^2}(-f(m))\Big\lvert_{m=0} &= \frac{1}{2}\left(\frac{\partial A}{\partial m}\Big\lvert_{m=0} - 1\right).
\end{align}s
For $p=2$, this gives the same stability condition as in the Ising case. For $p>2$, one obtains $\frac{d^2}{d m^2}(-f(m))\Big\lvert_{m=0}=-\frac{1}{2}<0$ $\forall \alpha, \lambda >0$.

\section{Details on numerical analysis of the message passings}
\label{appendix-mp-detail}

\subsection{Initialization}
\label{sec-initialization}

In both r-BP and G-AMP experiments we consider two main kinds of initializations: informative initialization and uninformative initialization.

\begin{itemize}

\item In the {\bf informative initialization}, we set the values of the messages $m^{t=0}_{i\mu\to\blacksquare}$, $v^{t=0}_{i\mu\to\blacksquare}$ (or equivalently $m^{t=0}_{i\mu}$, $v^{t=0}_{i\mu}$ in the G-AMP case) to
\begin{equation}
	\begin{cases}
		m^{t=0}_{i\mu\to\blacksquare} = x^*_{i\mu} \\
		v^{t=0}_{i\mu\to\blacksquare}=(x^*_{i\mu})^2 \quad\quad {\rm (Gaussian\,prior)}.
	\end{cases}
\end{equation}
The starting point of SE equations is then $m_0\equiv\langle x^*_{i\mu}m^{t=0}_{i\mu}\rangle=q_0\equiv\langle(m^{t=0}_{i\mu})^2\rangle=1$ and $Q_0=\langle v^{t=0}_{i\mu}=1\rangle$.  In the case of sign output, this definition is slightly modified
\begin{equation}
	\begin{cases}
		m^{t=0}_{i\mu\to\blacksquare} = x^*_{i\mu}(a+\sqrt{a-a^2}\mathcal{N}(0,1)),  \quad\quad a=0.99\\
		v^{t=0}_{i\mu\to\blacksquare}=a^{-1} (m^{t=0}_{i\mu\to\blacksquare})^2 .
	\end{cases}
\end{equation}
In this way, $m_0=q_0=a$ and $Q_0=1$. Furthermore, $v^{t=0}_{i\mu\to\blacksquare}>(m^{t=0}_{i\mu\to\blacksquare})^2$ $\forall i,\mu$, which makes the quantity under square root $V^{t=0}_{\blacksquare\to\mu}$ strictly positive.

\item In the {\bf uninformative initialization}, messages are set to
\begin{equation}
	\begin{cases}
		m^{t=0}_{i\mu\to\blacksquare} = ax^*_{i\mu} +\sqrt{a-a^2}\mathcal{N}(0,1), \quad\quad &a=0.01\\
		v^{t=0}_{i\mu\to\blacksquare}=1 &{\rm (Gaussian\,prior)}.
	\end{cases}
\end{equation}
Also in this case we have $m_0=q_0=a$ and $Q_0=1$.

\end{itemize}

\black{The main advantage of the 'uninformative' initialization with respect to a truly random one is twofold: it satisfies the Bayes-optimal condition $m=q$ from $t=0$, and it helps selecting the 'correct' state with respect to a remaining global symmetry of the system when $p$ is even. This symmetry is realized under the change of sign of a given $\mu$ component of all of the $N$ vectors, and we may refer to it as the symmetry under inversion on the $\mu$-plane. This symmetry remains even after the introduction of the random spreading factor for $p=2$, but being global and discrete it happens to be spontaneously broken by the message passing algorithms. We show it in Fig.~\ref{fig:random_init}, by comparing the result of uninformative and {\bf truly random initialization}
	\begin{equation}
		\begin{cases}
			m^{t=0}_{i\mu\to\blacksquare} = \sqrt{a}\mathcal{N}(0,1), \quad\quad &a=0.01\\
			v^{t=0}_{i\mu\to\blacksquare}=1 &{\rm (Gaussian\,prior)}.
		\end{cases}
	\end{equation}
With this choice, $m_0=0$, $Q_0=1$ but $q_0=a$, so the algorithm is initialized slightly off the Bayes Optimal ansatz. In general, without any prior knowledge of the planted solution, we are able to infer it only up to a global sign on each $\mu$ component. In order to get non-zero magnetization, one has to modify its definition according to
\begin{math}
	m= \frac{1}{M}\sum_{\mu=1}^M\left\lvert \frac{1}{N}\sum_{i=1}^N x^*_{i\mu}m_{i\mu} \right\rvert.
\end{math}
}

\subsection{Spreading factor}

\black{For $p=2$, we used the 'disordered model' with random spreading factor $F_{\bs \mu}$
(see sec ~\ref{sec-techinical-assumptions})
in order to eliminate the
rotational symmetry and help the convergence of the algorithms. The spreading factor was also  beneficial to the mixed case $p=2+3$.
Otherwise, for $p=3$, we found the algorithms converge well with the 'disorder-free model' $F_{\bs \mu}=1$.
}

\subsection{Update of messages and check of convergence}
\label{subsubsec-mp-convergence}

We introduce a parameter $\epsilon$ to control the magnitude of the updates
\begin{align}
	m^{t+1} &= m^{t} + \epsilon (f^{t+1}_{\rm input} - m^{t}),  \\
	v^{t+1} &= v^{t} + \epsilon (f^{t+1}_{\rm input,II} - v^{t}).   \nonumber
\end{align}
Usually values of $\epsilon$ between 0.5 and 1 work well; in some cases, such as the mixed $p=2+3$ Gaussian model, an accurate tuning of the parameter is crucial in order to get a good convergence of the algorithms.

For each set of parameters, the data is averaged over 10 random instances.
Convergence is tracked via the parameter
\begin{equation}
	D_t \equiv \langle \lvert f^{t+1}_{\rm input}-m^t\lvert\rangle,
        \label{eq-def-Dt}
\end{equation}
where $\langle\dots\rangle$ is an average over all the messages.

\section{Comparison between r-BP and G-AMP, finite size corrections}
\label{sec-comparisons-finite-size}
\black{In this section we show more extensive numerical results, providing comparisons between the r-BP and G-AMP algorithms. In general, both algorithms give comparable performances. There are cases, such as the $p=2$ Ising prior or the $p=3$ Gaussian prior equipped with Gaussian noise, where at relatively small $N,M$ sizes the r-BP algorithm performs slightly better. And cases such as the $p=2$ sign output, where instead the better scalability of G-AMP allows to easily reach bigger $M$ sizes and get rid of some strong finite $M$ corrections.
}
	
\black{We also provide comparisons for different sizes of $N,M$.  We found in general that when the transition is continuous, as in the $p=2$ cases, stronger deviations are observed for values of $\lambda$ ($\alpha$ in the sign output case) closer to the transition. Also the convergence rate of the algorithm deteriorates. The closer to the transition, the more relevant the finite $M$ corrections seem to become, see for example Fig.~\ref{fig:finitesize_Gaussian_Gaussian_p=2}. Finite $M$ corrections can arise both as loops corrections and at the level of the saddle point evaluation of the replicated free-energy. Further away from the transition, also finite $N$ effects become more evident.
}

\black{When the transition is discontinuous, as in the $p=3$ cases, finite $N$ effects can destabilize the system close to the transition, see in particular Figs.~\ref{fig:r_BP_Gaussian_Gaussian_p=3} and~\ref{fig:finitesize_Gaussian_Sign_p=3}.
}

\subsection{Ising prior - Gaussian noise}

\subsubsection{$p=2$}

\begin{figure}[H]
	\centering
	\includegraphics[width=0.495\textwidth]{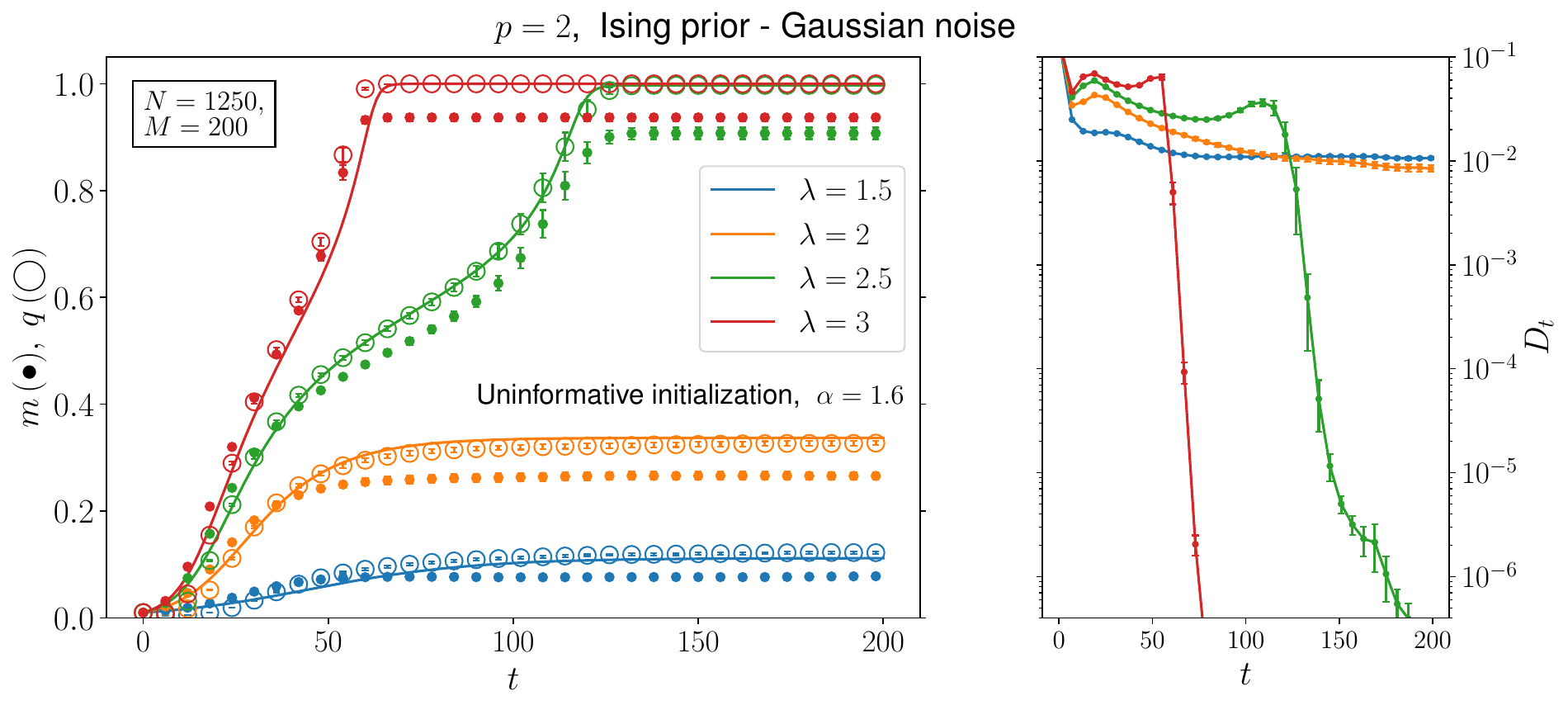}
	\includegraphics[width=0.495\textwidth]{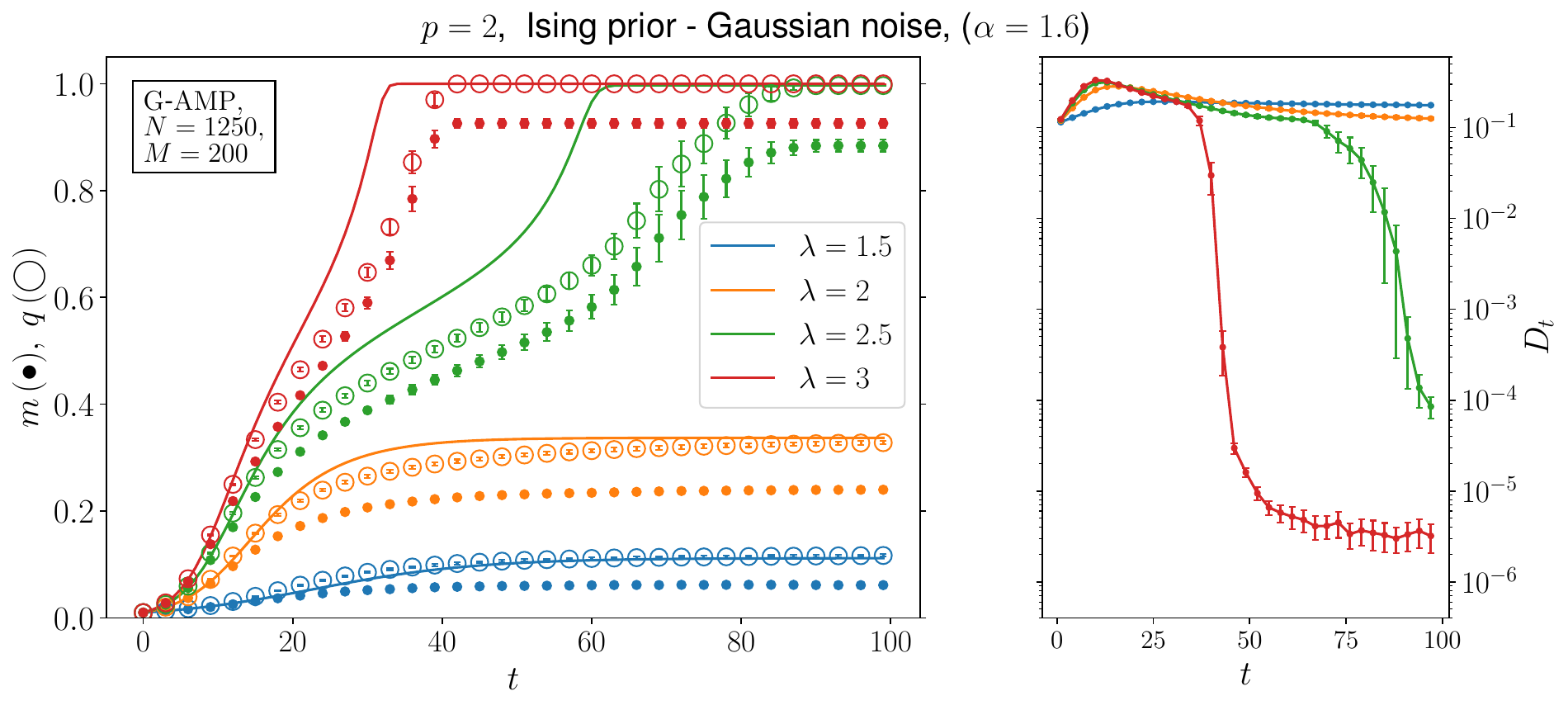}
	\includegraphics[width=0.495\textwidth]{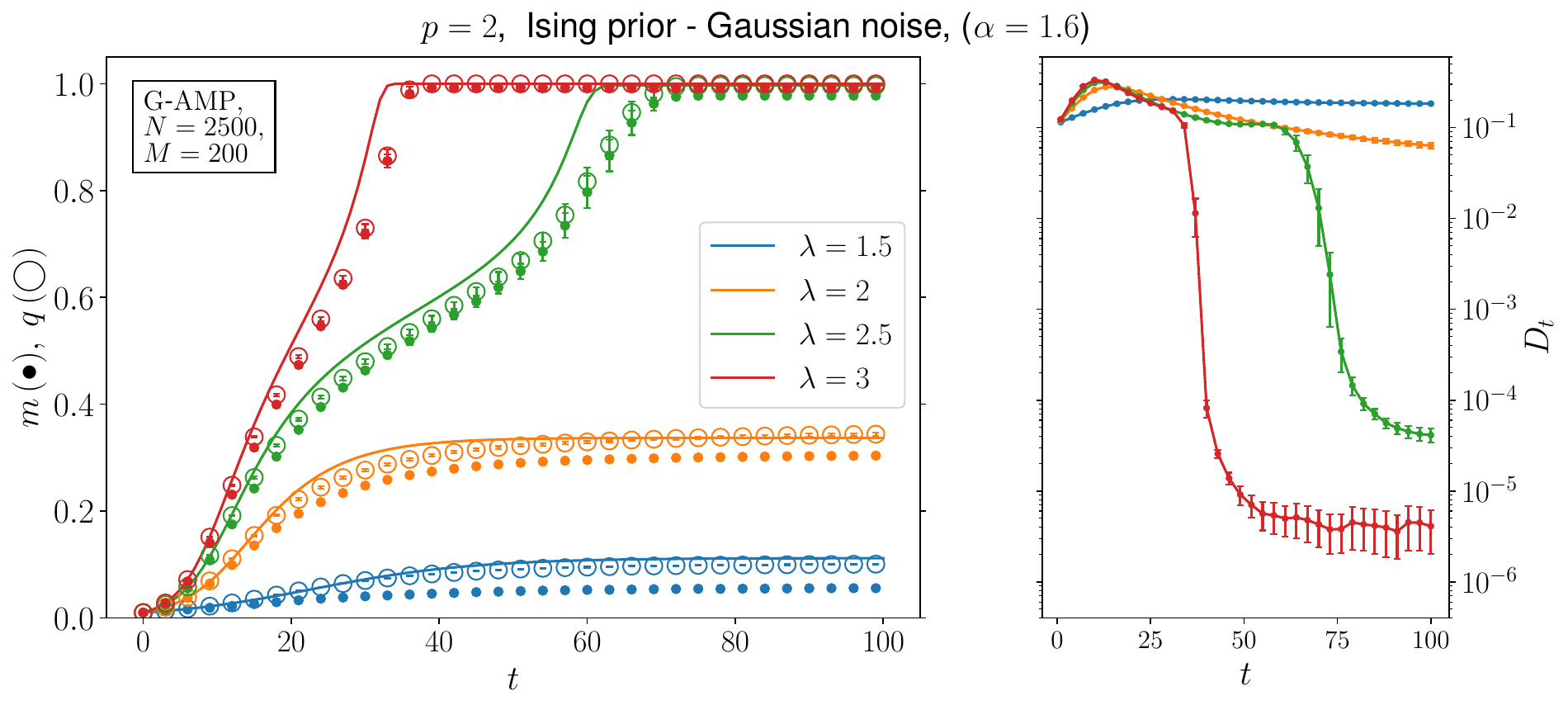}
	\includegraphics[width=0.495\textwidth]{fig_AMP_IsingGaussian_p2_alpha1.6_N5000_M400_unInform.pdf}
	\caption{Comparison between r-BP algorithm (first panel) and G-AMP. For smaller sizes r-BP is more accurate, but increasing the size G-AMP drastically improves (last panel is the same as in Fig.~\ref{fig:G_AMP_Ising_Gaussian_p=2}). Solid lines represent the predictions from State Evolution. 
		The values of $\epsilon$ used are $\epsilon=0.5$ with r-BP and  $\epsilon=1$ with G-AMP. 
	}
	\label{fig:r_BP_Ising_Gaussian_p=2}
\end{figure}

\subsubsection{$p=3$}

\begin{figure}[H]
	\centering
	\includegraphics[width=0.495\textwidth]{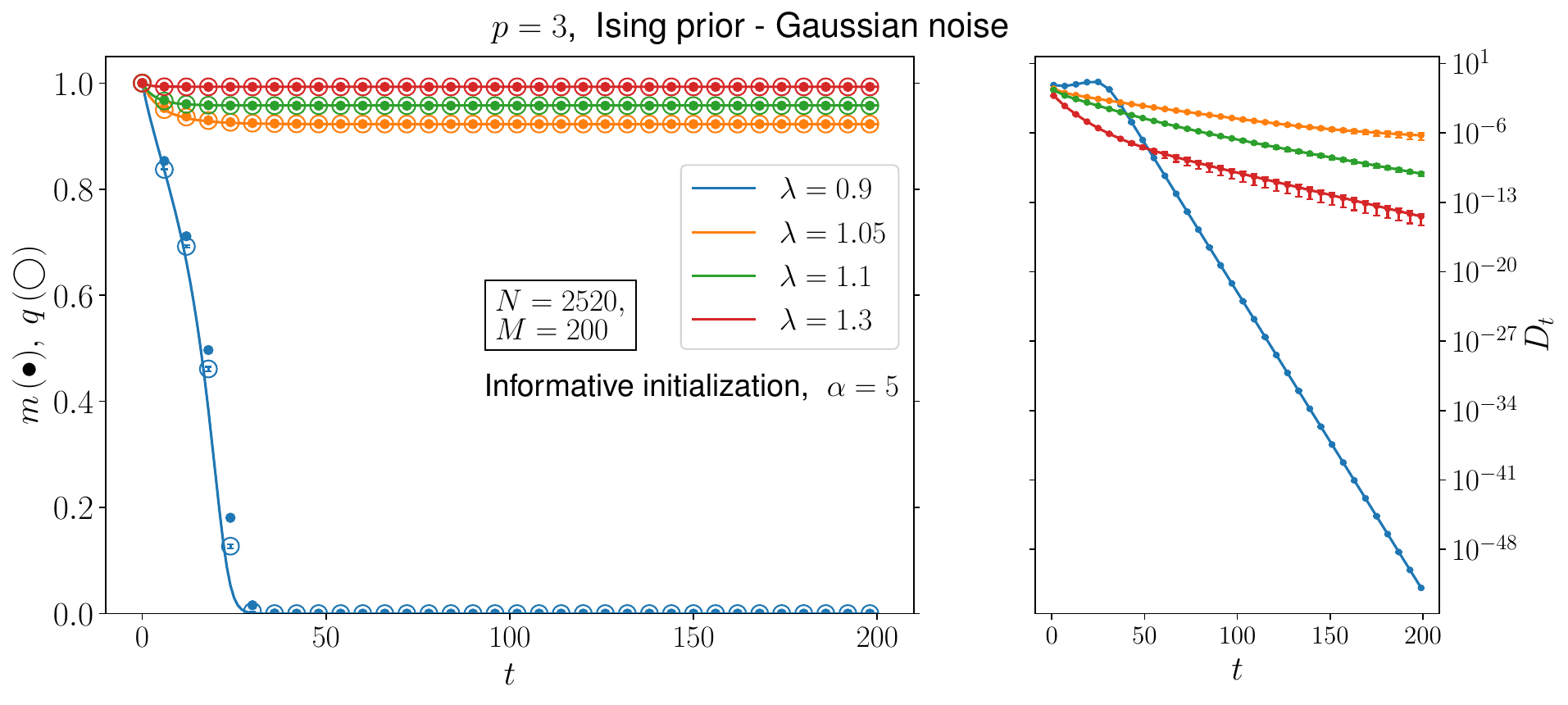}
	\includegraphics[width=0.495\textwidth]{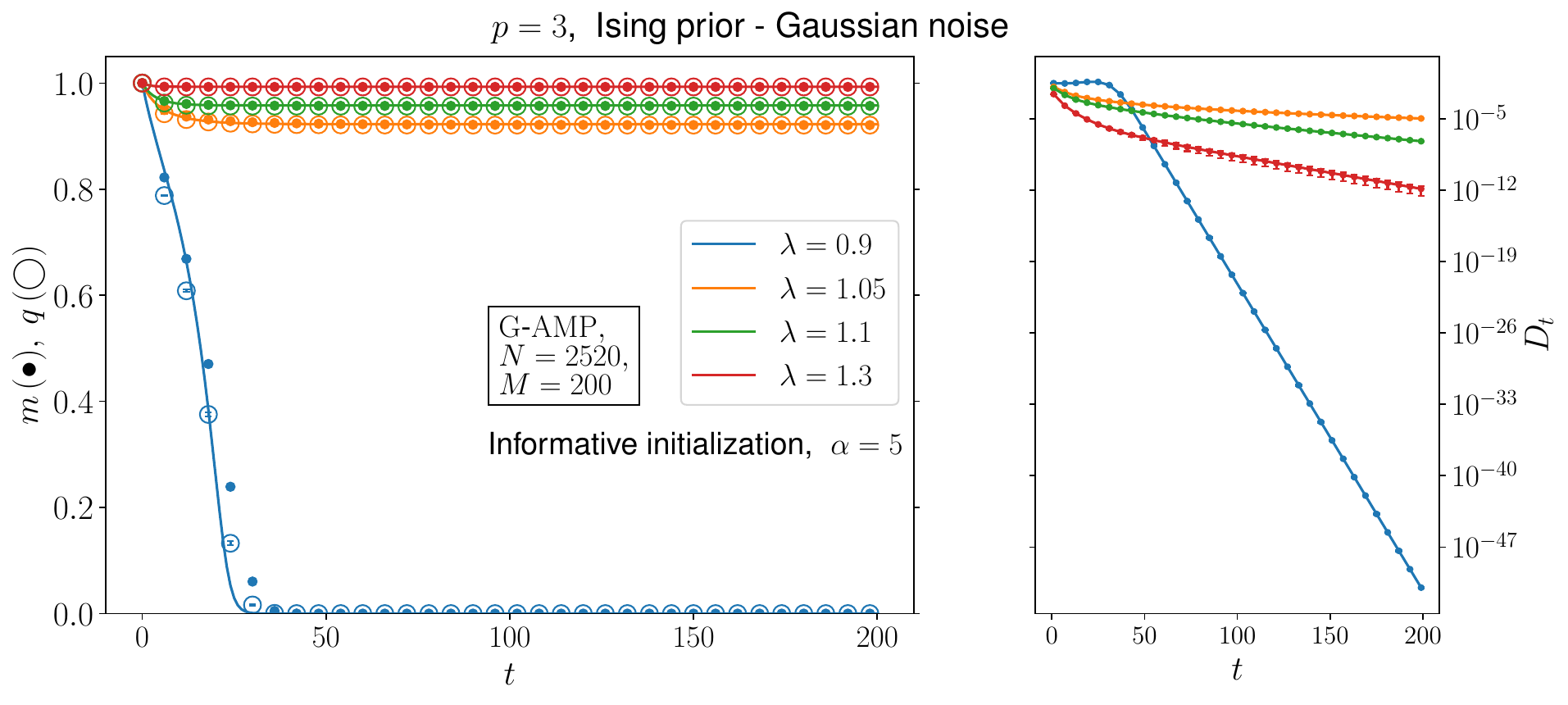}
	\caption{Comparison between r-BP algorithm (left panel) and G-AMP (right panel, from Fig.~\ref{fig:G_AMP_Ising_Gaussian_p=3}). In both cases $\epsilon=0.5$.}
	\label{fig:r_BP_Ising_Gaussian_p=3}
\end{figure}

\subsection{Gaussian prior - Gaussian noise}

\subsubsection{$p=2$}

\begin{figure}[H]
	\centering
	\includegraphics[width=0.495\textwidth]{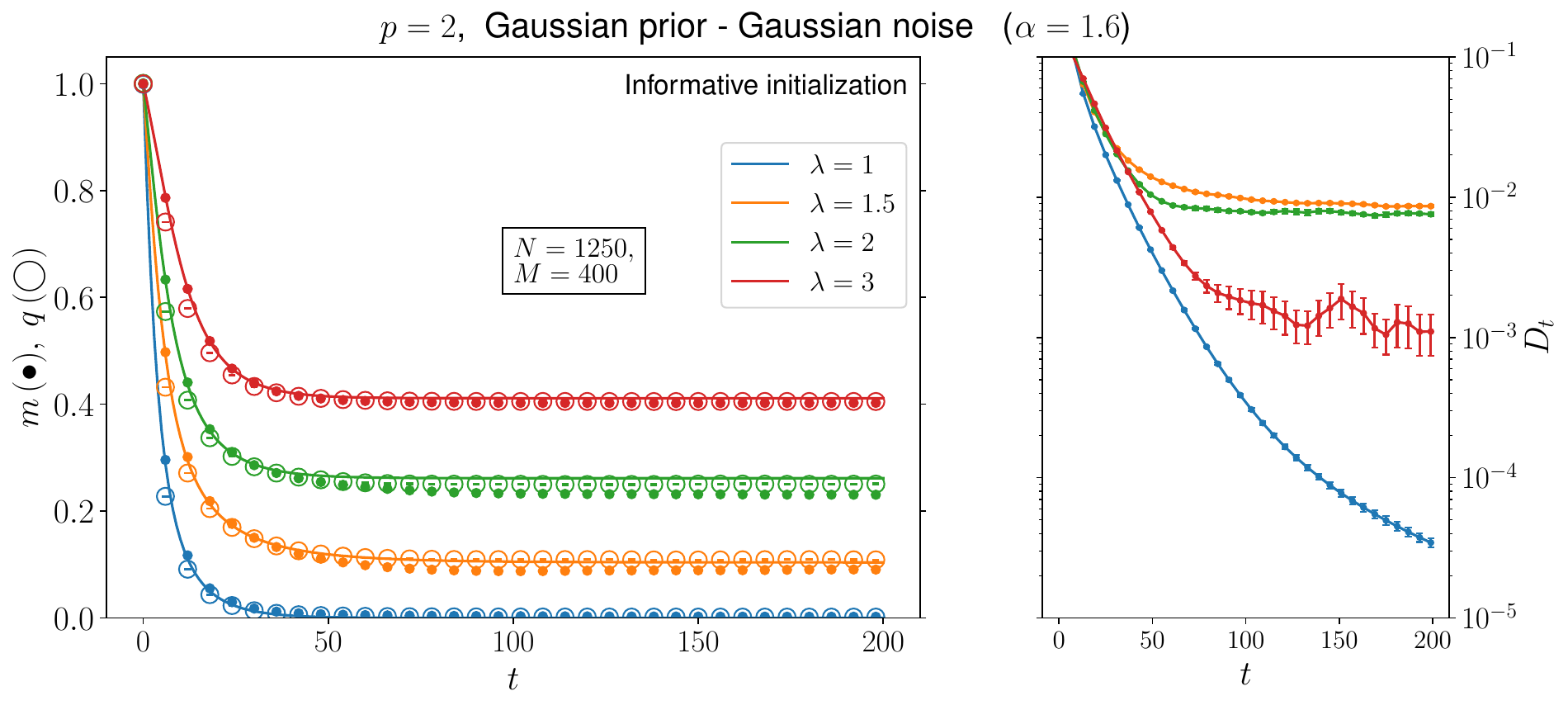}
	\includegraphics[width=0.495\textwidth]{fig_AMP_GaussGauss_p2_alpha1.6_N2500_M400_Inform.pdf}
	\caption{Comparison between r-BP algorithm (left panel) and G-AMP (right panel, the same as in Fig.~\ref{fig:G_AMP_Gaussian_Gaussian_p=2}). The values of $\epsilon$ in this case are respectively $\epsilon=0.5,1$
}
	\label{fig:r_BP_Gaussian_Gaussian_p=2}
\end{figure}

\begin{figure}[H]
	\centering
	\includegraphics[width=0.495\textwidth]{fig_AMP_GaussGauss_p2_alpha1.6_N5000_M400_unInform.pdf}
	\includegraphics[width=0.495\textwidth]{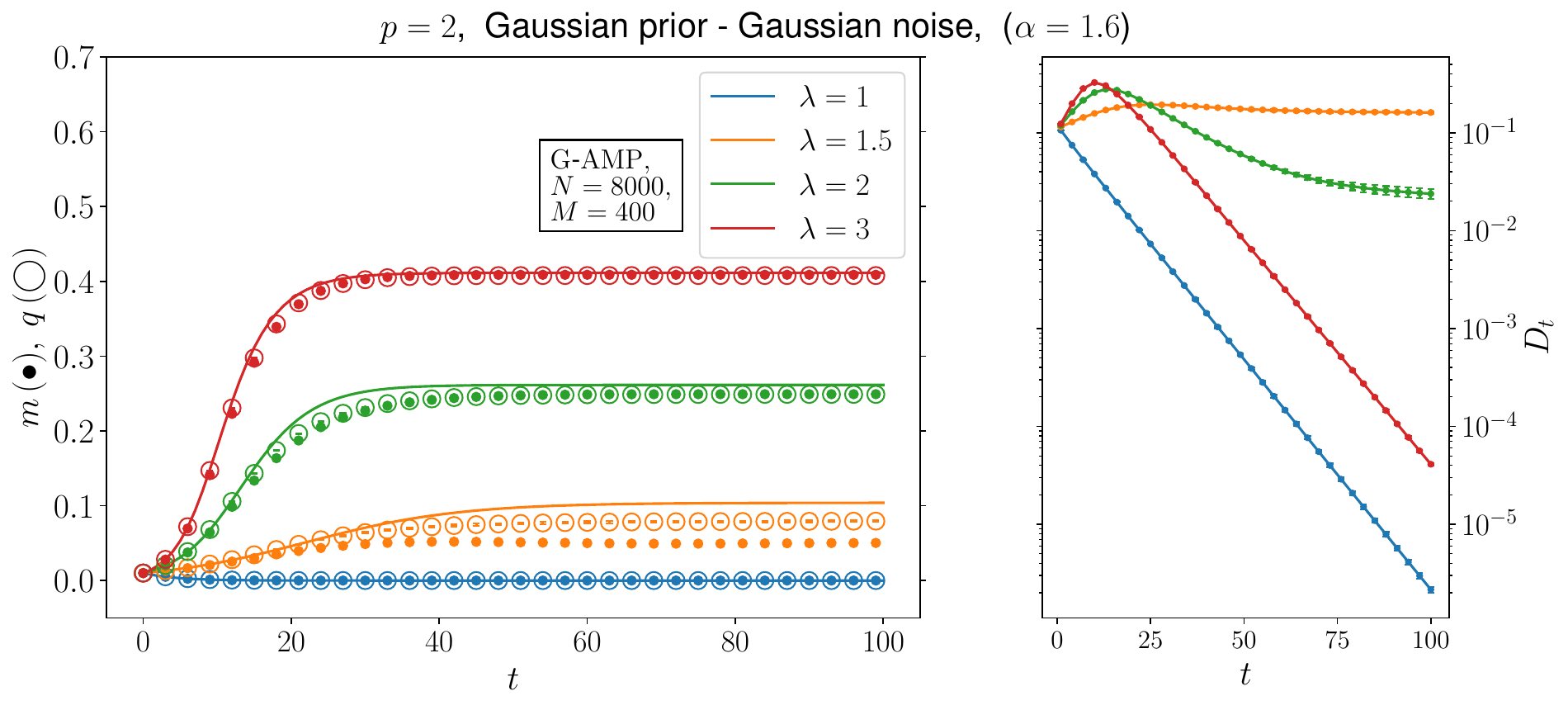}
	\includegraphics[width=0.495\textwidth]{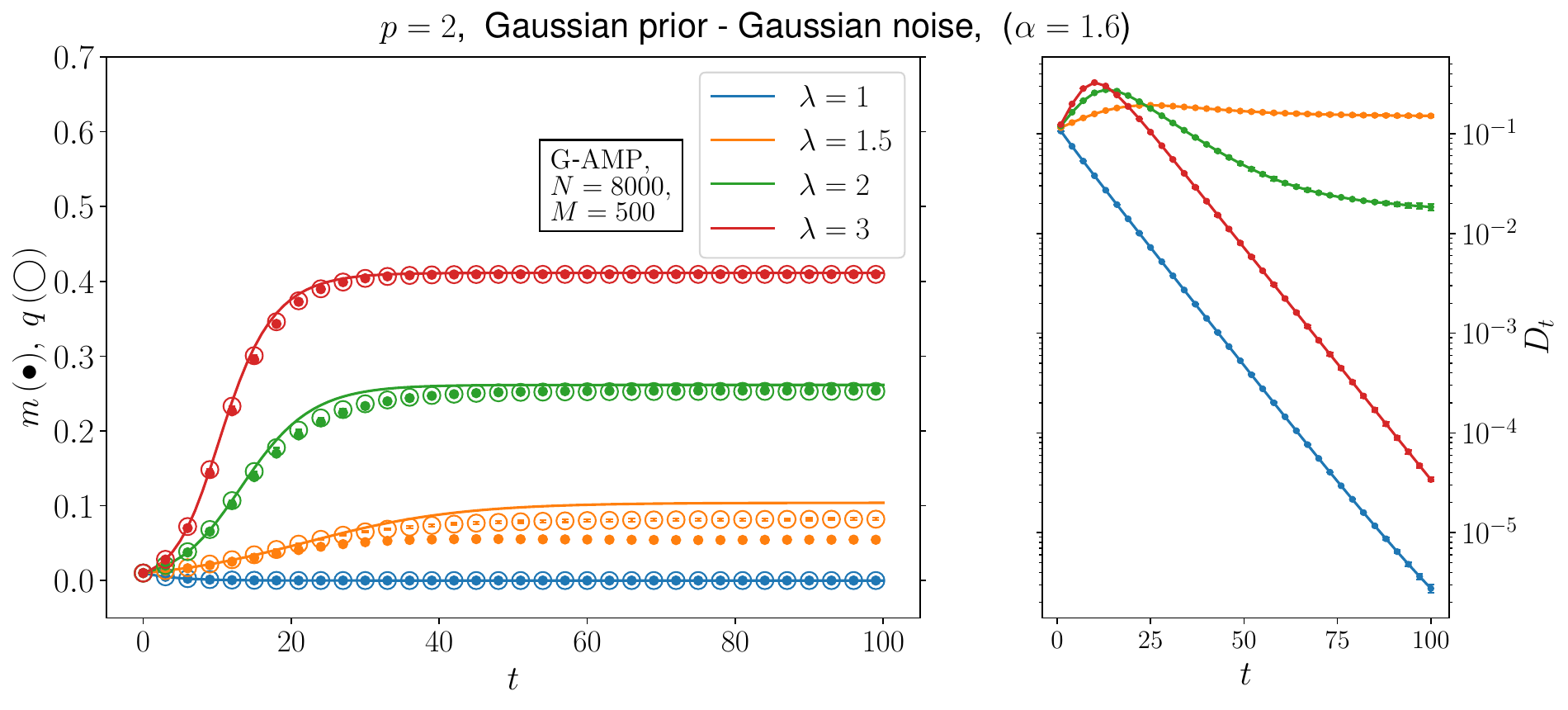}
	\includegraphics[width=0.495\textwidth]{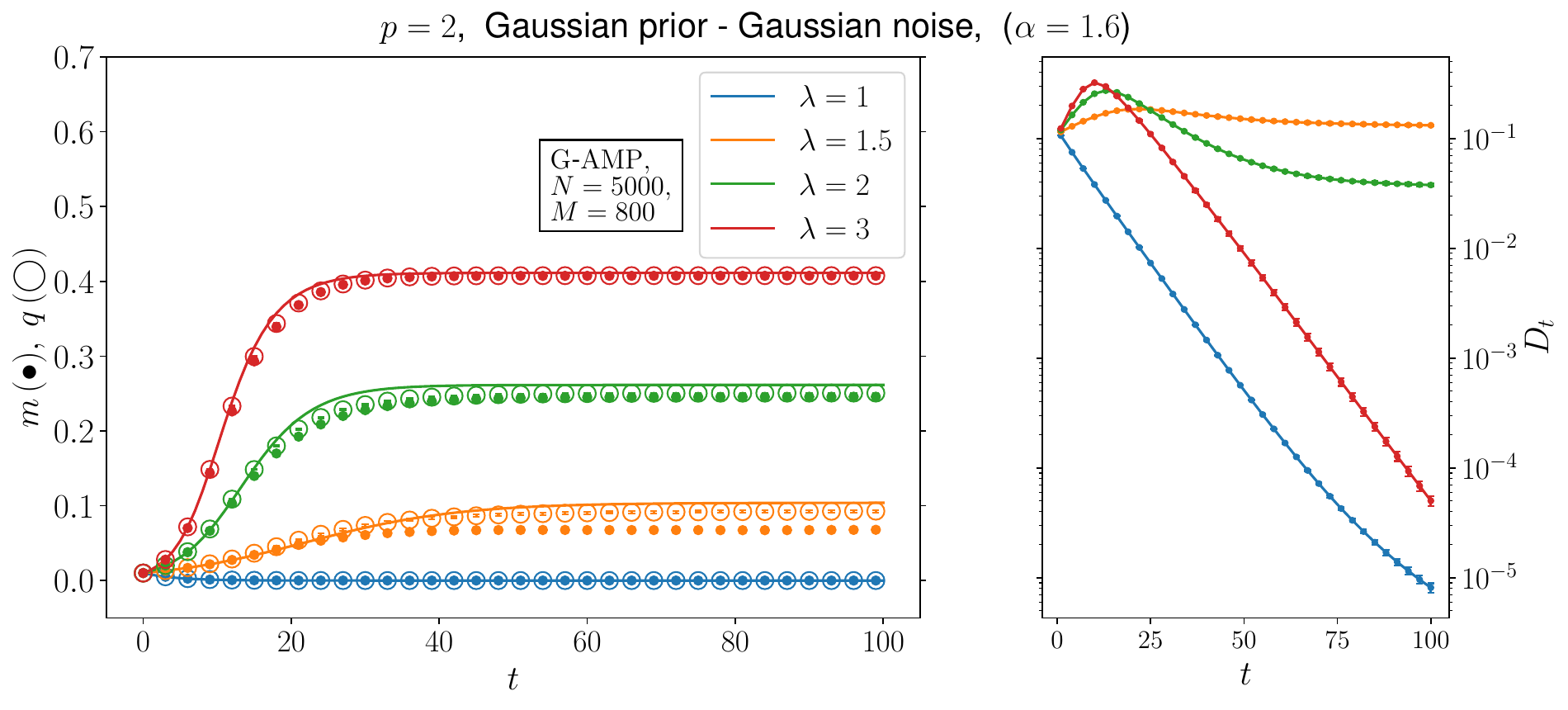}
	\caption{\black{Finite $N,M$ corrections. Closer to the continuous transition, for $\lambda=1.5$,  convergence is the worst and deviations are present. Some slight improvement is observed when increasing $M$, in the last panel. Further away from the transition, as for $\lambda=2$, also some finite $N$ effect becomes evident.
		}
	}
	\label{fig:finitesize_Gaussian_Gaussian_p=2}
\end{figure}

\subsubsection{$p=3$}
\begin{figure}[H]
	\centering
	\includegraphics[width=0.495\textwidth]{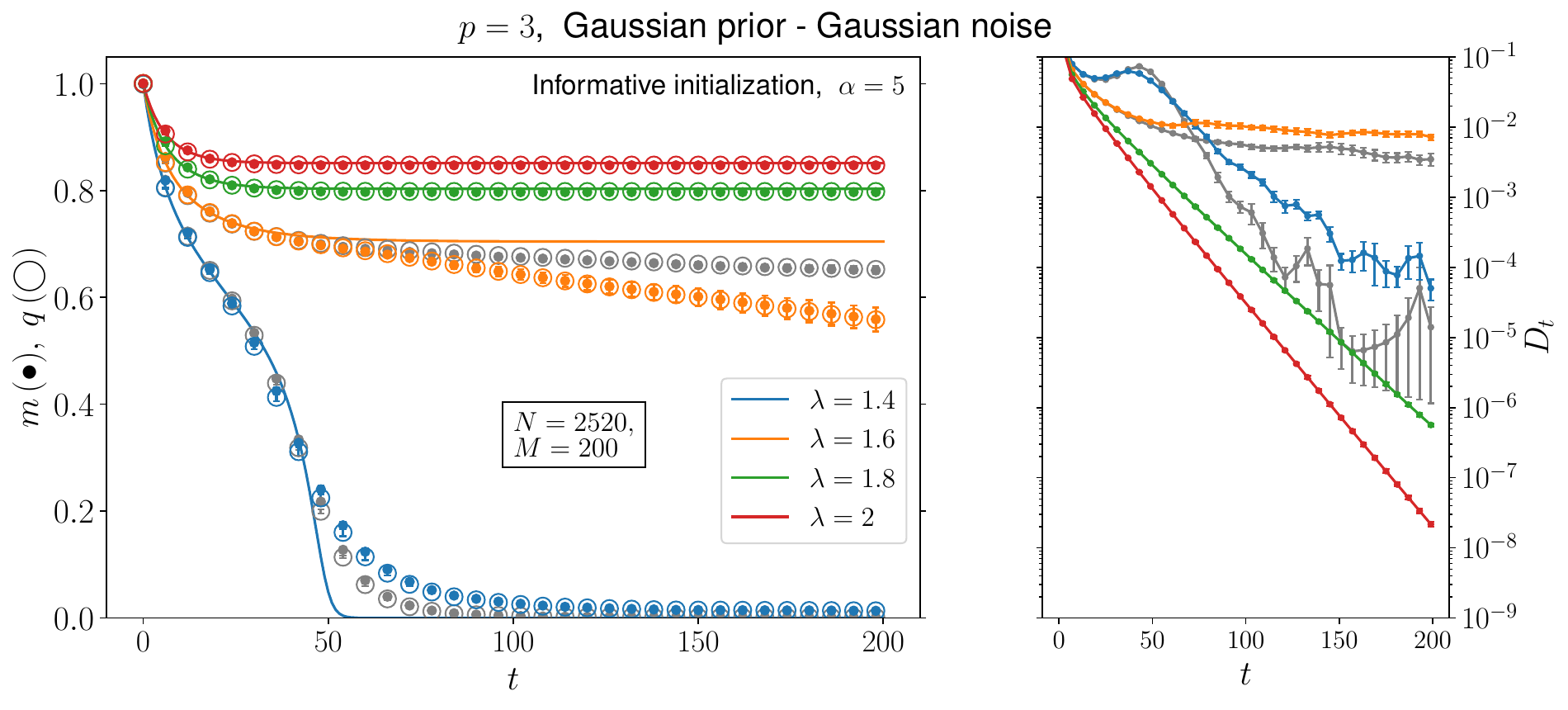}
	\includegraphics[width=0.495\textwidth]{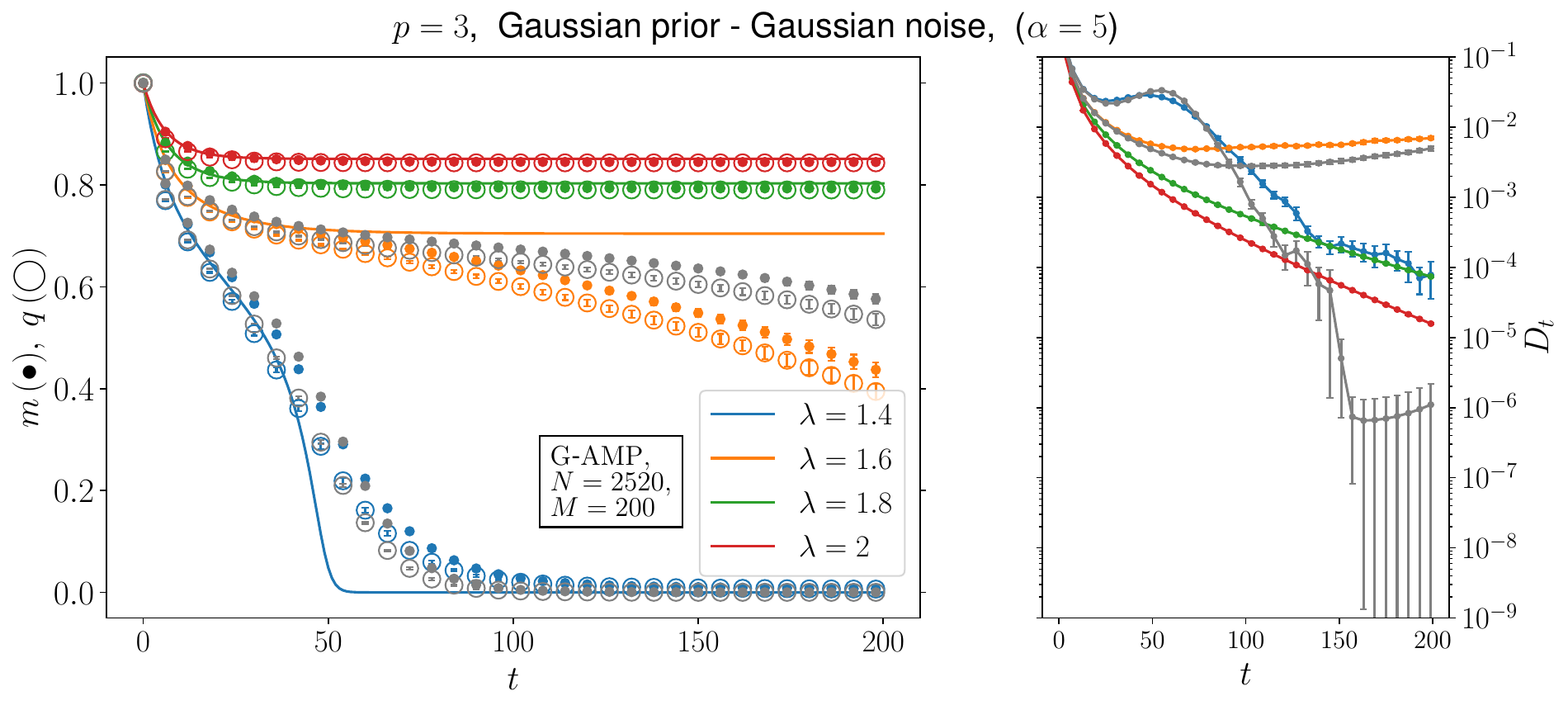}
	\caption{Comparison between r-BP algorithm (left panel) and G-AMP (right panel). Grey data points are obtained considering a bigger system size, $N=5040$, while keeping fixed $M=200$, for $\lambda=1.4$ and $1.6$. In both cases $\epsilon=0.5$. Compare also to Fig.~\ref{fig:G_AMP_Gaussian_Gaussian_p=3} in the main text for bigger values of $N,M$ in the G-AMP case.
	}
	\label{fig:r_BP_Gaussian_Gaussian_p=3}
\end{figure}

\subsection{Gaussian prior - SignOutput}

\begin{figure}[H]
	\centering
	\includegraphics[width=0.5\textwidth]{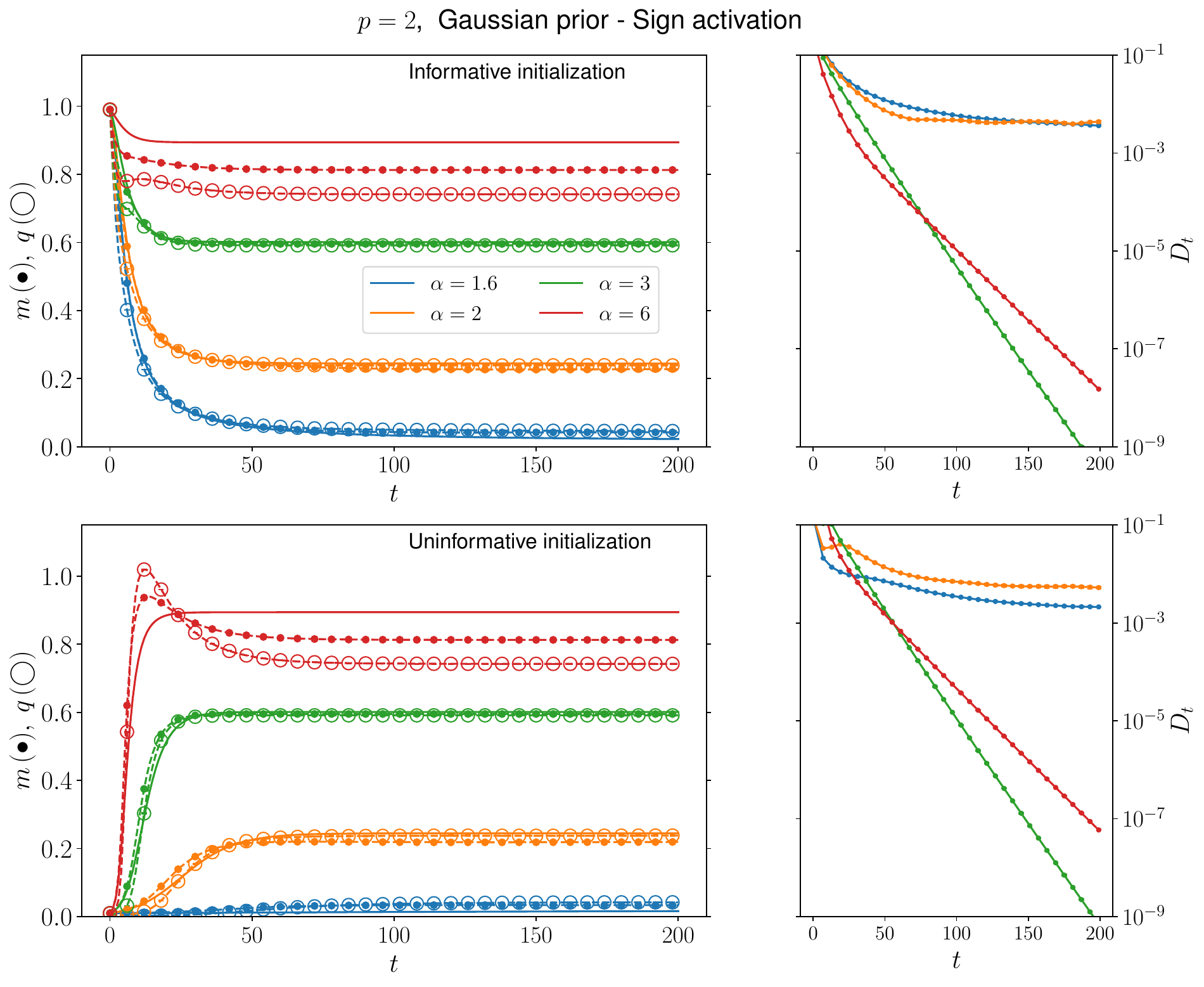}
	\caption{Results from the r-BP algorithm, $N=2500$ and $M=200$. For the largest value of $\alpha$ there are strong finite $M$ corrections. Solid lines represent the prediction from SE equations.}
	\label{fig:r_BP_Gaussian_Sign_p=2}
\end{figure}

\begin{figure}[h]
	\centering
	\includegraphics[width=0.645\textwidth]{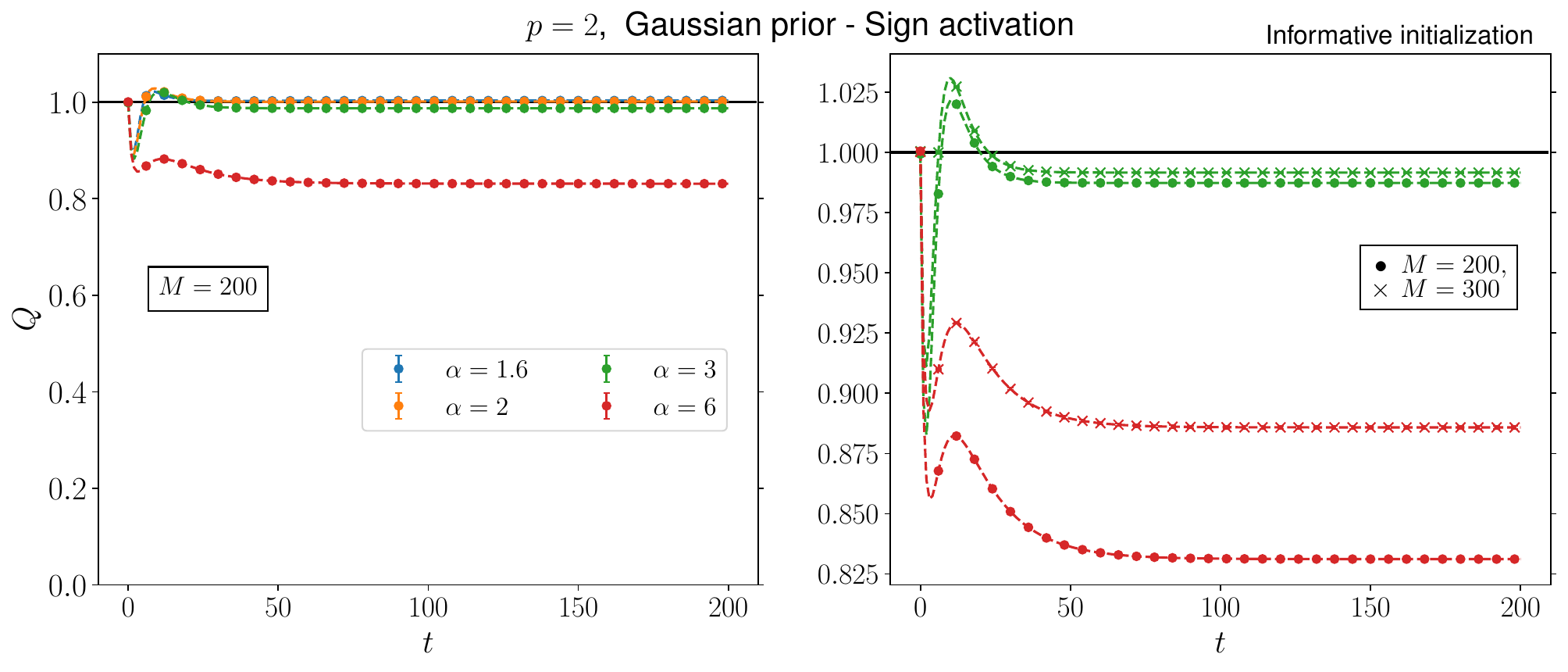}
	\includegraphics[width=0.645\textwidth]{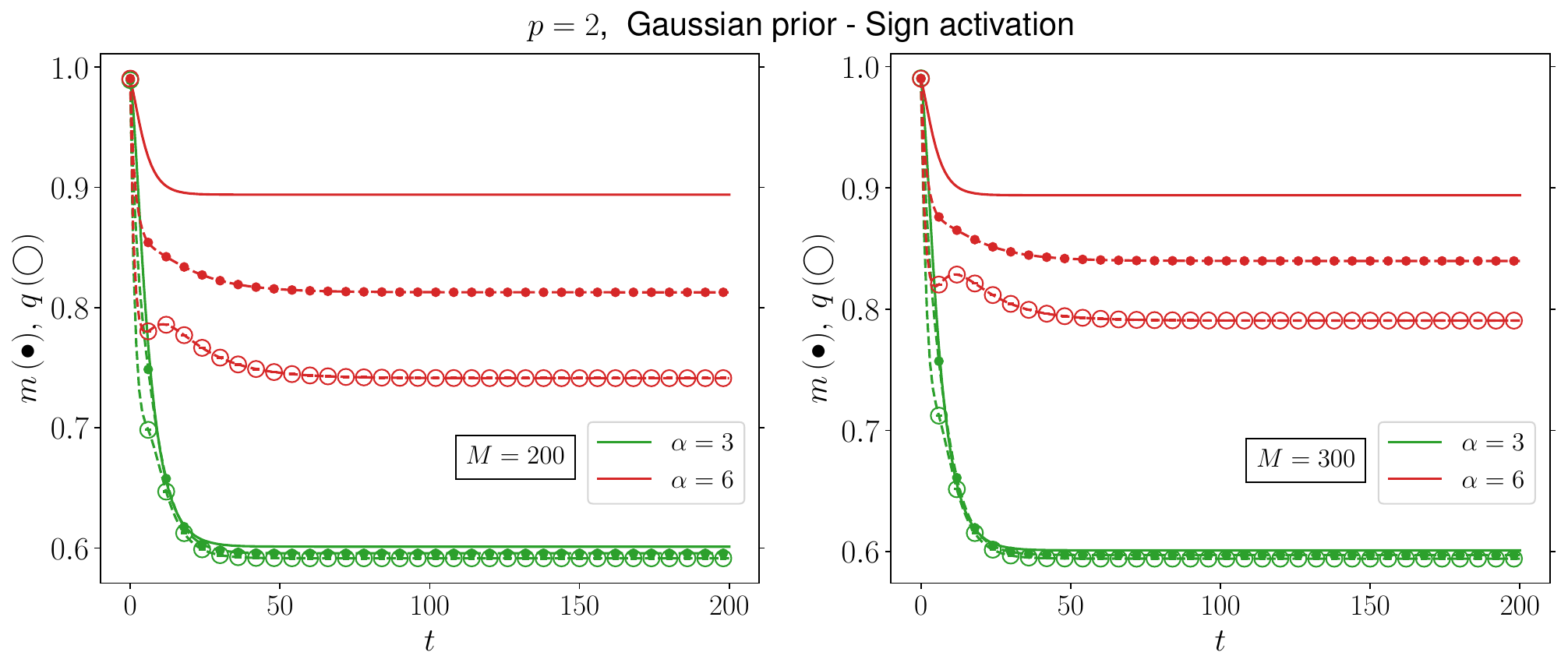}
	\caption{Results from the r-BP algorithm. Top panels: the finite $M$ corrections for $\alpha=6$ in Fig.~\ref{fig:r_BP_Gaussian_Sign_p=2} are associated to a value of $Q_t$ consistently different from the Bayes Optimal expectation $Q_t=1$. In the right panel, $Q_t$ for $\alpha=3$ and $6$ is shown to improve when increasing $M$. Bottom panels: also the observables $q$ and $m$ show an improvement for $\alpha=6$ when increasing $M$.}
	\label{fig:r_BP_Gaussian_Sign_p=2_corrections}
\end{figure}

\begin{figure}[H]
	\centering
	\includegraphics[width=0.495\textwidth]{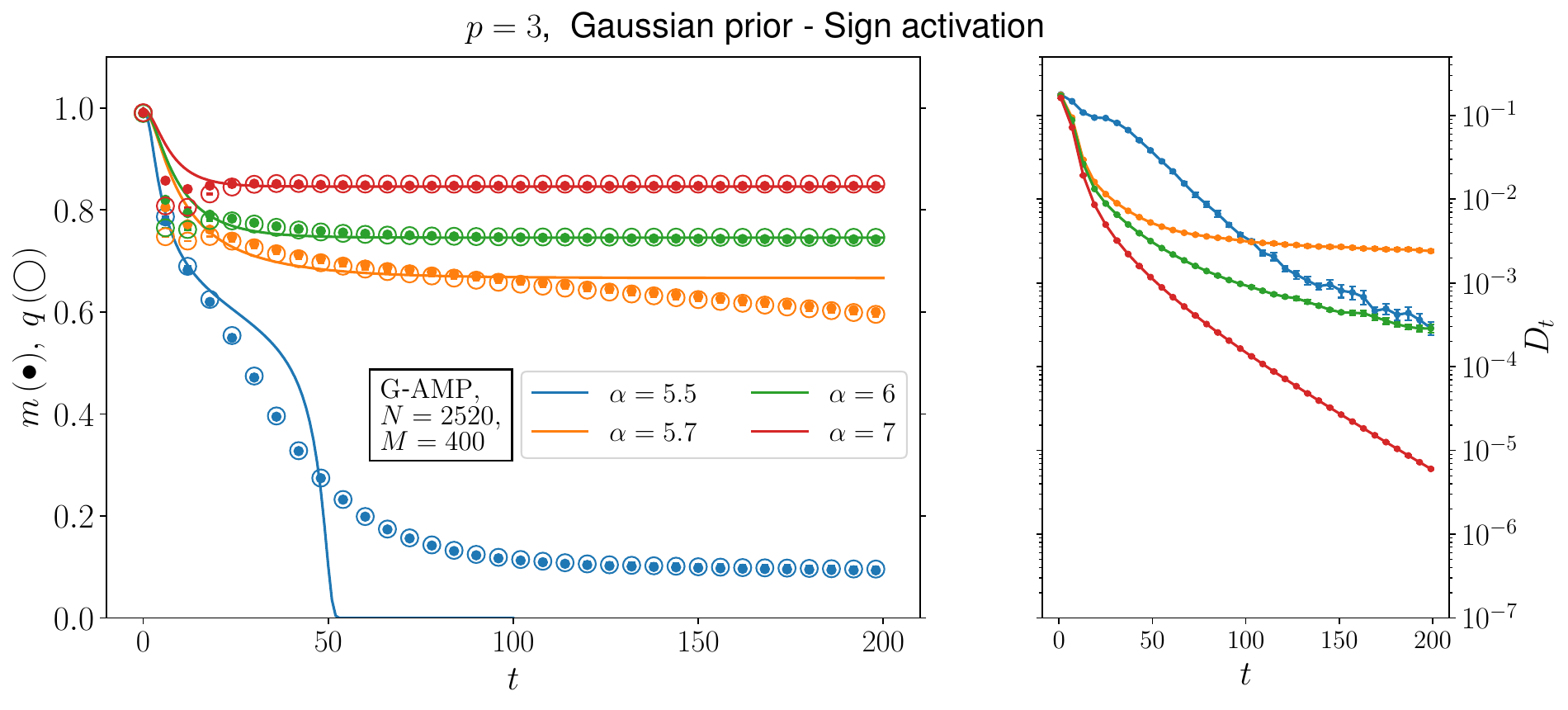}
	\includegraphics[width=0.495\textwidth]{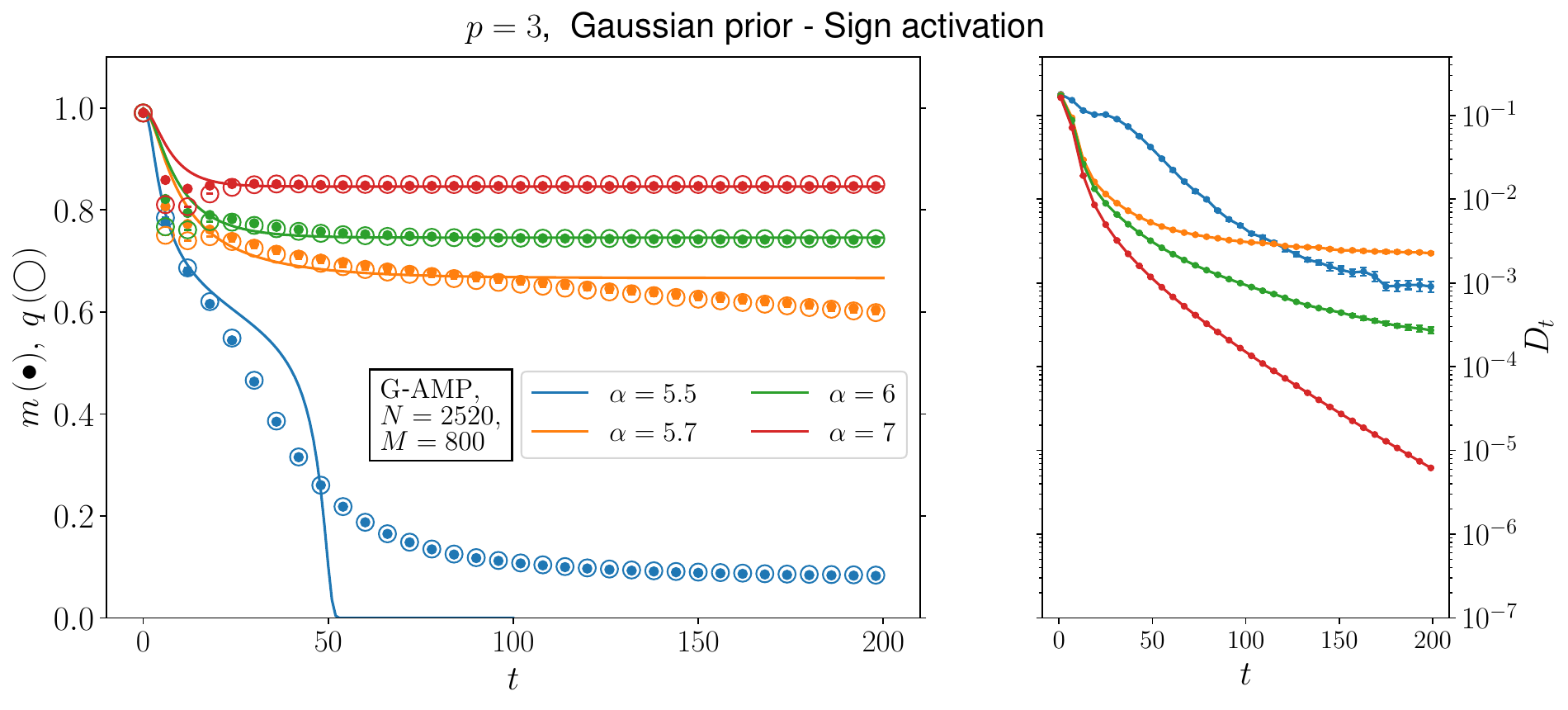}
	\includegraphics[width=0.495\textwidth]{fig_AMP_GaussSign_p3_N5040_M400_inf.pdf}
	\includegraphics[width=0.495\textwidth]{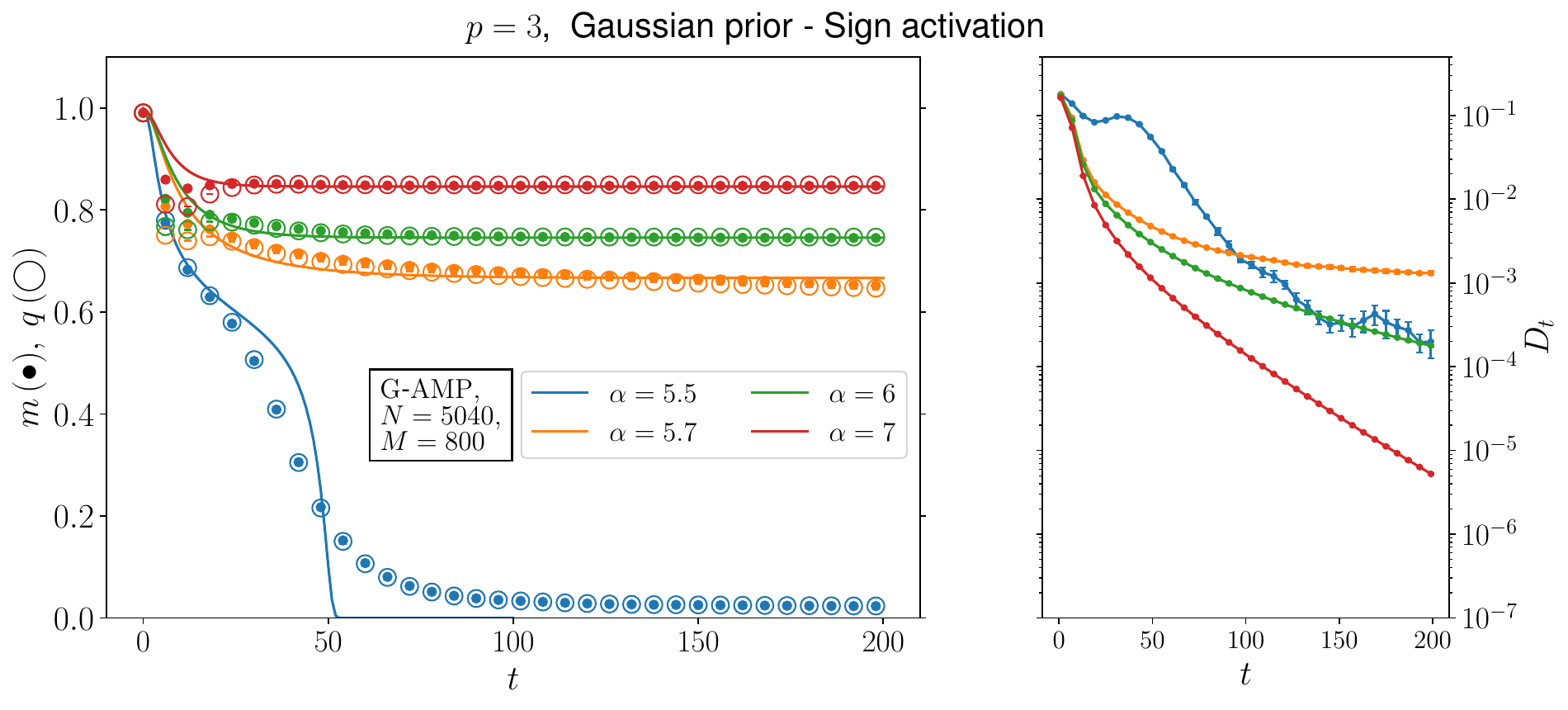}
	\caption{\black{Finite $N,M$ corrections. Closer to the discontinuous transition, for $\alpha=5.5$ and $\alpha=5.7$,  convergence is the worst and strong deviations due to finite size $N$ are present. 
		}
	}
	\label{fig:finitesize_Gaussian_Sign_p=3}
\end{figure}

\begin{figure}[H]
	\centering
	\includegraphics[width=0.495\textwidth]{fig_AMP_GaussSign_p2_M400_uni.pdf}
	\includegraphics[width=0.495\textwidth]{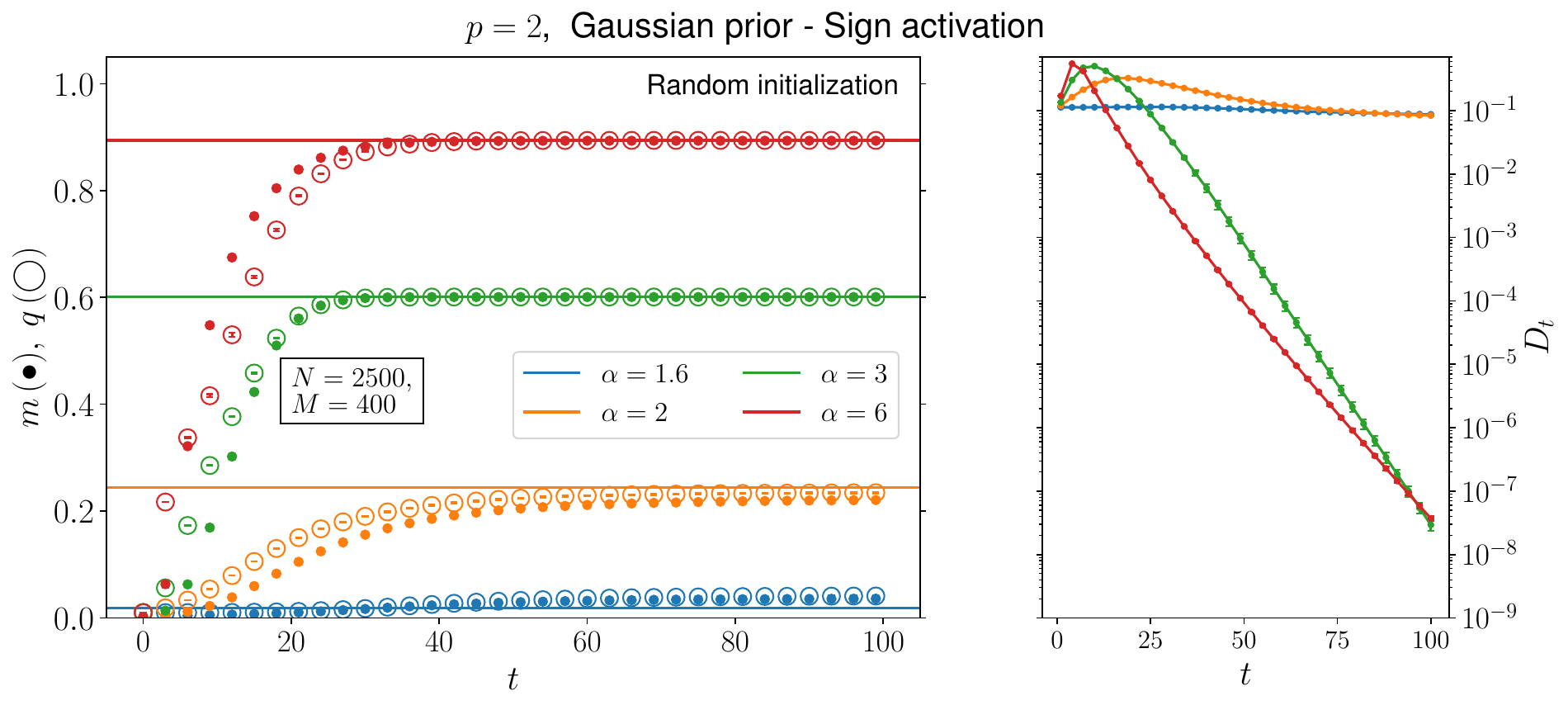}
	\caption{Comparison between the `uninformative' (left) and 'truly random' (right) initializations. Right panel: the long time limit of the SE equations (coincident with the replica result) is shown as solid lines. The definition of the magnetization is corrected (see sec.~\ref{sec-initialization}) in order to take into account the impossibility of recovering a global sign on each $\mu$-plane.}
	\label{fig:random_init}
\end{figure}

\bibliography{p_ary.v4}

\end{document}